\newcommand{\name}{{\texttt{FedAda$^2$}}\xspace}
\newcommand{\nameplus}{{\texttt{FedAda$^{2}$++}}\xspace}
\def\eqref#1{equation~\ref{#1}}
\def\1{\bm{1}}
\DeclareMathAlphabet{\mathsfit}{\encodingdefault}{\sfdefault}{m}{sl}
\SetMathAlphabet{\mathsfit}{bold}{\encodingdefault}{\sfdefault}{bx}{n}
\theoremstyle{plain}
\newtheorem{theorem}{Theorem}[section]
\newtheorem{proposition}[theorem]{Proposition}
\newtheorem{lemma}[theorem]{Lemma}
\newtheorem{corollary}[theorem]{Corollary}
\theoremstyle{definition}
\newtheorem{definition}[theorem]{Definition}
\theoremstyle{remark}
\newenvironment{talign}
 {\flalign}
 {\endflalign}
\newenvironment{talign*}
 {\csname align*\endcsname}
 {\endalign}
\title{Efficient Adaptive Federated Optimization}
\author{%
  Su Hyeong Lee\\
  Department of Statistics\\
  University of Chicago\\
  \And
  Sidharth Sharma \\
  Department of Computer Science\\
  Columbia University\\ 
  \And
  Manzil Zaheer\thanks{Work done while at Google DeepMind.}  \\
  Google DeepMind\\
  \And
  Tian Li \\
  Department of Computer Science\\
  University of Chicago\\
}
\begin{document}

\maketitle

\begin{abstract}
Adaptive optimization is critical in federated learning, where enabling adaptivity on both the server and client sides has proven essential for achieving optimal performance. However, the scalability of such jointly adaptive systems is often hindered by resource limitations in communication and memory. In this paper, we introduce a class of efficient adaptive algorithms, named \name and its enhanced version \nameplus, designed specifically for large-scale, cross-device federated environments. \name optimizes communication efficiency by avoiding the transfer of preconditioners between the server and clients. Additionally, \nameplus extends this approach by incorporating memory-efficient adaptive optimizers on the client side, further reducing on-device memory usage. Theoretically, we demonstrate that \name and \nameplus achieve the same convergence rates for general, non-convex objectives as its more resource-intensive counterparts that directly integrate joint adaptivity. Extensive empirical evaluations on image and text datasets demonstrate both the advantages of joint adaptivity and the effectiveness and efficiency of \name/\nameplus.
\end{abstract}

\section{Introduction}\label{introduction}
Federated learning is a distributed learning paradigm which aims to train statistical models across multiple clients while minimizing raw data exposure~\citep{mcmahan2017communication,TianReviewPaper,wang2021field}. In vanilla federated learning, a central server orchestrates the training process by distributing the global model to a subsample of thousands or even millions of clients. These clients collaboratively perform local stochastic gradient descent while drawing from their private data streams. After several epochs have elapsed, each client communicates their aggregate updates to the server, which averages this information to make an informed adjustment to the global model. This algorithm, using non-adaptive weight updates, is called \textit{FedAvg}~\citep{mcmahan2017communication}. A recent trend is to investigate utilizing adaptive optimizers to support federated learning~\citep{AdaptiveFederatedOptimization}. Adaptivity can be employed in either the server-side or the client-side, where joint adaptivity (consisting of global \textit{and} local adaptive updates) has been shown to play a pivotal role in accelerating convergence and enhancing accuracy~\citep{wang2021local,sun2023efficient}. 

Nevertheless, efficiency challenges remain for the successful deployment of jointly adaptive algorithms in practice, especially in cross-device federated settings~\citep{kairouz2021advances}. The server, which collects pseudogradients pushed by participating clients, consolidates a global approximation of the preconditioners for adaptive model updates. Typically, the server sends the preconditioners back to the clients to precondition local adaptive updates. However, this can lead to significant communication overhead that detracts from the advantages offered by adaptivity~\citep{wang2022communication}. Furthermore, dynamically varying client resource limitations restrict the reliability of client-side adaptive optimizers in practice, especially when additional memory is required for handling local preconditioners during each client model update. 

In this work, we propose a class of efficient jointly adaptive distributed training algorithms, called \name and \nameplus, to mitigate the aforementioned communication and memory restrictions while retaining the benefits of adaptivity. \name maintains an identical communication complexity as the vanilla FedAvg algorithm. Instead of transmitting global server-side preconditioners from the server to the selected clients, we propose the simple strategy of allowing each client to initialize local preconditioners from constants such as zero, without any extra communication of preconditioners\footnote{We note that this pragmatic strategy has briefly appeared in a prior work~\cite{wang2021local} before, but lacked formal convergence guarantees and was not extensively studied via thorough empirical evaluations. We include a detailed discussion in Appendix~\ref{app:related_work}}. \nameplus expands on this approach by adopting existing memory-efficient optimizers that factorize gradient statistics to reduced dimensions in order to save on-device memory when synthesizing local updates. We prove that for the general, non-convex setting, \name and \nameplus achieve the same convergence rate as prior adaptive federated optimizers~(e.g.,~\citep{AdaptiveFederatedOptimization}). Conclusively, we aim to demonstrate that jointly adaptive federated learning, as well as adaptive client-side optimization, are practicable in real-world settings while sidestepping localized memory restrictions and communication bottlenecks with minimal performance degradation.

Our contributions may be summarized as follows.
\vspace{-0.1in}
\begin{itemize}[leftmargin=*]
\setlength\itemsep{0em}
    \item Motivated by the importance of joint server- and client-side adaptivity, we propose \name and \nameplus to avoid extra communication cost and reduce on-device memory while retaining the benefits of joint adaptive optimization (Section~\ref{methods}). 
    \item We provide convergence analyses for a class of \name/\nameplus algorithms instantiated with different server- and client-side adaptive methods and memory-efficient local optimizers (Section~\ref{ConvergenceAnalysis}).  
    \item  Empirically, we show that \name/\nameplus, without transmitting preconditioners and employing on-device preconditioner compression, matches the performance of its more expensive counterparts, and outperforms baselines without joint adaptivity on both image and text datasets (Section~\ref{ExperimentsAndDisucussion}).
\end{itemize}

\section{Related Work}
\label{gen_inst}

We now provide a brief overview of related work in adaptive federated learning and memory-efficient\footnote{There are various notions of `efficiency' of adaptive methods in the context of the federated learning, two of them being  communication efficiency and client memory efficiency. Our contribution specifically targets reducing communication and memory costs incurred by \textit{local preconditioners}, which is complementary with works that reduce communication by repeated local updates or model weight/pseudogradient compression (e.g., FedCAMS~\citep{wang2022communication}) and may, in theory, even be combined. We note that such methods tend to study compression of local models for \textit{server-only} adaptive optimizers, whereas our contribution lies in minimizing communication of preconditioners in \textit{jointly adaptive} settings. } preconditioning.

\vspace{-5pt}
\paragraph{Adaptive Federated Optimization.}
Adaptive optimization preconditions the gradients to enhance optimization efficacy, dynamically adjusting the learning rate for each model parameter \citep[e.g.,][]{AdaGrad,ADAM,AdamNoConverge}. Recent developments in federated learning have leveraged adaptive methods for server and client model parameter updates. Frameworks such as FedAdam \citep{AdaptiveFederatedOptimization} and FederatedAGM \citep{tong2020effective} focus primarily on server-side adaptivity while using a constant learning rate for  client updates. Additionally, FedCAMS \citep{wang2022communication} delves into communication-efficient adaptive optimization by implementing error feedback compression to manage client updates while maintaining adaptivity solely on the server side. Conversely, methodologies such as FedDA~\citep{new3}, FedLALR \citep{sun2023fedlalr}, Local AdaAlter \citep{xie2019local}, and Local AMSGrad \citep{chen2020toward} have adopted client-side adaptivity exclusively and demonstrate benefits. These approaches involve transmitting both client preconditioners and model parameters for global aggregation in the server. Moreover, some frameworks have embraced joint adaptivity. Local Adaptive FedOPT \citep{wang2021local} implements joint adaptivity while incorporating an additional client correction term. These terms, along with transmitted client pseudogradients, are aggregated on the server to construct a global preconditioner used to synthesize the subsequent model update. Alternatively, frameworks such as MIME~\citep{MIMEpaper,MimeNotGood1} transmit additional optimizer state information aggregated in the server to mimic adaptive updates in centralized settings, while maintaining frozen-state optimizers on the client-side. In contrast with all these approaches, \name avoids the transmission of any local/global preconditioners and optimizer states entirely, maintaining precisely identical communication complexity as vanilla FedAvg despite leveraging joint adaptivity. We include further discussions in Appendix~\ref{DescriptionsofBaselines}.

\paragraph{Memory-Efficient Adaptive Optimizers.} 
The implementation of local adaptive methods substantially increases client memory requirements, as it necessitates the maintenance of local preconditioners. For some models, it has been noted that the gradients combined with optimizer states consume significantly more memory than the actual model parameters themselves~\citep{LMLargeMemory}. Memory-efficient adaptive optimizers have been extensively studied in prior literature. Algorithms such as Adafactor \citep{shazeer2018adafactor} address memory reduction by tracking moving averages of the reduction sums of squared gradients along a singular tensor axis, attaining a low-rank projection of the exponentially smoothed preconditioners. GaLore \citep{zhao2024galore}  targets the low-rank assumption of the gradient tensor, which reduces memory of both gradients and  preconditioners. Shampoo~\citep{gupta2018shampoo} collapses gradient statistics into separate preconditioning matrices for each tensor dimension, which is extended via extreme tensoring~\citep{chen2019extreme}. In this paper, we focus on SM3~\citep{anil2019memory} in our implementation and experiments due to its empirical performance; however, our theoretical framework covers a broad class of  memory-efficient optimizers applied on the client-side (Section~\ref{ConvergenceAnalysis} and Appendix~\ref{FederatedBlendedOptimizationAppendix}).

\section{\name: Efficient Joint Server- and Client-Side Adaptivity} \label{methods}

In federated learning, a typical server-side objective is formed by taking an average of all client objectives $F_i(x)$ for $i \in [N]$ and $x \in \mathbb{R}^d$:
\vspace{-.1in}
\begin{equation}\label{balancedcase}
    f(x)=\frac{1}{N} \sum_{i=1}^N F_i(x).
\end{equation}
In the case of unbalanced client data sizes or sampling probabilities, the objective becomes $\sum_{i=1}^N p_i F_i(x)$ on the right hand side where $p_i$ is proportional to the local data size of client $i$, or the sampling probability. With a slight abuse of notation, we denote $F_i(x)=\mathbb{E}_{z \sim \mathcal{D}_i}\left[F_i(x, z)\right]$ where $F_i(x, z)$ is the stochastically realized local objective and $\mathcal{D}_i$ is the data distribution of client $i$. The convergence analysis developed in Section~\ref{ConvergenceAnalysis} holds when $\mathcal{D}_i$ is taken to be the local population distribution, as well as when $\mathcal{D}_i$ is the local empirical distribution.

A key characteristic of cross-device federated settings is that clients cannot store or maintain `states' across communication rounds~\citep{kairouz2021advances}. To facilitate joint adaptivity in stateless federated systems, a natural baseline is to estimate pseudogradient statistics on the server (i.e., maintaining server-side or global preconditioners), which are then transmitted to to all participating clients at each communication round. Selected clients then perform local adaptive updates using preconditioners initialized from the global values. While this allows clients to leverage global preconditioner information for local model adjustments, transmitting global pseudogradient statistics, such as second moments, at every round substantially increases communication costs. 
On the other hand, warm-starting from previously stored preconditioners may not be beneficial, especially when the interval between consecutive rounds involving the same client is large so that the preconditioners can be very stale. Additionally, performing local adaptive updates induces client-side memory overhead. In the following, we discuss two key techniques for efficient federated adaptive optimization with convergence guarantees.

\vspace{-2mm}
\paragraph{Zero Local Preconditioner Initialization.} 
To enhance the feasibility of jointly adaptive federated learning in cross-device settings, we first address extra major communication bottlenecks brought by transmitting global preconditioners from the server to a subset of clients. We propose a simple strategy of uniformly initializing local preconditioners to zero (or some constant vector) at the beginning of each training round, thus eliminating the need for preconditioner transmission. 

To describe the process in more detail,  assume Adagrad (with momentum) as the server-side optimizer~\citep{AdaptiveFederatedOptimization} for illustration purposes. We have the following server update rule (\ref{ServerUpdateRule}) for $-\Delta_i^t$ the accumulated pseudogradient from client $i$ at step $t$,
\begin{talign}
    \Delta_t &= \frac{1}{|\mathcal{S}^t|} \sum_{i \in \mathcal{S}^t} \Delta_i^t,
    \ \quad
    \widetilde{m}_t = \widetilde{\beta}_1 \widetilde{m}_{t-1} + (1 - \widetilde{\beta}_1) \Delta_t, \nonumber \\
    \widetilde{v}_t &= \widetilde{v}_{t-1} + \Delta_t^2,
    \ \quad 
    x_{t} = x_{t-1} + \eta \frac{\widetilde{m}_t}{\sqrt{\widetilde{v}_t} + \tau}. \tag{SU} \label{ServerUpdateRule}
\end{talign}
Here, $\widetilde{v}_t$ is the sum of squared server-side pseudogradient $-\Delta_t$, and $\widetilde{\beta}_1$ is the momentum coefficient controlling the moving average $\widetilde{m}_t$ of $-\Delta_t$. The set $\mathcal{S}_t \subset [N]$ gives the index of all participating clients at round $t$, and $\tau$ is a constant. An extension to the case when Adam is selected as the server optimizer is given in Appendix~\ref{ExtendingToServerAdam}. After obtaining an updated global preconditioner $\widetilde{v}_t$ at each communication round, in \name, the server does not communicate $\widetilde{v}_t$ to the participating clients. Instead, each client only receives $x_t$ and initializes the local preconditioners from zero. Empirically, we demonstrate this simple strategy does not degrade the performance relative to the alternative of transmitting global preconditioners, while being significantly more communication efficient for adaptive methods beyond AdaGrad (Section~\ref{sec:exp:main}).
In addition to communication reduction, this approach enables the use of different optimizers on the server and clients, as the server and client can maintain independent gradient statistic estimates. We further discuss the theoretical guarantees as well as implications of this general framework in Section~\ref{ConvergenceAnalysis} and Appendix~\ref{FederatedBlendedOptimizationAppendix}.

\vspace{-3mm}
\paragraph{Addressing Client-Side Resource Constraints.} 
To accommodate local memory restrictions, we further employ existing memory-efficient optimizers for all clients in \nameplus. Our framework allows any such optimizer to be used, including a heterogeneous mixture within each communication round. We provide a convergence guarantee for a very broad class of optimizer strategies in Theorem~\ref{AdaSquareSM3NonconvexConvergenceThm}. We note that in order for convergence to be guaranteed, the memory-efficient optimizer must satisfy the conditions of Theorem~\ref{NonConvexCongergenceBoundForFederatedBlendedOptimization}, which are non-restrictive\footnote{It can easily be shown that Adam, AdaGrad, SGD, as well as their memory-efficient counterparts~\citep{anil2019memory} for the first two, all satisfy the optimizer conditions for guaranteed convergence.}. 
The \nameplus framework is summarized in Algorithm~\ref{FedAdaSquareGeneralForm} below, presented in a simplified form. Local statistics or global statistics refer to those used to construct preconditioners (e.g., first or second moment), and selecting a memory-efficient optimizer strategy gives \nameplus.

\begin{algorithm}
\caption{\name: Efficient Jointly Adaptive Optimization Framework (Simplified)}\label{FedAdaSquareGeneralForm}
\begin{algorithmic}[1]
\REQUIRE Initial model $x_0$, total number of clients $N$
\FOR{$t = 1, \dots, T$}
    \STATE Sample client subset $\mathcal{S}^t \subset [N]$  
        \FOR{each client $i \in S_l^t$ (in parallel)}
        \STATE $x_{i,0}^{t} \leftarrow x_{t-1}$
        \STATE   
        $\text{local\_statistics} \leftarrow 0$ 
        \FOR{$k = 1, \dots, K$}
            \STATE Draw gradient $g^t_{i,k} \sim \mathcal{D}_{i,\operatorname{grad}}(x^t_{i,k-1})$
        \STATE  
        $x_{i,k}^t \leftarrow \operatorname{Adap\_Opt.}(x_{i,k-1}^t, g^t_{i,k}, \text{local\_statistics})$ 
        \STATE (\nameplus: memory-efficient $\operatorname{Adap\_Opt.}$) 
        \ENDFOR
            \STATE $\Delta_i^t = x_{i,K}^{t} - x_{t-1}$
        \ENDFOR
\STATE $x^t \leftarrow \operatorname{Adap\_Opt.}(\{\Delta_i^t\}_{i \in S_l^t}, \text{global\_statistics})$ (for example,  Eq.~(\ref{ServerUpdateRule}))
\ENDFOR
\end{algorithmic}
\end{algorithm}

During implementation, we have chosen to instantiate \nameplus with SM3 adaptations of Adam and Adagrad as the memory-efficient local optimizers (Appendix~\ref{SM3AppendixSection}) due to its strong empirical performance. Intuitively, SM3 exploits natural activation patterns observed in model gradients to efficiently synthesize a low-rank approximation of the preconditioner. It maintains the statistics in the granularity of parameter groups instead of individual coordinates. Our analyses in Section~\ref{ConvergenceAnalysis} hold for a class of memory-efficient local optimizers. Due to the highly technical nature of its implementation and the convergence analysis, specific algorithm design details and proofs have been relegated to Appendix~\ref{SM3AppendixSection},~\ref{app:proofs} for interested readers only. Our convergence analysis further accounts for the use of delayed preconditioner updates, where clients update their second-moment gradient statistics after every $z$ minibatch backpropagations, introducing a $z$-step delay (see Algorithm~\ref{AdaSquareSM3}, Appendix~\ref{DelayedUpdatesAppendix}). 

\section{Convergence Analyses}\label{ConvergenceAnalysis}
One of the challenges in proving the convergence bound for jointly adaptive systems lies in handling client-side adaptivity with multiple local updates. The individual gradients may not be linearly separable due to dependencies between historical client gradients in the local model updates.
Furthermore, the combination of server- and client-side adaptivity complicates analysis relative to prior works focusing on only server-side or only client-side adaptivity. 
To address these issues, we assume access to full batch client gradients, but allow for client partial participation. 

\vspace{-0.1in}
\begin{paragraph}{Assumption 1 ($L$-Smoothness).} \label{assum:smoothness}
The local objectives are $L$-smooth and satisfy $\| \nabla F_i(x)-$ $\nabla F_i(y)\|\leq L\| x-y \|$ for all $x, y \in \mathcal{X}$ and $i \in [N]$.
\end{paragraph}

\vspace{-0.1in}
\begin{paragraph}{Assumption 2 (Bounded Gradients).} \label{assum:bounded_gradients}
The full gradient of the local objective is bounded: $\left|\left[\nabla F_i(x)\right]_j\right| \le G$ for $j \in [d]$, $i \in [N]$. 
\end{paragraph}

We note that although we assume a uniform upper bound on the full batch gradient (as opposed to stochastic gradients) for the convergence results to hold. These assumptions are standard within the literature and have been used in previous works~\citep{xie2019local,FedNOVA,FedAvgConvOnNonIID}. 
In particular, this delineates an $\widetilde{L}$-Lipschitz family of objectives given that the arguments are $\eta_\ell\varepsilon_s$-bounded away from each other,
\begin{equation*}
\left\|\nabla F_i(x) - \nabla F_j(y)\right\| \le \widetilde{L} \|x-y\| := \frac{2\sqrt{d}G}{\eta_\ell\varepsilon_s} \|x-y\| 
\end{equation*}
for $i,j \in [N]$ and $\|x-y\|\ge \eta_\ell\varepsilon_s$. Here, $\varepsilon_s$ is an epsilon smoothing term that activates on the client side and $\eta_l$ is the local learning rate. This quantity is used in a gradient clipping step in \name (full version Algorithm~\ref{FederatedBlendedOptimization}), where if the local gradient update is negligibly small in magnitude, then the gradient is autonomously clipped to $0$. $\eta_\ell>0$ is the local learning rate, and in particular, we note that $\widetilde{L} = \Theta(\eta_\ell^{-1})$. By taking $\varepsilon_s \to 0$, our algorithm recovers federated algorithms that do not utilize local gradient clipping. The definition the clipping threshold $\varepsilon_s$ is purely for analytical purposes; in our experiments, we take $\varepsilon_s$ to be a negligible value so that $m_k$ is not set to 0.

We now provide a convergence bound for the general, non-convex case under local  gradient descent and partial client participation. The full theorem statement is provided in Appendix~\ref{FederatedBlendedOptimizationAppendix} as Theorem~\ref{NonConvexCongergenceBoundForFederatedBlendedOptimization}. The SM3 instantiation of \nameplus, as well as the generalization to the case where we use Adam as the server/client optimizers are provided in Appendices~\ref{precompactconvergenceanalysis} and~\ref{ExtendingToServerAdam}. We note that the convergence bounds are derived for the deterministic setting. Extending the analysis to stochastic regimes and establishing analogous guarantees is a natural direction for future work.
\begin{theorem}[Simplified]\label{AdaSquareSM3NonconvexConvergenceThm} Under Assumptions 1 and 2 as well as some non-restrictive optimizer update conditions (Theorem~\ref{NonConvexCongergenceBoundForFederatedBlendedOptimization}), for any choice of initialization $x_0$, Algorithm~\ref{FedAdaSquareGeneralForm} satisfies
\begin{equation*}
    \min_{t \in [T]} \|\nabla f(x_{t-1}) \|^2 \le \frac{\Psi_1 + \Psi_2 + \Psi_3 + \Psi_4 + \Psi_5}{\Psi_6}
\end{equation*}
where asymptotically,
\begin{equation*}
\Psi_1 = \Theta(1), \  \Psi_2 = \eta^2\eta_\ell^2 T, \  \Psi_3 = \eta\eta_\ell^2 T, \ \Psi_4 = \eta \eta_\ell \log(1 + T\eta_\ell^2),
\end{equation*}
and
\begin{equation*}
\Psi_5 = \left\{ \begin{aligned}
&\eta^3 \eta_\ell^3 T && \text{if } \ \ \mathcal{O}(\eta_\ell) \leq \mathcal{O}(1) \\
&\eta^3\eta_\ell T && \text{if } \ \ \Theta(\eta_\ell) > \Omega(1)
\end{aligned} \right., \quad
\Psi_6 = \left\{ \begin{aligned}
&\eta \eta_\ell T && \text{if } \ \ \mathcal{O}(T\eta_\ell^2) \leq \mathcal{O}(1) \\
&\eta \sqrt{T} && \text{if } \ \ \Theta (T\eta_\ell^2) > \Omega(1)
\end{aligned} \right. .
\end{equation*}
\end{theorem}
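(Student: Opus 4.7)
The plan is to apply a standard descent-lemma-plus-telescoping approach tailored to joint adaptivity, where the main work is in opening up the aggregated pseudogradient and controlling the global preconditioner. First, I would invoke $L$-smoothness (Assumption 1) on consecutive server iterates to obtain
\begin{equation*}
f(x_t) \le f(x_{t-1}) + \langle \nabla f(x_{t-1}), x_t - x_{t-1}\rangle + \tfrac{L}{2}\|x_t - x_{t-1}\|^2,
\end{equation*}
then substitute the server update \eqref{ServerUpdateRule} and split the analysis into an inner-product ``signal'' term, an error term coming from client drift and local adaptivity, and a quadratic ``variance'' term. The quadratic term is bounded using $\|x_t - x_{t-1}\| \le \eta \|\widetilde m_t\|/\tau$ together with $\|\Delta_i^t\| \le \eta_\ell K G\sqrt{d}$ from Assumption 2, producing the $\psi_2 = \eta^2\eta_\ell^2 T$ contribution after summing over rounds.

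Next I would decompose each local trajectory. Writing $-\Delta_i^t = \eta_\ell \sum_{k=0}^{K-1} P_{i,k}^t g_{i,k}^t$ for $P_{i,k}^t$ the (possibly memory-efficient) preconditioner initialized from the zero/constant vector at the start of the round, the non-restrictive optimizer update conditions alluded to in Theorem~\ref{NonConvexCongergenceBoundForFederatedBlendedOptimization} give deterministic two-sided bounds on $P_{i,k}^t$ in terms of $G$, $\eta_\ell$, and the smoothing $\varepsilon_s$. Combining these with the drift bound $\|x_{i,k}^t - x_{t-1}\| \le k\eta_\ell G/\varepsilon_s$ and the Lipschitz consequence $\widetilde L = 2\sqrt{d}G/(\eta_\ell \varepsilon_s)$ yields a decomposition $-\Delta_t = \eta_\ell K\, A_t \nabla f(x_{t-1}) + E_t$, with $A_t$ a bounded positive diagonal operator and $\|E_t\|$ scaling like $\eta_\ell^2$. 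The drift-induced portion of $E_t$, once dotted against $\nabla f(x_{t-1})$, is exactly what generates the $\psi_3 = \eta\eta_\ell^2 T$ contribution. A further application of smoothness to the momentum-smoothed iterate and bounding $\widetilde m_t$ recursively against past $\Delta_s$ contributes the higher-order $\psi_5$ term, whose dichotomy reflects whether the local learning rate is small enough that cubic-in-$\eta\eta_\ell$ terms dominate or whether the $\eta_\ell$-heterogeneity in the momentum tail saturates.

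The denominator $\psi_6$ and the AdaGrad-style logarithmic term $\psi_4 = \eta\eta_\ell \log(1 + T\eta_\ell^2)$ both come from controlling $\sqrt{\widetilde v_t} + \tau$. A coordinatewise upper bound $[\widetilde v_t]_j \le t(\eta_\ell K G)^2$ from Assumption 2 dominates the denominator of the signal term: when $T\eta_\ell^2 \le \mathcal{O}(1)$, the constant $\tau$ dominates and the summed signal contributes on the order of $\eta\eta_\ell T$; when $T\eta_\ell^2 > \Omega(1)$, $\sqrt{\widetilde v_t}$ dominates and scales like $\eta_\ell \sqrt{T}$, so the $\eta_\ell$ factors cancel and one is left with $\eta\sqrt{T}$ in the denominator. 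The $\psi_4$ term is produced by a standard telescoping/integration of $\sum_t \Delta_t^2/(\sqrt{\widetilde v_t} + \tau)$, which evaluates to a logarithm in $1 + T\eta_\ell^2$. Telescoping the descent inequality from $t = 1$ to $T$ absorbs $f(x_0) - f(x^*) = \Theta(1) = \psi_1$, and rearranging to isolate $\sum_{t=1}^T \|\nabla f(x_{t-1})\|^2$ divided by $\psi_6$ produces the claimed bound via $\min_t \|\nabla f(x_{t-1})\|^2 \le \tfrac{1}{T}\sum_t \|\nabla f(x_{t-1})\|^2$.

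The main obstacle I anticipate is the second step: maintaining a tight decomposition of $\Delta_i^t$ across $K$ local adaptive steps while the preconditioner itself evolves, and arguing that the effective operator $A_t$ is well-enough conditioned to treat the pseudogradient as a scaled $\nabla f(x_{t-1})$ plus a genuinely $\mathcal{O}(\eta_\ell^2)$ error. The \nameplus memory-efficient instantiation sharpens this difficulty because the factorized preconditioner only coordinatewise dominates the true one, so extracting a uniform lower bound on $A_t$ requires invoking exactly the optimizer update conditions of Theorem~\ref{NonConvexCongergenceBoundForFederatedBlendedOptimization} in a way that covers both \name and \nameplus in one unified argument. Once that decomposition is in place, the rest of the proof reduces to a careful but mechanical accounting of the five $\psi_i$ terms against the case-split denominator $\psi_6$.
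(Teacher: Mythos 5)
Your proposal follows essentially the same route as the paper's proof: a descent-lemma expansion of the server update, extraction of a negative signal term by pairing $\Delta_t$ with a uniformly bounded positive average of the inverse local preconditioners (your operator $A_t$ is the paper's scalar $\gamma_r/\alpha_r$ device), separate client-drift and server-drift error terms yielding $\psi_3$ and $\psi_5$, an AdaGrad-style logarithmic bound on $\sum_t \Delta_t^2/\widetilde v_t$ for $\psi_4$, and the case split on $\sqrt{\widetilde v_T}\sim\max\{\tau,\eta_\ell\sqrt{T}\}$ for $\psi_6$. Your attribution of each $\psi_i$ to its source term matches the paper's $\Psi_i$ exactly (modulo the swapped indexing of the log and server-drift terms), so the plan is correct and no genuinely different ideas are introduced.
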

We defer the detailed proofs and complete statement of the bounds to Appendix~\ref{app:proofs} and~\ref{FederatedBlendedOptimizationAppendix}. A key theoretical insight in our work is that the benefits of joint adaptivity can be retained even without transmitting preconditioners. The condition $\mathcal{O}(\eta_\ell) \le \mathcal{O}(1)$ refers to learning rates that are asymptotically constant or decaying with $T$, e.g., $\eta_\ell = 1/T$, $1/T^2$, etc. Our convergence allows for delayed preconditioner updates. In Lemma~\ref{AdaSquaredeltaboundSM3} in the appendix, we observe that a larger decay parameter $z$ can potentially slow down convergence. Empirically, the performance is robust to the choice of $z$ (Appendix~\ref{DelayedUpdatesAppendix}). We make no other assumptions on local or global learning rates ($\eta_l$ and $\eta$) to extract the most general use of Theorem~\ref{AdaSquareSM3NonconvexConvergenceThm}. 
We have the following two corollaries.
\begin{corollary}
Any of the following conditions are sufficient to ensure convergence of Algorithm~\ref{FedAdaSquareGeneralForm}:
\begin{align*}
(A): \quad & \eta_\ell \le \mathcal{O}(T^{-\frac{1}{2}}) \quad \text{for} \quad \Omega(T^{-1})<\eta \eta_\ell < \mathcal{O}(1),\\
(B): \quad &\eta_\ell = \Theta(T^{-\frac{49}{100}})\quad \text{for} \quad \Omega(T^{-\frac{1}{2}}) < \eta < \mathcal{O}(T^{\frac{12}{25}}).
\end{align*}
\end{corollary}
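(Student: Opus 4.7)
The plan is to apply Theorem~\ref{AdaSquareSM3NonconvexConvergenceThm} directly and verify that, under each regime (A) and (B), every ratio $\psi_i/\psi_6$ for $i=1,\ldots,5$ vanishes as $T\to\infty$. Since the theorem's bound is a sum of five such ratios, $\min_t\|\nabla f(x_{t-1})\|^2 \to 0$ follows from checking each ratio individually. The first step in each case is to determine which branch of the piecewise definitions of $\psi_5$ and $\psi_6$ is active, which is dictated entirely by the size of $\eta_\ell$ and $T\eta_\ell^2$.

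For condition (A), the hypothesis $\eta_\ell \le \mathcal{O}(T^{-1/2})$ yields $T\eta_\ell^2 \le \mathcal{O}(1)$, so $\psi_6 = \eta\eta_\ell T$, and since $\eta_\ell = o(1)$ we are in the branch $\psi_5 = \eta^3\eta_\ell^3 T$. Direct substitution collapses the five ratios to $(\eta\eta_\ell T)^{-1}$, $\eta\eta_\ell$, $\eta_\ell$, $\log(1 + T\eta_\ell^2)/T$, and $(\eta\eta_\ell)^2$. These vanish, respectively, from the lower bound $\eta\eta_\ell = \omega(T^{-1})$, the upper bound $\eta\eta_\ell = o(1)$, the hypothesis $\eta_\ell \to 0$, boundedness of the $\psi_4$ numerator against the growing denominator, and again $\eta\eta_\ell = o(1)$.

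For condition (B), substituting $\eta_\ell = \Theta(T^{-49/100})$ gives $T\eta_\ell^2 = \Theta(T^{1/50}) \to \infty$, so the active branch is $\psi_6 = \eta\sqrt{T}$, while $\eta_\ell \to 0$ keeps $\psi_5 = \eta^3\eta_\ell^3 T$. The central exponent identities are $\eta_\ell^2 \sqrt{T} = \Theta(T^{-12/25})$ and $\eta_\ell^3 \sqrt{T} = \Theta(T^{-97/100})$. The five ratios then scale as $(\eta\sqrt{T})^{-1}$, $\eta\cdot T^{-12/25}$, $T^{-12/25}$, $T^{-99/100}\log T$, and $\eta^2 \cdot T^{-97/100}$. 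The hypothesis $\eta > \Omega(T^{-1/2})$ kills the first, $\eta < \mathcal{O}(T^{12/25})$ kills the second, and the third and fourth vanish unconditionally. The fifth requires $\eta = o(T^{97/200})$, which is implied by $\eta < \mathcal{O}(T^{12/25})$ since $12/25 = 0.48 < 0.485 = 97/200$.

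The only subtle bookkeeping is in case (B): one must check that the $\psi_5/\psi_6$ constraint is strictly slack relative to the stated upper bound $\eta < \mathcal{O}(T^{12/25})$, and that the peculiar exponent $49/100$ for $\eta_\ell$ (marginally above the case (A) threshold of $1/2$) is precisely what pushes us into the $\psi_6 = \eta\sqrt{T}$ branch while widening the admissible window for $\eta$ up to $T^{12/25}$. Verifying this cross-consistency between the $\psi_2/\psi_6$ and $\psi_5/\psi_6$ constraints is the only non-mechanical step; the rest is direct substitution into Theorem~\ref{AdaSquareSM3NonconvexConvergenceThm}.
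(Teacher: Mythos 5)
Your proposal is correct and follows essentially the same route as the paper: substitute the asymptotic orders of $\Psi_1,\dots,\Psi_6$ from Theorem~\ref{AdaSquareSM3NonconvexConvergenceThm}, identify the active branch of $\psi_5$ and $\psi_6$ from the size of $\eta_\ell$ and $T\eta_\ell^2$, and check that each ratio $\psi_i/\psi_6$ vanishes; your exponent arithmetic in case (B) (in particular $2p_1+3p_2<-\tfrac{1}{2}$ reducing to $p_1<\tfrac{97}{200}$, which is slack against $\tfrac{12}{25}$) reproduces the paper's conditions exactly.
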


\begin{corollary}\label{ConvergenceCorollary}
Algorithm~\ref{FedAdaSquareGeneralForm} converges at rate $\mathcal{O}(T^{-1/2})$.
\end{corollary}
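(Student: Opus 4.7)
The plan is to exhibit an explicit learning-rate schedule for which the bound in Theorem~\ref{AdaSquareSM3NonconvexConvergenceThm} collapses to $\mathcal{O}(T^{-1/2})$. Guided by condition (A) of the previous corollary, I would set $\eta_\ell = \Theta(T^{-1/2})$ and $\eta = \Theta(1)$. These satisfy $\Omega(T^{-1}) < \eta\eta_\ell = \Theta(T^{-1/2}) < \mathcal{O}(1)$ and $\eta_\ell \le \mathcal{O}(T^{-1/2})$, so the hypotheses guaranteeing convergence hold and we can legitimately substitute these values into the bound.

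Next, I would identify which branch of the piecewise definitions of $\psi_5$ and $\psi_6$ is active. Since $\eta_\ell = \Theta(T^{-1/2}) = \mathcal{O}(1)$, the first branch for $\psi_5$ is active, giving $\psi_5 = \eta^3 \eta_\ell^3 T$. For $\psi_6$, we have $T\eta_\ell^2 = \Theta(1)$, which sits in the regime $\mathcal{O}(T\eta_\ell^2) \le \mathcal{O}(1)$, so $\psi_6 = \eta \eta_\ell T$. Now I would tabulate each $\psi_i$ under the chosen rates:
\begin{equation*}
\psi_1 = \Theta(1), \quad \psi_2 = \Theta(1), \quad \psi_3 = \Theta(1), \quad \psi_4 = \Theta(T^{-1/2}), \quad \psi_5 = \Theta(T^{-1/2}), \quad \psi_6 = \Theta(T^{1/2}).
\end{equation*}

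Summing the numerator gives $\Theta(1)$, and dividing by $\psi_6 = \Theta(T^{1/2})$ yields the desired bound $\min_{t\in [T]} \|\nabla f(x_{t-1})\|^2 \le \mathcal{O}(T^{-1/2})$. The argument is almost entirely bookkeeping once the schedule is fixed; the only subtlety is ensuring that the chosen $(\eta,\eta_\ell)$ lands at (not across) the boundary $T\eta_\ell^2 = \Theta(1)$ so that the correct branch of $\psi_6$ is in force, and that $\psi_4$'s logarithmic factor $\log(1+T\eta_\ell^2) = \log 2$ is absorbed into the constant. I expect this boundary-case verification to be the only delicate point; no new analysis beyond Theorem~\ref{AdaSquareSM3NonconvexConvergenceThm} is required.
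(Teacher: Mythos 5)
Your proposal is correct and matches the paper's own argument: the paper likewise identifies the schedule $\eta = \Theta(1)$, $\eta_\ell = \Theta(T^{-1/2})$ (via a minimax over the exponents of $\psi_1/\psi_6$ and $\psi_2/\psi_6$) and verifies that every $\psi_i/\psi_6 \le \mathcal{O}(T^{-1/2})$, exactly as in your tabulation; your observation that the two branches of $\psi_6$ coincide at $T\eta_\ell^2 = \Theta(1)$ disposes of the only boundary subtlety. The paper's proof additionally argues that no choice of $(\eta,\eta_\ell)$ in either regime can beat $T^{-1/2}$, but that optimality claim is not needed for the corollary as stated.
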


\textbf{Discussions of Convergence Bound.}
There have been several recent works exploring adaptivity and communication efficiency in federated learning. The convergence rate in Corollary~\ref{ConvergenceCorollary} matches the state-of-the-art for federated non-convex optimization methods~\citep{AdaptiveFederatedOptimization,wang2022communication,tong2020effective,sun2023fedlalr,xie2019local,chen2020toward}. However, to the best of our knowledge, there are no known convergence results of jointly adaptive federated optimization that explicitly support several popular methods including Adam and AdaGrad. 

\textbf{Importance of Adaptivity Parameters.} \ \ 
The $\varepsilon$-smoothing term (i.e., adaptivity parameter) is crucial for maintaining stability and ensuring convergence in adaptive optimizers. For example, PyTorch's implementation of Adam adopts $\varepsilon = 10^{-8}$ as its default value. In our convergence bounds (c.f., Theorem~\ref{NonConvexCongergenceBoundForFederatedBlendedOptimization}, full version), the smoothing term explicitly appears in the denominator on the right-hand side of the convergence result via $\widetilde{L}$ and $\tau$. Setting $\varepsilon = 0$ or $\tau = 0$ causes the right-hand side of the convergence bounds to diverge, thereby undermining convergence. To address this, we ensure $\tau, \varepsilon > 0$ in our algorithm and impose the smoothing condition $m_l > 0$ in Theorem~\ref{SM3FedAda2NonConvexConvergeThm} (the full version of Theorem~\ref{AdaSquareSM3NonconvexConvergenceThm}).

\textbf{Extension of \name: Blended Optimization.} \ \ 
The gradient descent setting used in the analysis of Theorem~\ref{AdaSquareSM3NonconvexConvergenceThm} is conceptually equivalent to accessing oracle client workers capable of drawing their entire localized empirical data stream. While this constraint is a limitation of our theory, it enables us to derive stronger results and induce additional adaptive frameworks for which our analysis generalizes. For instance, our bound deterministically guarantees asymptotic stabilization of the minimum gradient, regardless of initialization or client subsampling procedure. In Appendix~\ref{FederatedBlendedOptimizationAppendix}, we extend the \name framework to a even more general framework of federated blended optimization.

Blended optimization distributes local optimizer strategies during the subsampling process, which are formalized as functions that take as input the availability of client resources and outputs hyperparameters such as delay step size $z$ or choice of optimizer (Adam, AdaGrad, SGD, etc). These may be chosen to streamline model training based on a variety of factors, such as straggler mitigation or low availability of local resources. In particular, this framework permits the deployment of different adaptive optimizers per device for each round, enhancing the utility of communication-efficient frameworks that do not retain preconditioners between clients or between the server and client. This flexibility is especially beneficial in scenarios where there is non-uniformity between server and client adaptive optimizer choices, or between client-side optimizers. 

\section{Empirical Evaluation}\label{ExperimentsAndDisucussion}
In this section, we empirically demonstrate the performance of \name/\nameplus compared with several baselines that are either non-adaptive or adaptive but inefficient. We first present our main results by comparing different instantiations of \name with more expensive jointly adaptive baselines and non-jointly adaptive methods in Section~\ref{sec:exp:main}. We then investigate the effects of hyperparameters in more detail in Section~\ref{sec:exp:ablation}. We repeat every run for $20$  times under different random seeds for  statistical significance, and report $95\%$ confidence intervals as shaded error regions in all plots. 
Our experiments are designed to reflect memory- and communication-constrained scenarios, and we thoroughly demonstrate that our algorithms retain strong convergence behavior in such environments.

\paragraph{Evaluation Setup.} We explore the impact of adaptivity on both text and image datasets, i.e.,  StackOverflow~\citep{stackoverflow2021}, CIFAR-100~\citep{Krizhevsky2009LearningML}, FEMNIST~\citep{LEAF}, and GLD-23K~\citep{49052}. 
In StackOverflow, each client is a single user posting on the StackOverflow website. 
For images, we explore vision transformer models (ViT-S~\citep{ViT}) which are pretrained on ImageNet-21K~\citep{ImageNet21K}, and finetune them the Google Landmarks dataset~\citep{49052}. This represents a domain shift onto natural user-split pictorial data.  We use the same model on the CIFAR-100 dataset~\citep{Krizhevsky2009LearningML}, where we partition the data using LDA~\citep{blei2003latent} with $\alpha = 0.001$, a non-IID statistical topic modeling algorithm. To assess the performance of all algorithms in an additional realistic heterogeneous federated learning scenario, we further utilize FEMNIST~\citep{LEAF} where each client is an individual writer. This setup evaluates federated learning algorithms under non-IID conditions, highlighting challenges such as personalization and robustness to client heterogeneity. Details for federated dataset statistics, learning tasks, and hyperparameter tuning are provided in Appendix~\ref{hp}.

\paragraph{Description of Baselines.} 
Throughout this section, we compare with the following baselines.
FedAvg is the vanilla FL algorithm introduced in~\citet{mcmahan2017communication}, without any additional momentum for the server-side aggregation. FedAdaGrad or FedAdam are two examples of server-only adaptive federated optimization methods~\citep{AdaptiveFederatedOptimization}, where the server-side model updates are performed by an adaptive optimizer (e.g., AdaGrad/Adam) instead of vanilla averaging. `Costly Joint Adaptivity' (named \textit{Costly Joint Adap}. in the captions) indicates a jointly adaptive training regimen, where server-side preconditioners are transmitted to clients at every communication round. For instance, we may denote one such setup as `AdaGrad-AdaGrad', where server-side AdaGrad preconditioners are distributed to the client-side AdaGrad optimizers for local preconditioner initialization. Removing server-side preconditioner transmission and using zero initialization of client-side preconditioners naturally results in a corresponding instantiation of \name, which is communication-efficient. Further compressing the local preconditioners using SM3~\citep{anil2019memory} to account for client memory resource limitations gives \nameplus. Therefore, the baselines and \name/\nameplus may be viewed as natural and well-motivated variations via the addition of jointly adaptive updates and memory-efficient optimizers. 

\subsection{Empirical Performance of \name and \nameplus} \label{sec:exp:main}

\begin{figure*}[ht]
\centering
    \begin{subfigure}[b]{0.30\textwidth}
        \centering
        \includegraphics[width=\textwidth]{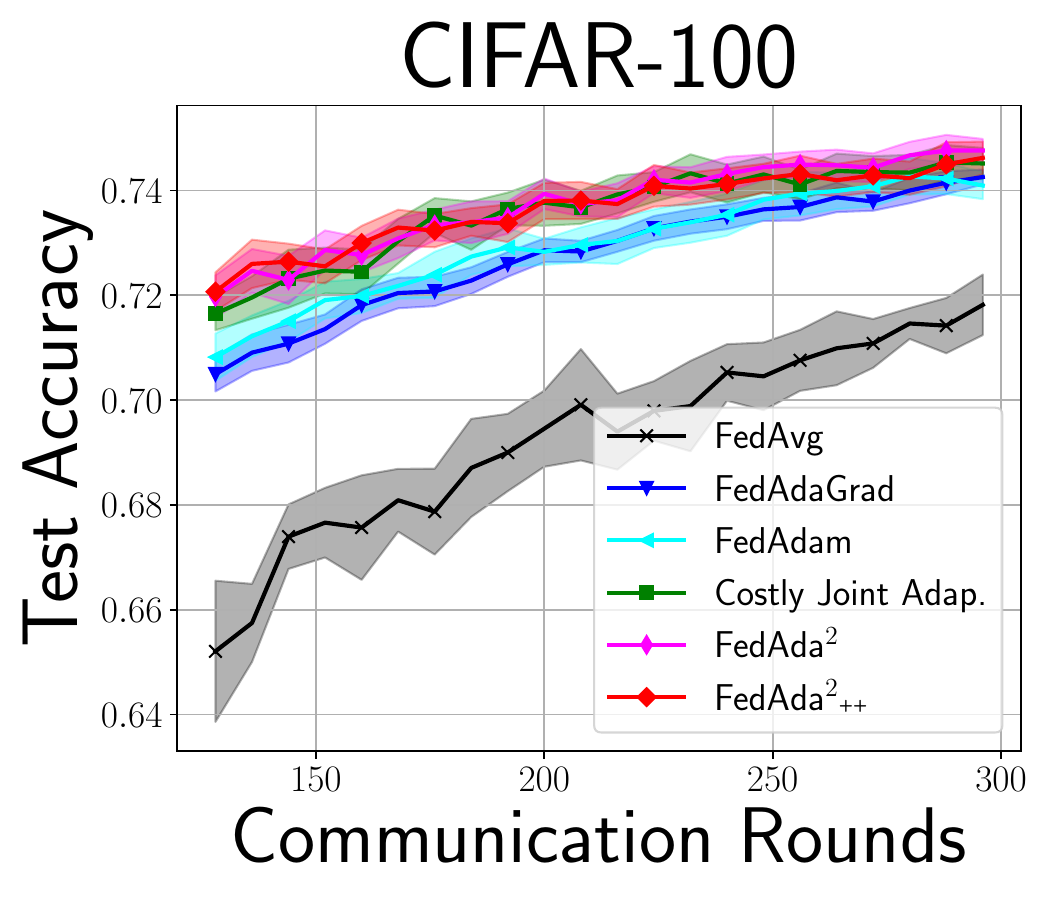}
    \end{subfigure}
    \begin{subfigure}[b]{0.30\textwidth}
        \centering
        \includegraphics[width=\textwidth]{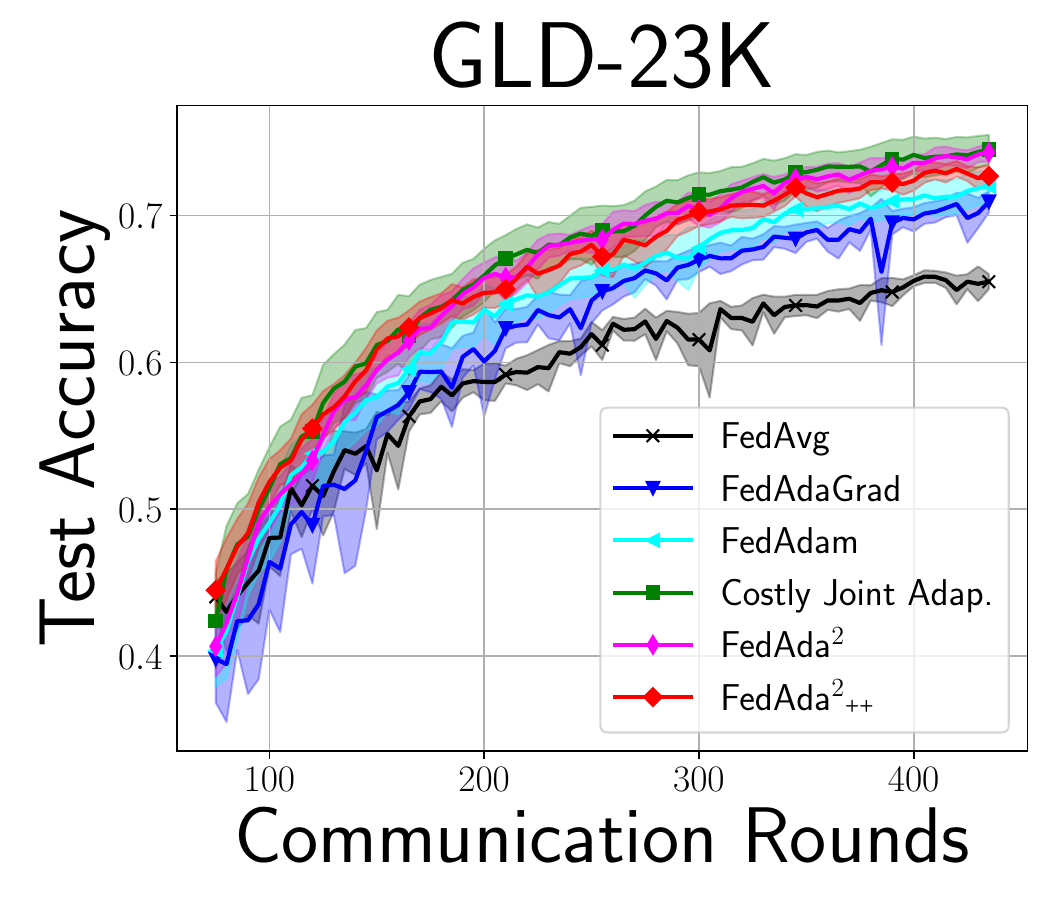}
    \end{subfigure} 
    \begin{subfigure}[b]{0.30\textwidth}
        \centering
        \includegraphics[width=\textwidth]{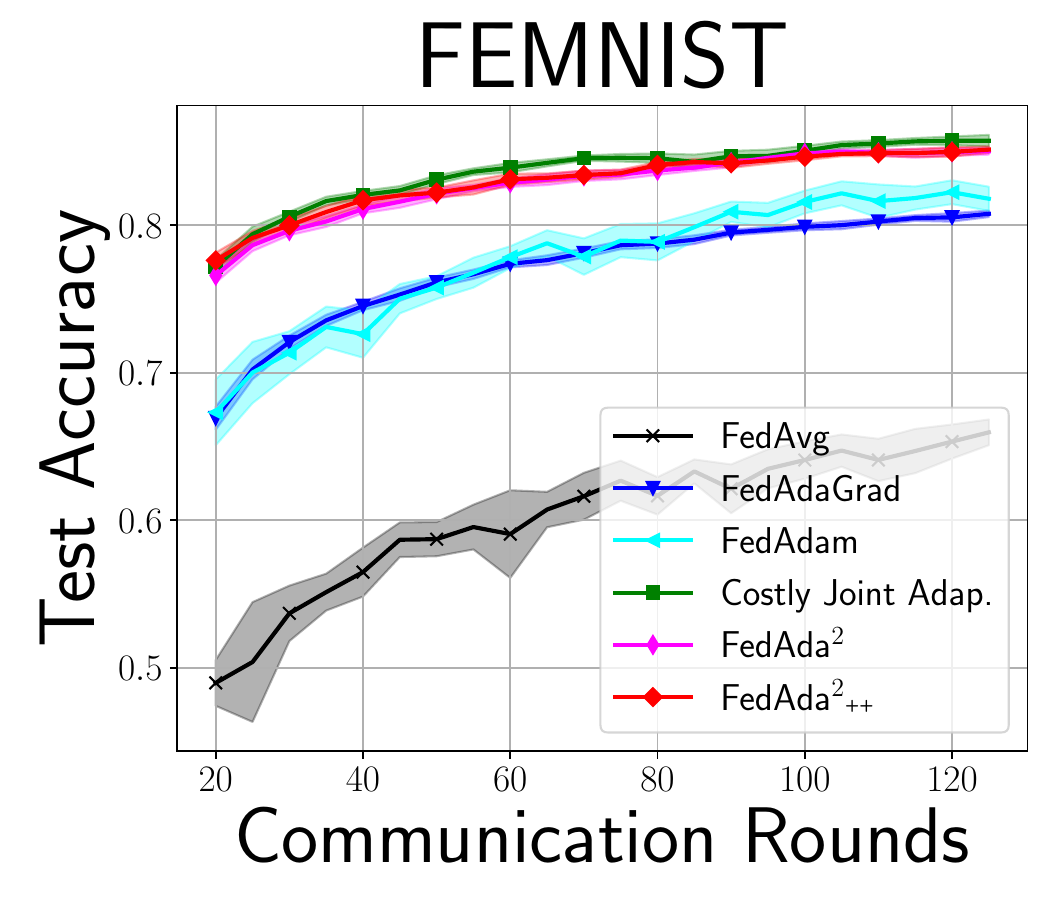}
    \end{subfigure} 
    \hfill\\[3mm]
    \begin{subfigure}[b]{0.30\textwidth}
        \centering
        \includegraphics[width=\textwidth]{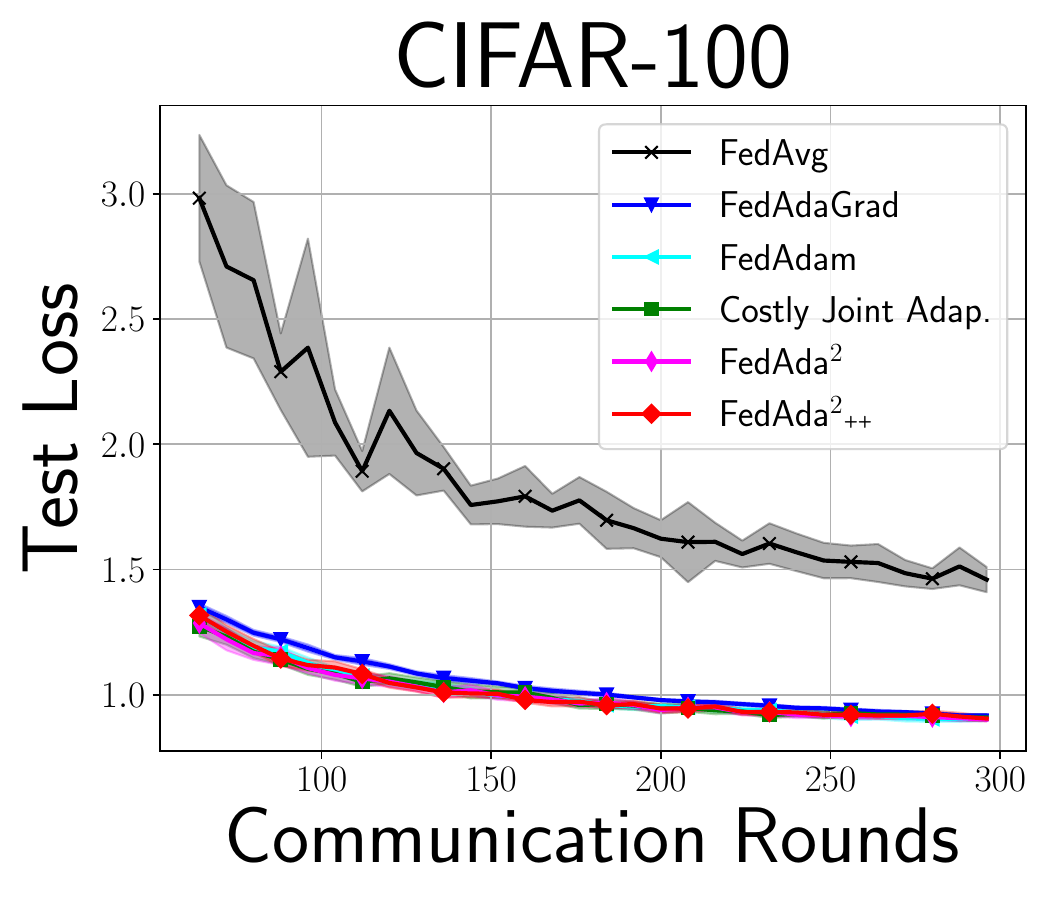}
    \end{subfigure}
    \begin{subfigure}[b]{0.30\textwidth}
        \centering
        \includegraphics[width=\textwidth]{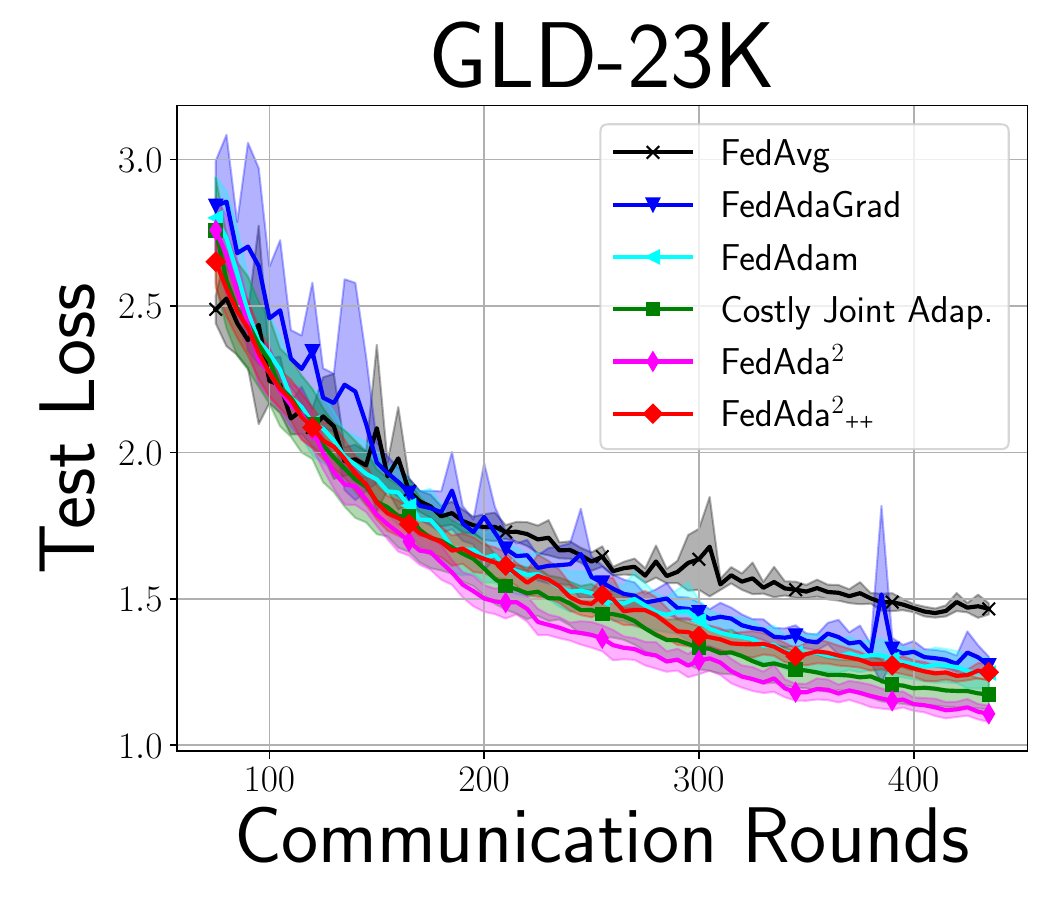}
    \end{subfigure}
        \begin{subfigure}[b]{0.30\textwidth}
        \centering
        \includegraphics[width=\textwidth]{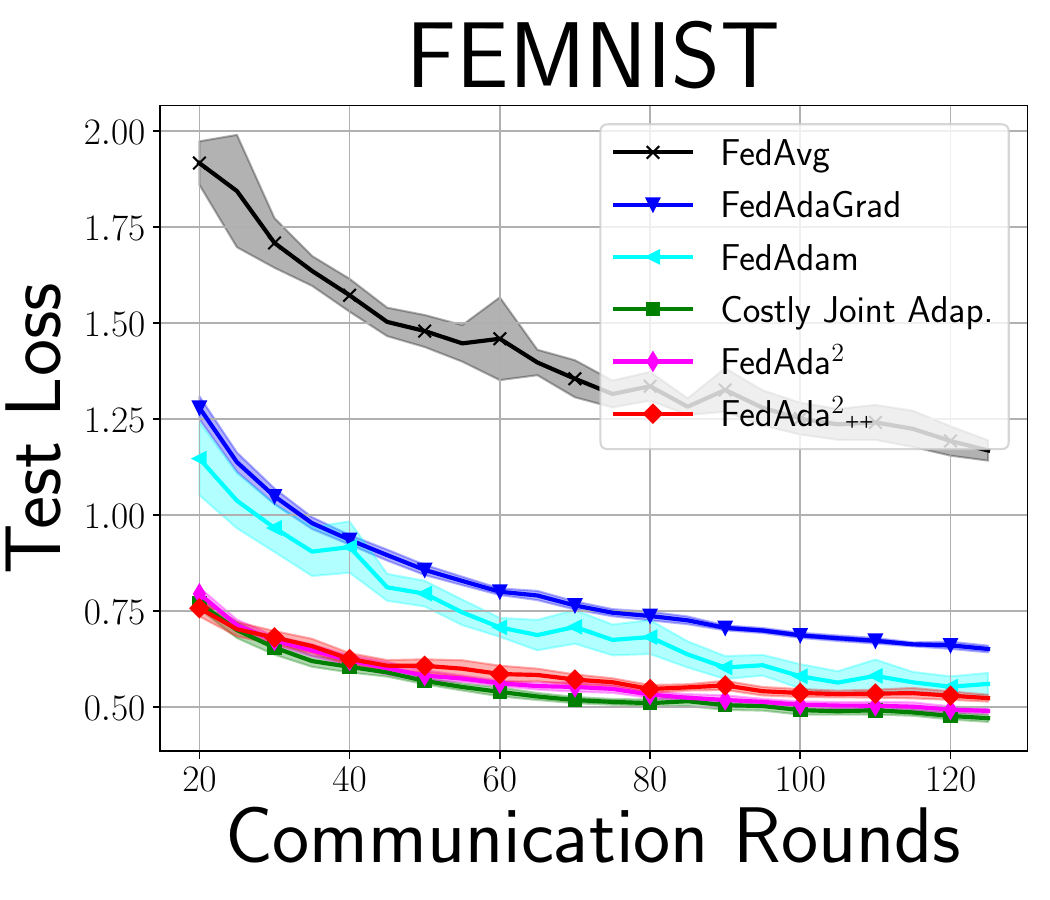}
    \end{subfigure}
    \vspace{-1mm}
    \caption{(Top) Test accuracies on CIFAR-100, GLD-23K, and FEMNIST datasets. If not otherwise specified, CIFAR-100 and GLD-23K use Adam for adaptivity. We see that jointly adaptive algorithms demonstrate improved performance over FedAvg and server-only adaptive systems. Furthermore, curtailing global preconditioner transmission in \name/\nameplus does not degrade performance, and preserves the benefits of joint adaptivity while maintaining efficiency. (Bottom) Corresponding test losses resultant from model training on the three datasets. 
    }
    \vspace{-0.1in}
    \label{AlgorithmPerformanceFigure}
\end{figure*}

\textbf{\name and \nameplus for Training Transformers.} 
We investigate the performance of finetuning vision transformer models (ViT-S~\citep{ViT}) on image data. For all runs on the CIFAR-100, FEMNIST, and GLD-23K datasets, we select Adam as the adaptive optimizer instantiation, except for the FedAdaGrad baseline. 
For CIFAR-100 (Figure~\ref{AlgorithmPerformanceFigure}, leftmost column), jointly adaptive and server-only adaptive methods (FedAdaGrad and FedAdam) converge faster and achieve higher accuracy than FedAvg. Methods utilizing joint adaptivity, including \name, show slightly faster convergence than FedAdam. While `Costly Joint Adap.' attains similar performance to \nameplus, the latter is much more memory and communication efficient.
Similar trends are observed on GLD-23K (middle column). The superior performance of jointly adaptive methods are especially pronounced for FEMNIST (rightmost column), where a significant gap can be observed between non-adaptive FedAvg, server-only adaptive FedAdam/FedAdaGrad, and the jointly adaptive \name/\nameplus, respectively. 
Additionally, in Appendix~\ref{SM3AppendixSection}, we incorporate the technique of delayed local preconditioner updates~\citep{gupta2018shampoo} to further mollify the computation burden on the clients, and verify that \name/\nameplus are robust to the said delayed updates (Appendix~\ref{DelayedUpdatesAppendix}).

\begin{figure}[h!]
\centering
    \begin{subfigure}[b]{0.30\textwidth}
        \centering
        \includegraphics[width=\textwidth]{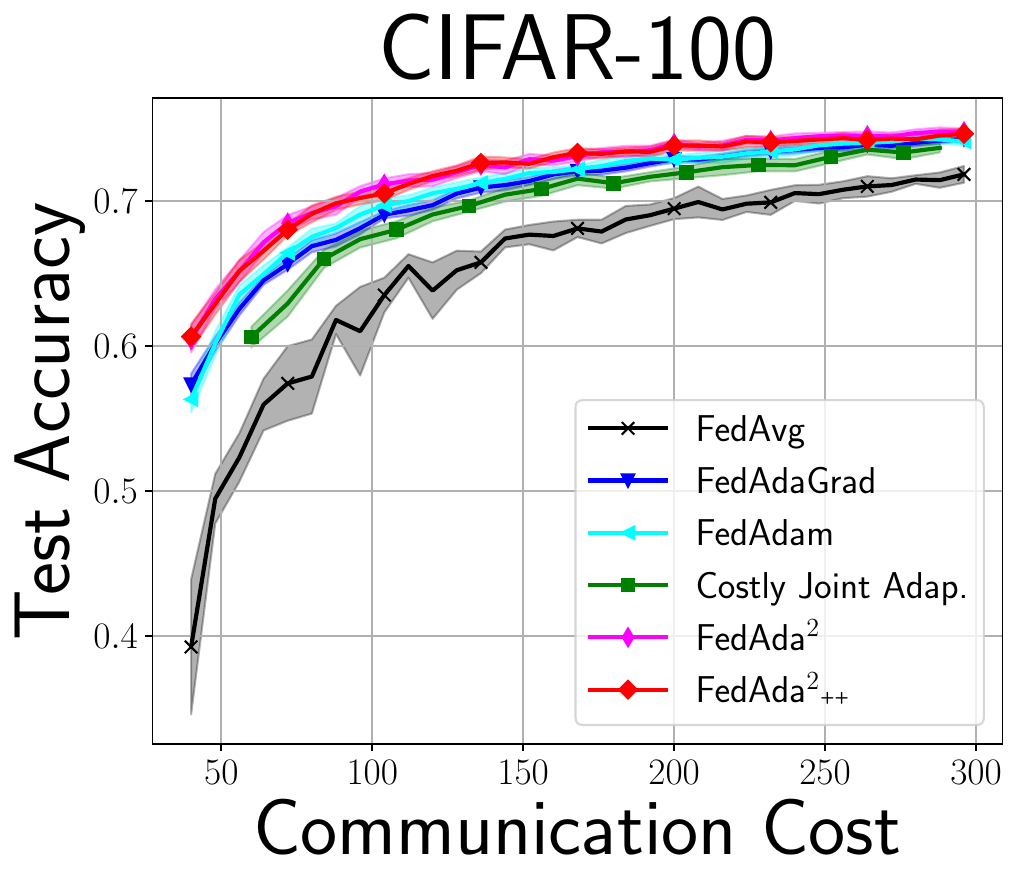}
    \end{subfigure}
    \begin{subfigure}[b]{0.30\textwidth}
        \centering
        \includegraphics[width=\textwidth]{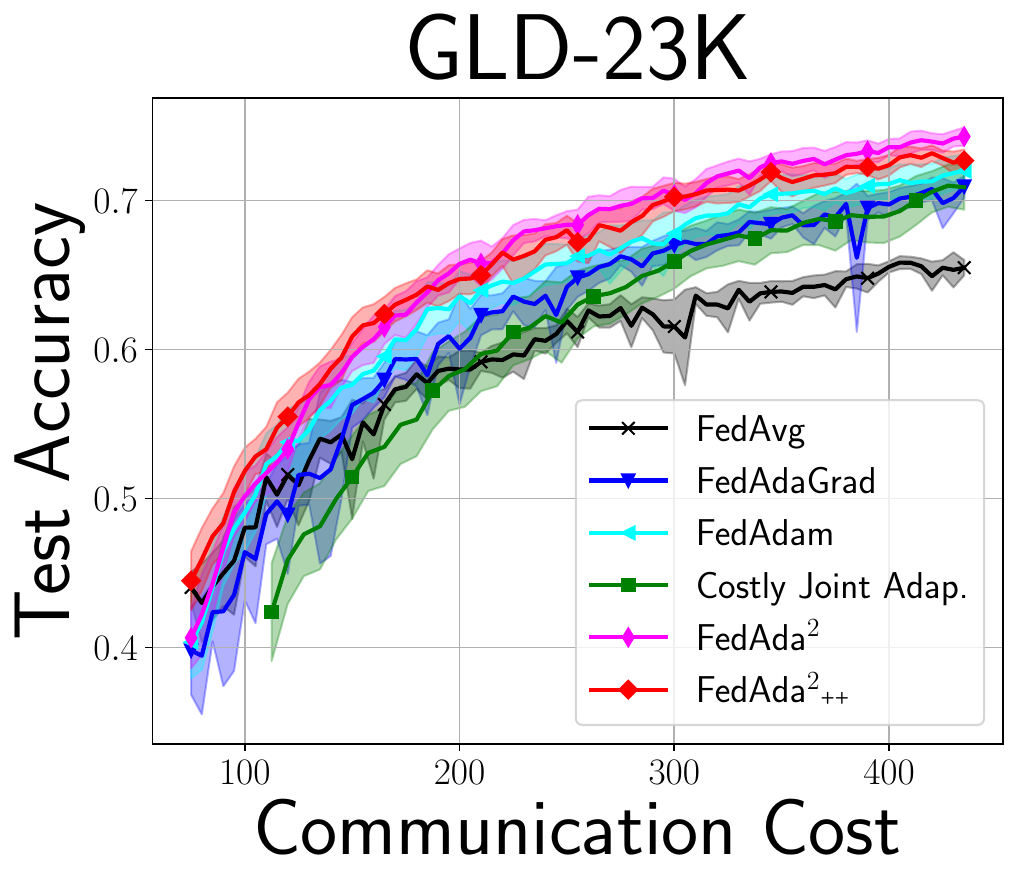}
    \end{subfigure}
        \begin{subfigure}[b]{0.30\textwidth}
        \centering
        \includegraphics[width=\textwidth]{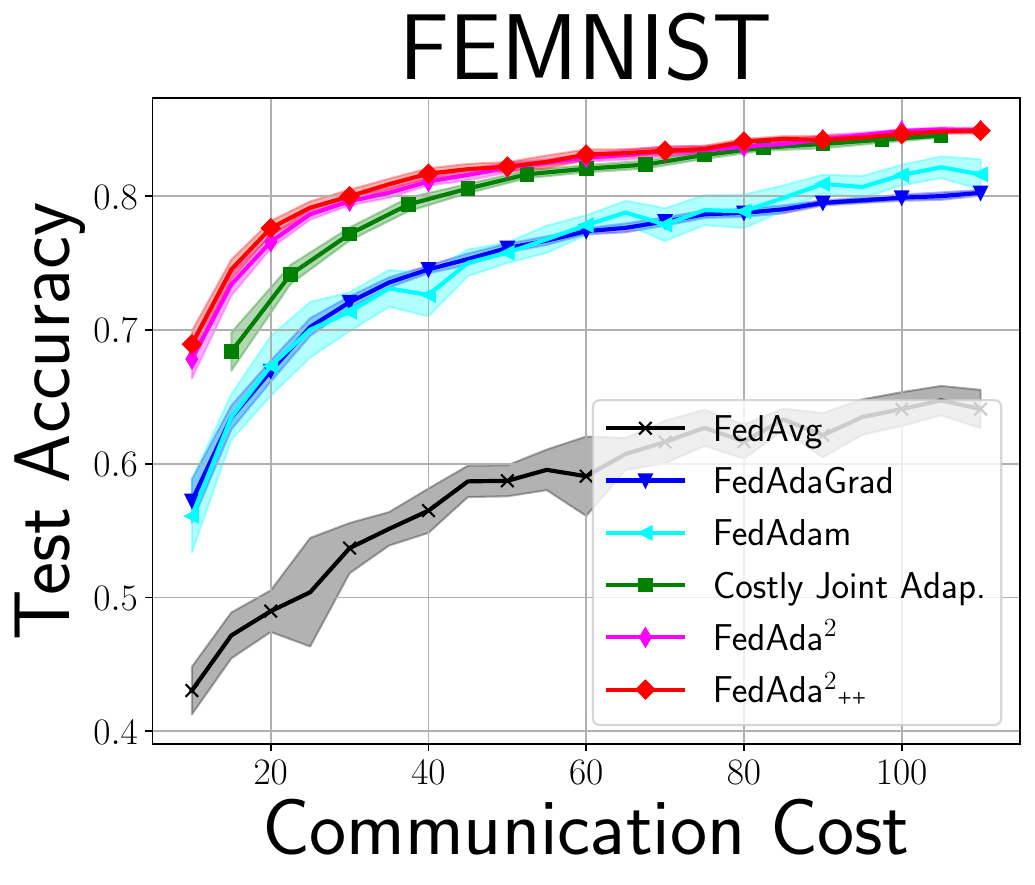}
    \end{subfigure}
    \hfill
    \caption{Test accuracies against actual communication cost (total transmitted bits normalized to that of FedAvg) of all algorithms under the same settings as in Figure~\ref{AlgorithmPerformanceFigure}. When controlling for communication complexity, \name/\nameplus attain the fastest convergence over all baselines. That is, the improvement is even more significant when measured in actual communication cost, where \name/\nameplus achieve faster convergence with fewer bits transmitted than even costly joint adaptivity.
    }
\label{AlgorithmPerformanceFigureCommunicationCost}
\end{figure}


\textbf{Effectiveness under Additional Adaptive Optimizers.} Algorithm~\ref{FedAdaSquareGeneralForm} provides a general framework, and in Figure~\ref{AlgorithmPerformanceFigure}, we focus on symmetric server-client optimizer configurations (e.g., Adam-Adam, AdaGrad-AdaGrad). Appendix~\ref{AdditionalExperimentResults} contains ablations on the delay parameter $z$, including what happens when there is a mismatch between the client and server adaptive optimizers (e.g., server-side Adam preconditioners communicated to client-side AdaGrad). That is, in Appendix~\ref{HeterogeneousClientServerAdaptivityAppendix}, Figure~\ref{HeterogeneousFigure}, we examine the performance of \textit{asymmetric} server-client adaptivity setups under both jointly adaptive baselines and \name/\nameplus. Surprisingly, our results show that in the Costly Joint Adaptivity baseline, employing an unbalanced preconditioner (e.g., transmitting the server-side Adam preconditioner to client-side AdaGrad), does not significantly impact performance across a hyperparameter sweep. Additionally, \name as well as \nameplus demonstrates robust training dynamics across various adaptivity instantiations with non-existent performance degradation, highlighting its effectiveness in enabling efficient jointly adaptive optimization.

\textbf{Reduced Communication and Memory.} 
As discussed in Section~\ref{introduction}, while joint adaptivity has been shown to substantially improve performance, its scalability and practical deployment are significantly hindered by communication and memory complexity in real-world settings. To mitigate this bottleneck, \name forcibly matches the communication efficiency of FedAvg while retaining the advantages of joint adaptivity, while \nameplus further compresses client memory with convergence guarantees (Theorem~\ref{AdaSquareSM3NonconvexConvergenceThm}). In Figure~\ref{AlgorithmPerformanceFigureCommunicationCost}, when evaluating convergence in terms of the actual communicated bits (communication rounds times number of bits per round), \name/\nameplus significantly outperforms costly joint adaptivity, saving significant communication bandwidth. 
For instance, in the case of ViT, when \nameplus is instantiated via SM3, storing second-moment estimates for preconditioning requires only 0.48\% additional memory, compared to an 1$\times$ increase otherwise. This corresponds to a 99\% reduction in the extra client memory required for deploying joint adaptivity, making it far more practical for large-scale applications. In Table~\ref{tab:method_comparison}, we 
summarize the communication complexity and memory efficiency of \name/\nameplus and baselines, compared to alternative adaptive frameworks such as MIME or MIMELite~\citep{MIMEpaper,MimeNotGood2}. Communication is the two-way (server-to-client and client-to-server) cost including both model parameters and preconditioners, and computation is the number of gradient calls per local iteration. Client-side memory cost includes the cost to maintain both gradients and potentially gradient statistics.

\begin{minipage}{\textwidth}
    \begin{minipage}[b]{0.54\textwidth}
    \centering
    \setlength{\tabcolsep}{1pt}
    \scalebox{0.84}{
    \begin{tabular}{@{}lccccc@{}}
\toprule
{Method} & {\makecell{Joint \\ Adaptivity?}} & {\makecell{Communi-\\cation}} & {\makecell{Computation \\ (\#grad. calls)}} & {\makecell{Memory \\ (client)}} \\ \midrule
FedAvg & N & 2d & 1 & d \\ 
FedAdaGrad & N & 2d & 1 & d \\ 
FedAdam & N & 2d & 1 & d \\ 
MIME & N & 5d & 3 & 4d  \\
MIMELite & N & 4d & 2 & 3d \\
Costly Joint Adap. & Y & 3d & 1 & 2d \\ 
\name & Y & 2d & 1 & 2d \\ 
\nameplus & Y & 2d & 1 & $\sim$ d \\ 
\bottomrule
\end{tabular}}
      \captionof{table}{Comparison of various algorithms (assuming AdaGrad as the adaptive optimizer). $d$ denotes the model dimension.}
      \label{tab:method_comparison}
    \end{minipage}
    \hfill
  \begin{minipage}[b]{0.4\textwidth}
    \centering
    \includegraphics[width=0.85\textwidth]{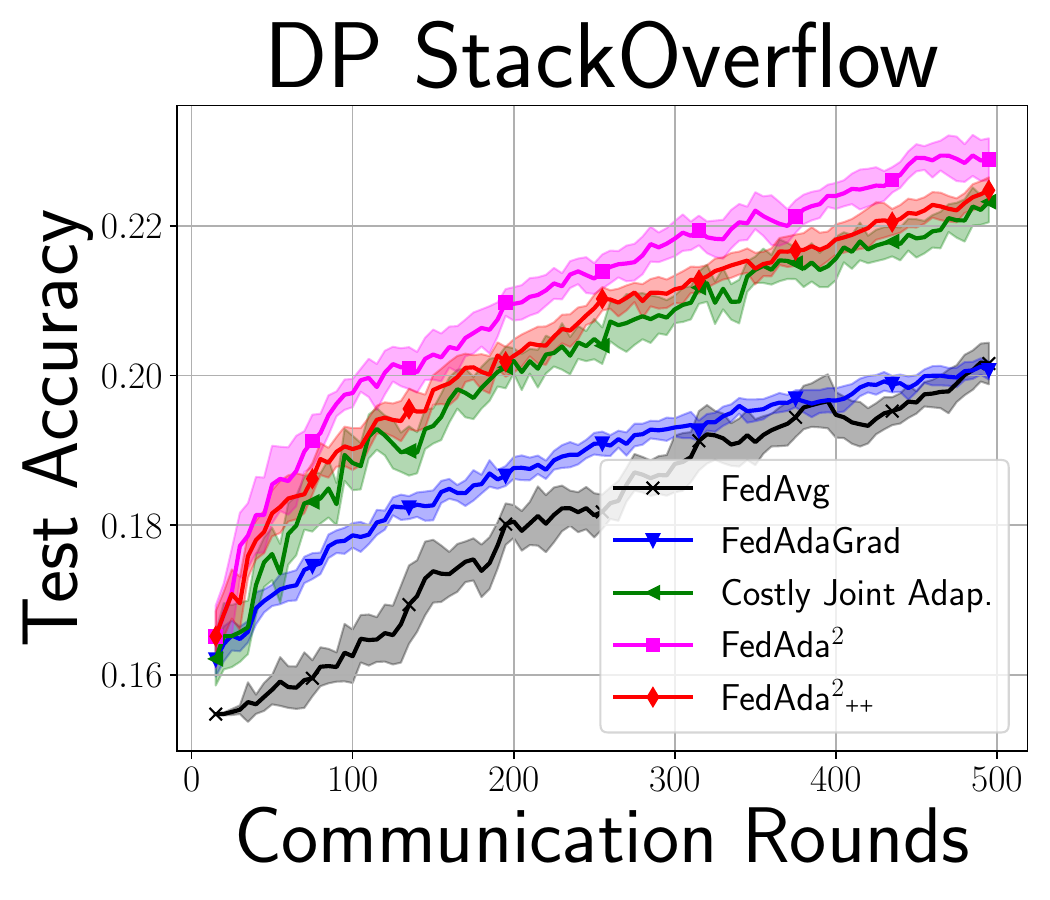}
    \captionof{figure}{Performance of training StackOverflow in the private setting.}
    \label{fig:so}
  \end{minipage}
\end{minipage}

\textbf{Improvements in Private Settings.} Another critical constraint for federated learning is user privacy, and federated learning itself may not offer formal privacy guarantees~\citep{zhu2019deep}. Hence, additional privacy mechanisms are critical to provably protect user information. We focus on the statistical framework of differential privacy~\citep{dwork2006calibrating} where it guarantees that any third-party attacker cannot infer if any user participates in training or not~\citep{mcmahan2017learning}. We apply the popular subsampled Gaussian mechanism~\citep{abadi2016deep} to privatize all the methods, using noise multiplier $\sigma = 1$, which provides a privacy budget of $(\varepsilon,\delta) = (13.1,0.0025)$ with optimal R\'{e}nyi-Differential Privacy (RDP)~\citep{RDP} order $2.0$. On the StackOverflow dataset with a logistic regression model (Figure~\ref{fig:so}), we observe that \name and \nameplus again outperform other baselines by a large margin with privacy constraints.

\subsection{Effects of Varying Configurations} \label{sec:exp:ablation}

\begin{figure*}[ht]
\centering
    \begin{subfigure}[b]{0.31\textwidth}
        \centering
        \includegraphics[width=\textwidth]{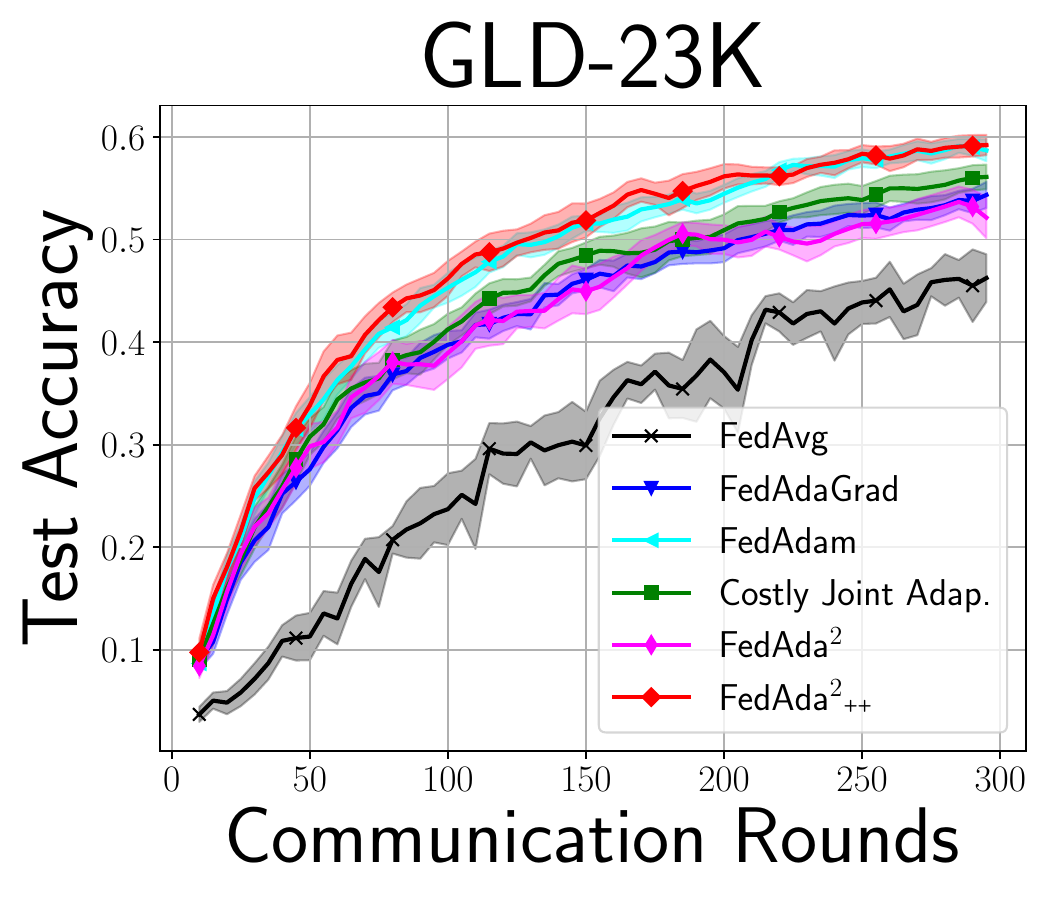}
        \subcaption{Single Local Epoch}
    \end{subfigure}
    \begin{subfigure}[b]{0.31\textwidth}
        \centering
        \includegraphics[width=\textwidth]{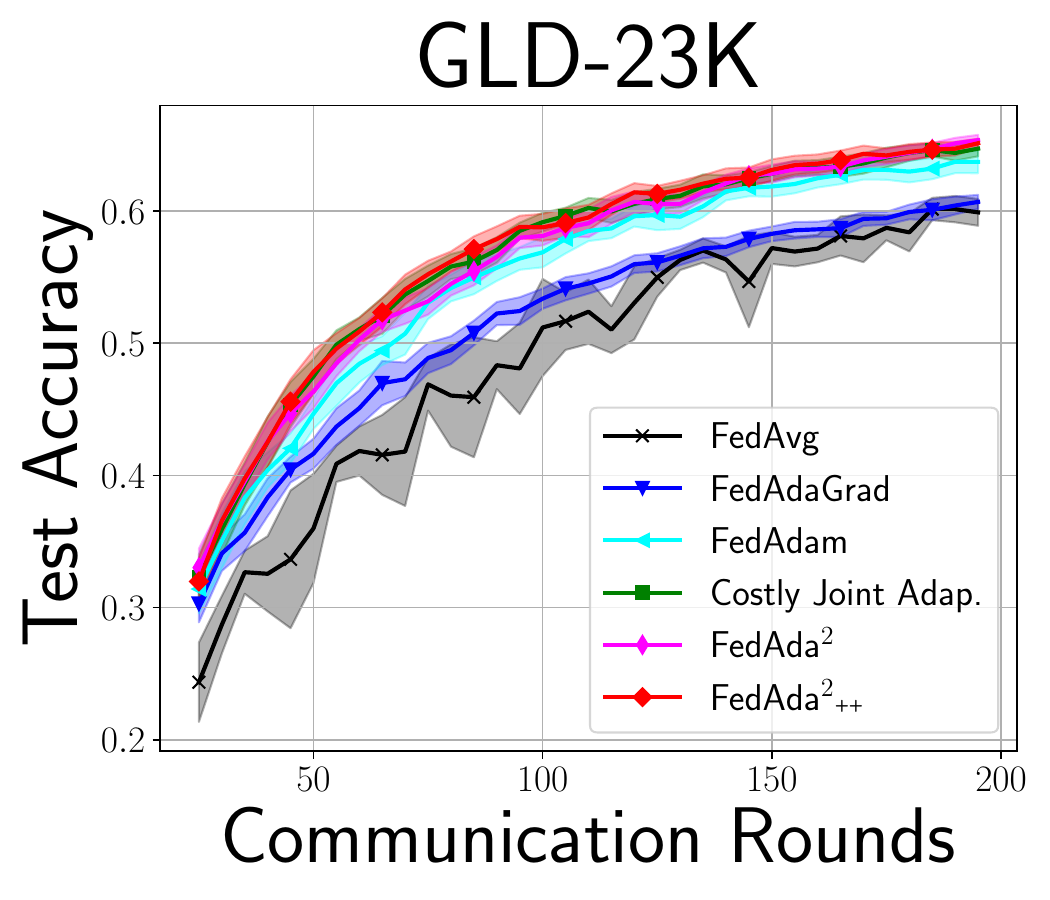}
        \subcaption{$5$ Local Epochs}
    \end{subfigure}
    \begin{subfigure}[b]{0.31\textwidth}
        \centering
        \includegraphics[width=\textwidth]{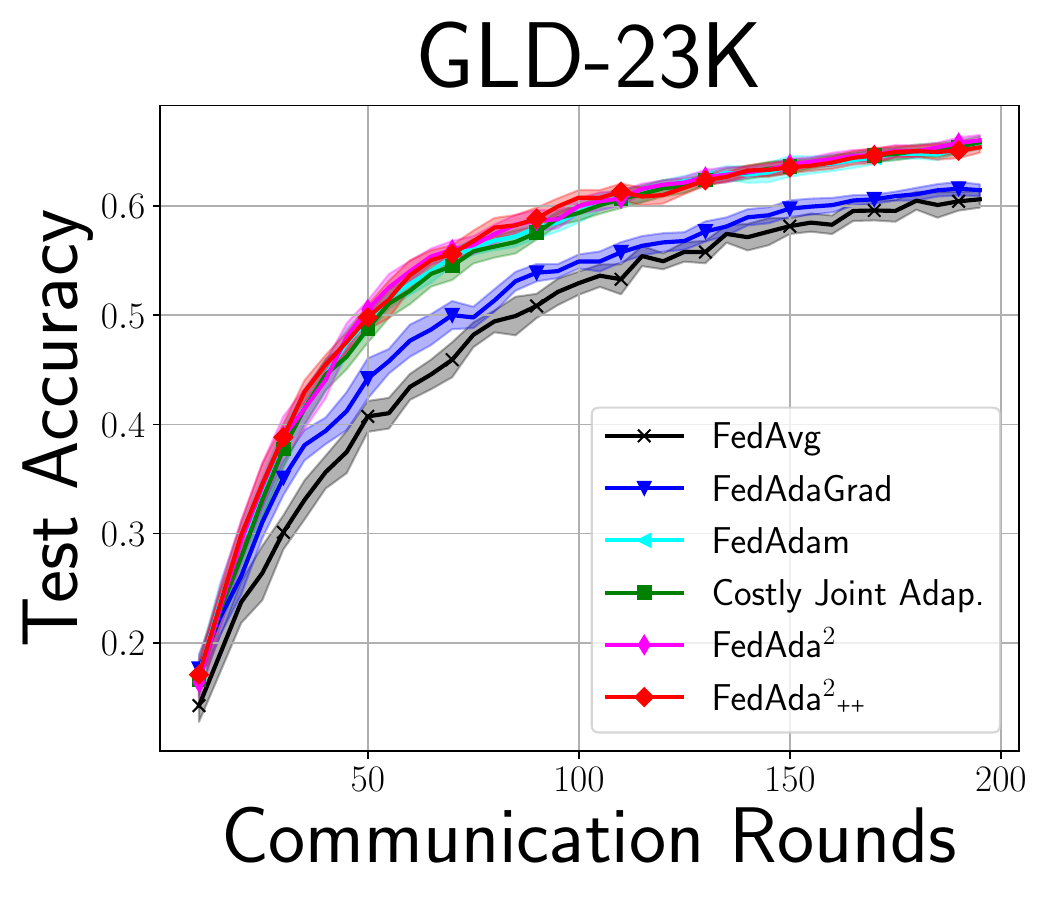}
        \subcaption{$20$ Local Epochs}
    \end{subfigure}
    \caption{
    Algorithm testing performance comparison under varying client resource limitations (i.e., number of local epochs). When resources are constrained, \nameplus converges the fastest, followed closely by FedAdam. Interestingly, the relative performance advantage of \nameplus becomes less significant as the number of local epochs increases. 
    }
    \label{HyperparamEffectFigure}
\end{figure*}

\paragraph{Dynamics of \name under a Varying Number of Local Epochs.} In Figure~\ref{HyperparamEffectFigure}, we study the transfer learning setting of a vision model under a highly constrained, moderate, and sufficient client computation budget, corresponding to running $1$, $5$, and $20$ local epochs on the clients. We see that when the number of epochs is low (Figure~\ref{HyperparamEffectFigure}~{(a)}), \nameplus achieves the best performance, closely followed by FedAdam. Interestingly, as the clients' computational budget increases, the relative performance advantage of \nameplus diminishes. In such scenarios, jointly adaptive optimization outperforms FedAdam, although the margin is not substantial.

\paragraph{Sensitivity to Hyperparameters.}
In Figure~\ref{fig:hyperparameter} in the appendix, we plot test accuracies over the hyperparameter sweeps (all hyperparameters in Appendix~\ref{hp}) for all algorithms, on the StackOverflow dataset. Server-only adaptivity stabilizes the performance of FedAvg, and costly joint adaptivity further enhances the stabilized accuracies. However, eliminating server preconditioner transmission in \name destabilizes the accuracy, resulting in significantly poorer performance for the worst losses while retaining the best performing losses. We see that there is a wide range of hyperparameter under which \name and \nameplus have superior performance compared with the baselines. In particular, approximating the preconditioners in a memory-efficient manner using SM3 in \nameplus is rather robust to hyperparameters, which we hypothesize is due to the denoising effect of projections during SM3 compression, previously unreported within the literature.

\paragraph{Summary.} Across all datasets, we empirically demonstrate the benefits of joint server- and client-side adaptivity. However, vanilla implementation of joint adaptivity with transmitted global preconditioners can be expensive. 
We consistently find that initializing local preconditioners from zero (\name) does not underperform (yet sometimes outperforms) costly joint adaptivity with full server-side preconditioner transmission, while being much more efficient (Table~\ref{tab:method_comparison}). 
In general, 
we observe that \name and \nameplus algorithms retain the competitive advantage of joint adaptivity while being communication- and memory-efficient under a variety of optimizers, datasets, and model architectures. Across our experiments, we consistently find that adaptive methods (e.g., FedAdam, FedAdaGrad) outperform FedAvg, while jointly adaptive methods (\name/\nameplus) consistently outperform server-only adaptive baselines. 

\vspace{-5pt}

\section{Conclusion and Future Work}
In this work, we introduce \name and \nameplus, a class of jointly adaptive algorithms designed to enhance scalability and performance in large-scale, cross-device federated environments. By mitigating the transfer of costly preconditioners in jointly adaptive methods, \name-class algorithms significantly reduce the communication overhead and extra on-device memory cost without degrading model performance. Our theoretical convergence guarantees and empirical results demonstrate the practical benefits of \name in real-world federated learning scenarios. 
In Appendix~\ref{HeavyTailedNoiseSection}, we further extend our work by providing a novel regret-based theoretical analysis of how leveraging client-side adaptivity improves distributed learning for interested readers. In particular, we study the heavy-tailed regime, which sheds light on how client-side adaptivity mitigates the propagation of noisy updates--particularly relevant in transformer-based training. A promising future direction would be investigating how diverse client optimizer selections could combine their respective advantages in aggregate performance. Additionally, researchers could explore adapting various optimizer strategies across different timescales in blended optimization to better accommodate dynamically varying client resource constraints.

\section*{Acknowledgments}

We thank the anonymous reviewers for their insightful feedback, which has helped to improve the quality of this paper. The authors declare no conflicts of interest.

\bibliography{iclr2025_conference}
\bibliographystyle{unsrtnat}

\newpage
\section*{NeurIPS Paper Checklist}
\begin{enumerate}

\item {\bf Claims}
    \item[] Question: Do the main claims made in the abstract and introduction accurately reflect the paper's contributions and scope?
    \item[] Answer: \answerYes{} 
    \item[] Justification: The abstract and introduction give a faithful summary of the results in Sections~\ref{methods},~\ref{ConvergenceAnalysis},~\ref{ExperimentsAndDisucussion} and the Appendix. 
    \item[] Guidelines:
    \begin{itemize}
        \item The answer NA means that the abstract and introduction do not include the claims made in the paper.
        \item The abstract and/or introduction should clearly state the claims made, including the contributions made in the paper and important assumptions and limitations. A No or NA answer to this question will not be perceived well by the reviewers. 
        \item The claims made should match theoretical and experimental results, and reflect how much the results can be expected to generalize to other settings. 
        \item It is fine to include aspirational goals as motivation as long as it is clear that these goals are not attained by the paper. 
    \end{itemize}

\item {\bf Limitations}
    \item[] Question: Does the paper discuss the limitations of the work performed by the authors?
    \item[] Answer: \answerYes{} 
    \item[] Justification: Yes, we give an overview of the main limitations of our theory in the third paragraph of Section~\ref{ConvergenceAnalysis}.
    \item[] Guidelines:
    \begin{itemize}
        \item The answer NA means that the paper has no limitation while the answer No means that the paper has limitations, but those are not discussed in the paper. 
        \item The authors are encouraged to create a separate "Limitations" section in their paper.
        \item The paper should point out any strong assumptions and how robust the results are to violations of these assumptions (e.g., independence assumptions, noiseless settings, model well-specification, asymptotic approximations only holding locally). The authors should reflect on how these assumptions might be violated in practice and what the implications would be.
        \item The authors should reflect on the scope of the claims made, e.g., if the approach was only tested on a few datasets or with a few runs. In general, empirical results often depend on implicit assumptions, which should be articulated.
        \item The authors should reflect on the factors that influence the performance of the approach. For example, a facial recognition algorithm may perform poorly when image resolution is low or images are taken in low lighting. Or a speech-to-text system might not be used reliably to provide closed captions for online lectures because it fails to handle technical jargon.
        \item The authors should discuss the computational efficiency of the proposed algorithms and how they scale with dataset size.
        \item If applicable, the authors should discuss possible limitations of their approach to address problems of privacy and fairness.
        \item While the authors might fear that complete honesty about limitations might be used by reviewers as grounds for rejection, a worse outcome might be that reviewers discover limitations that aren't acknowledged in the paper. The authors should use their best judgment and recognize that individual actions in favor of transparency play an important role in developing norms that preserve the integrity of the community. Reviewers will be specifically instructed to not penalize honesty concerning limitations.
    \end{itemize}

\item {\bf Theory assumptions and proofs}
    \item[] Question: For each theoretical result, does the paper provide the full set of assumptions and a complete (and correct) proof?
    \item[] Answer: \answerYes{} 
    \item[] Justification: All results are followed by full proofs in the Appendix, and the assumptions are clearly stated in the main text. 
    \item[] Guidelines:
    \begin{itemize}
        \item The answer NA means that the paper does not include theoretical results. 
        \item All the theorems, formulas, and proofs in the paper should be numbered and cross-referenced.
        \item All assumptions should be clearly stated or referenced in the statement of any theorems.
        \item The proofs can either appear in the main paper or the supplemental material, but if they appear in the supplemental material, the authors are encouraged to provide a short proof sketch to provide intuition. 
        \item Inversely, any informal proof provided in the core of the paper should be complemented by formal proofs provided in appendix or supplemental material.
        \item Theorems and Lemmas that the proof relies upon should be properly referenced. 
    \end{itemize}

    \item {\bf Experimental result reproducibility}
    \item[] Question: Does the paper fully disclose all the information needed to reproduce the main experimental results of the paper to the extent that it affects the main claims and/or conclusions of the paper (regardless of whether the code and data are provided or not)?
    \item[] Answer: \answerYes{} 
    \item[] Justification: We provide a very detailed explanation of the setup in Appendix~\ref{datset} and~\ref{hp}, and plan to release the code to reproduce our experiments.
    \item[] Guidelines:
    \begin{itemize}
        \item The answer NA means that the paper does not include experiments.
        \item If the paper includes experiments, a No answer to this question will not be perceived well by the reviewers: Making the paper reproducible is important, regardless of whether the code and data are provided or not.
        \item If the contribution is a dataset and/or model, the authors should describe the steps taken to make their results reproducible or verifiable. 
        \item Depending on the contribution, reproducibility can be accomplished in various ways. For example, if the contribution is a novel architecture, describing the architecture fully might suffice, or if the contribution is a specific model and empirical evaluation, it may be necessary to either make it possible for others to replicate the model with the same dataset, or provide access to the model. In general. releasing code and data is often one good way to accomplish this, but reproducibility can also be provided via detailed instructions for how to replicate the results, access to a hosted model (e.g., in the case of a large language model), releasing of a model checkpoint, or other means that are appropriate to the research performed.
        \item While NeurIPS does not require releasing code, the conference does require all submissions to provide some reasonable avenue for reproducibility, which may depend on the nature of the contribution. For example
        \begin{enumerate}
            \item If the contribution is primarily a new algorithm, the paper should make it clear how to reproduce that algorithm.
            \item If the contribution is primarily a new model architecture, the paper should describe the architecture clearly and fully.
            \item If the contribution is a new model (e.g., a large language model), then there should either be a way to access this model for reproducing the results or a way to reproduce the model (e.g., with an open-source dataset or instructions for how to construct the dataset).
            \item We recognize that reproducibility may be tricky in some cases, in which case authors are welcome to describe the particular way they provide for reproducibility. In the case of closed-source models, it may be that access to the model is limited in some way (e.g., to registered users), but it should be possible for other researchers to have some path to reproducing or verifying the results.
        \end{enumerate}
    \end{itemize}

\item {\bf Open access to data and code}
    \item[] Question: Does the paper provide open access to the data and code, with sufficient instructions to faithfully reproduce the main experimental results, as described in supplemental material?
    \item[] Answer: \answerYes{} 
    \item[] Justification: We provide a very detailed explanation of the setup in Appendix~\ref{datset} and~\ref{hp}, and release the code to reproduce our experiments.
    \item[] Guidelines:
    \begin{itemize}
        \item The answer NA means that paper does not include experiments requiring code.
        \item Please see the NeurIPS code and data submission guidelines (\url{https://nips.cc/public/guides/CodeSubmissionPolicy}) for more details.
        \item While we encourage the release of code and data, we understand that this might not be possible, so “No” is an acceptable answer. Papers cannot be rejected simply for not including code, unless this is central to the contribution (e.g., for a new open-source benchmark).
        \item The instructions should contain the exact command and environment needed to run to reproduce the results. See the NeurIPS code and data submission guidelines (\url{https://nips.cc/public/guides/CodeSubmissionPolicy}) for more details.
        \item The authors should provide instructions on data access and preparation, including how to access the raw data, preprocessed data, intermediate data, and generated data, etc.
        \item The authors should provide scripts to reproduce all experimental results for the new proposed method and baselines. If only a subset of experiments are reproducible, they should state which ones are omitted from the script and why.
        \item At submission time, to preserve anonymity, the authors should release anonymized versions (if applicable).
        \item Providing as much information as possible in supplemental material (appended to the paper) is recommended, but including URLs to data and code is permitted.
    \end{itemize}

\item {\bf Experimental setting/details}
    \item[] Question: Does the paper specify all the training and test details (e.g., data splits, hyperparameters, how they were chosen, type of optimizer, etc.) necessary to understand the results?
    \item[] Answer: \answerYes{} 
    \item[] Justification: We provide a very detailed explanation of the setup in Appendix~\ref{datset} and~\ref{hp} which includes hyperparameter settings and optimizer choices.
    \item[] Guidelines:
    \begin{itemize}
        \item The answer NA means that the paper does not include experiments.
        \item The experimental setting should be presented in the core of the paper to a level of detail that is necessary to appreciate the results and make sense of them.
        \item The full details can be provided either with the code, in appendix, or as supplemental material.
    \end{itemize}

\item {\bf Experiment statistical significance}
    \item[] Question: Does the paper report error bars suitably and correctly defined or other appropriate information about the statistical significance of the experiments?
    \item[] Answer: \answerYes{} 
    \item[] Justification: We report confidence intervals as shaded regions in all relevant plots, averaged across 20 random seeds for high statistical significance and convergence. 
    \item[] Guidelines:
    \begin{itemize}
        \item The answer NA means that the paper does not include experiments.
        \item The authors should answer "Yes" if the results are accompanied by error bars, confidence intervals, or statistical significance tests, at least for the experiments that support the main claims of the paper.
        \item The factors of variability that the error bars are capturing should be clearly stated (for example, train/test split, initialization, random drawing of some parameter, or overall run with given experimental conditions).
        \item The method for calculating the error bars should be explained (closed form formula, call to a library function, bootstrap, etc.)
        \item The assumptions made should be given (e.g., Normally distributed errors).
        \item It should be clear whether the error bar is the standard deviation or the standard error of the mean.
        \item It is OK to report 1-sigma error bars, but one should state it. The authors should preferably report a 2-sigma error bar than state that they have a 96\% CI, if the hypothesis of Normality of errors is not verified.
        \item For asymmetric distributions, the authors should be careful not to show in tables or figures symmetric error bars that would yield results that are out of range (e.g. negative error rates).
        \item If error bars are reported in tables or plots, The authors should explain in the text how they were calculated and reference the corresponding figures or tables in the text.
    \end{itemize}

\item {\bf Experiments compute resources}
    \item[] Question: For each experiment, does the paper provide sufficient information on the computer resources (type of compute workers, memory, time of execution) needed to reproduce the experiments?
    \item[] Answer: \answerYes{} 
    \item[] Justification: We provide this information in Appendix~\ref{compute}.
    \item[] Guidelines:
    \begin{itemize}
        \item The answer NA means that the paper does not include experiments.
        \item The paper should indicate the type of compute workers CPU or GPU, internal cluster, or cloud provider, including relevant memory and storage.
        \item The paper should provide the amount of compute required for each of the individual experimental runs as well as estimate the total compute. 
        \item The paper should disclose whether the full research project required more compute than the experiments reported in the paper (e.g., preliminary or failed experiments that didn't make it into the paper). 
    \end{itemize}
    
\item {\bf Code of ethics}
    \item[] Question: Does the research conducted in the paper conform, in every respect, with the NeurIPS Code of Ethics \url{https://neurips.cc/public/EthicsGuidelines}?
    \item[] Answer: \answerYes{} 
    \item[] Justification: The paper conforms to the code of ethics. 
    \item[] Guidelines:
    \begin{itemize}
        \item The answer NA means that the authors have not reviewed the NeurIPS Code of Ethics.
        \item If the authors answer No, they should explain the special circumstances that require a deviation from the Code of Ethics.
        \item The authors should make sure to preserve anonymity (e.g., if there is a special consideration due to laws or regulations in their jurisdiction).
    \end{itemize}

\item {\bf Broader impacts}
    \item[] Question: Does the paper discuss both potential positive societal impacts and negative societal impacts of the work performed?
    \item[] Answer: \answerNA{} 
    \item[] Justification: Our contribution primarily address efficient distributed optimization. Therefore, we do not foresee any direct societal consequences of our work. 
    \item[] Guidelines:
    \begin{itemize}
        \item The answer NA means that there is no societal impact of the work performed.
        \item If the authors answer NA or No, they should explain why their work has no societal impact or why the paper does not address societal impact.
        \item Examples of negative societal impacts include potential malicious or unintended uses (e.g., disinformation, generating fake profiles, surveillance), fairness considerations (e.g., deployment of technologies that could make decisions that unfairly impact specific groups), privacy considerations, and security considerations.
        \item The conference expects that many papers will be foundational research and not tied to particular applications, let alone deployments. However, if there is a direct path to any negative applications, the authors should point it out. For example, it is legitimate to point out that an improvement in the quality of generative models could be used to generate deepfakes for disinformation. On the other hand, it is not needed to point out that a generic algorithm for optimizing neural networks could enable people to train models that generate Deepfakes faster.
        \item The authors should consider possible harms that could arise when the technology is being used as intended and functioning correctly, harms that could arise when the technology is being used as intended but gives incorrect results, and harms following from (intentional or unintentional) misuse of the technology.
        \item If there are negative societal impacts, the authors could also discuss possible mitigation strategies (e.g., gated release of models, providing defenses in addition to attacks, mechanisms for monitoring misuse, mechanisms to monitor how a system learns from feedback over time, improving the efficiency and accessibility of ML).
    \end{itemize}
    
\item {\bf Safeguards}
    \item[] Question: Does the paper describe safeguards that have been put in place for responsible release of data or models that have a high risk for misuse (e.g., pretrained language models, image generators, or scraped datasets)?
    \item[] Answer: \answerNA{} 
    \item[] Justification: The paper poses no such risks.
    \item[] Guidelines:
    \begin{itemize}
        \item The answer NA means that the paper poses no such risks.
        \item Released models that have a high risk for misuse or dual-use should be released with necessary safeguards to allow for controlled use of the model, for example by requiring that users adhere to usage guidelines or restrictions to access the model or implementing safety filters. 
        \item Datasets that have been scraped from the Internet could pose safety risks. The authors should describe how they avoided releasing unsafe images.
        \item We recognize that providing effective safeguards is challenging, and many papers do not require this, but we encourage authors to take this into account and make a best faith effort.
    \end{itemize}

\item {\bf Licenses for existing assets}
    \item[] Question: Are the creators or original owners of assets (e.g., code, data, models), used in the paper, properly credited and are the license and terms of use explicitly mentioned and properly respected?
    \item[] Answer: \answerYes{} 
    \item[] Justification: We provide extensive citations for all datasets and models in our evaluation. 
    \item[] Guidelines:
    \begin{itemize}
        \item The answer NA means that the paper does not use existing assets.
        \item The authors should cite the original paper that produced the code package or dataset.
        \item The authors should state which version of the asset is used and, if possible, include a URL.
        \item The name of the license (e.g., CC-BY 4.0) should be included for each asset.
        \item For scraped data from a particular source (e.g., website), the copyright and terms of service of that source should be provided.
        \item If assets are released, the license, copyright information, and terms of use in the package should be provided. For popular datasets, \url{paperswithcode.com/datasets} has curated licenses for some datasets. Their licensing guide can help determine the license of a dataset.
        \item For existing datasets that are re-packaged, both the original license and the license of the derived asset (if it has changed) should be provided.
        \item If this information is not available online, the authors are encouraged to reach out to the asset's creators.
    \end{itemize}

\item {\bf New assets}
    \item[] Question: Are new assets introduced in the paper well documented and is the documentation provided alongside the assets?
    \item[] Answer: \answerNA{} 
    \item[] Justification: We provide the code to reproduce the experiments, but do not release an API. 
    \item[] Guidelines:
    \begin{itemize}
        \item The answer NA means that the paper does not release new assets.
        \item Researchers should communicate the details of the dataset/code/model as part of their submissions via structured templates. This includes details about training, license, limitations, etc. 
        \item The paper should discuss whether and how consent was obtained from people whose asset is used.
        \item At submission time, remember to anonymize your assets (if applicable). You can either create an anonymized URL or include an anonymized zip file.
    \end{itemize}

\item {\bf Crowdsourcing and research with human subjects}
    \item[] Question: For crowdsourcing experiments and research with human subjects, does the paper include the full text of instructions given to participants and screenshots, if applicable, as well as details about compensation (if any)? 
    \item[] Answer: \answerNA{} 
    \item[] Justification: The paper does not involve crowdsourcing nor research with human subjects.
    \item[] Guidelines:
    \begin{itemize}
        \item The answer NA means that the paper does not involve crowdsourcing nor research with human subjects.
        \item Including this information in the supplemental material is fine, but if the main contribution of the paper involves human subjects, then as much detail as possible should be included in the main paper. 
        \item According to the NeurIPS Code of Ethics, workers involved in data collection, curation, or other labor should be paid at least the minimum wage in the country of the data collector. 
    \end{itemize}

\item {\bf Institutional review board (IRB) approvals or equivalent for research with human subjects}
    \item[] Question: Does the paper describe potential risks incurred by study participants, whether such risks were disclosed to the subjects, and whether Institutional Review Board (IRB) approvals (or an equivalent approval/review based on the requirements of your country or institution) were obtained?
    \item[] Answer: \answerNA{} 
    \item[] Justification: The paper does not involve crowdsourcing nor research with human subjects.
    \item[] Guidelines:
    \begin{itemize}
        \item The answer NA means that the paper does not involve crowdsourcing nor research with human subjects.
        \item Depending on the country in which research is conducted, IRB approval (or equivalent) may be required for any human subjects research. If you obtained IRB approval, you should clearly state this in the paper. 
        \item We recognize that the procedures for this may vary significantly between institutions and locations, and we expect authors to adhere to the NeurIPS Code of Ethics and the guidelines for their institution. 
        \item For initial submissions, do not include any information that would break anonymity (if applicable), such as the institution conducting the review.
    \end{itemize}

\item {\bf Declaration of LLM usage}
    \item[] Question: Does the paper describe the usage of LLMs if it is an important, original, or non-standard component of the core methods in this research? Note that if the LLM is used only for writing, editing, or formatting purposes and does not impact the core methodology, scientific rigorousness, or originality of the research, declaration is not required.
    \item[] Answer: \answerNA{} 
    \item[] Justification: The core method development in this research does not involve LLMs as any important, original, or non-standard components.
    \item[] Guidelines:
    \begin{itemize}
        \item The answer NA means that the core method development in this research does not involve LLMs as any important, original, or non-standard components.
        \item Please refer to our LLM policy (\url{https://neurips.cc/Conferences/2025/LLM}) for what should or should not be described.
    \end{itemize}

\end{enumerate}

\newpage
\appendix
\onecolumn

\etocdepthtag.toc{mtappendix}
\etocsettagdepth{mtchapter}{none}
\etocsettagdepth{mtappendix}{subsection}
{
\parskip=0em
\tableofcontents
}

\clearpage

\addtocontents{toc}{\protect\setcounter{tocdepth}{1}}

\section{Importance of Client-Side Adaptivity}\label{HeavyTailedNoiseSection}

In this section, we motivate our work by providing a theoretical description of how leveraging client-side adaptivity improves distributed learning, which is validated in experiments (Section~\ref{ExperimentsAndDisucussion}). 
Our analyses are motivated by prior works that uncover critical conditions under which centralized SGD can diverge, specifically in settings involving heavy-tailed gradient noise~\citep{HeavyTailedNoisePaper}. After analyzing the importance of client-side adaptivity, we propose efficient FL frameworks to mitigate the heightened resources induced by adaptive local optimizers in Section~\ref{methods}, which is \name/\nameplus. We begin by providing a definition of heavy-tailed noise following previous literature.

\begin{definition}\label{heavytaileddefinition}
A random variable $\xi \sim \mathcal{D}$ follows a \textbf{heavy-tailed} distribution if the $\alpha$-moment is infinite for $\alpha \ge 2$. In other words,  we say that the stochastic gradient noise $g(x)-\nabla f(x)$  is heavy-tailed if $\mathbb{E}\left[\|g(x)-\nabla f(x)\|^{\alpha}\right]$ is bounded for $\alpha \in (0, 2)$ and unbounded for $\alpha \geq 2$, where $g(x)$ is the stochastic gradient under some model parameter $x$, and $\nabla f(x)$ the full gradient.
\end{definition}

We may now present the following proposition.

\begin{proposition}\label{necessityoflocaladaptivitymaintext}
There exists a federated learning problem with heavy-tailed client-side gradient noise such that the following arguments hold:

\textbf{(i)} For vanilla FedAvg, given any client sampling strategy, if the probability $p_i^t$ of client $i$ with heavy-tailed gradient noise being sampled at communication round $t$ is non-zero, then $\mathbb{E}\|\nabla f(x_{t+1})\|^2 = \infty$ for any nontrivial learning rate schedule $\eta_\ell^t > 0$ and global parameter $x_{t+1}$.
 
\textbf{(ii)}  Under an appropriate learning rate schedule, FedAvg with local adaptivity (i.e., via client-side AdaGrad)  bounds the error in expectation as
\begin{equation*}
\vspace{-0.5em}
\lim_{t \to \infty} \mathbb{E} \|x_{t}-x^* \| \le \frac{2\sqrt{3}}{1-\hat{\varepsilon}} \quad \text{for some} \quad \hat{\varepsilon} \approx 0,
\end{equation*}
where $x^*$ is the global optimum. 
\end{proposition}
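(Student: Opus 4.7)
My strategy is to isolate the effect of heavy-tailed client noise through a minimal one-dimensional counterexample on which both algorithms can be analyzed in closed form. Specifically, I would take $f(x) = \tfrac{1}{2}(x - x^*)^2$ with one designated client whose stochastic gradient has the form $g_i(x; z) = (x - x^*) + \xi$, where $\xi$ is drawn from a symmetric heavy-tailed distribution (e.g., a symmetric $\alpha$-stable law with $\alpha \in (1, 2)$) satisfying $\mathbb{E}|\xi| < \infty$ but $\mathbb{E}|\xi|^2 = \infty$; this matches Definition~\ref{heavytaileddefinition}. The remaining clients may have gradient noise of any bounded-moment type, so the construction is compatible with arbitrary client sampling strategies; only the fact that the contaminated client is sampled with positive probability matters.

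For part (i), conditioning on the event that the heavy-tailed client is selected at round $t$ (an event of probability $p_i^t > 0$ by assumption), a single FedAvg local step yields
\begin{equation*}
    x_{t+1} = x_t - \eta_\ell^t \bigl((x_t - x^*) + \xi\bigr),
\end{equation*}
so $\mathbb{E}\bigl[(x_{t+1} - x^*)^2 \mid x_t,\, i\text{ sampled}\bigr]$ contains the term $(\eta_\ell^t)^2 \mathbb{E}[\xi^2] = \infty$. Marginalizing over the sampling randomness preserves infiniteness because $p_i^t > 0$; since $\nabla f(x) = x - x^*$, this gives $\mathbb{E}\|\nabla f(x_{t+1})\|^2 = \infty$ for every $\eta_\ell^t > 0$. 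Extending to multiple local steps is trivial because one contaminated step already drives the second moment to infinity, and convex averaging over finitely many client pseudogradients cannot restore finiteness.

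For part (ii), the client-side AdaGrad iterate $x_{k+1} = x_k - \eta_\ell g_k / \sqrt{v_k + \varepsilon}$ with $v_k = \sum_{s \le k} g_s^2$ is self-normalizing: $|g_k|/\sqrt{v_k + \varepsilon} \le 1$ deterministically, so every local step and every convex average across sampled clients moves the iterate by at most $\eta_\ell$ in magnitude. To reach the limiting bound I would set up a one-round recursion of the form
\begin{equation*}
    \mathbb{E}|x_{t+1} - x^*| \le (1 - \hat{\varepsilon})\, \mathbb{E}|x_t - x^*| + C,
\end{equation*}
where $\hat{\varepsilon} \in (0, 1)$ is the effective contraction factor induced by the signal term $(x_k - x^*)$ after $K$ local AdaGrad steps, and $C$ depends only on $\mathbb{E}|\xi|$ (finite for $\alpha > 1$). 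The crucial ingredient is that although $\xi$ has infinite variance, the self-normalized ratio $\xi / \sqrt{v_k + \varepsilon}$ admits uniform moment bounds of every order: a large noise realization inflates $v_k$ and automatically damps its own contribution. Iterating this recursion and passing to the limit gives $\lim_t \mathbb{E}|x_t - x^*| \le C/(1 - \hat{\varepsilon})$, with $C$ and $\hat{\varepsilon}$ tuned through the learning rate schedule to match the stated constant $2\sqrt{3}/(1 - \hat{\varepsilon})$.

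The main obstacle is establishing the contraction in the preceding recursion with a constant that is independent of $\mathbb{E}|\xi|^2$. Standard AdaGrad convergence proofs aggregate squared gradients via Cauchy--Schwarz and become vacuous here, so they must be replaced by a self-normalized moment argument comparing the signal $\mathbb{E}\bigl[(x_k - x^*)/\sqrt{v_k + \varepsilon}\bigr]$ against the noise $\mathbb{E}\bigl[\xi/\sqrt{v_k + \varepsilon}\bigr]$ on a per-step basis. A secondary technical wrinkle is the stateless nature of cross-device clients: $v_k$ is re-accumulated every round, so the contraction must be closed within a single round's local iterations rather than amortized across rounds, which is what ultimately pins down the specific numeric constant.
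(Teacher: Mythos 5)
Your part (i) is essentially the paper's argument: the paper also uses a one-dimensional quadratic with a single heavy-tailed client (Student's $t$ with two degrees of freedom rather than an $\alpha$-stable law), expands the square of the update, and isolates the divergent term $\propto \mathbb{E}[\xi^2]$ while the cross terms stay finite by independence and $\mathbb{E}|\xi|<\infty$. That part goes through.

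Part (ii) has a genuine gap in the recursion. As written, $\mathbb{E}|x_{t+1}-x^*| \le (1-\hat{\varepsilon})\,\mathbb{E}|x_t-x^*| + C$ with a \emph{fixed} contraction factor has fixed point $C/\hat{\varepsilon}$, not $C/(1-\hat{\varepsilon})$; since the proposition takes $\hat{\varepsilon}\approx 0$, your bound would blow up rather than approach $C$. The paper's $\hat{\varepsilon}$ is not a contraction factor at all: it is the defect in the inequality $\mathbb{E}\bigl[1/(\|x_t+\xi_i\|+\varepsilon)\bigr] \ge 1-\hat{\varepsilon}$, which makes the per-round drift coefficient $1-\eta_\ell^t(1-\hat{\varepsilon})$. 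The bound is then closed with a \emph{decaying} schedule $\eta_\ell^t = 1/((t+t_0)(1-\hat{\varepsilon}))$, giving a Robbins--Monro-type recursion $a_{t+1} \le \frac{t+t_0-1}{t+t_0}a_t + \frac{B}{(t+t_0)(1-\hat{\varepsilon})}$ that telescopes to $\mathcal{O}(1/t) + B/(1-\hat{\varepsilon})$; the $1/(1-\hat{\varepsilon})$ in the final answer is inherited from the learning rate, not from geometric contraction. You would need to replace your fixed-rate recursion with this time-varying one (or otherwise rebalance the roles of $\hat{\varepsilon}$ and the step size) for the claimed limit to come out. Separately, the constant $2\sqrt{3}$ is distribution-specific: in the paper it is an explicit integral bound $B = \max_i \int h_{i+1}(\xi)/(|\xi|+\varepsilon)\,\mathrm{d}\xi < 2\sqrt{3}$ computed for the Student-$t$ majorant; with your $\alpha$-stable noise you would have to carry out the analogous computation and verify it lands under the same constant, which your sketch defers entirely to ``tuning.'' Your key structural insight --- that the self-normalized ratio $\xi/(\sqrt{v_k}+\varepsilon)$ has finite first moment even when $\xi$ has infinite variance --- is exactly the paper's mechanism (its bound $B_i<\infty$), so the proof is repairable, but the recursion as stated does not prove the stated inequality.
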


A detailed proof is given by construction on a quadratic objective in Appendix~\ref{usefulnessofclientsideadaptivity}. We show that even a single client with heavy-tailed gradient noise is able to instantaneously propagate their volatility to the global model, which severely destabilizes distributed learning in expectation. Unfortunately, recent works have observed   heavy-tailed gradient noise empirically, especially within model architectures utilizing attention mechanisms, including transformer-based models~\citep{HeavyTailedNoisePaper,BERT,GPT3,ViT,heavytail1,heavytail2,heavytail3}. 
Proposition~\ref{necessityoflocaladaptivitymaintext} (ii) suggests that client-side adaptivity has the potential to stabilize local model updates  pushed from diverse and large-scale distributed sources, if communication bottlenecks and memory efficiency can be addressed.

The construction of the federated problem in Proposition~\ref{necessityoflocaladaptivitymaintext} draws gradient noise from the Student $t$-distribution which is heavy-tailed depending on the parameter regime, whose moments are relatively controlled nevertheless. We may exacerbate the severity of gradient stochasticity by inserting a singular client with Cauchy-distributed noise, while enforcing all other clients to follow non-heavy-tailed Gaussian gradient noise. We further detail this setting in Proposition~\ref{necessityoflocaladaptivity}, Appendix~\ref{usefulnessofclientsideadaptivity}.

\subsection{Constructions}\label{usefulnessofclientsideadaptivity}

\paragraph{Overview of Student's \textit{t}-distribution.} For the convenience of the reader, we provide a brief summary of basic properties of the Student's $t$-distribution. Intuitively, the $t$-distribution can be understood as an approximation of the Gaussian with heavier tails. The density is given by
$$
f_{\nu}(t)=\frac{\Gamma\left(\frac{\nu+1}{2}\right)}{\sqrt{\pi \nu} \Gamma\left(\frac{\nu}{2}\right)}\left(1+\frac{t^2}{\nu}\right)^{-(\nu+1) / 2}
$$
where $\nu \in \mathbb{R}_{>0}$ is the degree of freedom (or normality parameter), and $\Gamma$ is the gamma function. We recover the normalized Gaussian as the degree of freedom tends to infinity. The first moment is $0$ for $\nu > 1$, and the second moment satisfies $\nu / (\nu-2)$ for $\nu>2$ while being infinite for $1 < \nu \le 2$, where the heavy-tails are most pronounced. Following the convention of~\cite{HeavyTailedNoisePaper}, we refer to a distribution as being heavy-tailed if the second moment is infinite. 

The following proposition showcases the utility of local adaptivity in federated learning. 

\begin{proposition}\label{necessityoflocaladaptivity}
There exists a federated optimization problem with heavy-tailed client noise which satisfies the following under FedAvg (where appropriate learning rate schedules are chosen for (ii-iv)):

\textbf{(i)} Given any client sampling strategy, if the probability $p_i^t$ of client $i$ with heavy-tailed gradient noise being sampled at step $t$ is non-zero, then $\mathbb{E}\|\nabla f(x_{t+1})\|^2 = \infty$ for any nontrivial learning rate schedule $\eta_\ell^t > 0$.
 
\textbf{(ii)} Local adaptivity via client-side AdaGrad bounds the error in expectation as
\begin{equation*}
\lim_{t \to \infty} \mathbb{E} \|x_{t}-x^* \| \le \frac{2\sqrt{3}}{1-\hat{\varepsilon}} \quad \text{for some} \quad \hat{\varepsilon} \approx 0,
\end{equation*}
where $x^*$ is the global optimum. 

\textbf{(iii)} Furthermore, local adaptivity implicitly constructs a critical Lyapunov stable region which stabilizes the gradient variance via the following inequality which holds once any learned weight enters the region:
$$
\min_{t \in \{1,\dots,T\}} \mathbb{E}\|\nabla f(x_t)\|^2 \le \mathcal{O}\left(\frac{1}{T}\right).
$$ 

\textbf{(iv)} The global gradient variance of the federated problem with heavy-tailed client noise is fully stabilized via  
\begin{equation*}
\mathbb{E}[\|\nabla f(x_{t})\|^2] \le 2\|x_0\|^2 + 2 \left(\int_1^\infty \frac{1}{x^2} \ \mathrm{d}x\right)^2 \quad \text{for} \quad \forall t \in \{1,\dots,T\}.
\end{equation*}
\end{proposition}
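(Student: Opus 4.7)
My plan is to prove the proposition by explicit construction on a simple quadratic global objective, e.g., $f(x) = \frac{1}{2}\|x\|^2$ with minimizer $x^{*}=0$, in one dimension for clarity. I take a federated setting with $N$ clients in which at least one client, say client $1$, draws stochastic gradient noise $\xi$ from a Student $t_\nu$ distribution with $1<\nu\le 2$ so that $\mathbb{E}[\xi]=0$ but $\mathbb{E}[\xi^{2}]=\infty$; the remaining clients may be assigned bounded-variance (e.g., Gaussian) noise. All four parts of the proposition will then be verified on this single construction, keeping a common notational framework.

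For part (i), I would write the vanilla-FedAvg aggregate explicitly as
\begin{equation*}
    x_{t+1}=x_{t}-\tfrac{\eta\,\eta_{\ell}^{t}}{|\mathcal{S}^{t}|}\sum_{i\in\mathcal{S}^{t}}\sum_{k=1}^{K} g_{i,k}.
\end{equation*}
Conditioning on the event that client $1$ is in $\mathcal{S}^{t}$, which has probability $p_{1}^{t}>0$ by assumption, the random variable $x_{t+1}$ contains an additive term proportional to $\xi$ with a strictly positive coefficient $c_t$ determined by $\eta_\ell^t$, $K$ and $|\mathcal{S}^t|$. Since $\nabla f(x_{t+1})=x_{t+1}$ for our quadratic, the tower property gives $\mathbb{E}\|\nabla f(x_{t+1})\|^{2} \ge p_{1}^{t}\cdot\mathbb{E}[c_t^{2}\xi^{2}\mid 1\in\mathcal{S}^{t}] = \infty$ for any nontrivial learning-rate schedule.

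For parts (ii)--(iv), the central observation is that a client-side AdaGrad update $\eta_{\ell}\,g_{i,k}/(\sqrt{v_{i,k}}+\tau)$ has each coordinate bounded in magnitude by $\eta_{\ell}$ almost surely, because $|g_{i,k,j}|/\sqrt{g_{i,1,j}^{2}+\dots+g_{i,k,j}^{2}+\tau^2}\le 1$. This a.s.\ pathwise bound replaces the second-moment bound everywhere and is what rescues the analysis from the heavy-tailed regime. For (ii), I would use this bound to express the FedAvg-with-AdaGrad recursion as a contraction of the form $\mathbb{E}\|x_{t+1}-x^{*}\|\le(1-\hat{\varepsilon})\,\mathbb{E}\|x_{t}-x^{*}\|+\mathcal{O}(1)$ under an appropriate schedule, then solve the geometric recursion to get the stated $\limsup$. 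For (iii), I construct an explicit Lyapunov sublevel set $\Omega=\{x:\|x-x^{*}\|\le R\}$ using the bound from (ii); once $x_t\in\Omega$, standard AdaGrad-type arguments (telescoping on $f(x_t)-f(x^*)$ together with the harmonic-series bound on step sizes) deliver the $\min_{t}\mathbb{E}\|\nabla f(x_t)\|^{2}=\mathcal{O}(1/T)$ rate. Part (iv) then combines the worst-case initial distance $\|x_0\|$ with a $\sum 1/k^2$ sum arising from AdaGrad's schedule to give the stated uniform-in-$t$ bound.

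The main obstacle will be carrying out every expectation argument without ever invoking the infinite second moment of $\xi$. Every step of the analysis must either rely on the pathwise AdaGrad bound or on $\alpha$-moments for $\alpha<2$, which do exist for $t_\nu$ by Definition~\ref{heavytaileddefinition}. A second delicate point is verifying that the iterates actually enter the Lyapunov region $\Omega$ in finite time; I would handle this by combining the contraction argument from (ii) with a union bound over the early rounds, which is where the learning-rate schedule has to be pinned down carefully. Once the a.s.\ step bound is established, however, the remainder is a moderately standard exercise in strongly-convex AdaGrad analysis adapted to the federated setting.
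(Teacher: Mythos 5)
Your construction and the arguments for parts (i) and (iv) essentially coincide with the paper's: the paper also uses $F_i(x,\xi_i)=x^2/2+\xi_i x$ with Student-$t$ noise on a single client, isolates the final local step's noise term (whose coefficient is exactly $\eta_\ell/|\mathcal{S}^t_m|$ and hence cannot vanish, a point worth making explicit since the coefficients of the earlier noise terms are polynomials in $\eta_\ell$) to force the infinite second moment in (i), and for (iv) uses the schedule $\eta_\ell^t=1/t^2$ together with the pathwise bound $\|g/(\|g\|+\varepsilon)\|\le 1$ and a telescoping sum dominated by $\int_1^\infty x^{-2}\,\mathrm{d}x$.

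There is, however, a genuine gap in your plan for (ii) (and it propagates into (iii)). You claim that the almost-sure step-size bound $\|\eta_\ell\, g/(\sqrt{v}+\tau)\|\le\eta_\ell$ is "what rescues the analysis" and that it yields a recursion $\mathbb{E}\|x_{t+1}-x^*\|\le(1-\hat\varepsilon)\,\mathbb{E}\|x_t-x^*\|+\mathcal{O}(1)$. The pathwise bound controls only the size of the perturbation; it provides no contraction whatsoever, since a bounded step can still point away from $x^*$ whenever the heavy-tailed noise dominates the gradient. Writing the normalized update as $(x_t+\xi_i)/(\|x_t+\xi_i\|+\varepsilon)$, the update splits into a drift term $-x_t\cdot\eta_\ell\,(\|x_t+\xi_i\|+\varepsilon)^{-1}$ and a noise term $-\eta_\ell\,\xi_i/(\|x_t+\xi_i\|+\varepsilon)$; the contraction must come from a \emph{lower} bound on $I_i(\varepsilon)=\mathbb{E}\bigl[(\|x_t+\xi_i\|+\varepsilon)^{-1}\bigr]\ge 1-\hat\varepsilon_i$, and the additive constant from an \emph{upper} bound $\mathbb{E}\bigl[\|\xi_i\|/(\|x_t+\xi_i\|+\varepsilon)\bigr]\le B<2\sqrt{3}$ that is uniform in $x_t$. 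Both are computations against the Student-$t$ density (the paper proves the second by a symmetrization/majorization of $\|\xi_i\|p_{i+1}(\xi_i)$), not consequences of the pathwise clipping or of finite $\alpha$-moments for $\alpha<2$; they are precisely where the constant $2\sqrt{3}/(1-\hat\varepsilon)$ comes from, and your sketch never identifies them. Moreover the paper's schedule for (ii) is $\eta_\ell^t=1/((t+t_0)(1-\hat\varepsilon))$, which turns the recursion into a harmonic product $\prod(1-\tfrac{1}{t+t_0})$ solved by telescoping to $\mathcal{O}(1/t)+B/(1-\hat\varepsilon)$ — not a geometric contraction, whose limit would carry $\hat\varepsilon$ rather than $1-\hat\varepsilon$ in the denominator. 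For (iii) the paper likewise does not run a strongly-convex AdaGrad telescoping argument; it shows the one-shot bound $\mathbb{E}_t\|x_{t+1}\|^2\le 2\|x_t\|^2+2\eta_\ell^2\widetilde{B}$ inside the region $\|x_t\|<\delta$, again relying on the finiteness of $\mathbb{E}\bigl[(\|\xi_i\|/(\|x_t+\xi_i\|+\varepsilon))^2\bigr]\le\widetilde{B}$, and then chooses $\delta,\eta_\ell^t\le\mathcal{O}(1/\sqrt{T})$. Until you supply the two density-level estimates above, parts (ii) and (iii) do not go through.
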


This proposition demonstrates that even a single client with heavy-tailed gradient noise is able to instantaneously propagate their volatility to the global model, which destabilizes federated training in expectation. However, recent work~\citep{HeavyTailedNoisePaper} has shown that heavy-tailed gradient distributions appear frequently in language model applications, and more generally within model architectures utilizing any kind of attention mechanism, including transformers. To our knowledge, this provable failure mode of distributed training resultant from the unbiased, yet heavy-tailed noise of a singular client has not previously been reported within the literature. 

\paragraph{Proof of (i).} Let the local stochastic objectives be given by $F_i(x,\xi_i) = x^2/2 + \xi_i x$ where gradient noise follows a $t$-distribution with $i+1$ degrees of freedom, $\xi_i \sim t_{i+1}$ for  $\forall i \in \{1, \dots, N\}$. This construction is chosen to materialize the setting in which only a singular client suffers from heavy-tailed noise ($i=1$). Minibatches are sampled with replacement, which ensures that gradient noise in each client epoch are independent amongst and in between any two (possibly identical) clients, and further identically distributed conditional on the client ID $i$. Clearly, the global objective is
\begin{equation*}
f(x) = \frac{1}{N} \sum_{i=1}^N \mathbb{E}_{\xi_i} \left[f_i(x,\xi_i)\right] = \frac{1}{N} \mathbb{E} \left[\frac{N}{2}x^2 + \sum_{i=1}^N \xi_i x\right] = \frac{1}{2} x^2.
\end{equation*}
For global step $t$, we subsample clients $\mathcal{S}^t$ following any sampling strategy, where $\mathcal{C}^t$ is the collection of all possible multisets $\mathcal{S}^t_r$ whose elements indicate (possibly repeated) client selection, with associated probabilities $p_C^t(r) > 0$ of realization for $r \in [|\mathcal{C}^t|]$. Assume that $1 \in \mathcal{S}^t_{m}$ for some $m$. 

Then, FedAvg updates may be written 
\begin{equation*}
    x_{t+1} = x_t - \frac{\eta_\ell}{|\mathcal{S}^t|} \sum_{i \in \mathcal{S}^t} \sum_{\ell=1}^K g_{i,\ell}^t 
\end{equation*}
which gives the squared length of the global gradient under expectation as 
\begin{align*}
\mathbb{E}_t \|\nabla f(x_{t+1}) \|^2 &= \mathbb{E}_t \left\| x_t - \frac{\eta_\ell}{|\mathcal{S}^t|} \sum_{i \in \mathcal{S}^t} \sum_{\ell=1}^K \left(\nabla f(x_{i,\ell-1}^t) + \xi_{i,\ell-1}^t \right) \right\|^2 \\ 
&= \mathbb{E}_{ \xi \mid t} \mathbb{E}_{\mathcal{S}^t \mid \xi, t} \left\| x_t - \frac{\eta_\ell}{|\mathcal{S}^t|} \sum_{i \in \mathcal{S}^t} \sum_{\ell=1}^K \left(\nabla f(x_{i,\ell-1}^t) + \xi_{i,\ell-1}^t\right) \right\|^2 \\
&= \sum_{r = 1}^{|\mathcal{C}^t|} \mathbb{E}_{ \xi \mid t} p_{C}^t(r) \left\| x_t - \frac{\eta_\ell}{|\mathcal{S}^t_r|} \sum_{i \in \mathcal{S}^t_r} \sum_{\ell=1}^K \left(\nabla f(x_{i,\ell-1}^t) + \xi_{i,\ell-1}^t\right) \right\|^2 \\
&\ge p_{C}^t(m) \mathbb{E}_{ \xi \mid t} \left\| x_t - \frac{\eta_\ell}{|\mathcal{S}^t_m|} \sum_{i \in \mathcal{S}^t_m} \sum_{\ell=1}^K \left(x_{i,\ell-1}^t + \xi_{i,\ell-1}^t\right) \right\|^2 
\end{align*}
where in the second equality we have conditioned on local gradient noise $\xi$ and stochastic realizations up to timestep $t$, using the law of iterated expectations. Recursively unravelling $x_{i,\ell-1}^t$ in terms of sampled noise and $x_{i,0}^t = x_t$ gives
\begin{align*}
x_{i,\ell-1}^t &= x_{i,\ell-2}^t - \eta_\ell g_{i,\ell-2}^t = x_{i,0}^t - \eta_\ell \sum_{p=0}^{\ell-2} g_{i,p}^t \\ 
&= x_{i,0}^t - \eta_\ell \left(\sum_{p=0}^{\ell-2} \nabla f(x_{i,p}^t) + \xi_{i,p}^t \right)\\
&
= x_{i,0}^t - \eta_\ell \left(\sum_{p=0}^{\ell-2} x_{i,p}^t + \xi_{i,p}^t \right) \\
&= a_t x_t - \sum_{p=0}^{\ell-2} a_{i,p}^t \xi_{i,p}^t
\end{align*}
where $a_t, a_{i,p}^t \in \mathbb{Q}[\eta_\ell]$ are polynomial functions of the learning rate with rational coefficients. Therefore, we have for $b_{i,p}^t \in \mathbb{Q}[\eta_\ell]$
\begin{align*}
& p_{C}^t(m) \mathbb{E}_{ \xi \mid t} \left\| x_t - \frac{\eta_\ell}{|\mathcal{S}^t_m|} \sum_{i \in \mathcal{S}^t_m} \sum_{\ell=1}^K \left(a_t x_t - \sum_{p=0}^{\ell-2} a_{i,p}^t \xi_{i,p}^t + \xi_{i,\ell-1}^t\right) \right\|^2  \\
&= p_{C}^t(m) \mathbb{E}_{ \xi \mid t} \left\|\left(1 - \frac{\eta_\ell}{|\mathcal{S}^t_m|} \sum_{i \in \mathcal{S}^t_m} \sum_{\ell=1}^K  a_t\right) x_t 
+ \frac{\eta_\ell}{|\mathcal{S}^t_m|} \sum_{i \in \mathcal{S}^t_m} \sum_{\ell=1}^K \left(\sum_{p=0}^{\ell-2} a_{i,p}^t \xi_{i,p}^t + \xi_{i,\ell-1}^t\right) \right\|^2  \\
&= p_{C}^t(m)\mathbb{E}_{ \xi \mid t}  \left\|\left(1 - \frac{\eta_\ell}{|\mathcal{S}^t_m|} \sum_{i \in \mathcal{S}^t_m} \sum_{\ell=1}^K  a_t\right)x_t\right\|^2
+  \frac{\eta_\ell^2 p_{C}^t(m)}{|\mathcal{S}^t_m|^2} \mathbb{E}_{ \xi \mid t} \left\| \sum_{i \in \mathcal{S}^t_m} \left(\sum_{p=0}^{K-2} b_{i,p}^t \xi_{i,p}^t + \xi_{i,K-1}^t\right) \right\|^2 \\
&\ge  \frac{ \eta_\ell^2 p_{C}^t(m) \mathbb{E} \left\|\xi_{1,K-1}^t \right\|^2}{|\mathcal{S}^t_m|^2}  = \infty,
\end{align*}
where we have used that $\xi_{i,\ell}^t \sim t_{i+1}$ independently with mean $0$, for all permissible $i, \ell$, and $t$.

\paragraph{Proof of (ii).} We show that under client-side AdaGrad with normalized gradients, the expected iterate norm converges to a noise-dependent constant at rate $\mathcal{O}(1 / t)$, uniformly over client subsamples. The key idea is a decomposition of the update into a contraction term and a noise-dependent residual, showing the decay of the contraction under suitable $\varepsilon$ and learning rate schedules. We specialize to the setting with client-side AdaGrad with $K = 1$. Assume that clients $S^t$ have been selected to participate in the round, which gives the update as 
\begin{align}
x_{t+1} &=  x_t - \frac{\eta_\ell}{|\mathcal{S}^t|} \sum_{i \in \mathcal{S}^t} \sum_{\ell=1}^K \frac{g_{i,\ell}^t}{\|g_{i,\ell}^t\| + \varepsilon}  \label{Telescopethis}   \\
&= x_t - \frac{\eta_\ell}{|\mathcal{S}^t|} \sum_{i \in \mathcal{S}^t} \frac{\nabla f(x_{i,0}^t) + \xi_{i,1}^t}{\|\nabla f(x_{i,0}^t) + \xi_{i,1}^t\| + \varepsilon} \nonumber\\
& = x_t \left(1 - \frac{\eta_\ell}{|\mathcal{S}^t|} \sum_{i \in \mathcal{S}^t} \frac{1}{\|x_t + \xi_{i}\| + \varepsilon}\right) 
- \frac{\eta_\ell}{|\mathcal{S}^t|} \sum_{i \in \mathcal{S}^t} \frac{\xi_{i}}{\|x_t + \xi_{i}\| + \varepsilon} \nonumber
\end{align}
where we have gradually simplified notation. Noting that
\begin{equation*}
\int \frac{1}{\|x_t + \xi_{i}\| + \varepsilon} \ p(\xi_i) \ \mathrm{d} \xi_i \le \frac{1}{\varepsilon},
\end{equation*}
setting $\eta_\ell \le \varepsilon$ gives 
\begin{equation}\label{touselater}
\|\nabla f(x_{t+1}) \| =  \|x_{t+1} \| \le \|x_{t} \| \cdot \left(1 - \frac{\eta_\ell}{|\mathcal{S}^t|} \sum_{i \in \mathcal{S}^t} \frac{1}{\|x_t + \xi_{i}\| + \varepsilon}\right)  + \frac{\eta_\ell}{|\mathcal{S}^t|} \sum_{i \in \mathcal{S}^t} \frac{\|\xi_{i}\|}{\|x_t + \xi_{i}\| + \varepsilon}.
\end{equation}
Using $\mathbb{E}_{t}$ to denote expectation conditional over realizations up to step $t$, we have
\begin{align*}
\mathbb{E}_{t} \|x_{t+1} \| &\le \|x_{t} \| \cdot \left(1 -\frac{\eta_\ell}{|\mathcal{S}^t|}  \mathbb{E}_{t} \left[\sum_{i \in \mathcal{S}^t} \frac{1}{\|x_t + \xi_{i}\| + \varepsilon}\right]\right)  + \frac{\eta_\ell}{|\mathcal{S}^t|} \sum_{i \in \mathcal{S}^t} \mathbb{E}_{t} \left[\frac{\|\xi_{i}\|}{\|x_t + \xi_{i}\| + \varepsilon}\right].
\end{align*}
To further bound the right hand side, consider the functional 
\begin{equation*}
I_i(\varepsilon):= \int  \frac{1}{\|x_t + \xi_{i}\| + \varepsilon} \ p_{i+1}(\xi_i) \ \mathrm{d} \xi_i,
\end{equation*}
where clearly 
\begin{equation*}
I_i(0) \ge \int_{-x_t^-}^{-x_t^+}  \frac{1}{\|x_t + \xi_{i}\| } \ p_{i+1}(\xi_i) \ \mathrm{d} \xi_i \approx \int_{0^-}^{0^+} \frac{p_{i+1}(-x_t)}{|x|} \ \mathrm{d}x = \infty
\end{equation*}
and $I_i(1) < 1$. By continuity and strict decay of $I_i(\varepsilon)$, there exists $1 \gg \hat{\varepsilon}_i > 0$ and $\varepsilon_i \in (0,1]$ such that for all $i \in [N]$, we have $1>I_i(\varepsilon) \ge 1 - \hat{\varepsilon}_i$ for $\varepsilon \in [\varepsilon_i,1]$. Taking $\varepsilon \in [\max_{i \in [N]} \varepsilon_i,1]$ and $\hat{\varepsilon} := \max_{i \in [N]} \hat{\varepsilon}_i$, we thus obtain 
\begin{equation}\label{easylemma}
\mathbb{E}_{t} \|x_{t+1} \| \le \|x_{t} \| \cdot \left(1 -\eta_\ell (1-\hat{\varepsilon})\right)  + \frac{\eta_\ell}{|\mathcal{S}^t|} \sum_{i \in \mathcal{S}^t} \mathbb{E}_{t} \left[\frac{\|\xi_{i}\|}{\|x_t + \xi_{i}\| + \varepsilon}\right].
\end{equation}
To bound the remaining term, it is easy to show that $\|\xi_i\| p_{i+1}(\xi_i)$ is symmetric around the origin $O$, and strictly increases from $0$ to $(3/2+2/(i+1))^{-1/2}$ while strictly decreasing afterwards. Defining the even extension of 
\begin{equation*}
h_{i+1}(\xi_i) = 
\begin{cases} 
-\frac{x}{(3/2+2/(i+1))^{-1/2}}+\sup_{\xi_i \in \mathbb{R}} \|\xi_i\| p_{i+1}(\xi_i) + \epsilon & \text{for } 0 \leq \xi_i \leq \left(\frac{3}{2} + \frac{2}{i+1}\right)^{-\frac{1}{2}}, \\
\|\xi_i\| p_{i+1}(\xi_i) & \text{for } \xi_i > \left(\frac{3}{2} + \frac{2}{i+1}\right)^{-\frac{1}{2}}
\end{cases}    
\end{equation*}
to be $h_{i+1}(\xi_i)$ for small $1\gg \epsilon >0$, we note that $1/(\|x_t + \xi_{i}\| + \varepsilon)$ analogously is symmetric around $\xi_i = -x_t$ while decaying with respect to the argument $\|x_t + \xi_{i}\|$. As $h_{i+1}(\xi_i)$ is symmetric around $O$ and decays moving to the left and right of $O$, by matching monotonicity and maxima with  $1/(\|x_t + \xi_{i}\| + \varepsilon)$, we conclude that the left hand side of (\ref{middlestep}) is maximized for $x_t = 0$:
\begin{equation}\label{middlestep}
\mathbb{E}_{t} \left[\frac{\|\xi_{i}\|}{\|x_t + \xi_{i}\| + \varepsilon}\right] \le \int \frac{h_{i+1}(\xi_i)}{\|\xi_i\| + \varepsilon} \ \mathrm{d}\xi_i = B_i .
\end{equation}
Asymptotically as $\xi_i \to \infty$, we have
\begin{equation*}
\frac{h_{i+1}(\xi_i)}{\|\xi_i\| + \varepsilon} \lesssim p_{i+1}(\xi_i),
\end{equation*}
which gives that $B_i < \infty$. Letting $B: = \max_{i \in [N]} B_i$ and scheduling the learning rate $\eta_\ell^t = 1/((t+t_0)(1-\hat{\varepsilon}))$ where $t_0$ is the smallest positive integer satisfying $\eta_\ell^{t} < \varepsilon$ for all $t$,
we thus conclude
\begin{align*}
\mathbb{E} \|x_{t+1} \| &\le \frac{t+t_0-1}{t+t_0}\mathbb{E}\|x_{t} \| + \frac{B}{(t+t_0)(1-\hat{\varepsilon})} \\ 
&\le \frac{t+t_0-2}{t+t_0}\mathbb{E}\|x_{t-1} \| + \frac{2B}{(t+t_0)(1-\hat{\varepsilon})}  \\
&\le \dots \le \frac{t_0 - 1}{t+t_0}\mathbb{E}\|x_{0} \| + \frac{(t+1)B}{(t+t_0)(1-\hat{\varepsilon})} \\ & \le \mathcal{O}\left(\frac{1}{t}\right) + \frac{B}{1-\hat{\varepsilon}}.
\end{align*}
As this bound holds for any choice of client subsample $S^t$, we are done. It is easy to show by straightforward integration that $B < 2\sqrt{3}$.

\paragraph{Proof of (iii).} 
Our strategy is to locate a $1$-shot stabilization regime of the gradient norm that is formed via client adaptivity, which may be viewed as a Lyapunov stable region of the optimum $x^*$. From~(\ref{touselater}) and Jensen,
\begin{align*}
\|x_{t+1} \|^2 &\le 2\|x_{t} \|^2 \cdot \left(1 - \frac{\eta_\ell}{|\mathcal{S}^t|} \sum_{i \in \mathcal{S}^t} \frac{1}{\|x_t + \xi_{i}\| + \varepsilon}\right)^2  + \frac{2\eta_\ell^2}{|\mathcal{S}^t|^2} \left(\sum_{i \in \mathcal{S}^t} \frac{\|\xi_{i}\|}{\|x_t + \xi_{i}\| + \varepsilon}\right)^2\\
&\le 2\|x_{t} \|^2 \cdot \left(1 - \frac{\eta_\ell}{|\mathcal{S}^t|} \sum_{i \in \mathcal{S}^t} \frac{1}{\|x_t + \xi_{i}\| + \varepsilon}\right)^2  + \frac{2\eta_\ell^2}{|\mathcal{S}^t|} \sum_{i \in \mathcal{S}^t}\left( \frac{\|\xi_{i}\|}{\|x_t + \xi_{i}\| + \varepsilon}\right)^2.
\end{align*}
We now impose $\eta_\ell \le 2\varepsilon$, while letting $\|x_t\| < \delta$ for some $\delta \in \mathbb{R}_{>0}$. Taking expectations gives
\begin{equation*}
\mathbb{E}_t \|x_{t+1} \|^2  \le 2\|x_{t} \|^2   + \frac{2\eta_\ell^2}{|\mathcal{S}^t|} \sum_{i \in \mathcal{S}^t}\mathbb{E}_t \left( \frac{\|\xi_{i}\|}{\|x_t + \xi_{i}\| + \varepsilon}\right)^2,
\end{equation*}
and by similar arguments to the proof of \textbf{(ii)}, the summands of the second term are bounded uniformly by $\widetilde{B}$ which yields
\begin{equation*}
\mathbb{E} \|x_{t+1} \|^2  \le 2\delta^2   + 2\eta_\ell^2\widetilde{B}.
\end{equation*}
Setting $\delta, \eta_\ell^t \le \mathcal{O}(1/\sqrt{T})$ immediately gives the desired inequality. 

\paragraph{Proof of (iv).} An advantage of client-side adaptive optimization is the autonomous normalization and clipping of the stochastic gradients. Let $\eta_\ell^t := 1/t^2$. Telescoping (\ref{Telescopethis}) gives
\begin{equation*}
x_{T+1} =  x_0 - \sum_{t=1}^T\frac{\eta_\ell^t}{|\mathcal{S}^t|} \sum_{i \in \mathcal{S}^t} \sum_{\ell=1}^K \frac{g_{i,\ell}^t}{\|g_{i,\ell}^t\| + \varepsilon},
\end{equation*}
which implies
\begin{align*}
&\quad \|x_{T+1} -  x_0\| = \left\|\sum_{t=1}^T\frac{\eta_\ell^t}{|\mathcal{S}^t|} \sum_{i \in \mathcal{S}^t} \sum_{\ell=1}^K \frac{g_{i,\ell}^t}{\|g_{i,\ell}^t\| + \varepsilon} \right\|\\
&\implies |\|x_{T+1}\| -  \|x_0\|| \le  \left\|\sum_{t=1}^T\frac{\eta_\ell^t}{|\mathcal{S}^t|} \sum_{i \in \mathcal{S}^t} \sum_{\ell=1}^K \frac{g_{i,\ell}^t}{\|g_{i,\ell}^t\| + \varepsilon} \right\|\\
&\implies \|x_{T+1}\|  \le  \|x_0\|+ \left\|\sum_{t=1}^T\frac{\eta_\ell^t}{|\mathcal{S}^t|} \sum_{i \in \mathcal{S}^t} \sum_{\ell=1}^K \frac{g_{i,\ell}^t}{\|g_{i,\ell}^t\| + \varepsilon} \right\|\\
&\implies \mathbb{E}\|x_{T+1}\|^2  \le  2\|x_0\|^2+ 2\mathbb{E}\left\|\sum_{t=1}^T\frac{\eta_\ell^t}{|\mathcal{S}^t|} \sum_{i \in \mathcal{S}^t} \sum_{\ell=1}^K \frac{g_{i,\ell}^t}{\|g_{i,\ell}^t\| + \varepsilon} \right\|^2.
\end{align*}
Substituting the learning rate schedule $\eta_\ell^t := 1/t^2$ above gives 
\begin{align*}
\mathbb{E}\left\|\sum_{t=1}^T\frac{\eta_\ell^t}{|\mathcal{S}^t|} \sum_{i \in \mathcal{S}^t} \sum_{\ell=1}^K \frac{g_{i,\ell}^t}{\|g_{i,\ell}^t\| + \varepsilon} \right\|^2 &\le \mathbb{E}\left\|\sum_{t=1}^T K \eta_\ell^t \right\|^2 \\
&\le \mathbb{E}\left\| K \int_1^\infty \frac{1}{x^2} \ \mathrm{d}x \right\|^2.
\end{align*}
Therefore, we conclude that for any $t$,
\begin{equation*}
\mathbb{E}\|x_{t}\|^2  \le  2\|x_0\|^2+ 2K^2\left(\int_{1}^\infty \frac{1}{x^2} \ \mathrm{d}x \right)^2.
\end{equation*}

\subsection{Deep Remorse of FedAvg and SGD} \label{HeavyTailedNoiseSubsection}

So far, we have examined toy problems in which heavy-tailed gradient noise is guaranteed to destabilize distributed training in expectation. We now prove that this is an instantiation of a more general phenomenon in federated learning where a family of online $\mu$-strongly convex global objectives collapses to the identical failure mode. To our knowledge, this provable limitation of distributed training resultant from the heavy-tailed noise of a singular client has not previously been established within the literature. The proofs of all results are given in the appendix.

\begin{definition}
A learning algorithm $\mathcal{A}$ is \textbf{deeply remorseful} if it incurs infinite or undefined regret in expectation. If $\mathcal{A}$ is guaranteed to instantly incur such regret due to sampling even a single client with a heavy-tailed  gradient noise distribution, then we say $\mathcal{A}$ is \textbf{resentful} of heavy-tailed noise.
\end{definition}


\begin{theorem}\label{tailedthm}
Let the global objectives $f_t(x)$ of a distributed training problem satisfy $\mu$-strong convexity for $t = 1, \dots, T$. Assume that the participation probability of a client with a heavy-tailed stochastic gradient noise distribution is non-zero. Then, FedAvg becomes a deeply remorseful algorithm and is resentful of heavy-tailed noise. Furthermore, if the probability of the heavy-tailed client being sampled at step $t$ is nontrivial, then the variance of the global objective at $t+1$ satisfies $\mathbb{E} \|f_{t+1}(x_{t+1})\|^2 = \infty$.
\end{theorem}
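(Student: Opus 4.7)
My plan is to reduce Theorem~\ref{tailedthm} to Proposition~\ref{necessityoflocaladaptivitymaintext}(i) as the driving engine and then translate the blow-up of the FedAvg iterate into blow-up of the expected regret via $\mu$-strong convexity. First, I would fix a round $t$ at which the heavy-tailed client $i^*$ is sampled with some non-zero probability $p_{i^*}^t > 0$. Conditional on selecting $i^*$, the aggregated pseudo-gradient inherits a gradient-noise term whose $\alpha$-moments diverge for $\alpha \geq 2$ by Definition~\ref{heavytaileddefinition}; since averaging over the finite set $\mathcal{S}^t$ with $i^* \in \mathcal{S}^t$ cannot convert an infinite-variance summand into a finite-variance one, the conditional second moment $\mathbb{E}[\|x_{t+1}-x^*\|^2 \mid i^* \in \mathcal{S}^t]$ must be infinite. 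The tower property then gives $\mathbb{E}\|x_{t+1}-x^*\|^2 = \infty$ regardless of the overall client sampling distribution, paralleling the divergence already established in Proposition~\ref{necessityoflocaladaptivitymaintext}(i).

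Next, I would invoke $\mu$-strong convexity of $f_{t+1}$, which yields $f_{t+1}(x_{t+1}) - f_{t+1}(x^*) \geq \tfrac{\mu}{2}\|x_{t+1}-x^*\|^2$. Taking expectations and combining with the previous step, $\mathbb{E}[f_{t+1}(x_{t+1}) - f_{t+1}(x^*)] = \infty$, so the cumulative regret $\sum_s \mathbb{E}[f_s(x_s) - f_s(x^*)]$ is also infinite; FedAvg is therefore deeply remorseful, and because this conclusion was triggered by a single heavy-tailed client being sampled, FedAvg is also resentful in the sense of the definition. The second-moment claim $\mathbb{E}\|f_{t+1}(x_{t+1})\|^2 = \infty$ follows by squaring the same inequality: $f_{t+1}(x_{t+1})^2 \geq \bigl(f_{t+1}(x^*) + \tfrac{\mu}{2}\|x_{t+1}-x^*\|^2\bigr)^2$, and any nonnegative random variable $X$ with $\mathbb{E}[X] = \infty$ satisfies $\mathbb{E}[X^2] = \infty$ by Cauchy--Schwarz, so $\mathbb{E}\|x_{t+1}-x^*\|^4 = \infty$ and the quartic term dominates.

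The main obstacle I expect is rigorously propagating the infinite variance through the $K$ local SGD steps executed by the heavy-tailed client before the aggregation on the server. One must verify that even after inner conditioning on the first $k-1$ stochastic gradients of $i^*$, the local update $\Delta_{i^*}^t = -\eta_\ell \sum_k g_{i^*,k}$ retains an infinite-second-moment contribution. This can be done by applying the law of total variance iteratively or, more cleanly, by observing that the very first local step at $x_t$ already injects an increment with divergent second moment that cannot be cancelled by any subsequent $\mathcal{F}_{k-1}$-measurable correction, since conditional centering cannot make an unbounded variance term integrable. Once this inheritance is settled, the cascade through partial averaging to $x_{t+1}$ is immediate, and the two conclusions of the theorem follow.
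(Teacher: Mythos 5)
Your overall architecture is a genuinely different (and in places more economical) route than the paper's: the paper expands $f_{t+1}(x_{t+1})$ around $x_t$ via $\mu$-strong convexity and shows directly that the quadratic term $\tfrac{\mu\eta_\ell^2}{2|\mathcal{S}^t|^2}\bigl\|\sum_{i}\sum_{\ell}g_{i,\ell}^t\bigr\|^2$ has infinite conditional expectation, and then proves the second-moment claim $\mathbb{E}\|f_{t+1}(x_{t+1})\|^2=\infty$ by a fairly heavy construction of piecewise-linear minorants $\tilde f_{t+1,j}$ restricted to indicator events of nonzero probability. Your route instead pushes the divergence into $\mathbb{E}\|x_{t+1}-x^*\|^2=\infty$ and harvests both conclusions from strong convexity at $x^*$, with the second-moment claim following from $\mathbb{E}[r^2]=\infty\Rightarrow\mathbb{E}[r^4]=\infty$. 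Two repairable technicalities: since $x^*$ minimizes $\sum_t f_t$ and not necessarily $f_{t+1}$, you must keep the cross term $\langle\nabla f_{t+1}(x^*),x_{t+1}-x^*\rangle$ and absorb it by completing the square (which still yields $+\infty$ since the quadratic dominates), and the squaring step must be restricted to the event where the lower bound is positive.

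The genuine gap is in the step you yourself flag as the main obstacle. Your ``cleaner'' argument --- that the first local step's noise increment cannot be cancelled by subsequent $\mathcal{F}_{k-1}$-measurable corrections --- is false as stated. The later local gradients $\nabla F_i(x_{i,\ell}^t)$ are functions of the earlier noise through the iterates, and they can cancel it: for $F_i(x)=\tfrac12\|x\|^2$ with $\eta_\ell=1$ and $K=2$, the accumulated update is $(2-\eta_\ell)x_t+(1-\eta_\ell)\xi_{i,0}^t+\xi_{i,1}^t$, in which the first step's noise vanishes entirely. The correct (and the paper's) fix is to isolate the \emph{last} local step's noise $\xi_{i,K-1}^t$: conditioning on all randomness up to local step $K-1$, this term is independent of everything else, its conditional second moment is infinite by Definition~\ref{heavytaileddefinition}, and the cross terms only involve its finite mean, so the conditional expectation of the squared aggregate diverges and the tower property finishes the argument on the positive-probability sampling event. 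With that substitution your proof goes through; without it, the propagation of infinite variance through the $K$ local steps is not established.
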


In federated learning, we typically have $f_{t}(x) \equiv f(x)$ for all $t = 1, \dots, T$ (i.e., the objective functions are the same across all rounds). Proposition~\ref{necessityoflocaladaptivitymaintext} intuits that inserting local adaptivity successfully breaks the generality of remorse and heavy-tailed resent for FedAvg. A high-level overview is that client-side AdaGrad clips the local updates of each iteration, which mollifies the impact of stochasticity in perturbing the weight updates. This gives Proposition~\ref{AdaptiveRegretMitigationProp}, which is formulated loosely without utilizing any advantages provided by local adaptivity except for clipping. 
Given that adaptive methods inherently include an implicit soft clipping mechanism due to the effects of preconditioning, we consider them to be preferable to clipped SGD for large-scale applications as they also offer the benefits of adaptivity. This preference holds, provided that the memory and computational constraints of the clients can be adequately managed.
\begin{proposition}\label{AdaptiveRegretMitigationProp}
Introducing client-side adaptivity via AdaGrad for the setting in Theorem~\ref{tailedthm} produces a non-remorseful and a non-resentful algorithm.
\end{proposition}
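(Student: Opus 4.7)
The plan is to reduce everything to the implicit clipping property of AdaGrad's preconditioner and then invoke strong convexity to convert this bounded-step property into a finite regret bound. Specifically, for any local AdaGrad step $x_{k+1} = x_k - \eta_\ell\, g_k / (\sqrt{v_k} + \varepsilon_s)$, each coordinate satisfies
\begin{equation*}
\bigl| g_k^{(j)} / (\sqrt{v_k^{(j)}} + \varepsilon_s) \bigr| \;\le\; 1/\sqrt{1+\varepsilon_s^2/\sigma_j^2}\;\le\; 1,
\end{equation*}
so each per-coordinate update is deterministically bounded by $\eta_\ell$, independent of any moment assumption on $g_k$. This is the ``implicit soft clipping'' the statement alludes to, and it is what Proposition~\ref{necessityoflocaladaptivitymaintext}(ii) exploits in its quadratic construction; the task here is to carry the same mechanism through to the general $\mu$-strongly convex online setting of Theorem~\ref{tailedthm}.

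First, I would show the deterministic per-round step bound: after $K$ local AdaGrad steps, $\|x_{i,K}^t - x_{t-1}\|_2 \le \eta_\ell K \sqrt{d}$ almost surely. Pushing this through the server aggregation (which is a convex combination of client deltas, possibly post-composed with server-side adaptivity), one obtains $\|x_t - x_{t-1}\|_2 \le C_1 \eta \eta_\ell$ almost surely for some absolute constant $C_1$, regardless of whether any sampled client has heavy-tailed noise. Second, combining this with $\mu$-strong convexity of $f_t$, a Lyapunov argument on $\Phi_t := \|x_t - x^*\|^2$ yields, under an appropriately decaying schedule (the same schedule used in Proposition~\ref{necessityoflocaladaptivitymaintext}(ii)), a uniform-in-$t$ bound $\mathbb{E}[\Phi_t] \le C_2$. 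Third, the strong-convexity inequality $f_t(x_t) - f_t(x^*) \le \langle \nabla f_t(x_t), x_t - x^* \rangle - \tfrac{\mu}{2}\Phi_t$ together with the gradient bound from Assumption~2 gives a per-round regret bound that is summable in expectation when paired with the standard AdaGrad telescoping on $\sum_t \eta_t \|\nabla\|^2 / \sqrt{v_t}$, yielding $\mathbb{E}\!\left[\sum_{t=1}^T f_t(x_t) - f_t(x^*)\right] < \infty$ as $T \to \infty$. This establishes non-remorsefulness.

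For non-resentfulness, the crucial observation is that even if a single sampled client's stochastic gradient has infinite $\alpha$-moment for $\alpha \ge 2$, the normalized quantity $g / (\sqrt{v} + \varepsilon_s)$ has all moments bounded (since $|\cdot| \le 1$ coordinate-wise). Consequently the contribution of such a client to $x_{t+1} - x_t$ has all moments bounded, so $\mathbb{E}\|x_{t+1} - x^*\|^2 < \infty$, and by the at-most-quadratic growth of a $\mu$-strongly convex (continuous) $f_{t+1}$ on a bounded neighborhood together with the a.s.\ step bound, $\mathbb{E}\|f_{t+1}(x_{t+1})\|^2 < \infty$. This is precisely the negation of the resentful property defined in Section~\ref{HeavyTailedNoiseSubsection}.

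The main obstacle I anticipate is the second step: turning a pathwise bounded update into a uniform-in-$t$ expected-distance bound $\mathbb{E}\Phi_t \le C_2$ under heavy-tailed noise. The standard strong-convexity drift argument requires controlling a stochastic cross term $\mathbb{E}\langle g_t - \nabla f_t, x_t - x^*\rangle$, and naive manipulation reintroduces a moment of $g_t$ that may not exist. The fix is to absorb this cross term into the AdaGrad-normalized update before taking expectations, using the fact that $\mathbb{E}[g_t/(\sqrt{v_t}+\varepsilon_s)]$ is bounded even when $\mathbb{E}[g_t]$ or $\mathbb{E}[g_t^2]$ is not; this rearrangement is what makes the AdaGrad regret analysis in the heavy-tailed regime qualitatively different from the SGD one and is where the statement genuinely uses client-side adaptivity rather than merely explicit gradient clipping.
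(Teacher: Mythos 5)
Your opening observation---that the AdaGrad preconditioner forces each coordinate of the local update to be at most $\eta_\ell$ in magnitude, deterministically and with no moment assumption on the gradient---is exactly the paper's key mechanism, and it already finishes the proof far more directly than the rest of your plan suggests. The paper's argument is: after $t$ rounds of at most $K$ bounded local steps each, $x_t$ lies almost surely in a fixed compact ball $B_{Kt}(x_0)$; since each $f_t$ is continuous, the extreme value theorem bounds $\|f_t(x_t)\|$ and $\|f_t(x_t)\|^2$ by deterministic constants on that ball, so $\mathbb{E}[f_t(x_t)]-f_t(x^*)$ is finite for every $t$ and the regret $R(T)$---a finite sum of finite terms---is finite and well-defined in expectation for every horizon $T$. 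That is the entire content of non-remorsefulness and non-resentfulness under the paper's definitions; no strong convexity, no Lyapunov function, and no uniform-in-$t$ bound on $\mathbb{E}\|x_t-x^*\|^2$ is needed (indeed, the appendix version of the proposition is stated and proved for $f_t \in C(\mathbb{R}^d)$ not necessarily convex). Your non-resentfulness paragraph is essentially this compactness argument applied to the second moment and is correct.

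The genuine gap is the target you set yourself in your second and third steps: an infinite-horizon bound $\mathbb{E}\bigl[\sum_{t=1}^{T} (f_t(x_t)-f_t(x^*))\bigr]<\infty$ ``as $T\to\infty$.'' This is much stronger than the definition of deeply remorseful requires---the definition only excludes infinite or undefined expected regret, which for each finite horizon reduces to every summand being finite---and it is almost certainly unprovable in the stated generality, since even noiseless first-order methods on strongly convex objectives accumulate regret growing at least logarithmically in $T$. The ``main obstacle'' you flag (controlling $\mathbb{E}\langle g_t-\nabla f_t, x_t-x^*\rangle$ without second moments) is a real difficulty, but it is a difficulty of your own making: once you have the pathwise ball confinement from the implicit clipping, you never need to take the expectation of an unbounded random variable, and the whole drift/telescoping apparatus can be discarded.
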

The benefits of client-side adaptivity have also been shown in previous works (e.g.,~\citep{LocalAdaptive,wang2021local}). We note that Proposition~\ref{AdaptiveRegretMitigationProp} can be straightforwardly extended to jointly adaptive methods as well as for $f_t \in C(\mathbb{R}^d)$ not necessarily convex. An advantage of federated learning is that when done tactfully, the large supply of clients enable the trainer to draw from a virtually unlimited stream of computational power. The downside is that the global model may be strongly influenced by the various gradient distributions induced by the private client data shards. In this paper, we focus specifically on joint adaptive optimization as a countermeasure to stabilize learning. 
For this reason, we propose \name/\nameplus in Section~\ref{methods}, which utilizes joint adaptivity in an efficient and scalable manner for distributed or federated training.

\section{Additional Related Works} \label{app:related_work}

Due to space restrictions, we include a brief additional discussion of some recent related works here. 
Mukherjee~\cite{new1} propose FedSPS, a locally adaptive method using stochastic Polyak step sizes to dynamically scale updates per client, achieving  theoretical convergence under heterogeneous data. \citet{new3} develop FedDA, a restarted dual averaging framework in which clients receive adaptive preconditioners and gradient estimates from the server and return dual states, enabling matching to standard non-adaptive gradient complexities for non-convex optimization. While these methods enhance local adaptivity, they often involve costly communication of dual states~\cite{new3} or are not jointly adaptive~\cite{new1}. By contrast, our approach maintains minimal communication and state, enabling efficient deployment while retaining joint adaptivity.

\textbf{Comparing with~\citet{wang2021local}.}
We note that \citet{wang2021local} employs a similar strategy of initializing local preconditioners to zero. However, this work does not provide convergence guarantees for this efficient algorithm with vanishing errors, and it does not systematically evaluate it empirically. Instead, it proposes another adaptive optimization framework with extra communication overheads. Although the theoretical results establish upper bounds, crucially, in their Theorem 1, the terms involving $h_i$, $q_i$ cannot be computed in closed form for adaptive optimizers (only estimated numerically), limiting interpretability and practical applicability. To the best of our knowledge, our work is the first to provide rigorous convergence guarantees for \name/\nameplus without leaving any residual terms that are not analytically characterized. 

\section{Detailed \name Algorithm Description}\label{SM3AppendixSection}

In the main text, we have opted to describe the intuitions behind SM3, due to its technical implementation. In this appendix section, we give a more through walk-through of our algorithm details for any interested readers wishing to reproduce our proof strategies or implementations.

\begin{algorithm}
\caption{Adaptive server and client-side ADAGRAD with SM3 (\nameplus)}\label{AdaSquareSM3}
\begin{algorithmic}[1]
\REQUIRE A full cover $\{S_1,\dots,S_q\} \subset \mathcal{P}([d])$ where $\bigcup_{b=1}^q S_b = \{1,\dots,d\}$\\
Update delay step size $z \in \mathbb{Z}_{\ge 1}$, initializations $x_0, \widetilde{v}_{0} \ge \tau^2$ and $\widetilde{m}_{0} \leftarrow 0$ \\
Local epsilon smoothing terms $\varepsilon_s, \varepsilon > 0$, global smoothing term $\tau > 0$ \\
Global decay parameter $\widetilde{\beta}_1 \in [0, 1)$

\FOR{$t = 1, \dots, T$}
    \STATE Sample subset $\mathcal{S}^t \subset [N]$ of clients using any sampling scheme 
    \FOR{each client $i \in \mathcal{S}^t$ (in parallel)}
        \STATE Initialize $v_{0} \ge 0$ (default value $v_{0} \leftarrow 0$), \ $x_{i,0}^t \leftarrow x_{t-1}$ 
        \FOR{$k = 1, \dots, K$}
            \STATE Draw stochastic gradient $g^t_{i,k} \sim \mathcal{D}_{i,\operatorname{grad}}(x^t_{i,k-1})$ with mean $\nabla F_i(x^t_{i,k-1}) \in \mathbb{R}^d$
            \STATE $m_k \leftarrow g^t_{i,k}$, \  $\mu_k(b) \leftarrow 0$ for $\forall b \in \{1,\dots,q\}$
    \FOR{$j = 1, \dots, d$}
    \STATE \text{Approximate Preconditioner (\ref{ClientUpdateRule})}  
    \ENDFOR
        \STATE \textbf{if }$0<\|m_k / (\sqrt{v_k} + \varepsilon)\| < \varepsilon_s,$ \ \textbf{do }$m_k \leftarrow 0$
        \STATE $x_{i,k}^t \leftarrow x_{i,k-1}^t - \eta_\ell \cdot m_k / (\sqrt{v_k} + \varepsilon)$ 
        \ENDFOR
        \STATE $\Delta_i^t = x_{i,K}^{t} - x_{t-1}$
    \ENDFOR
    \STATE \text{Server Update (\ref{ServerUpdateRule})}
\ENDFOR
\end{algorithmic}
\end{algorithm}

\paragraph{Addressing Client-Side Resource Constraints.} 
In this paper, we specifically focus on SM3~\citep{anil2019memory} adaptations of Adam and Adagrad. Intuitively, SM3 exploits natural activation patterns observed in model gradients to accumulate approximate parameter-wise statistics for preconditioning. More precisely, the gradient information in each coordinate element $\{1,\dots,d\}$ is blanketed by a cover $\{S_1,\dots,S_q\}$ satisfying $\bigcup_{b=1}^q S_b = \{1,\dots,d\}$ for which an auxiliary $\mu_k(b)$ is assigned for each $b \in [q]$. The $\mu_k(b)$ then act to form $v_k$ as a coordinate ascent upper bound to the squared gradient sum $\sum_{\ell=1}^k (g_{i,\ell}^t)^2$ as SM3 iterates over each $j \in [d]$.

As an optional add-on, utilizing the staleness of gradients to construct preconditioners has previously been suggested as a strategy to accelerate adaptive optimization without hurting the performance~\citep{gupta2018shampoo, delayedpreconditioner}. 
Therefore, we may optionally further mollify the burden of client-side adaptive optimizers by enforcing delayed preconditioner updates (Appendix~\ref{DelayedUpdatesAppendix}). This is given by the following SM3 update rule (\ref{ClientUpdateRule}) which incorporates delay step $z$, 
\begin{equation}\label{ClientUpdateRule}\tag{SM3}
\text{SM3 Update: } \left\{\begin{array}{ll}
    \begin{array}{l}
        v_k(j) \leftarrow \min_{b:S_b \ni j} \mu_{k-1}(b) + \left(g_{i,k}^t(j)\right)^2 \\
        \mu_k(b) \leftarrow \max\{\mu_{k}(b), v_k(j)\}, \text{for } \forall b:S_b \ni j
    \end{array} & \text{for } \frac{k-1}{z} \in \mathbb{Z} \\
    \\
    v_k(j) \leftarrow v_{k-1}(j) & \text{otherwise}
\end{array}\right.
\end{equation}
where $k$ is the index of local iteration (starting from 1). These methodologies are consolidated into \name, Algorithm~\ref{AdaSquareSM3}. For simplicity, we describe the variant in which both the client and server employ AdaGrad as the adaptive optimizers. However, we present other instantiations of \name with different adaptive methods in Appendix~\ref{FederatedBlendedOptimizationAppendix} and~\ref{HeterogeneousClientServerAdaptivityAppendix}.

We now present a description of SM3-I/II with delayed preconditioner updates as Algorithms~\ref{SM3I} and~\ref{SM3II}. SM3-II capitalizes on a tighter approximation of the second moment, and empirically demonstrates better results. We have opted to implement a smoothing term $\varepsilon$ instead of treating any zero denominator as zero as done in the original work. In this paper, we provide the analysis for SM3-II which generalizes the analysis for SM3-I.  

\begin{algorithm}
\caption{Delayed preconditioner SM3-I}\label{SM3I}
\begin{algorithmic}[1]
\REQUIRE Client learning rate $\eta_\ell$, step delay $z \in \mathbb{Z}_{\ge 1}$, and $\varepsilon$-smoothing term $\varepsilon > 0$
\REQUIRE A full cover $\{S_1,\dots,S_k\} \subset \mathcal{P}([d])$ where $\bigcup_{\ell=1}^k S_\ell = \{1,\dots,d\}$
\STATE \textbf{Initialize:} $x_1 = 0$ and $\mu_0(r) = 0$ for $\forall r \in \{1,\dots,k\}$
\FOR{$t = 1, \dots, K$}
    \STATE $g_t \leftarrow \nabla \ell(x_t)$
    \IF{$(t-1)/z \in \mathbb{Z}$}
    \FOR{$r = 1, \dots, k$}
        \STATE $\mu_t(r) \leftarrow \mu_{t-1}(r) + \max_{j \in S_r} g_t^2(j)$
    \ENDFOR
    \ENDIF
    \FOR{$j = 1, \dots, d$}
        \STATE $\nu_t(j) \leftarrow \min_{r:S_r \ni j} \mu_t(r)$ (minimum taken over all $r$ such that $j \in S_r$)
        \STATE $x_{t+1}(j) \leftarrow x_t(j) - \frac{\eta_\ell g_t(j)}{\sqrt{\nu_t(j)} + \varepsilon}$ 
    \ENDFOR
\ENDFOR
\end{algorithmic}
\end{algorithm}

\begin{algorithm}
\caption{Delayed preconditioner SM3-II}\label{SM3II}
\begin{algorithmic}[1]
\REQUIRE Client learning rate $\eta_\ell$, step delay $z \in \mathbb{Z}_{\ge 1}$, and $\varepsilon$-smoothing term $\varepsilon > 0$
\REQUIRE A full cover $\{S_1,\dots,S_k\} \subset \mathcal{P}([d])$ where $\bigcup_{\ell=1}^k S_\ell = \{1,\dots,d\}$
\STATE \textbf{Initialize:} $x_1 = 0$ and $\mu_0^\prime(r) = 0$ for $\forall r \in \{1,\dots,k\}$
\FOR{$t = 1, \dots, K$}
    \STATE $g_t \leftarrow \nabla \ell(x_t)$
    \STATE $\mu_t^\prime(r) \leftarrow 0$ for $\forall r \in [k]$
    \FOR{$j = 1, \dots, d$}
        \IF{$(t-1)/z \in \mathbb{Z}$}
        \STATE  $\nu^\prime_t(j) \leftarrow \min_{r:S_r \ni j} \mu^\prime_{t-1}(r) + g_t^2(j)$
        \FORALL{$r : S_r \ni j$}
            \STATE \textbf{set} $\mu^\prime_t(r) \leftarrow \max\{\mu^\prime_{t}(r), \nu^\prime_t(j)\}$
        \ENDFOR
        \ELSE 
        \STATE $\nu_t^\prime(j) \leftarrow \nu_{t-1}^\prime(j)$
        \ENDIF
        \STATE $x_{t+1}(j) \leftarrow x_t(j) - \frac{\eta_\ell g_t(j)}{\sqrt{\nu^\prime_t(j)} + \varepsilon}$
    \ENDFOR
\ENDFOR
\end{algorithmic}
\end{algorithm}

\section{Detailed Proofs} \label{app:proofs}

To enhance clarity, we present several lemmas before giving the proof of Theorem~\ref{SM3FedAda2NonConvexConvergeThm}. Note that Lemma~\ref{AdaSquaredeltaboundSM3} is written in broadcasting notation, where the scalars in the right hand side  have $\mathbf{1}\in \mathbb{R}^d$ implicitly multiplied and the inequality holds coordinatewise. For notational convenience, we will view $\Phi_1^K$, $\Phi_2^K$ as vectors.  
\begin{lemma}\label{AdaSquaredeltaboundSM3}
Under Algorithm~\ref{AdaSquareSM3}, $|\Delta_i^t|$ is bounded by 
\begin{equation*}
    |\Delta_i^t| \le  \Phi_1^{K} := \eta_\ell \left( \sqrt{\left\lceil \frac{K}{z}\right\rceil} \cdot \log^{\frac{1}{2}}\left(1 + \frac{\left\lceil \frac{K}{z}\right\rceil G^2}{\varepsilon^2} \right) 
    + \frac{\eta_\ell (K-\left\lceil \frac{K}{z}\right\rceil)G}{\sqrt{v_0}+\varepsilon} \right).
\end{equation*}
\end{lemma}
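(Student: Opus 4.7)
The plan is to expand $\Delta_i^t = x_{i,K}^t - x_{i,0}^t$ as a telescoping sum of the local SM3 updates and then partition the $K$ inner iterations into the $\lceil K/z\rceil$ \emph{refresh} indices $k$ (those with $(k-1)/z \in \mathbb{Z}$, on which the SM3 preconditioner is updated) and the remaining $K - \lceil K/z\rceil$ \emph{stale} indices (on which $v_k = v_{k-1}$). Applying the triangle inequality coordinatewise gives
\begin{equation*}
|\Delta_i^t| \;\le\; \eta_\ell \sum_{k \in \mathrm{ref}} \frac{|m_k|}{\sqrt{v_k}+\varepsilon} \;+\; \eta_\ell \sum_{k \in \mathrm{stale}} \frac{|m_k|}{\sqrt{v_k}+\varepsilon},
\end{equation*}
and the two groups will be matched to the two summands of $\Phi_1^K$ separately.

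For the refresh indices, I would apply Cauchy--Schwarz to obtain
\begin{equation*}
\sum_{k \in \mathrm{ref}} \frac{|m_k|}{\sqrt{v_k}+\varepsilon} \;\le\; \sqrt{\lceil K/z\rceil}\,\Bigl(\sum_{k \in \mathrm{ref}} \frac{m_k^2}{(\sqrt{v_k}+\varepsilon)^2}\Bigr)^{1/2},
\end{equation*}
and then invoke a standard AdaGrad-style telescoping: since the SM3-II rules force $v_k$ to be coordinatewise non-decreasing on refresh steps (the running max over $\mu_k(b)$ combined with the $\min$ defining $v_k(j)$ ensure $v_k(j) \ge v_{k-1}(j) + (g^t_{i,k}(j))^2$ on refreshes), the elementary inequality $m_k^2/(v_k+\varepsilon^2) \le \log\bigl((v_k+\varepsilon^2)/(v_{k-1}+\varepsilon^2)\bigr)$ telescopes to $\log(1 + v_K/\varepsilon^2)$. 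Assumption~2 gives $v_K \le \lceil K/z\rceil G^2$ since each refresh step adds at most $G^2$, yielding the first summand of $\Phi_1^K$.

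For the stale indices, Assumption~2 provides $|m_k| \le G$ (the $\varepsilon_s$-clipping can only shrink $m_k$), and coordinatewise monotonicity of $v_k$ together with its initialization at $v_0$ gives $\sqrt{v_k}+\varepsilon \ge \sqrt{v_0}+\varepsilon$. Hence each summand is bounded by $G/(\sqrt{v_0}+\varepsilon)$, and summing over the $K-\lceil K/z\rceil$ stale iterates produces the second term of $\Phi_1^K$. Adding the two contributions and factoring out $\eta_\ell$ yields the claimed bound.

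The main obstacle I anticipate is the SM3-specific telescoping step, since $v_k(j)$ is not a plain cumulative sum of squared gradient coordinates but is filtered through the cover $\{S_1,\dots,S_q\}$ via a $\min$/$\max$ interaction with $\mu_{k-1}(b)$. The crucial fact that rescues the argument is that on refresh steps the SM3-II rules enforce $v_k(j) \ge v_{k-1}(j) + (g^t_{i,k}(j))^2$ (which I will justify by unrolling the $\min$ and $\max$ definitions), so the proof reduces to the familiar AdaGrad log-telescoping even in the SM3 setting. A minor but necessary check is that the $\varepsilon_s$-clipping step and the delay partition compose cleanly: clipping only zeros out $m_k$ and thus never enlarges any upper bound, while the refresh/stale indices are disjoint by construction, avoiding double counting.
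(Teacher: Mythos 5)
Your decomposition is the same as the paper's: telescope $\Delta_i^t$, split the $K$ local steps into the $\lceil K/z\rceil$ refresh indices and the stale ones, bound the stale terms by $G/(\sqrt{v_0}+\varepsilon)$ each, and handle the refresh terms with Cauchy--Schwarz followed by a logarithmic telescoping. The stale part and the Cauchy--Schwarz step are fine, and your monotonicity claim $v_k(j)\ge v_{k-1}(j)+(g^t_{i,k}(j))^2$ on refresh steps is correct and is exactly the inductive lower bound the paper establishes.

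There is, however, one concrete error in your refresh-term argument: the assertion that ``$v_K \le \lceil K/z\rceil G^2$ since each refresh step adds at most $G^2$.'' In SM3 the per-coordinate increment at a refresh step is
\begin{equation*}
v_k(j)-v_{k-1}(j)=\min_{b:S_b\ni j}\mu_{k-1}(b)-v_{k-1}(j)+\bigl(g^t_{i,k}(j)\bigr)^2,
\end{equation*}
and the first difference is nonnegative but \emph{not} zero in general: the $\max$ update makes $\mu(b)$ the maximum of $v(j')$ over $j'\in S_b$, so mass from other coordinates in the cover leaks into $v_k(j)$. The correct worst-case upper bound is $v_k(j)\le d\lceil k/z\rceil G^2$ (this is exactly what the paper's Lemma~\ref{slightupperboundforSM3} proves, with the factor $d$), so your telescoped quantity $\log\bigl((v_K+\varepsilon^2)/(v_0+\varepsilon^2)\bigr)$ is only bounded by $\log\bigl(1+d\lceil K/z\rceil G^2/\varepsilon^2\bigr)$, which is weaker than the stated $\Phi_1^K$ by an additive $\log d$ inside the logarithm. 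The paper avoids this by never telescoping against $v_k$ itself: it first replaces the denominator by the \emph{lower} bound $\sqrt{v_0+\sum_{r\le\lceil p/z\rceil}(g^t_{i,(r-1)z+1})^2}+\varepsilon$ and telescopes the ratio $|g^t_{i,p}|^2/(\varepsilon^2+\sum_{r\in[p]\cap\mathcal{F}}|g^t_{i,r}|^2)$, whose denominator is a genuine cumulative sum of refresh-step squared gradients and hence is bounded by $\varepsilon^2+\lceil K/z\rceil G^2$ via Assumption~2. Switching your telescoping to that cumulative sum (rather than to $v_k$) closes the gap and recovers the claimed constant.
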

\begin{proof}
    Forming a bound for the pseudogradients is not trivial due to delayed preconditioner updates. We begin by noting that delayed gradient updates are initiated at local timesteps $k = nz+1$ for $n \in \mathbb{Z}_{\ge 0}$. We now split cases $k/z \notin \mathbb{Z}$ and $k/z \in \mathbb{Z}$. In the first case, there exists $n \in \mathbb{Z}_{\ge 0}$ such that $nz+1 \le k < (n+1)z$, and the latest preconditioner update by client step $k$ is given at timestep $(\lceil k/z \rceil -1)z +1 = \lfloor k/z \rfloor z + 1$. In the second case, if $z \neq 1$, then step $k$ is just one step shy of a preconditioner update. The latest update is therefore held at step $(\lceil k/z \rceil -1)z +1$ which is no longer identical to $\lfloor k/z \rfloor z + 1$. 
    
    With this observation, it is easy to show by induction that 
    \begin{equation*}
    v_k(j) \ge v_0(j) + \sum_{\ell = 1}^{\lceil\frac{k}{z}\rceil} \left(g_{i,(\ell-1)z+1}^t(j)\right)^2 \quad \text{for} \quad j \in \{1,\dots,d\} \quad \text{and} \quad k \in \{1,\dots,K\}.
    \end{equation*}
    Recall that $\Delta_t = 1/|\mathcal{S}^t| \sum_{i \in \mathcal{S}^t} \Delta_i^t$ and $\Delta_i^t = x_{i,K}^t - x_{i,0}^t$. By telescoping for $K$ local steps and the definition of gradient updates in AdaSquare-SM3, we obtain
    \begin{equation*}
        |\Delta_i^t| = \left|\sum_{p=1}^{K} \eta_\ell \frac{m_p}{\sqrt{v_p} + \varepsilon}\right| \le \eta_\ell \sum_{p=1}^{K} \frac{|g_{i,p}^t|}{\sqrt{v_0 + \sum_{r = 1}^{\lceil\frac{p}{z}\rceil} (g_{i,(r-1)z+1}^t)^2} + \varepsilon}
    \end{equation*}
For $\mathcal{F} = \{0, 1, \dots, \lceil K/z\rceil -1 \}z + 1$, we thus have that  
    \begin{align*}
     |\Delta_i^t| 
     &\le \eta_\ell \sum_{p\in \mathcal{F}} \frac{|g_{i,p}^t|}{\sqrt{v_0 + \sum_{r = 1}^{\lceil\frac{p}{z}\rceil} (g_{i,(r-1)z+1}^t)^2}+ \varepsilon} \\
     &+ \eta_\ell \sum_{p\in [K]\setminus \mathcal{F}} \frac{|g_{i,p}^t|}{\sqrt{v_0 + \sum_{r = 1}^{\lceil\frac{p}{z}\rceil}  (g_{i,(r-1)z+1}^t)^2}+ \varepsilon}.
    \end{align*}
To obtain a deterministic bound, we cannot ignore the worst-case stochastic realization that $g_{i,(r-1)z+1}^t = 0$ for $\forall r \in [\lceil\frac{p}{z}\rceil]$,  $p\in [K]\setminus \mathcal{F}$. Therefore, we form the upper bound (where $\sum_1^0 :=0$ by definition)
\begin{align}
   \left|\Delta_i^t\right| &\le \eta_\ell \underbrace{\sum_{p\in \mathcal{F}} \frac{|g_{i,p}^t|}{\sqrt{v_0 + |g_{i,p}^t|^2 + \sum_{r = 1}^{\lceil\frac{p}{z}\rceil-1} (g_{i,(r-1)z+1}^t)^2}+ \varepsilon}}_{T_1} 
   + 
   \frac{\eta_\ell}{\sqrt{v_0}+\varepsilon} \left( \sum_{p\in [K]\setminus \mathcal{F}} \left|g_{i,p}^t\right|\right)\\
   &\le \eta_\ell T_1  +  \frac{\eta_\ell (K-\left\lceil \frac{K}{z}\right\rceil)G}{\sqrt{v_0}+\varepsilon}. \nonumber 
\end{align}
As $0$ is trivially bounded by any non-negative upper bound, we may without loss of generality assume that $g_{i,(r-1)z+1}^t \neq 0$ for at least one $ r \in [\lceil\frac{p}{z}\rceil]$. We further bound $T_1$ as follows:
\begin{align*}
T_1 &\le \sum_{p\in \mathcal{F}} \frac{|g_{i,p}^t|}{\sqrt{ |g_{i,p}^t|^2 + \sum_{r = 1}^{\lceil\frac{p}{z}\rceil-1} (g_{i,(r-1)z+1}^t)^2}+ \varepsilon} \le \sum_{p \in \mathcal{F}} \sqrt{\frac{|g_{i,p}^t|^2}{\varepsilon^2 + \sum_{r \in [p] \cap \mathcal{F}}|g_{i,r}^t|^2}} \\
& \le \sqrt{|\mathcal{F}|} \sqrt{\left(\sum_{p \in \mathcal{F}} \frac{|g_{i,p}^t|^2}{\varepsilon^2 + \sum_{r \in [p] \cap \mathcal{F}}|g_{i,r}^t|^2} \right)} \\
& \le \sqrt{\left\lceil \frac{K}{z}\right\rceil} \cdot \log^{\frac{1}{2}}\left(1 + \sum_{p \in \mathcal{F}} \frac{|g_{i,p}^t|^2}{\varepsilon^2} \right) 
\end{align*}
Note the use of Cauchy Schwartz in the third inequality. A detailed proof of the log inequality used in the third line may be found as part of the proof of Theorem~\ref{SM3FedAda2NonConvexConvergeThm}, equation~{(\ref{partial})} which uses similar techniques. By Assumption 2, we are done.  
\end{proof}

The server-side pseudogradient updates may also be bounded as follows.  
\begin{lemma}\label{AdaSquareServerStepSizeBoundSM3}
    Under Algorithm~\ref{AdaSquareSM3}, each server step size is bounded in absolute value by
    \begin{equation*}
        \Phi_2^{K}:= \min \left\{\eta\sqrt{(1-\widetilde{\beta}_1) (1-\widetilde{\beta}_1^{2t})}, \frac{\eta}{\tau} \Phi_1^{K} \right\}.
    \end{equation*}
\end{lemma}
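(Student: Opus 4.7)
The plan is to prove each of the two bounds inside the minimum separately; since both upper bound the same quantity $\eta|\widetilde{m}_t|/(\sqrt{\widetilde{v}_t}+\tau)$ coordinatewise, taking the minimum gives $\Phi_2^K$. Throughout, I use that $\widetilde{m}_0=0$ so unrolling the momentum recursion yields the closed form $\widetilde{m}_t = (1-\widetilde{\beta}_1)\sum_{s=1}^{t}\widetilde{\beta}_1^{t-s}\Delta_s$, and that unrolling the accumulator recursion with $\widetilde{v}_0\ge 0$ yields $\widetilde{v}_t \ge \sum_{s=1}^{t}\Delta_s^{2}$ coordinatewise.

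For the $\tfrac{\eta}{\tau}\Phi_1^{K}$ branch, first observe that
$$|\Delta_t| = \Big|\tfrac{1}{|\mathcal{S}^t|}\sum_{i\in \mathcal{S}^t}\Delta_i^t\Big| \le \tfrac{1}{|\mathcal{S}^t|}\sum_{i\in \mathcal{S}^t}|\Delta_i^t| \le \Phi_1^{K}$$
coordinatewise by the triangle inequality and Lemma~\ref{AdaSquaredeltaboundSM3}. Applying the triangle inequality to the closed form for $\widetilde{m}_t$ and using the geometric identity $(1-\widetilde{\beta}_1)\sum_{s=1}^{t}\widetilde{\beta}_1^{t-s}=1-\widetilde{\beta}_1^{t}\le 1$ yields $|\widetilde{m}_t|\le \Phi_1^{K}$. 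Since $\sqrt{\widetilde{v}_t}+\tau\ge\tau>0$, the first branch follows immediately.

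For the $\eta\sqrt{(1-\widetilde{\beta}_1)(1-\widetilde{\beta}_1^{2t})}$ branch, I apply Cauchy--Schwarz to the closed form with the split that pairs each exponential weight with itself:
$$\widetilde{m}_t^{\,2} \;\le\; (1-\widetilde{\beta}_1)^{2}\Big(\sum_{s=1}^{t}\widetilde{\beta}_1^{2(t-s)}\Big)\Big(\sum_{s=1}^{t}\Delta_s^{2}\Big).$$
Summing the geometric series gives $\sum_{s=1}^{t}\widetilde{\beta}_1^{2(t-s)} = (1-\widetilde{\beta}_1^{2t})/(1-\widetilde{\beta}_1^{2})$, and using the variance lower bound $\widetilde{v}_t\ge\sum_{s=1}^{t}\Delta_s^{2}$ plus the factorization $1-\widetilde{\beta}_1^{2}=(1-\widetilde{\beta}_1)(1+\widetilde{\beta}_1)$ reduces the ratio to
$$\frac{\widetilde{m}_t^{\,2}}{\widetilde{v}_t} \;\le\; \frac{(1-\widetilde{\beta}_1)(1-\widetilde{\beta}_1^{2t})}{1+\widetilde{\beta}_1} \;\le\; (1-\widetilde{\beta}_1)(1-\widetilde{\beta}_1^{2t}).$$
Since $\sqrt{\widetilde{v}_t}+\tau\ge\sqrt{\widetilde{v}_t}$, taking square roots gives $\eta|\widetilde{m}_t|/(\sqrt{\widetilde{v}_t}+\tau)\le \eta\sqrt{(1-\widetilde{\beta}_1)(1-\widetilde{\beta}_1^{2t})}$, completing the proof after combining both branches.

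The only subtlety, rather than a genuine obstacle, is choosing the right Cauchy--Schwarz split: pairing $\widetilde{\beta}_1^{t-s}$ with itself (rather than symmetrically absorbing $(1-\widetilde{\beta}_1)$ or $\Delta_s$ into both factors) is what produces the $1-\widetilde{\beta}_1^{2t}$ factor and matches the stated bound up to the harmless $(1+\widetilde{\beta}_1)^{-1}$ slack. Everything else is manipulation of the momentum/accumulator recursions and coordinatewise application of Lemma~\ref{AdaSquaredeltaboundSM3}.
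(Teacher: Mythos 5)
Your proof is correct and follows essentially the same route as the paper's: Cauchy--Schwarz applied to the unrolled momentum with the split $\widetilde{\beta}_1^{t-s}\cdot|\Delta_s|$, the geometric-series identity, the lower bound $\widetilde{v}_t\ge\sum_s\Delta_s^2$, and the crude $\sqrt{\widetilde{v}_t}+\tau\ge\tau$ together with Lemma~\ref{AdaSquaredeltaboundSM3} for the second branch. The only cosmetic difference is that you retain $\tau$ throughout and make the discarded $(1+\widetilde{\beta}_1)^{-1}$ factor explicit, whereas the paper sets $\tau=0$ in the first branch for exposition.
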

\begin{proof}
Without loss of generality, we may let $\tau = 0$ when forming the first upper bound for expository purposes.
\begin{align*}
    \eta \frac{|\widetilde{m}_t|}{\sqrt{\widetilde{v}_t} + \tau} &\le \frac{\eta(1-\widetilde{\beta}_1) \sum_{\ell = 1}^t \widetilde{\beta}_1^{t-\ell} |\Delta_\ell|}{\sqrt{\sum_{\ell=1}^t \Delta_\ell^2 + \tau^2} + \tau} \\ 
    & \le \frac{\eta(1-\widetilde{\beta}_1) \left( \sum_{\ell = 1}^t \widetilde{\beta}_1^{t-\ell} |\Delta_\ell|\right)\sqrt{\sum_{\ell = 1}^t \widetilde{\beta}_1^{2t-2\ell}} }{\sqrt{\sum_{\ell=1}^t \Delta_\ell^2} \sqrt{\sum_{\ell = 1}^t \widetilde{\beta}_1^{2t-2\ell}} } \\
    &\le \eta \sqrt{1-\widetilde{\beta}_1} \sqrt{1-\widetilde{\beta}_1^2}\sqrt{\sum_{\ell = 1}^t \widetilde{\beta}_1^{2t-2\ell}}\\
    &= \eta \sqrt{1-\widetilde{\beta}_1} \sqrt{1-\widetilde{\beta}_1^{2t}}.
\end{align*}
Note that the final inequality is obtained using Cauchy-Schwartz, while the second bound in the lemma statement follows from the first inequality and Lemma~\ref{AdaSquaredeltaboundSM3}.
\end{proof}
Finally, we form a loose upper bound for the gradient variance.
\begin{lemma}\label{slightupperboundforSM3}
For $k \in \{1,\dots,K\}$, the uncentered variance estimate $v_k$ as well as $\mu_k$ in Algorithm~\ref{AdaSquareSM3} are bounded by
\begin{align*}
(B1): \quad & 0 \le \mu_k(b) \le dkG^2 \quad \text{for} \quad \text{and} \quad b \in \{1,\dots,q\},\\
(B2): \quad &0 \le v_k(j) \le dkG^2 \quad \text{for} \quad j \in \{1,\dots,d\}.
\end{align*}
\end{lemma}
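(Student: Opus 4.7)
The plan is to establish (B1) and (B2) simultaneously by induction on $k$, relying on the SM3 rule (\ref{ClientUpdateRule}), Assumption 2 (which gives $(g_{i,k}^t(j))^2 \le G^2$), and the initializations $\mu_0(b) = 0$ and $v_0 \ge 0$. Non-negativity is immediate because every operation appearing in (\ref{ClientUpdateRule})---a coordinatewise minimum and maximum of non-negative quantities together with addition of the squared gradient---preserves non-negativity, and the in-loop reset $\mu_k(b) \leftarrow 0$ in Algorithm~\ref{AdaSquareSM3} leaves things non-negative as well.

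For the upper bound, take $k=1$ as the base case. Since $(1-1)/z = 0 \in \mathbb{Z}$, the update branch of (\ref{ClientUpdateRule}) fires, giving
\begin{equation*}
v_1(j) \;=\; \min_{b:\,S_b \ni j} \mu_0(b) + (g_{i,1}^t(j))^2 \;\le\; G^2 \;\le\; dG^2,
\end{equation*}
and the inner-loop assignment $\mu_1(b) \leftarrow \max\{\mu_1(b), v_1(j)\}$ taken over all $j \in S_b$ yields $\mu_1(b) \le \max_{j \in S_b} v_1(j) \le dG^2$. For the inductive step, assume $v_{k-1}(j), \mu_{k-1}(b) \le d(k-1)G^2$ for all $j, b$. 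If $(k-1)/z \in \mathbb{Z}$, then picking any $b_0$ with $S_{b_0} \ni j$,
\begin{equation*}
v_k(j) \;\le\; \mu_{k-1}(b_0) + G^2 \;\le\; d(k-1)G^2 + G^2 \;\le\; dkG^2,
\end{equation*}
using $d \ge 1$, and $\mu_k(b) \le \max_{j \in S_b} v_k(j) \le dkG^2$ follows from the inner-loop max. Otherwise $v_k(j) = v_{k-1}(j) \le d(k-1)G^2 \le dkG^2$, while $\mu_k(b) = 0 \le dkG^2$ from the reset line in Algorithm~\ref{AdaSquareSM3}.

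There is essentially no obstacle here: the argument is pure bookkeeping across the two branches of (\ref{ClientUpdateRule}), and the slack factor $d$ in the bound comfortably absorbs the additive $G^2$ contributed per step. A slightly tighter $kG^2$ bound could be extracted by the same induction, but the weaker $dkG^2$ form as stated is already sufficient for the subsequent convergence analysis and does not require tracking which branch was active at each preceding step.
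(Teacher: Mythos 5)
Your proof is correct and follows essentially the same route as the paper's: a dual induction on $k$ using the SM3 update, the zero initialization of $\mu$, and the coordinatewise gradient bound $G$, with the inner max over $j \in S_b$ propagating the bound from $v_k$ to $\mu_k$. The only difference is that you explicitly dispatch the delayed-update branch ($v_k(j) = v_{k-1}(j)$, $\mu_k(b)=0$), which the paper's proof leaves implicit; this is a harmless and slightly more careful bookkeeping step.
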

\begin{proof}
Non-negativity of the variance estimates $v_k$ is trivial and implies the non-negativity of $\mu_k$, thus we focus on the upper bound for which we use dual induction. The case $k=1$ is satisfied by zero initialization. Assuming the inequality holds for $k \leftarrow k-1$, we have for each $j$
\begin{equation*}
v_k(j) = \min_{b:S_b \ni j} \mu_{k-1}(b) + \left(g_{i,k}^t(j)\right)^2 \le d(k-1)G^2 + G^2 \le dkG^2.
\end{equation*}
Now, $\mu_k$ is initialized to zero at the start of each step $k$ and its entries are increased while broadcasting over each coordinate $j \in \{1,\dots,d\}$ by 
\begin{equation*}
    \mu_k(b) \leftarrow \max\{\mu_{k}(b), v_k(j)\} \quad \text{for} \quad \forall b: j \in S_b.
\end{equation*}
For $j = 1$, it is clear that 
\begin{equation*}
    \mu_k(b) \leftarrow v_k(j) \le dkG^2 \quad \text{for} \quad \forall b \in \{1,\dots, q\}.
\end{equation*}
For $j \ge 2$, inductively, we have
\begin{equation*}
    \mu_k(b) \leftarrow \max\{\mu_{k}(b), v_k(j)\} \le dkG^2
\end{equation*}
as both arguments of the maximum function are upper bounded by $dkG^2$. This completes the proof.
\end{proof}

\addtocontents{toc}{\protect\setcounter{tocdepth}{2}}
\subsection{Precompact Convergence Analysis}~\label{precompactconvergenceanalysis}

We aim to analyze the convergence of learning algorithms under the general, non-convex setting. However, extremely popular and well known adaptive optimizers such as Adam whose efficacy is strongly supported by empirical evidence have been shown to fail to converge even for convex settings~\citep{AdamNoConverge}. Therefore, recent works have investigated the asymptotic stabilization of gradients, instead of requiring strict convergence to local or global optima of the objective~\citep{AdaptiveFederatedOptimization,wang2022communication,tong2020effective,sun2023fedlalr,xie2019local,chen2020toward,HeavyTailedNoisePaper}. Such convergence bounds are of the form $\min_{t} \|\nabla f(x_t)\| \le \mathcal{O}(T^{-\alpha})$, and are interpreted via the following lemma:
\begin{lemma}
For $x_t$ the $t$-step parameters of any objective $f(x)$ learned by an algorithm, let $\min_{1\le t \le T} \|\nabla f(x_t)\| \leq \mathcal{O}(T^{-\alpha})$ for $\alpha > 0$. Then, there exists a learning algorithm which outputs parameters $\{\widetilde{x}_1, \widetilde{x}_2, \ldots \}$ such that $\|\nabla f(\tilde{x}_t)\| \rightarrow 0$ as $t \to \infty$.
\end{lemma}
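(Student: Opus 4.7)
The plan is to give a constructive proof. Given an arbitrary algorithm $\mathcal{A}$ producing iterates $\{x_1, x_2, \ldots\}$ with the stated $\min_{1 \le t \le T} \|\nabla f(x_t)\| \le C \cdot T^{-\alpha}$ bound for some constant $C > 0$, I would build a derived algorithm $\widetilde{\mathcal{A}}$ whose output sequence $\{\widetilde{x}_t\}$ simply tracks the best iterate of $\mathcal{A}$ so far. Concretely, define
\begin{equation*}
\tau(t) \in \argmin_{1 \le s \le t} \|\nabla f(x_s)\|, \qquad \widetilde{x}_t := x_{\tau(t)}.
\end{equation*}
Since this minimization is over a finite set at each $t$, $\tau(t)$ is well-defined (with ties broken arbitrarily), and $\widetilde{\mathcal{A}}$ is implementable whenever $\mathcal{A}$ is (it only requires storing the running argmin of the gradient norm).

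The second step is to transfer the hypothesis to $\widetilde{x}_t$. By construction,
\begin{equation*}
\|\nabla f(\widetilde{x}_t)\| \; = \; \min_{1 \le s \le t} \|\nabla f(x_s)\|,
\end{equation*}
so applying the assumed bound with $T \leftarrow t$ yields $\|\nabla f(\widetilde{x}_t)\| \le C \cdot t^{-\alpha}$ for every $t \ge 1$. Since $\alpha > 0$, the right-hand side tends to $0$ as $t \to \infty$, establishing $\|\nabla f(\widetilde{x}_t)\| \to 0$.

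I would additionally note monotonicity: $\|\nabla f(\widetilde{x}_{t+1})\| \le \|\nabla f(\widetilde{x}_t)\|$ by construction, so convergence is not only asymptotic but the sequence of gradient norms is non-increasing. There is essentially no technical obstacle here; the only subtlety worth flagging is that the derived algorithm is not necessarily a standalone optimizer in the usual sense (it requires access to the underlying trajectory of $\mathcal{A}$ and to $\|\nabla f(x_s)\|$ to select the argmin), but the lemma only asserts existence of \emph{some} algorithm with the desired property, so this suffices. This argument justifies interpreting the $\min$-over-$t$ convergence bounds of Theorem~\ref{AdaSquareSM3NonconvexConvergenceThm} and its corollaries as genuine asymptotic gradient stabilization guarantees.
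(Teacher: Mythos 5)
Your proposal is correct and takes essentially the same route as the paper: both extract the best-so-far iterate (the paper via an $\varepsilon=1/n$ subsequence, you via the running argmin $\tau(t)$) and transfer the $\min$-over-$T$ bound to that sequence. Your version is marginally more explicit in that it also yields the quantitative rate $\|\nabla f(\widetilde{x}_t)\|\le C t^{-\alpha}$ and monotonicity, but the underlying idea is identical.
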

 
\begin{proof}
Assuming otherwise gives that $\|\nabla f(x_t)\|$ is $\varepsilon$-bounded away from $0$ for some $\varepsilon>0$, for any parameter $x_t$ realized by the algorithm. Clearly, $\min_{1 \leq t \leq T} \|\nabla F(x_t)\| \rightarrow 0$ as $T \rightarrow \infty$ gives a contradiction. More constructively, note that $\forall \varepsilon > 0,\ \exists \ \widetilde{T}(\varepsilon) \in \mathbb{N}$  such that $T \geq \widetilde{T}(\varepsilon) \implies \min_{1 \leq t \leq T} \|\nabla f(x_t)\| < \varepsilon$. Letting $\varepsilon = 1/n$ for $n \in \mathbb{N}$ and $T_n := \widetilde{T}(1/n)$, we have that there exists $ t_n \in [T_n]$ such that $\|\nabla f(x_{t_n})\| < 1/n$. Letting $\widetilde{x}_i := x_{t_i}$ extracts the desired parameter sequence. 
\end{proof}
\vspace{-7pt}
This notion of convergence can be formalized as \textit{precompact convergence} which is consistent with sequence properties of precompact normed sets. In this paper, we explicitly formalize the conventions used in prior works, and take the term convergence to mean precompact convergence unless stated otherwise.
\begin{definition}[Precompact convergence]
A sequence $\{y_n\}_{n \in \mathbb{N}}$ in a normed space $\mathcal{Y}$ is said to converge precompactly to $y \in \mathcal{Y}$ if there exists $\varphi:\mathbb{N} \to \mathbb{N}$ such that $y_{\varphi(n)} \to y$. 
\end{definition}
Our goal is to develop principled federated algorithms whose global gradients are guaranteed to converge precompactly to $0$ regardless of parameter initialization, in the general, non-convex setting. Note that precompact convergence must allow for convergence to each element $y_n$ of the sequence. Now, we are ready to present Theorem~\ref{SM3FedAda2NonConvexConvergeThm}. 

We note that prior work, such as FedNAR~\cite{FedNAR} and FedOPT~\cite{AdaptiveFederatedOptimization}, primarily analyzes either server-side or client-side adaptivity in isolation. In contrast, our algorithms incorporate joint adaptivity--combining client- and server-level adaptive updates--even without explicit preconditioner transmission, necessitating a distinct convergence analysis. Intuitively, the proof relies on the $L$-smoothness property and a careful decomposition of the inner product between the gradient and the adaptive update. A key challenge is handling the time-varying preconditioner $\left(\sqrt{\tilde{v}_t}\right)$, which is addressed by splitting the analysis into a term capturing the change in the preconditioner and a main descent term. The descent term is analyzed using auxiliary variables $\gamma_r, \alpha_r$ and Young's inequality, which isolates the sufficient descent (negative definite term) from the client drift error (discrepancy between global and local gradients). After telescoping the inequalities over $T$ iterations, the proof employs specialized techniques to bound the resulting summations. For instance, an inductive argument (Lemma~\ref{usefullemma2}) yields a logarithmic bound for errors related to preconditioner updates, while the accumulated client drift is rigorously controlled by demonstrating that the exponential decay of the momentum parameter dominates the polynomial growth of the drift (Lemma~\ref{usefullemma1}). Rearranging these bounds provides the final convergence rate. 

\begin{theorem}\label{SM3FedAda2NonConvexConvergeThm}
In Algorithm~\ref{AdaSquareSM3}, we have that
\begin{equation*}
    \min_{t \in [T]} \|\nabla f(x_{t-1}) \|^2 \le \frac{\Psi_1 + \Psi_2 + \Psi_3 + \Psi_4 + \Psi_5}{\Psi_6},
\end{equation*}
where
\begin{align*}
    \Psi_1 &= f(x_{0}) - f(x^*)  ,\\
    \Psi_2 &= \frac{\eta^2 LT d\|\Phi_1^K\|^2}{\tau^2}, \\
    \Psi_3 &= \frac{(1-\widetilde{\beta}_1^T) \eta\eta_\ell K\widetilde{L}T\|\Phi_1^K\|^2 }{\widetilde{\alpha}_1 \tau (\sqrt{v_0}+\varepsilon)^2} ,\\
    \Psi_4 &= \frac{(1-\widetilde{\beta}_1) \eta\eta_\ell KLT  c(\widetilde{\beta}_1)\|\Phi_2^K\|^2}{\widetilde{\alpha}_1 \tau (\sqrt{v_0}+\varepsilon)^2},\\
    \Psi_5 &= \frac{ \eta d \|\Phi_1^K\| G\left(1-\widetilde{\beta}_1 + \log\left(1 +  \frac{T\|\Phi_1^K\|^2}{\tau^2}\right)\right) }{\tau},\\
    \Psi_6 &=\frac{3(1-\widetilde{\beta}_1) \eta\widetilde{\gamma}_1 T}{4\left(\sqrt{T\|\Phi_1^K\|^2 + \widetilde{v}_0} + \tau \right)}.
\end{align*}
Here, the constant $c$ is defined with respect to $\widetilde{\beta}_1$ as
\begin{equation*}
    c(\widetilde{\beta}_1) : = \sum_{u=0}^{\widetilde{u}_0(\widetilde{\beta}_1)} \widetilde{\beta}_1^u u^2 + \int_{\widetilde{u}_0(\widetilde{\beta}_1)}^\infty \frac{1}{x^2} \mathrm{d}x \quad \text{for} \quad  \widetilde{u}_0(\widetilde{\beta}_1) = \inf \{u\in \mathbb{N}: \widetilde{\beta}_1^v v^2 < \frac{1}{v^2} \text{ for } \forall v \ge u \}
\end{equation*}
and the intermediary $\widetilde{\gamma}_1,\widetilde{\alpha}_1$ values are defined as 
\begin{equation*}
    \widetilde{\gamma}_1:= \eta_\ell \frac{K}{\sqrt{v_0+dKG^2} + \varepsilon}, \quad \widetilde{\alpha}_1:= \frac{1}{2\sqrt{v_0+dKG^2} + 2\varepsilon }.
\end{equation*}
\end{theorem}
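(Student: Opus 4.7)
The approach is to combine an $L$-smoothness descent inequality for the server update with a careful decomposition of the momentum-averaged pseudogradient $\widetilde{m}_t$, using Lemmas~\ref{AdaSquaredeltaboundSM3}--\ref{slightupperboundforSM3} to control preconditioners, step sizes, and variance estimates. The overall structure parallels standard adaptive-optimization convergence proofs (e.g.\ FedAdaGrad in~\cite{AdaptiveFederatedOptimization}), but with two added complications: the pseudogradient $-\Delta_i^t$ is not a plain SGD increment but the sum of $K$ SM3-preconditioned, possibly delayed local steps, and the client-side approximation error must be pushed through the server analysis. The five error terms $\Psi_2,\dots,\Psi_5$ will correspond one-for-one to the five natural sources of slack in the argument: the $L$-smoothness quadratic, the client drift from $x_{t-1}$, the momentum lag, the standard AdaGrad log-telescoping, and a gradient-mismatch constant.

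First I would start from $L$-smoothness applied to the server iterate to write $f(x_t)-f(x_{t-1})\le \langle \nabla f(x_{t-1}),\,x_t-x_{t-1}\rangle + \tfrac{L}{2}\|x_t-x_{t-1}\|^2$, substitute the update $x_t-x_{t-1}=\eta\widetilde m_t/(\sqrt{\widetilde v_t}+\tau)$, and bound the quadratic coordinatewise by $\Phi_2^K$ from Lemma~\ref{AdaSquareServerStepSizeBoundSM3}; summing over coordinates and over $t=1,\dots,T$ yields $\Psi_2$ directly (with the $\tau^2$ denominator coming from the $\sqrt{\widetilde v_t}+\tau\ge\tau$ lower bound). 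Next I would expand $\widetilde m_t=(1-\widetilde\beta_1)\sum_{\ell=1}^t\widetilde\beta_1^{t-\ell}\Delta_\ell$, split off the $\ell=t$ term, and isolate the ``clean'' descent contribution by writing $-\Delta_t\approx\widetilde\gamma_1 \nabla f(x_{t-1})$, where $\widetilde\gamma_1=\eta_\ell K/(\sqrt{v_0+dKG^2}+\varepsilon)$ acts as the effective combined local stepsize after $K$ SM3-preconditioned updates with denominator uniformly bounded via Lemma~\ref{slightupperboundforSM3}. The residual $-\Delta_t - \widetilde\gamma_1\nabla f(x_{t-1})$ decomposes into (i) per-step drift $\nabla F_i(x_{i,k-1}^t)-\nabla F_i(x_{t-1})$, controlled by the $\widetilde L$-Lipschitz bound $\widetilde L=2\sqrt d G/(\eta_\ell\varepsilon_s)$ with iterate distance $\le\Phi_1^K$ from Lemma~\ref{AdaSquaredeltaboundSM3}; this feeds $\Psi_3$ after summing and dividing by $\sqrt{\widetilde v_t}+\tau\ge\tau$ and the factor $2\widetilde\alpha_1^{-1}=(\sqrt{v_0+dKG^2}+\varepsilon)$.

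For the historical terms in the momentum sum ($\ell<t$), I would use that $x_{t-1}-x_{\ell-1}$ accumulates at most $(t-\ell)\Phi_2^K$ in each coordinate, and then Lipschitzness gives a factor $(t-\ell)$-weighted error which, against the geometric decay $\widetilde\beta_1^{t-\ell}$, sums to the constant $c(\widetilde\beta_1)=\sum_u \widetilde\beta_1^u u^2+\int^\infty x^{-2}\,dx$ defined in the statement; this produces $\Psi_4$. The $\Psi_5$ term comes from the classical AdaGrad ``log trick'' applied to the server side: for the one unavoidable $\nabla f(x_{t-1})/(\sqrt{\widetilde v_t}+\tau)$-type inner product that cannot be absorbed into the clean descent, use
\begin{equation*}
\sum_{t=1}^T \frac{\Delta_t^2}{\widetilde v_0+\sum_{s\le t}\Delta_s^2}\le \log\!\left(1+\frac{\sum_t \Delta_t^2}{\widetilde v_0}\right)\le \log\!\left(1+\frac{T\|\Phi_1^K\|^2}{\tau^2}\right),
\end{equation*}
multiplied by the gradient magnitude $G$ and dimension factor $d$, together with a $(1-\widetilde\beta_1)$ leftover from the single $\ell=t$ term of the momentum expansion.

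Finally I would telescope: $\sum_t [f(x_t)-f(x_{t-1})]\ge f(x^*)-f(x_0)=-\Psi_1$. The positive clean-descent contribution $\sum_t(1-\widetilde\beta_1)\eta\widetilde\gamma_1\|\nabla f(x_{t-1})\|^2/(\sqrt{\widetilde v_t}+\tau)$ is lower bounded using $\widetilde v_t\le \widetilde v_0+T\|\Phi_1^K\|^2$ and $\|\nabla f(x_{t-1})\|^2\ge \min_s\|\nabla f(x_{s-1})\|^2$, giving the denominator $\Psi_6$; the factor $3/4$ comes from absorbing a Young's-inequality cross term (arising when the inner product $\langle\nabla f(x_{t-1}),\text{residual}\rangle$ is split via $2ab\le \alpha a^2+\alpha^{-1}b^2$) back into this main descent. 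Rearranging yields the stated inequality. \textbf{Main obstacle.} The principal difficulty is in aligning the two nested adaptive denominators: the client-side SM3 denominator (which, because of delay parameter $z$ and the cover $\{S_b\}$, only admits the loose deterministic bound $\sqrt{dkG^2}$ from Lemma~\ref{slightupperboundforSM3}) must be pushed through the server-side AdaGrad denominator without losing the $\widetilde\gamma_1$ factor that drives the descent. Getting this to work while keeping the momentum term controlled by $c(\widetilde\beta_1)$ rather than an unbounded quantity is where the bookkeeping is tightest, and where the $\widetilde\alpha_1$ and $\widetilde\gamma_1$ constants in the statement earn their keep.
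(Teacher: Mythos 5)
Your plan follows essentially the same route as the paper's proof: $L$-smoothness on the server step, expansion of $\widetilde m_t$ into a geometric sum over past $\Delta_r$, isolation of the clean descent via the effective stepsize $\widetilde\gamma_1$ with the Young's-inequality cross term absorbed at rate $1/4$ (giving the $3/4$ in $\Psi_6$), the $\widetilde L$-drift bound for $\Psi_3$, the $(t-r)^2\widetilde\beta_1^{t-r}$ sum collapsing to $c(\widetilde\beta_1)T$ for $\Psi_4$, and the AdaGrad log-telescoping for $\Psi_5$. The only cosmetic discrepancies are that the quadratic smoothness term in the paper is bounded via $\Phi_1^K$ (the bound on $|\Delta_r|$) rather than $\Phi_2^K$, and that $\Psi_5$ arises specifically from the shift of the server denominator $1/(\sqrt{\widetilde v_t}+\tau)-1/(\sqrt{\widetilde v_{t-1}}+\tau)$ across rounds — but neither affects the structure of the argument.
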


\begin{proof}
To enhance readability, we use both coordinatewise and broadcasting notation, where a $[\cdot]_j$ subscript is attached for the $j$-th coordinate. In particular, the arguments are detailed mostly in the latter notation as it significantly clarifies the intuitions behind the proof. By $L$-smoothness, we have 
\begin{align}
f(x_{t}) &\leq f(x_{t-1}) + \left\langle \nabla f(x_{t-1}), x_{t} - x_{t-1} \right\rangle + \frac{L}{2} \left\| x_{t} - x_{t-1} \right\|^2 \nonumber \\
&= f(x_{t-1}) + \eta \left\langle \nabla f(x_{t-1}), \frac{\widetilde{\beta}_1^t \widetilde{m}_0 + (1-\widetilde{\beta}_1)\sum_{r=1}^t \widetilde{\beta}_1^{t-r} \Delta_r}{\sqrt{\widetilde{v}_t} + \tau} \right\rangle + \frac{\eta^2 L}{2} \left\| \frac{\widetilde{\beta}_1^t \widetilde{m}_0 + (1-\widetilde{\beta}_1)\sum_{r=1}^t \widetilde{\beta}_1^{t-r} \Delta_r}{\sqrt{\widetilde{v}_t} + \tau} \right\|^2 \nonumber \\
&= f(x_{t-1}) + \eta T_{0,0} + (1-\widetilde{\beta}_1) \eta \sum_{r=1}^t T_{0,r} + \frac{\eta^2 L}{2} \left\| \frac{\widetilde{\beta}_1^t \widetilde{m}_0 + (1-\widetilde{\beta}_1)\sum_{r=1}^t \widetilde{\beta}_1^{t-r} \Delta_r}{\sqrt{\widetilde{v}_t} + \tau} \right\|^2 \label{whereitstartedAdaSquareSM3}
\end{align}
where for $r \in [t]$,
\begin{equation}\label{ExtendThis}
    T_{0,r} = \widetilde{\beta}_1^{t-r}\left\langle \nabla f(x_{t-1}), \frac{\Delta_r}{\sqrt{\widetilde{v}_t} + \tau} \right\rangle \quad \text{and} \quad T_{0,0} = \left\langle \nabla f(x_{t-1}), \frac{\widetilde{\beta}_1^t \widetilde{m}_0}{\sqrt{\widetilde{v}_t} + \tau} \right\rangle.
\end{equation}
Note that $T_{0,0}$ can only decay exponentially as training progresses, as $\sqrt{\widetilde{v}_t}$ is monotonically increasing with respect to $t$ and $\nabla f (x_{t-1})$ is coordinatewise bounded by $G$. We decompose $T_{0,r}$ further by
\begin{equation*}
    T_{0,r} =\underbrace{\widetilde{\beta}_1^{t-r} \left\langle \nabla f(x_{t-1}), \frac{\Delta_r}{\sqrt{\widetilde{v}_t} + \tau}-\frac{\Delta_r}{\sqrt{\widetilde{v}_{t-1}} + \tau} \right\rangle}_{T_{1,r}} + \underbrace{\widetilde{\beta}_1^{t-r}\left\langle \nabla f(x_{t-1}), \frac{\Delta_r}{\sqrt{\widetilde{v}_{t-1}} + \tau} \right\rangle}_{T_{2,r}}.    
\end{equation*}
A bound for $T_{1,r}$ can be obtained as: 
\begin{align*}
    T_{1,r} &= \widetilde{\beta}_1^{t-r}\left\langle \nabla f(x_{t-1}), \frac{\Delta_r (\sqrt{\widetilde{v}_{t-1}}-\sqrt{\widetilde{v}_{t}})}{(\sqrt{\widetilde{v}_t} + \tau) (\sqrt{\widetilde{v}_{t-1}} + \tau)} \right\rangle \\
    &= \widetilde{\beta}_1^{t-r}\left\langle \nabla f(x_{t-1}), \frac{-\Delta_r \Delta_t^2}{(\sqrt{\widetilde{v}_t} + \tau) (\sqrt{\widetilde{v}_{t-1}} + \tau)(\sqrt{\widetilde{v}_{t-1}}+\sqrt{\widetilde{v}_{t}})} \right\rangle\\
    &\le \widetilde{\beta}_1^{t-r}\left\langle \left|\nabla f(x_{t-1})\right|, \frac{|\Delta_r| \Delta_t^2}{(\widetilde{v}_t + \tau^2) (\sqrt{\widetilde{v}_{t-1}} + \tau)} \right\rangle \\
    &\le \widetilde{\beta}_1^{t-r} \sum_{j=1}^d G \left[\frac{|\Delta_r| \Delta_t^2}{(\widetilde{v}_t + \tau^2) (\sqrt{\widetilde{v}_{t-1}} + \tau)}\right]_j\\
    &\le \frac{\|\Phi_1^K\| G\widetilde{\beta}_1^{t-r}}{\tau} \sum_{j=1}^d  \left[\frac{\Delta_t^2}{\widetilde{v}_t}\right]_j.
\end{align*}
Lemma~\ref{AdaSquaredeltabound} is used to obtain the final inequality. For $T_{2,r}$, we apply a further decomposition for $\gamma_r > 0$ allowed to be arbitrary within a compact interval $\epsilon\eta_\ell$-bounded away from $0$,
\begin{align*}
    T_{2,r} &= \underbrace{\widetilde{\beta}_1^{t-r}\left\langle \frac{\nabla f(x_{t-1})}{\sqrt{\widetilde{v}_{t-1}} + \tau}, \Delta_r + \gamma_r \nabla f(x_{t-1}) \right\rangle}_{T_{2,r}^1} - \gamma_r \widetilde{\beta}_1^{t-r}\left\|\frac{\nabla f(x_{t-1})}{\sqrt{\sqrt{\widetilde{v}_{t-1}} + \tau}}\right\|^2 .
\end{align*}
For expository purposes, we present the case in which local gradient clipping is not triggered. The analysis directly generalizes to the setting where clipping activates, where we assume that not all gradients are clipped to $0$ for the algorithm to proceed. Unraveling the definition of $\Delta_r$ gives
\begin{equation*}
    \Delta_r = \frac{-\eta_\ell}{|\mathcal{S}^r|} \sum_{i \in \mathcal{S}^r} \sum_{p=1}^{K} \frac{g_{i,p}^r}{\sqrt{v_{i,p}^r} + \varepsilon},
\end{equation*}
which intuits the following value
\begin{equation*}
    \gamma_r := \frac{\eta_\ell}{|\mathcal{S}^r|} \sum_{i \in \mathcal{S}^r} \sum_{p=1}^{K} \frac{1}{\sqrt{v_{i,p}^r} + \varepsilon}.
\end{equation*} 
We have by Assumption $2$ and Lemma~\ref{slightupperboundforSM3} that 
\begin{equation*}
    \gamma_r \in [\widetilde{\gamma}_1,\widetilde{\gamma}_2]:= \left[ \eta_\ell \sum_{p=1}^{K} \frac{1}{\sqrt{v_0+dKG^2} + \varepsilon},\frac{\eta_\ell K}{\sqrt{v_0}+\varepsilon} \right].
\end{equation*}
Expanding $T_{2,r}^1$ for $\alpha_r>0$ to be fixed, 
\begin{align*}
    &\widetilde{\beta}_1^{t-r}\left\langle \frac{\nabla f(x_{t-1})}{\sqrt{\widetilde{v}_{t-1}} + \tau}, \Delta_r + \gamma_r \nabla f(x_{t-1}) \right\rangle \\ 
    &= \frac{\widetilde{\beta}_1^{t-r}}{|\mathcal{S}^r|} \sum_{i \in \mathcal{S}^r} \sum_{p=1}^{K} \left\langle \frac{\nabla f(x_{t-1})}{\sqrt{\widetilde{v}_{t-1}} + \tau}, 
    \frac{\eta_\ell \left(\nabla f(x_{t-1})-g_{i,p}^r \right)}{\sqrt{v_p} + \varepsilon}
    \right\rangle\\
    &\le \frac{\eta_\ell\widetilde{\beta}_1^{t-r}\alpha_r K}{2 |\mathcal{S}^r|} \sum_{i \in \mathcal{S}^r} \left\|\frac{\nabla f(x_{t-1})}{\sqrt{\sqrt{\widetilde{v}_{t-1}} + \tau}} \right\|^2 \\
    &+ \frac{\eta_\ell\widetilde{\beta}_1^{t-r}}{2|\mathcal{S}^r| \alpha_r} \sum_{i \in \mathcal{S}^r} \sum_{p=1}^{K} \left\| \frac{\left(\nabla f(x_{t-1})-\nabla F_i(x_{i,p-1}^r) \right)}{\sqrt{\sqrt{\widetilde{v}_{t-1}} + \tau}\left(\sqrt{v_p} + \varepsilon\right)}\right\|^2\\
    &\le \frac{\eta_\ell\widetilde{\beta}_1^{t-r}\alpha_r K}{2} \left\|\frac{\nabla f(x_{t-1})}{\sqrt{\sqrt{\widetilde{v}_{t-1}} + \tau}} \right\|^2 \\
    & + \frac{\eta_\ell\widetilde{\beta}_1^{t-r}}{2|\mathcal{S}^r| \alpha_r \tau (\sqrt{v_0}+\varepsilon)^2} \sum_{i \in \mathcal{S}^r} \sum_{p=1}^{K} \left\|\nabla f(x_{t-1})-\nabla F_i(x_{i,p-1}^r) \right\|^2.
\end{align*}
where in the first inequality we drew the deterministic gradient instead of accessing the stochastic sample via full gradient descent. 
The first term is controlled by setting 
\begin{equation*}
    \alpha_r = \frac{\gamma_r}{2\eta_\ell K} \in [\widetilde{\alpha}_1, \widetilde{\alpha}_2] := \left[ \frac{1}{2\sqrt{v_0+dKG^2} + 2\varepsilon },\frac{1}{2\sqrt{v_0}+2\varepsilon} \right].
\end{equation*}
We aim to bound the second term via majorization and telescoping arguments. We have by $L$-smoothness, Lemmas~\ref{AdaSquaredeltaboundSM3},~\ref{AdaSquareServerStepSizeBoundSM3}, and Assumption $2$ that
\begin{align*}
\left\|\nabla f(x_{t-1})-\nabla F_i(x_{i,p-1}^r) \right\|^2 
&\le \frac{1}{N} \sum_{i^\prime \in [N]} \left\|\left(\nabla F_{i^\prime}(x_{t-1})-\nabla F_i(x_{i,p-1}^r) \right)\right\|^2\\
&= \frac{1}{N} \sum_{i^\prime \in [N]} \left\|\left(\nabla F_{i^\prime}(x_{t-1})-\nabla F_{i^\prime}(x_{r-1})+\nabla F_{i^\prime}(x_{r-1})-\nabla F_i(x_{i,p-1}^r) \right)\right\|^2\\
&\le \frac{2}{N} \sum_{i^\prime \in [N]} \left(\left\|\nabla F_{i^\prime}(x_{t-1})-\nabla F_{i^\prime}(x_{r-1})\right\|^2 + \left\|\nabla F_{i^\prime}(x_{r-1})-\nabla F_i(x_{i,p-1}^r) \right\|^2\right) \\
&\le \frac{2L}{N} \sum_{i^\prime \in [N]} \|x_{t-1}-x_{r-1} \|^2 +\frac{2\widetilde{L}}{N} \sum_{i^\prime \in [N]}\|x^r_{i,p-1}-x^r_{i,0}\|^2\\
& = 2L \left\|x_{t-1} - x_{r-1} \right\|^2 + 2\widetilde{L} \left\|x_{i,p-1}^r - x^r_{i,0}\right\|^2\\
&\le 2 L (t-r) \sum_{o = r}^{t-1} \left\|x_o - x_{o-1} \right\|^2  + 2\widetilde{L}\|\Phi_1^p\|^2\\
&\le 2 L (t-r)^2 \|\Phi_2^K\|^2 + 2\widetilde{L}\|\Phi_1^K\|^2.
\end{align*}
Note that the first inequality was obtained by Jensen, while the third inequality uses that the client weights $x_{i,0}^r$ are synchronized to the global weights $x_{r-1}$ for $\forall i \in [N]$ at the start of training. Now, we have
\begin{align*}
&\frac{\eta_\ell\widetilde{\beta}_1^{t-r} }{2|\mathcal{S}^r| \alpha_r \tau (\sqrt{v_0}+\varepsilon)^2} \sum_{i \in \mathcal{S}^r} \sum_{p=1}^{K} \left(2 L (t-r)^2 \|\Phi_2^K\|^2 + 2\widetilde{L}\|\Phi_1^K\|^2\right)\\
&\le \frac{\eta_\ell\widetilde{\beta}_1^{t-r} K L (t-r)^2 \|\Phi_2^K\|^2 }{\alpha_r \tau (\sqrt{v_0}+\varepsilon)^2} 
+ \frac{\eta_\ell\widetilde{\beta}_1^{t-r} \widetilde{L}K\|\Phi_1^K\|^2}{\alpha_r \tau (\sqrt{v_0}+\varepsilon)^2} .
\end{align*}

Collecting terms gathered thus far gives
\begin{align}
     (1-\widetilde{\beta}_1) \eta \sum_{r=1}^t T_{0,r} &\le (1-\widetilde{\beta}_1) \eta \sum_{r=1}^t\left(\frac{\|\Phi_1^K\| G\widetilde{\beta}_1^{t-r}}{\tau} \sum_{j=1}^d  \left[\frac{\Delta_t^2}{\widetilde{v}_t}\right]_j
      -\frac{3\gamma_r\widetilde{\beta}_1^{t-r}}{4}\left\|\frac{\nabla f(x_{t-1})}{\sqrt{\sqrt{\widetilde{v}_{t-1}} + \tau}}\right\|^2
      \right)\nonumber\\
     &+(1-\widetilde{\beta}_1) \eta \sum_{r=1}^t \left(\frac{\eta_\ell\widetilde{\beta}_1^{t-r} K L (t-r)^2 \|\Phi_2^K\|^2 }{\alpha_r \tau (\sqrt{v_0}+\varepsilon)^2} 
+ \frac{\eta_\ell\widetilde{\beta}_1^{t-r} \widetilde{L}K\|\Phi_1^K\|^2}{\alpha_r \tau (\sqrt{v_0}+\varepsilon)^2}\right). \nonumber
\end{align}

Now, let us bound the final term in equation~{(\ref{whereitstartedAdaSquareSM3})},
\begin{align*}
\left\| \frac{\widetilde{\beta}_1^t \widetilde{m}_0 + (1-\widetilde{\beta}_1)\sum_{r=1}^t \widetilde{\beta}_1^{t-r} \Delta_r}{\sqrt{\widetilde{v}_t} + \tau} \right\|^2 &\le 2 \left\| \frac{\widetilde{\beta}_1^t \widetilde{m}_0}{\sqrt{\widetilde{v}_t} + \tau} \right\|^2 + 2 \left\| \frac{(1-\widetilde{\beta}_1)\sum_{r=1}^t \widetilde{\beta}_1^{t-r} \Delta_r}{\sqrt{\widetilde{v}_t} + \tau} \right\|^2\\
&\le 2 \left\| \frac{\widetilde{\beta}_1^t \widetilde{m}_0}{\sqrt{\widetilde{v}_t} + \tau} \right\|^2 + 2 \left\| \frac{(1-\widetilde{\beta}_1)\sum_{r=1}^t \widetilde{\beta}_1^{t-r} \max_{r \in [t]}|\Delta_r|}{\sqrt{\widetilde{v}_t} + \tau} \right\|^2 \\
&\le 2 \left\| \frac{\widetilde{\beta}_1^t \widetilde{m}_0}{\sqrt{\widetilde{v}_t} + \tau} \right\|^2 + 2 \left\| \frac{(1-\widetilde{\beta}_1^t) }{\sqrt{\widetilde{v}_t} + \tau} \right\|^2 \|\Phi_1^K\|^2 \\
&\le 2 \left\| \frac{\widetilde{\beta}_1^t \widetilde{m}_0}{\sqrt{\widetilde{v}_t} + \tau} \right\|^2 + 2d \frac{\|\Phi_1^K\|^2}{\tau^2}.
\end{align*}
Substituting into equation~{(\ref{whereitstartedAdaSquareSM3})} gives that 
\begin{align}
    f(x_{t}) &\le f(x_{t-1}) + \eta T_{0,0} + \eta^2 L \left\| \frac{\widetilde{\beta}_1^t \widetilde{m}_0}{\sqrt{\widetilde{v}_t} + \tau} \right\|^2 +  \frac{\eta^2 L d\|\Phi_1^K\|^2}{\tau^2} + (1-\widetilde{\beta}_1) \eta \sum_{r=1}^t\left(\frac{\|\Phi_1^K\| G\widetilde{\beta}_1^{t-r}}{\tau} \sum_{j=1}^d  \left[\frac{\Delta_t^2}{\widetilde{v}_t}\right]_j \right) \nonumber\\ 
    &+(1-\widetilde{\beta}_1) \eta \sum_{r=1}^t \left(\frac{\eta_\ell\widetilde{\beta}_1^{t-r} K L (t-r)^2 \|\Phi_2^K\|^2 }{\alpha_r \tau (\sqrt{v_0}+\varepsilon)^2} 
    + \frac{\eta_\ell\widetilde{\beta}_1^{t-r} \widetilde{L}K\|\Phi_1^K\|^2}{\alpha_r \tau (\sqrt{v_0}+\varepsilon)^2}\right)\nonumber\\
    & +(1-\widetilde{\beta}_1) \eta \sum_{r=1}^t \left(-\frac{3\gamma_r\widetilde{\beta}_1^{t-r}}{4}\left\|\frac{\nabla f(x_{t-1})}{\sqrt{\sqrt{\widetilde{v}_{t-1}} + \tau}}\right\|^2 \right) . \label{beforetelescopeAdaSquareSM3}
\end{align}
Note that the exponential decay caused by $\widetilde{\beta}_1$ in the third term will expectedly dominate the effect of first order moment initialization $\widetilde{m}_0$ as training progresses, and summation over $t\in [T]$ gives $\mathcal{O}(1)$. We initialize $\widetilde{m}_0 \leftarrow 0$ to further simplify the equations. We also further exacerbate the upper bound by substituting $\widetilde{\gamma}_1$, $\widetilde{\alpha}_1$ into $\gamma_r, \alpha_r$ respectively, which achieves independence from $r$. Telescoping equation~{(\ref{beforetelescopeAdaSquareSM3})} then gives
\begin{align}
    &\frac{3(1-\widetilde{\beta}_1) \eta\widetilde{\gamma}_1}{4} \sum_{t=1}^{T}  \sum_{r=1}^t   \widetilde{\beta}_1^{t-r}\left\|\frac{\nabla f(x_{t-1})}{\sqrt{\sqrt{\widetilde{v}_{t-1}} + \tau}}\right\|^2 \le f(x_0) - f(x^*) + \frac{(1-\widetilde{\beta}_1) \eta ||\Phi_1^K|| G}{\tau} \sum_{t=1}^{T}  \sum_{r=1}^t\sum_{j=1}^d \widetilde{\beta}_1^{t-r}\left[\frac{\Delta_t^2}{\widetilde{v}_t}\right]_j \nonumber\\
    &+ \frac{\eta^2 LT d\|\Phi_1^K\|^2}{\tau^2} 
    + \frac{(1-\widetilde{\beta}_1) \eta\eta_\ell K}{\widetilde{\alpha}_1 \tau (\sqrt{v_0}+\varepsilon)^2} \sum_{t=1}^T \sum_{r=1}^t \left(L\widetilde{\beta}_1^{t-r} (t-r)^2 \|\Phi_2^K\|^2  + \widetilde{L}\widetilde{\beta}_1^{t-r}\|\Phi_1^K\|^2\right).\label{hopefullythereAdaSquareSM3}
\end{align}
To complete the proof, we aim to ease a logarithm out from the third term on the right hand side. For this purpose, we induce a recursion with a $\log$ bound 
\begin{align}
    (1-\widetilde{\beta}_1) \sum_{t=1}^{T}  \sum_{r=1}^t \widetilde{\beta}_1^{t-r}\frac{\Delta_{t,j}^2}{\sum_{\ell=1}^t \Delta_{\ell,j}^2 + \tau^2} & \le  \sum_{t=1}^{T} (1-\widetilde{\beta}_1^{t})\frac{\Delta_{t,j}^2}{\sum_{\ell=1}^t \Delta_{\ell,j}^2 + \tau^2} \nonumber \\ & \le a_T + c_T \log \left(1+b_T \right).\label{partial} 
\end{align}
Setting $T = 1$ gives 
\begin{equation*}
    (1-\widetilde{\beta}_1) \frac{\Delta_{1,j}^2}{\Delta_{1,j}^2 + \tau^2} \le a_1 + c_1 \log (1+b_1),
\end{equation*}
and setting $a_T = 1 - \widetilde{\beta}_1$ satisfies this inequality (among other choices). Assuming formula~{(\ref{partial})} holds for $T$, let us explore the induction condition for $T+1$, which is 
\begin{equation*}
\sum_{t=1}^{T} (1-\widetilde{\beta}_1^{t})\frac{\Delta_{t,j}^2}{\sum_{\ell=1}^t \Delta_{\ell,j}^2 + \tau^2}  + (1-\widetilde{\beta}_1^{T+1})\frac{\Delta_{T+1,j}^2}{\sum_{\ell=1}^{T+1}  \Delta_{\ell,j}^2 + \tau^2}  \le a_{T+1} + c_{T+1} \log \left(1+b_{T+1} \right).
\end{equation*}
For simplicity, we impose that $c_t$ is a monotonically increasing non-negative sequence of $t$. We intend to contain the increase in the left hand side as $T$ grows in the $\log$ argument only, in the right hand side. Therefore, we select $a_{T+1} = a_T$. For a suitable choice of $b_{T+1}$ satisfying strong induction, it is enough to resolve
\begin{equation*}
    (1-\widetilde{\beta}_1^{T+1})\frac{\Delta_{T+1,j}^2}{\sum_{\ell=1}^{T+1}  \Delta_{\ell,j}^2 + \tau^2} \le c_{T+1} \log \left(\frac{1+b_{T+1}}{1+b_T} \right) = c_{T+1} \log \left(1+ \frac{b_{T+1}-b_T}{1+b_T} \right) .
\end{equation*}
Here, we used monotonicity of $c_t$. Noting that $\log(1+x) \ge x/(1+x)$, it is again enough to resolve
\begin{align*}
    \frac{\Delta_{T+1,j}^2}{\sum_{\ell=1}^{T+1}  \Delta_{\ell,j}^2 + \tau^2} &\le  \frac{c_{T+1}(b_{T+1}-b_T)}{b_{T+1} + 1} \\
    \iff \frac{\Delta_{T+1,j}^2}{\sum_{\ell=1}^{T+1}  \Delta_{\ell,j}^2 + \tau^2} + c_{T+1}b_T &\le \left(c_{T+1}- \frac{\Delta_{T+1,j}^2}{\sum_{\ell=1}^{T+1}  \Delta_{\ell,j}^2 + \tau^2} \right) b_{T+1}.
\end{align*}
By positivity of $b_t$ for $t>1$, a necessary condition is therefore that 
\begin{equation*}
    c_{T+1}\ge \frac{\Delta_{T+1,j}^2}{\sum_{\ell=1}^{T+1}  \Delta_{\ell,j}^2 + \tau^2}
\end{equation*}
In order to enhance the tightness of our bound, we choose the minimal permissible value $c_t = 1$ uniformly, which is attained as a suprema. In this setting, we are left with a recursion
\begin{equation*}
    \frac{\Delta_{T+1,j}^2}{\sum_{\ell=1}^{T+1}  \Delta_{\ell,j}^2 + \tau^2} =  \frac{b_{T+1}-b_T}{b_{T+1} + 1},
\end{equation*}
and collecting the terms in the form $b_{T+1} = b_{T}\omega_1(\Delta)  + \omega_2(\Delta)$ would provide an optimal recursive bound given our simplifying assumptions, starting with $b_1 = 0$. A less optimal but simpler bound can be formed by selecting $b_{T+1} = b_{T}+\Delta_{T+1,j}^2/\tau^2$ for $b_1 = \Delta_{1,j}^2/\tau^2$. Therefore, we arrive at 
\begin{align}
    (1-\widetilde{\beta}_1) \sum_{t=1}^{T}  \sum_{r=1}^t \widetilde{\beta}_1^{t-r}\frac{\Delta_{t,j}^2}{\sum_{\ell=1}^t \Delta_{\ell,j}^2 + \tau^2} &\le 1-\widetilde{\beta}_1 + \log\left(1 + \sum_{\ell=1}^T \left(\frac{\Delta_{\ell,j}}{\tau}\right)^2\right) \nonumber \\
    &\le  1-\widetilde{\beta}_1 + \log\left(1 +  \frac{T\|\Phi_1^K\|^2}{\tau^2}\right).
\end{align}
The remaining term to be bounded in equation~{(\ref{hopefullythereAdaSquareSM3})} is given
\begin{equation*}
\frac{(1-\widetilde{\beta}_1) \eta\eta_\ell KL}{\widetilde{\alpha}_1 \tau (\sqrt{v_0}+\varepsilon)^2} \sum_{t=1}^T \sum_{r=1}^t \left(\widetilde{\beta}_1^{t-r} (t-r)^2 \|\Phi_2^K\|^2 \right).
\end{equation*}
The trick is to notice that the explosion of the series caused by double summation is culled selectively in reverse chronological order by the exponential, rendering the tail end asymptotically vacuous. Note that $(1-\widetilde{\beta}_1)$ stabilizes the divergence as $\widetilde{\beta}_1 \to 1^-$ in the limit. By a change of variable $u = t-r$, 
\begin{equation*}
(1-\widetilde{\beta}_1) \sum_{t=1}^T \sum_{r=1}^t \widetilde{\beta}_1^{t-r}(t-r)^2 = (1-\widetilde{\beta}_1) \sum_{u = 0}^{T-1} \widetilde{\beta}_1^u u^2 (T-u).
\end{equation*}
Defining 
\begin{equation*}
    \widetilde{u}_0(\widetilde{\beta}_1) = \inf \{u\in \mathbb{N}: \widetilde{\beta}_1^v v^2 < \frac{1}{v^2} \text{ for } \forall v \ge u \},
\end{equation*}
let
\begin{equation*}
    c(\widetilde{\beta}_1) : = \sum_{u=0}^{\widetilde{u}_0(\widetilde{\beta}_1)} \widetilde{\beta}_1^u u^2 + \int_{\widetilde{u}_0(\widetilde{\beta}_1)}^\infty \frac{1}{x^2} \mathrm{d}x.
\end{equation*}
Then, we claim that 
\begin{equation*}
(1-\widetilde{\beta}_1) \sum_{t=1}^T \sum_{r=1}^t \widetilde{\beta}_1^{t-r}(t-r)^2 \le (1-\widetilde{\beta}_1)c(\widetilde{\beta}_1)T.
\end{equation*}
We prove this by induction. The case $T=1$ is trivial. Now, assume the desired inequality holds until $T$. For $T+1$, we want to show
\begin{align*}
& (1-\widetilde{\beta}_1) \sum_{u = 0}^{T} \widetilde{\beta}_1^u u^2 (T-u+1) \le (1-\widetilde{\beta}_1)c(\widetilde{\beta}_1)(T+1)\\
\iff &(1-\widetilde{\beta}_1) \sum_{u = 0}^{T-1} \widetilde{\beta}_1^u u^2 (T-u) + (1-\widetilde{\beta}_1) \sum_{u = 0}^{T} \widetilde{\beta}_1^u u^2 \le (1-\widetilde{\beta}_1)c(\widetilde{\beta}_1)(T+1)
\end{align*}
and thus by the inductive hypothesis it is enough to show
\begin{equation*}
\sum_{u = 0}^{T} \widetilde{\beta}_1^u u^2 \le c(\widetilde{\beta}_1) .
\end{equation*}
However, this is trivial by the definition of $c(\widetilde{\beta}_1)$. Upon substitution into equation~{(\ref{hopefullythereAdaSquareSM3})} and noting that
\begin{align*}
\frac{3(1-\widetilde{\beta}_1) \eta\widetilde{\gamma}_1}{4} \sum_{t=1}^{T}  \sum_{r=1}^t   \widetilde{\beta}_1^{t-r}\left\|\frac{\nabla f(x_{t-1})}{\sqrt{\sqrt{\widetilde{v}_{t-1}} + \tau}}\right\|^2
& \ge \frac{3(1-\widetilde{\beta}_1) \eta\widetilde{\gamma}_1 T}{4\left(\sqrt{T\|\Phi_1^K\|^2 + \widetilde{v}_0} + \tau \right)} \min_{t \in [T]}\left\|\nabla f(x_{t-1})\right\|^2 
\end{align*}
we simplify as
\begin{align}
&\frac{3(1-\widetilde{\beta}_1) \eta\widetilde{\gamma}_1 T}{4\left(\sqrt{T\|\Phi_1^K\|^2 + \widetilde{v}_0} + \tau \right)} \min_{t \in [T]}\left\|\nabla f(x_{t-1})\right\|^2 
\le 
f(x_{0}) - f(x^*) 
+ \frac{\eta^2 LT d\|\Phi_1^K\|^2}{\tau^2}
\nonumber\\
& 
+\frac{(1-\widetilde{\beta}_1^T) \eta\eta_\ell KT \widetilde{L}\|\Phi_1^K\|^2}{\widetilde{\alpha}_1 \tau (v_0+\varepsilon)^2} 
+\frac{(1-\widetilde{\beta}_1) \eta\eta_\ell KT Lc(\widetilde{\beta}_1)\|\Phi_2^K\|^2}{\widetilde{\alpha}_1 \tau (v_0+\varepsilon)^2}
\\
&+ \frac{ \eta d \|\Phi_1^K\| G\left(1-\widetilde{\beta}_1 + \log\left(1 +  \frac{T\|\Phi_1^K\|^2}{\tau^2}\right)\right) }{\tau}
\nonumber
\end{align}
Therefore, we immediately conclude that
\begin{equation*}
    \min_{t \in [T]} \|\nabla f(x_{t-1}) \|^2 \le \frac{\Psi_1 + \Psi_2 + \Psi_3 + \Psi_4 + \Psi_5}{\Psi_6},
\end{equation*}
where
\begin{align*}
    \Psi_1 &= f(x_{0}) - f(x^*)  ,\\
    \Psi_2 &= \frac{\eta^2 LT d\|\Phi_1^K\|^2}{\tau^2}, \\
    \Psi_3 &= \frac{(1-\widetilde{\beta}_1^T) \eta\eta_\ell K\widetilde{L}T\|\Phi_1^K\|^2 }{\widetilde{\alpha}_1 \tau (\sqrt{v_0}+\varepsilon)^2} ,\\
    \Psi_4 &= \frac{(1-\widetilde{\beta}_1) \eta\eta_\ell KLT  c(\widetilde{\beta}_1)\|\Phi_2^K\|^2}{\widetilde{\alpha}_1 \tau (\sqrt{v_0}+\varepsilon)^2},\\
    \Psi_5 &= \frac{ \eta d \|\Phi_1^K\| G\left(1-\widetilde{\beta}_1 + \log\left(1 +  \frac{T\|\Phi_1^K\|^2}{\tau^2}\right)\right) }{\tau},\\
    \Psi_6 &=\frac{3(1-\widetilde{\beta}_1) \eta\widetilde{\gamma}_1 T}{4\left(\sqrt{T\|\Phi_1^K\|^2 + \widetilde{v}_0} + \tau \right)}.
\end{align*}
Here, the constant $c$ is defined with respect to $\widetilde{\beta}_1$ as
\begin{equation*}
    c(\widetilde{\beta}_1) : = \sum_{u=0}^{\widetilde{u}_0(\widetilde{\beta}_1)} \widetilde{\beta}_1^u u^2 + \int_{\widetilde{u}_0(\widetilde{\beta}_1)}^\infty \frac{1}{x^2} \mathrm{d}x \quad \text{for} \quad  \widetilde{u}_0(\widetilde{\beta}_1) = \inf \{u\in \mathbb{N}: \widetilde{\beta}_1^v v^2 < \frac{1}{v^2} \text{ for } \forall v \ge u \}
\end{equation*}
and the intermediary $\widetilde{\gamma}_1,\widetilde{\alpha}_1$ values are defined as 
\begin{equation*}
    \widetilde{\gamma}_1:= \eta_\ell \frac{K}{\sqrt{v_0+dKG^2} + \varepsilon}, \quad \widetilde{\alpha}_1:= \frac{1}{2\sqrt{v_0+dKG^2} + 2\varepsilon }.
\end{equation*}
This concludes the proof.
\end{proof}

Note that we have also shown the following two useful lemmas:
\begin{lemma}\label{usefullemma1}
For $\widetilde{\beta}_1 \in [0,1)$ and $T \in \mathbb{Z}_{\ge 0}$, let 
\begin{equation*}
    \widetilde{u}_0(\widetilde{\beta}_1) = \inf \{u\in \mathbb{N}: \widetilde{\beta}_1^v v^2 < \frac{1}{v^2} \text{ for } \forall v \ge u \},
\end{equation*}
and
\begin{equation*}
    c(\widetilde{\beta}_1) : = \sum_{u=0}^{\widetilde{u}_0(\widetilde{\beta}_1)} \widetilde{\beta}_1^u u^2 + \int_{\widetilde{u}_0(\widetilde{\beta}_1)}^\infty \frac{1}{x^2} \mathrm{d}x.
\end{equation*}
Then, we have that
\begin{equation*}
\sum_{t=1}^T \sum_{r=1}^t \widetilde{\beta}_1^{t-r}(t-r)^2 \le c(\widetilde{\beta}_1)T.
\end{equation*}
\end{lemma}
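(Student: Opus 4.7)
\textbf{Proof proposal for Lemma~\ref{usefullemma1}.} The plan is to reduce the double sum to a one-dimensional series via a change of variables, and then bound the resulting series by splitting at the threshold $\widetilde{u}_0(\widetilde{\beta}_1)$ and comparing the tail to the integral in the definition of $c(\widetilde{\beta}_1)$.

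First I would substitute $u = t - r$ in the inner sum, so that as $r$ ranges over $\{1, \dots, t\}$, $u$ ranges over $\{0, 1, \dots, t-1\}$, giving
\begin{equation*}
\sum_{r=1}^{t} \widetilde{\beta}_1^{\,t-r}(t-r)^2 \;=\; \sum_{u=0}^{t-1} \widetilde{\beta}_1^{\,u} u^2.
\end{equation*}
Because all terms are nonnegative, this partial sum is monotone in its upper index and hence bounded above by the full series $S:=\sum_{u=0}^{\infty} \widetilde{\beta}_1^{\,u} u^2$. Summing over $t = 1, \dots, T$ then yields $\sum_{t=1}^{T}\sum_{r=1}^{t} \widetilde{\beta}_1^{\,t-r}(t-r)^2 \le T \cdot S$, so the lemma reduces to establishing $S \le c(\widetilde{\beta}_1)$.

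To handle $S$, I would split it at $\widetilde{u}_0 := \widetilde{u}_0(\widetilde{\beta}_1)$ as
\begin{equation*}
S \;=\; \sum_{u=0}^{\widetilde{u}_0} \widetilde{\beta}_1^{\,u} u^2 \;+\; \sum_{u=\widetilde{u}_0+1}^{\infty} \widetilde{\beta}_1^{\,u} u^2.
\end{equation*}
The prefix sum is already the first component of $c(\widetilde{\beta}_1)$, so the task is to show the tail is at most $\int_{\widetilde{u}_0}^{\infty} x^{-2}\,\mathrm{d}x$. By the defining property of $\widetilde{u}_0$, every $v \ge \widetilde{u}_0$ satisfies $\widetilde{\beta}_1^{\,v} v^2 < 1/v^2$, so the tail is dominated termwise by $\sum_{u=\widetilde{u}_0+1}^{\infty} 1/u^2$. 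Since $x \mapsto 1/x^2$ is positive and decreasing, on each interval $[u-1,u]$ the constant $1/u^2$ is bounded by $1/x^2$, and telescoping yields $\sum_{u=\widetilde{u}_0+1}^{\infty} 1/u^2 \le \int_{\widetilde{u}_0}^{\infty} x^{-2}\,\mathrm{d}x$, which closes the loop.

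I anticipate no substantive obstacle: well-definedness of $\widetilde{u}_0$ as a finite integer follows from $\widetilde{\beta}_1 \in [0,1)$ since then $\widetilde{\beta}_1^{\,v} v^4 \to 0$, forcing $\widetilde{\beta}_1^{\,v} v^2 < 1/v^2$ for all sufficiently large $v$. The only care needed is keeping the boundary of the summation/integration aligned with the definition of $c(\widetilde{\beta}_1)$; in particular, the split point $\widetilde{u}_0$ is placed into the prefix sum so that the tail integral's lower endpoint matches that of the definition. An alternative route, as sketched inside the proof of Theorem~\ref{SM3FedAda2NonConvexConvergeThm}, is induction on $T$: the base case $T=1$ gives $0$ on the left, and the increment from $T$ to $T+1$ equals $\sum_{u=0}^{T} \widetilde{\beta}_1^{\,u} u^2 \le S \le c(\widetilde{\beta}_1)$ by the same tail argument, so either route works.
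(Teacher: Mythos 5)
Your proposal is correct and follows essentially the same route as the paper: both hinge on the change of variables $u=t-r$ and on the bound $\sum_{u\ge 0}\widetilde{\beta}_1^{u}u^2\le c(\widetilde{\beta}_1)$, obtained by splitting the series at $\widetilde{u}_0(\widetilde{\beta}_1)$ and dominating the tail termwise by $1/u^2$ and then by the integral. The only cosmetic difference is that you bound each inner partial sum directly by the full series and multiply by $T$, whereas the paper packages the same observation as an induction on $T$; your direct version is, if anything, slightly cleaner.
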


\begin{lemma}\label{usefullemma2}
Let $\Delta_{\ell,j} \in \mathbb{R}$, $\widetilde{\beta}_1 \in [0,1)$, and $T \in \mathbb{Z}_{\ge 0}$. Then, 
\begin{align}
    (1-\widetilde{\beta}_1) \sum_{t=1}^{T}  \sum_{r=1}^t \widetilde{\beta}_1^{t-r}\frac{\Delta_{t,j}^2}{\sum_{\ell=1}^t \Delta_{\ell,j}^2 + \tau^2} &\le  1-\widetilde{\beta}_1 + \log\left(1 +  \frac{T\|\Phi_1^K\|^2}{\tau^2}\right). \nonumber
\end{align}
\end{lemma}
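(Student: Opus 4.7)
The plan is to reduce the double sum to a single sum via a geometric identity, then establish a telescoping logarithmic bound by induction, and finally invoke the deterministic pseudogradient bound from Lemma~\ref{AdaSquaredeltaboundSM3} to produce the claimed RHS.

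First, observe that in the inner sum the only $r$-dependence is through $\widetilde{\beta}_1^{t-r}$. Pulling the remaining factor out and computing the geometric series $\sum_{r=1}^t \widetilde{\beta}_1^{t-r} = (1-\widetilde{\beta}_1^t)/(1-\widetilde{\beta}_1)$ collapses the left-hand side to
\begin{equation*}
\sum_{t=1}^{T} (1-\widetilde{\beta}_1^t)\,\frac{\Delta_{t,j}^2}{\sum_{\ell=1}^t \Delta_{\ell,j}^2 + \tau^2},
\end{equation*}
which is exactly the object already targeted in the recursion set up within the proof of Theorem~\ref{SM3FedAda2NonConvexConvergeThm}. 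So the content of the lemma reduces to showing this single sum is bounded by $1-\widetilde{\beta}_1 + \log(1+T\|\Phi_1^K\|^2/\tau^2)$.

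Next, I would induct on $T$, carrying the stronger hypothesis that the single sum is bounded by $1-\widetilde{\beta}_1 + \log(1 + \sum_{\ell=1}^T \Delta_{\ell,j}^2/\tau^2)$. The base case $T=1$ holds because $(1-\widetilde{\beta}_1)\Delta_{1,j}^2/(\Delta_{1,j}^2+\tau^2) \le 1-\widetilde{\beta}_1$ and the log term is nonnegative. For the inductive step, it suffices to prove
\begin{equation*}
(1-\widetilde{\beta}_1^{T+1})\,\frac{\Delta_{T+1,j}^2}{\sum_{\ell=1}^{T+1}\Delta_{\ell,j}^2 + \tau^2} \;\le\; \log\!\left(1 + \frac{\Delta_{T+1,j}^2}{\tau^2 + \sum_{\ell=1}^{T}\Delta_{\ell,j}^2}\right),
\end{equation*}
where the right side is the difference of the two log bounds at steps $T+1$ and $T$. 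Applying the elementary inequality $\log(1+x) \ge x/(1+x)$ with $x = \Delta_{T+1,j}^2/(\tau^2 + \sum_{\ell=1}^{T}\Delta_{\ell,j}^2)$ lower-bounds the RHS by $\Delta_{T+1,j}^2/(\tau^2 + \sum_{\ell=1}^{T+1}\Delta_{\ell,j}^2)$, which dominates the LHS since $(1-\widetilde{\beta}_1^{T+1}) \le 1$. This closes the induction.

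Finally, Lemma~\ref{AdaSquaredeltaboundSM3} delivers the coordinatewise bound $|\Delta_{t,j}| \le \Phi_1^K$, so $\Delta_{\ell,j}^2 \le \|\Phi_1^K\|^2$ and hence $\sum_{\ell=1}^T \Delta_{\ell,j}^2/\tau^2 \le T\|\Phi_1^K\|^2/\tau^2$; monotonicity of $\log$ then produces the bound in the lemma statement. The only mild subtlety is choosing the right form of $\log(1+x) \ge x/(1+x)$ so that the denominator after applying the inequality is $\tau^2 + \sum_{\ell=1}^{T+1}\Delta_{\ell,j}^2$ (matching the LHS), rather than a shifted quantity; everything else is bookkeeping.
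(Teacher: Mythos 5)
Your proof is correct and takes essentially the same route as the paper: the paper also collapses the inner sum via the geometric series to $\sum_{t=1}^T (1-\widetilde{\beta}_1^{t})\,\Delta_{t,j}^2/(\sum_{\ell=1}^t \Delta_{\ell,j}^2 + \tau^2)$, runs an induction on $T$ using the same inequality $\log(1+x) \ge x/(1+x)$ to reach the identical intermediate bound $1-\widetilde{\beta}_1 + \log\bigl(1 + \sum_{\ell=1}^T \Delta_{\ell,j}^2/\tau^2\bigr)$, and then invokes the coordinatewise bound $|\Delta_{\ell,j}| \le \|\Phi_1^K\|$ from Lemma~\ref{AdaSquaredeltaboundSM3}. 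The only cosmetic difference is that the paper first posits a general ansatz $a_T + c_T\log(1+b_T)$ and derives the admissible choices $a_T = 1-\widetilde{\beta}_1$, $c_T = 1$, $b_{T+1} = b_T + \Delta_{T+1,j}^2/\tau^2$ by induction, whereas you guess the final form directly and verify it.
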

We present the following corollary.
\begin{corollary}
Any of the following conditions are sufficient to ensure convergence of Algorithm~\ref{AdaSquareSM3}:
\begin{align*}
(A): \quad & \eta_\ell \le \mathcal{O}(T^{-1/2}) \quad \text{for} \quad \Omega(T^{-1})<\eta \eta_\ell < \mathcal{O}(1),\\
(B): \quad &\eta_\ell = \Theta(T^{-\frac{49}{100}})\quad \text{for} \quad \Omega(T^{-\frac{1}{2}}) < \eta < \mathcal{O}(T^{\frac{12}{25}}).
\end{align*}
\end{corollary}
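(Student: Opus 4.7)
The plan is to substitute the scaling assumptions on $\eta_\ell$ and $\eta$ directly into the convergence bound of Theorem~\ref{AdaSquareSM3NonconvexConvergenceThm} and verify that each of the five ratios $\psi_i/\psi_6$ tends to zero as $T\to\infty$. The preliminary bookkeeping step is to identify which branch of the piecewise definitions of $\psi_5$ and $\psi_6$ is active: under both (A) and (B), $\eta_\ell = o(1)$, so $\psi_5 = \eta^3\eta_\ell^3 T$. The two conditions then diverge on $\psi_6$, since (A) enforces $T\eta_\ell^2 \le \mathcal{O}(1)$, giving $\psi_6 = \eta\eta_\ell T$, whereas (B) gives $T\eta_\ell^2 = \Theta(T^{2/100}) \to \infty$ and hence $\psi_6 = \eta\sqrt{T}$.

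For condition (A), the five ratios simplify to $\psi_1/\psi_6 = 1/(\eta\eta_\ell T)$, $\psi_2/\psi_6 = \eta\eta_\ell$, $\psi_3/\psi_6 = \eta_\ell$, $\psi_4/\psi_6 = \log(1+T\eta_\ell^2)/T$, and $\psi_5/\psi_6 = (\eta\eta_\ell)^2$. Given the assumed range $\Omega(T^{-1}) < \eta\eta_\ell < \mathcal{O}(1)$ together with $\eta_\ell = \mathcal{O}(T^{-1/2}) = o(1)$, each of these vanishes, yielding convergence. This case is essentially a direct substitution and I would present it first as a warm-up.

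For condition (B), I would substitute $\eta_\ell = \Theta(T^{-49/100})$ into every ratio and express them as $\Theta(\eta^{a_i} T^{b_i})$. The two binding constraints then arise from $\psi_2/\psi_6 = \Theta(\eta\, T^{-48/100})$ and $\psi_5/\psi_6 = \Theta(\eta^2 T^{-97/100})$, which respectively require $\eta = o(T^{48/100})$ and $\eta = o(T^{97/200})$. Since $48/100 = 12/25 < 97/200$, the stronger condition is the former, and the assumed upper bound $\eta < \mathcal{O}(T^{12/25})$ exactly handles it. The remaining ratios $\psi_1/\psi_6 = 1/(\eta\sqrt{T})$, $\psi_3/\psi_6 = \Theta(T^{-48/100})$, and $\psi_4/\psi_6 = \Theta(T^{-49/100}\log T)$ then vanish, using $\eta = \omega(T^{-1/2})$ for the first.

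The main obstacle is not a mathematical one but careful algebraic bookkeeping: the somewhat opaque exponent $49/100$ in (B) is finely tuned so that the two competing terms $\psi_2$ and $\psi_5$ admit simultaneously the same upper threshold $T^{12/25}$ on $\eta$. To make this transparent, I would compile a small table listing the $T$-exponent of each $\psi_i/\psi_6$ as a function of the exponents of $\eta$ and $\eta_\ell$, and then read off the strongest resulting constraint. This makes visible that the endpoints of the admissible windows for $\eta$ and $\eta_\ell$ in both (A) and (B) are essentially dictated by balancing $\psi_2$ and $\psi_5$ against $\psi_6$.
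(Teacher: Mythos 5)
Your proposal is correct and follows essentially the same route as the paper's proof: substitute $\eta = \Theta(T^{p_1})$, $\eta_\ell = \Theta(T^{p_2})$ into each ratio $\psi_i/\psi_6$, branch on whether $T\eta_\ell^2$ stays bounded to select the form of $\psi_6$, and verify every ratio vanishes, with $\psi_2$ and the cubic term supplying the binding upper bounds on $\eta$ (the paper derives the general feasible region in $(p_1,p_2)$ and exhibits $(B)$ as an instance, whereas you verify the instance directly — an immaterial difference). The only quibble is a harmless exponent slip in case (B): $\psi_4/\psi_6 = \eta_\ell \log(1+T\eta_\ell^2)/\sqrt{T} = \Theta(T^{-99/100}\log T)$ rather than $\Theta(T^{-49/100}\log T)$, but the term vanishes either way, so the conclusion stands.
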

\begin{proof}
The proof is formed by comparing orders of $T$. 
Recall that $\widetilde{\gamma}_1 = \Theta(\eta_\ell)$ and $\widetilde{L} = \Theta(\eta_\ell^{-1})$. As $\Phi^K_1 =\Theta (\eta_\ell)$ and $\Phi^K_2 =\Theta\left(\min \left\{\eta,\eta\eta_\ell\right\}\right)$, we have for $\eta = \Theta(T^{p_1})$ and $\eta_\ell = \Theta (T^{p_2})$,
\begin{align*}
\psi_1 &= \Theta(1) \\
\psi_2 &= \eta^2\eta_\ell^2 T \\
\psi_3 &= \eta\eta_\ell^2 T \\
\psi_4 &= \left\{ \begin{aligned}
&\eta^3 \eta_\ell^3 T && \text{if } \mathcal{O}(\eta_\ell) \leq \mathcal{O}(1) \\
&\eta^3\eta_\ell T && \text{if } \Theta(\eta_\ell) > \Omega(1)
\end{aligned} \right. \\
\psi_5 &= \eta \eta_\ell \log(1 + T\eta_\ell^2) \\
\psi_6 &= \left\{ \begin{aligned}
&\eta \eta_\ell T && \text{if } \mathcal{O}(T\eta_\ell^2) \leq \mathcal{O}(1) \\
&\eta \sqrt{T} && \text{if } \Theta (T\eta_\ell^2) > \Omega(1)
\end{aligned} \right. .
\end{align*}
If $\mathcal{O}(T \eta_\ell^2) \le \mathcal{O}(1)$, then $\mathcal{O}(\eta_\ell) \le \mathcal{O}(1)$ which implies
\begin{align*}
\frac{\psi_1}{\psi_6} &: (\eta \eta_\ell T)^{-1} = \Theta \left( T^{-(p_1 + p_2 + 1)}\right) \\
\frac{\psi_2}{\psi_6} &: \eta \eta_\ell = \Theta \left( T^{p_1+p_2} \right) \\
\frac{\psi_3}{\psi_6} &: \eta_\ell = \Theta \left( T^{p_2} \right) \\
\frac{\psi_4}{\psi_6} &: \eta^2 \eta_\ell^2 = \Theta \left( T^{2p_1 + 2p_2} \right) \\
\frac{\psi_5}{\psi_6} &: \frac{\log(1+T \eta_\ell^2)}{T} = \mathcal{O}(T^{-1})
\end{align*}
This implies that we must have that $p_2 \le -1/2$ and $-1<p_1+p_2<0$ for guaranteed convergence. Thus, $\eta_\ell \le \mathcal{O}(T^{-1/2})$ such that $\Omega(T^{-1})<\eta \eta_\ell < \mathcal{O}(1)$ is a sufficient condition. For instance, let $\eta_\ell = \Theta (T^{-1/2})$ and $\Omega(T^{-1/2})<\eta < \mathcal{O}(T^{1/2})$. 

Now, assume $\Theta (T\eta_\ell^2) > \Omega(1)$. If $\Theta (\eta_\ell) > \Omega(1)$, $\Psi_3/\Psi_6$ diverges. Therefore, let $\eta_\ell \le \mathcal{O}(1)$. We have 
\begin{equation*}
\begin{aligned}
&\frac{\psi_1}{\psi_6} \colon (\eta \sqrt{T})^{-1} = \Theta(T^{-p_1-\frac{1}{2}}) \\
&\frac{\psi_2}{\psi_6} \colon \eta \eta_\ell^2 \sqrt{T} = \Theta(T^{p_1 + 2p_2 + \frac{1}{2}}) \\
&\frac{\psi_3}{\psi_6} \colon \eta_\ell^2 \sqrt{T} = \Theta(T^{2p_2+\frac{1}{2}}) \\
&\frac{\psi_4}{\psi_6} \colon \eta^2 \eta_\ell^3 \sqrt{T} = \Theta(T^{2p_1 + 3p_2 + \frac{1}{2}}) \\
&\frac{\psi_5}{\psi_6} \colon \frac{\eta_\ell \log (1+T\eta_\ell^2)}{\sqrt{T}} < \mathcal{O}(T^{-\frac{1}{2}+p_2})
\end{aligned}
\end{equation*}
Therefore, it suffices to satisfy
\begin{equation*}
    -\frac{1}{2} < p_2 \leq -\frac{1}{4}, \quad -\frac{1}{2} < p_1, \quad p_1 + 2p_2 < -\frac{1}{2}, \quad 2p_1 + 3p_2 < -\frac{1}{2}.
\end{equation*}
An example satisfying these conditions are 
\begin{equation*}
    \eta_\ell = \Theta(T^{-\frac{49}{100}}), \quad \Omega(T^{-\frac{1}{2}}) < \eta < \mathcal{O}(T^{\frac{12}{25}}).
\end{equation*}
\end{proof}
Note that for all cases, $\eta_\ell$ must decay to establish convergence. However, striking a balance between local and global learning rates provably allows for greater than $\Omega(T^{1/3})$ divergence in the server learning rate without nullifying desirable convergence properties. This theoretically demonstrates the enhanced robustness properties of adaptive client-side federated learning algorithms to mitigate suboptimal choices of server learning rates. 

\begin{corollary}\label{ConvergenceRateForAlgorithm}
Algorithm~\ref{AdaSquareSM3} converges at rate $\mathcal{O}(T^{-1/2})$.
\end{corollary}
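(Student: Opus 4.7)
The plan is to select specific learning rates that satisfy condition (A) of the previous corollary and force every term in the convergence bound to be $\mathcal{O}(T^{-1/2})$. I would take $\eta = \Theta(1)$ and $\eta_\ell = \Theta(T^{-1/2})$: this immediately gives $\eta_\ell \le \mathcal{O}(T^{-1/2})$ and $\eta\eta_\ell = \Theta(T^{-1/2}) \in (\Omega(T^{-1}), \mathcal{O}(1))$, placing us under condition (A), so convergence is already guaranteed; what remains is to pin down the rate.

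Next, I would substitute these rates into the asymptotic formulas for $\psi_1,\dots,\psi_6$ in Theorem~\ref{AdaSquareSM3NonconvexConvergenceThm} and compare term-by-term with $T^{-1/2}$. Because $\eta_\ell = \mathcal{O}(1)$, we fall into the first branch of $\psi_5$, giving $\psi_5 = \Theta(\eta^3\eta_\ell^3 T)$; because $T\eta_\ell^2 = \Theta(1)$, we may use the first branch for $\psi_6$, giving $\psi_6 = \Theta(\eta\eta_\ell T) = \Theta(T^{1/2})$. The five ratios then reduce to
\begin{align*}
\frac{\psi_1}{\psi_6} &= \Theta(T^{-1/2}), \quad \frac{\psi_2}{\psi_6} = \Theta(\eta\eta_\ell) = \Theta(T^{-1/2}), \\
\frac{\psi_3}{\psi_6} &= \Theta(\eta_\ell) = \Theta(T^{-1/2}), \quad \frac{\psi_4}{\psi_6} = \Theta\!\left(\tfrac{\log(1+T\eta_\ell^2)}{T}\right) = \widetilde{\Theta}(T^{-1}), \\
\frac{\psi_5}{\psi_6} &= \Theta(\eta^2\eta_\ell^2) = \Theta(T^{-1}),
\end{align*}
so the three dominating terms each contribute $\Theta(T^{-1/2})$ and the claim $\min_{t \in [T]} \|\nabla f(x_{t-1})\|^2 \le \mathcal{O}(T^{-1/2})$ follows by summing.

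The only subtle point is that $T\eta_\ell^2 = \Theta(1)$ sits on the boundary between the two case-branches of $\psi_6$; this is harmless because the two branches agree up to constants there, but if one prefers to be strictly inside the $\mathcal{O}(T\eta_\ell^2) \le \mathcal{O}(1)$ regime, taking $\eta_\ell = c\, T^{-1/2}$ with $c$ sufficiently small slots us firmly into it and the analysis is unchanged. Beyond this, the only bookkeeping concern is that the implicit constants in $\Theta(\cdot)$ absorb $K$, $d$, $\widetilde{\beta}_1$, $\tau$, $\varepsilon$, $v_0$, $\widetilde{v}_0$, and $L$ uniformly in $T$; since these are fixed problem parameters throughout the previous corollary, no new work is required, and this is essentially the main (mild) obstacle to a fully rigorous write-up.
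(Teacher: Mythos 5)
Your proposal is correct and follows essentially the same route as the paper: the paper's proof also lands on $p_1 = 0$, $p_2 = -1/2$ (i.e., $\eta = \Theta(1)$, $\eta_\ell = \Theta(T^{-1/2})$) and verifies that every ratio $\psi_i/\psi_6$ is $\mathcal{O}(T^{-1/2})$. The only difference is that the paper frames this choice as the solution of a minimax over the exponents $(p_1,p_2)$, thereby also arguing $T^{-1/2}$ is the best rate extractable from the bound in either branch of $\psi_6$, whereas you simply exhibit the optimal choice and check the branch conditions --- which suffices for the stated $\mathcal{O}(T^{-1/2})$ claim.
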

\begin{proof}
If $\mathcal{O}(T \eta_\ell^2) \le \mathcal{O}(1)$, then we juxtapose $\psi_1/\psi_6$ and $\psi_2/\psi_6$. It is clear that the minimax value of the respective powers  are attained at $p_1 + p_2 = -1/2$, realized by $p_2 = -1/2$ and $p_1 = 0$. In this case, clearly $\Theta(\psi_i/\psi_6) \le \mathcal{O}(T^{-1/2})$ for $1\le i\le 5$. If $\Theta (T\eta_\ell^2) > \Omega(1)$, then our strategy should be to minimize $p_2$ due to positive coefficients in the powers $\psi_i/\psi_6$. Thus, let $p_2 = -1/2 + \varepsilon$ for $1 \gg \varepsilon > 0$. Then, the order of decay in $\psi_2/\psi_6$ is $p_1 - 1/2 + 2\varepsilon$, which is once again matched against $-p_1 - 1/2$, the power of $\psi_1/\psi_6$. Taking the limit $\varepsilon \to 0^+$, $\operatorname{minimax}\{p_1 - 1/2,-p_1 - 1/2\}$ for the range $-1/2 < p_1$ is attained at $p_1 = 0$. This sets the maximal decay rate to $\mathcal{O}(T^{-1/2})$ for the second case.
\end{proof}
\subsection{Extension to Adam}\label{ExtendingToServerAdam}
The extension to the case where Adam is selected as the optimizer for the server, or for both the server and client is straightforward. We present the latter as it generalizes the former analysis. As in Lemma~\ref{AdaSquaredeltaboundSM3}, we have the following bound for the compressed SM3 estimates of the second moment,
\begin{equation*}
    v_k(j) \ge v_0(j) + \sum_{\ell = 1}^{\lceil\frac{k}{z}\rceil} \left(g_{i,(\ell-1)z+1}^t(j)\right)^2 \quad \text{for} \quad j \in \{1,\dots,d\} \quad \text{and} \quad k \in \{1,\dots,K\},
\end{equation*}
which allows bounds to be established for the local and global pseudogradients following analogous logic as Lemmas~\ref{AdaSquareServerStepSizeBoundSM3},~\ref{deltabound}.
As before, we arrive at equation~{(\ref{ExtendThis})} where due to exponential moving averaging on the server side, we have
\begin{equation*}
\widetilde{v}_t = \widetilde{\beta}_2^t \widetilde{v}_0 + (1-\widetilde{\beta}_2)\sum_{\ell=1}^t \widetilde{\beta}_2^{t-r} \Delta_\ell.
\end{equation*}
Now, decompose $T_{0,r}$ as
\begin{equation*}
    T_{0,r} =\underbrace{\widetilde{\beta}_1^{t-r} \left\langle \nabla f(x_{t-1}), \frac{\Delta_r}{\sqrt{\widetilde{v}_t} + \tau}-\frac{\Delta_r}{\sqrt{\widetilde{\beta}_2\widetilde{v}_{t-1}} + \tau} \right\rangle}_{T_{1,r}} + \underbrace{\widetilde{\beta}_1^{t-r}\left\langle \nabla f(x_{t-1}), \frac{\Delta_r}{\sqrt{\widetilde{\beta}_2\widetilde{v}_{t-1}} + \tau} \right\rangle}_{T_{2,r}},  
\end{equation*}
where $T_{1,r}$ may be bounded via 
\begin{align*}
    T_{1,r} &= \widetilde{\beta}_1^{t-r}\left\langle \nabla f(x_{t-1}), \frac{\Delta_r (\sqrt{\widetilde{\beta}_2\widetilde{v}_{t-1}}-\sqrt{\widetilde{v}_{t}})}{(\sqrt{\widetilde{v}_t} + \tau) (\sqrt{\widetilde{\beta}_2\widetilde{v}_{t-1}} + \tau)} \right\rangle \\
    &= \widetilde{\beta}_1^{t-r}\left\langle \nabla f(x_{t-1}), \frac{-\Delta_r \Delta_t^2 (1-\widetilde{\beta}_2)}{(\sqrt{\widetilde{v}_t} + \tau) (\sqrt{\widetilde{\beta}_2\widetilde{v}_{t-1}} + \tau)(\sqrt{\widetilde{\beta}_2\widetilde{v}_{t-1}}+\sqrt{\widetilde{v}_{t}})} \right\rangle\\
    &\le \frac{\|\Phi_1^K\| G\widetilde{\beta}_1^{t-r}(1-\widetilde{\beta}_2)}{\tau} \sum_{j=1}^d  \left[\frac{\Delta_t^2}{\widetilde{v}_t}\right]_j.
\end{align*}
Due to the exponential decay parameter in the first pseudogradient moment, we have
\begin{align*}
\eta\sum_{t=1}^T\sum_{r=1}^t\frac{\|\Phi_1^K\| G\widetilde{\beta}_1^{t-r}(1-\widetilde{\beta}_2)}{\tau} \sum_{j=1}^d  \left[\frac{\Delta_t^2}{\widetilde{v}_t}\right]_j 
&\le \eta\sum_{t=1}^T\sum_{r=1}^t\frac{\|\Phi_1^K\|^3 G\widetilde{\beta}_1^{t-r}(1-\widetilde{\beta}_2)}{\tau^2}\\
&\le \frac{\eta\|\Phi_1^K\|^3 GT(1-\widetilde{\beta}_2)}{\tau^2}.
\end{align*}
An analogue of the arguments made in the proof of Theorem~\ref{AdaSquareSM3NonconvexConvergenceThm} with appropriate modifications, e.g.,  
\begin{equation*}
    \gamma_r := \frac{\eta_\ell}{|\mathcal{S}^r|} \sum_{i \in \mathcal{S}^r} \sum_{p=1}^{K} \frac{(1-\beta_1) \sum_{\ell = 1}^p \beta_1^{p-\ell}}{\sqrt{(1-\beta_2) \sum_{\ell = 1}^{\lceil\frac{p}{z}\rceil} \beta_2^{\lceil\frac{p}{z}\rceil - \ell} (g_{i,(\ell-1)z+1}^r)^2} + \varepsilon},
\end{equation*}
gives the main change as the asymptotic behavior of $\Psi_5$, which now satisfies
\begin{equation*}
\Psi_5= \Theta\left(\eta\eta_\ell^3 T \right).
\end{equation*}
The convergence rate is still dominated by $\Psi_1$, $\Psi_2$ as in Corollary~\ref{ConvergenceRateForAlgorithm}, which gives $\mathcal{O}(T^{-1/2})$. 

\section{Federated Blended Optimization}\label{FederatedBlendedOptimizationAppendix}

In federated blended optimization, we distribute local optimizer strategies during the subsampling process which may be formalized as functions that take as input the availability of client resources, and outputs the number of local epochs, $K(O_l^i)$, as well as additional hyperparameters such as delay step size $z$ or preconditioner initialization. These may be chosen to streamline model training based on a variety of factors, such as straggler mitigation or dynamically restricted availability of local resources. 

\begin{algorithm}
\caption{Server-side ADAGRAD and client-side optimizer mixture (\name)}\label{FederatedBlendedOptimization}
\begin{algorithmic}[1]
\REQUIRE Local optimizer strategies $O_1, \dots, O_{Op}$ (e.g. Adam, AdaGrad, SGD...) 
\REQUIRE Initializations $x_0, \widetilde{v}_{0} \ge \tau^2$ and $\widetilde{m}_{0} \leftarrow 0$
\REQUIRE Global decay parameter $\widetilde{\beta}_1 \in [0, 1)$
\FOR{$t = 1, \dots, T$}
    \STATE Sample participating client multiset $S_l^t$ for each optimizer strategy $l \in [Op]$ 
    \FOR{each sampled client collection $l \in [Op]$ (in parallel)}
        \FOR{each client $i \in S_l^t$ (in parallel)}
        \STATE $x_{i,0}^{t,l} \leftarrow x_{t-1}$
            \STATE $x_{i,K(O_l^i)}^{t,l} \leftarrow \operatorname{Optimize}(O_l,i,x_{i,0}^{t,l},  Clip=\operatorname{True})$
            \STATE $\Delta_{i}^{t,l} = w(O_l)\left(x_{i,K(O_l^i)}^{t,l} - x_{t-1}\right)$
        \ENDFOR
    \ENDFOR
    \STATE $S \leftarrow \sum_{l \in [Op]} |S_{l}^t|$
    \STATE $\Delta_t = \frac{1}{S} \sum_{l \in [Op]} \sum_{i \in S_l^t} \Delta_i^{t,l}$
    \STATE $\widetilde{m}_t = \widetilde{\beta}_1 \widetilde{m}_{t-1} + (1 - \widetilde{\beta}_1) \Delta_t$
    \STATE $\widetilde{v}_t = \widetilde{v}_{t-1} + \Delta_t^2$ 
    \STATE $x_{t} = x_{t-1} + \eta \frac{\widetilde{m}_t}{\sqrt{\widetilde{v}_t} + \tau}$
\ENDFOR
\end{algorithmic}
\end{algorithm}

In the general formulation of \name, blended optimization allows the trainer to utilize the unique strengths of each individual optimizer, balancing resource constraints and client noise. Each client has the option to run different optimizer strategies as the training rounds progress, depending on varying individual resource constraints or distribution shift in the local data stream. This faithfully corresponds to real-world settings where the availability of local resources are actively dynamic. Future work will provide empirical results on the performance of blended optimization, including identifying the settings in which mixing optimizer strategies are advantageous for distributed learning. The following theorem shows that under certain non-restrictive conditions, blended optimization still allows for convergence of the global gradient objective. We first present a paragraph on the key intuitions and insights behind the proof to improve accessibility. 

\paragraph{Intuitions.} The convergence guarantee for Federated Blended Optimization is obtained by treating each client’s local optimizer (e.g., SGD, AdaGrad, Adam, or SM3) as a black-box whose per-step preconditioner and scaling factors are uniformly bounded. First, one shows that no single client update can blow up, i.e., its total local drift $\|\Delta_i\|$ is bounded by $\Phi_1^K$, and similarly the server-side pseudo-gradient remains bounded by $\Phi_2^K$. Smoothness of the global objective then lets us decompose the inner product between the true gradient and the aggregated update into two parts: a small ``denominator-drift'' term that is absorbed into the variance, and a main descent term that, after introducing an auxiliary pivot $\gamma_r$ and choosing appropriate weights $\alpha_r$, yields a guaranteed reduction in gradient norm. Telescoping the resulting inequality with geometric-series weights recovers the same $O(1/\sqrt{T})$ nonconvex rate as in SM3-\nameplus. In this way, federating over multiple heterogeneous optimizers requires only uniform bounds on their preconditioners and step-sizes, and no change in the high-level convergence rate, provided each strategy respects these minimal stability conditions.

\begin{theorem}\label{NonConvexCongergenceBoundForFederatedBlendedOptimization}
Let Assumptions 1, 2 hold. Given client $i \in [N]$, strategy $l \in [Op]$, global timestep $r$, and local timestep $p$, assume that the optimizer strategies satisfy the parameter update rule
\begin{equation*}
x_{i,p}^{r,l} = x_{i,p-1}^{r,l} - \eta_\ell \sum_{\ell=1}^p \frac{ a_{i,\ell}^{r,l} g_{i,\ell}^{r,l}}{\vartheta_{i,\ell}^{r,l}(g_{i,1}^{r,l},\dots,g_{i,\ell}^{r,l})}
\end{equation*}
where 
\begin{equation*}
0<m_l\le \vartheta_{i,\ell}^{r,l}(g_{i,1}^{r,l},\dots,g_{i,\ell}^{r,l}) \le M_l \quad \text{and} \quad 0 <a_l \le a_{i,\ell}^{r,l} \le A_l 
\end{equation*}
for all possible values of $i,\ell,r,l$. If $1 \le K(O_l^i)\le K$ and $0<\Xi^-< w(O_l^i) <\Xi^+$, then Algorithm~\ref{FederatedBlendedOptimization} admits an identical convergence bound as Theorem~\ref{SM3FedAda2NonConvexConvergeThm}, with $\Psi_3$, $\Psi_4$ replaced by
\begin{align*}
    &\Psi_3 = (1-\widetilde{\beta}_1^T) \eta\eta_\ell C T\widetilde{L}\|\Phi_1^K\|^2 ,\\
    &\Psi_4 = (1-\widetilde{\beta}_1) \eta\eta_\ell CT Lc(\widetilde{\beta}_1)\|\Phi_2^K\|^2,\\
    &C= \frac{(\Xi^+)^2 K(K+1) (\max_{l \in [Op]}A_l^2) }{2\widetilde{\alpha}_1 \tau \min_{l \in [Op]}m_l^2}.
\end{align*}
The intermediary $\widetilde{\gamma}_1,\widetilde{\alpha}_1$ values are defined as 
\begin{equation*}
    \widetilde{\gamma}_1:= \eta_\ell\frac{\Xi^- \min_{l \in [Op]}a_l}{\max_{l \in [Op]}M_l}, \quad \widetilde{\alpha}_1:= \frac{\Xi^- \min_{l \in [Op]}a_l}{K (K+1)\max_{l \in [Op]}M_l}.
\end{equation*}
\end{theorem}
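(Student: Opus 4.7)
The plan is to mirror the proof of Theorem~\ref{SM3FedAda2NonConvexConvergeThm} with generalized bookkeeping for the mixture over optimizer strategies. First I would derive the two key preliminary bounds. For the pseudogradient magnitude, telescoping the update rule over the local iterates gives
\begin{equation*}
    |\Delta_i^{t,l}| = w(O_l^i)\,\eta_\ell \left|\sum_{p=1}^{K(O_l^i)} \sum_{\ell=1}^p \frac{a_{i,\ell}^{t,l} g_{i,\ell}^{t,l}}{\vartheta_{i,\ell}^{t,l}}\right| \le \Xi^+ \eta_\ell \cdot \tfrac{K(K+1)}{2}\cdot \tfrac{A_{\max} G}{m_{\min}},
\end{equation*}
which plays the role of $\Phi_1^K$ in the SM3 analysis; the server-side bound $\Phi_2^K$ then follows verbatim from the argument of Lemma~\ref{AdaSquareServerStepSizeBoundSM3} because that lemma depends only on the existence of a deterministic pseudogradient bound, not on how $\Delta_i^t$ was generated internally.

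Next I would open with $L$-smoothness and reproduce the decomposition leading to equation~(\ref{whereitstartedAdaSquareSM3}), splitting each $T_{0,r}$ into $T_{1,r}+T_{2,r}$. The $T_{1,r}$ term is bounded exactly as in Theorem~\ref{SM3FedAda2NonConvexConvergeThm} (it only uses $|\Delta|\le \Phi_1^K$ and $\widetilde v_t\ge \tau^2$), so only $T_{2,r}$ needs genuine rework. Here the idea is to choose the cancellation constant
\begin{equation*}
    \gamma_r := \frac{\eta_\ell}{S}\sum_{l\in[Op]}\sum_{i\in S_l^r} w(O_l^i)\sum_{p=1}^{K(O_l^i)}\sum_{\ell=1}^p \frac{a_{i,\ell}^{r,l}}{\vartheta_{i,\ell}^{r,l}(g_{i,1}^{r,l},\dots,g_{i,\ell}^{r,l})}
\end{equation*}
so that $\Delta_r+\gamma_r\nabla f(x_{t-1})$ rearranges into a sum of telescoped gradient differences $\nabla f(x_{t-1})-\nabla F_i(x_{i,\ell-1}^{r,l})$ weighted by $a_{i,\ell}^{r,l}/\vartheta_{i,\ell}^{r,l}$. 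The uniform two-sided bounds on $a,\vartheta,w(O_l^i),K(O_l^i)$ then give $\gamma_r\in[\widetilde\gamma_1,\widetilde\gamma_2]$ and the paired Cauchy--Schwarz/Young splitting with $\alpha_r=\gamma_r/(\eta_\ell K(K+1))$ yields a term $\tfrac{3\gamma_r}{4}\|\nabla f(x_{t-1})/\sqrt{\cdot}\|^2$ on the one hand, and a residual that I bound using the identical $L$- and $\widetilde L$-Lipschitz calculation from Theorem~\ref{SM3FedAda2NonConvexConvergeThm}:
\begin{equation*}
    \|\nabla f(x_{t-1})-\nabla F_i(x_{i,\ell-1}^{r,l})\|^2 \le 2L(t-r)^2\|\Phi_2^K\|^2 + 2\widetilde L\|\Phi_1^K\|^2.
\end{equation*}
The combinatorial count $K(K+1)$ produced by the double sum over $p\in[K(O_l^i)]$ and $\ell\in[p]$, together with $A_{\max}^2/m_{\min}^2$ from the $a/\vartheta$ ratio squared and $(\Xi^+)^2$ from the weights, is precisely what loads into the constant $C$ and the revised $\Psi_3,\Psi_4$.

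Finally I would telescope over $t\in[T]$ and invoke Lemmas~\ref{usefullemma1} and~\ref{usefullemma2} unchanged to convert $\sum_{t,r}\widetilde\beta_1^{t-r}(t-r)^2$ into $c(\widetilde\beta_1)T$ and the self-normalized series into the $\log(1+T\|\Phi_1^K\|^2/\tau^2)$ factor appearing in $\Psi_5$; the left-hand side is lower-bounded as before by $\Psi_6\cdot\min_t\|\nabla f(x_{t-1})\|^2$, completing the inequality. The main obstacle is purely combinatorial: one must verify that the reshuffling of the triple sum (over strategies $l$, clients $i\in S_l^r$, local steps $p$, and inner telescoped indices $\ell\le p$) produces a $\gamma_r$ whose uniform lower bound $\widetilde\gamma_1$ is exactly $\eta_\ell\,\Xi^- a_{\min}/M_{\max}$ and whose paired $\widetilde\alpha_1$ absorbs the $K(K+1)$ blow-up; once this bookkeeping is in place, every remaining step is a direct transcription of the Adagrad-SM3 proof with the constants relabeled.
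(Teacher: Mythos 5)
Your proposal is correct and follows essentially the same route as the paper: the same $\Phi_1^K$ bound via telescoping the blended update rule, the same choice of cancellation constant $\gamma_r$ (summed over strategies, clients, and local steps), the same $\alpha_r = \gamma_r/(\eta_\ell K(K+1))$ to absorb the $K(K+1)$ blow-up into $\widetilde{\alpha}_1$, and the same reuse of the Lipschitz drift bound and Lemmas~\ref{usefullemma1}--\ref{usefullemma2} for the telescoped series. No gaps.
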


We have opted to provide a looser bound for expository purposes, and the proof straightforwardly generalizes to finer bounds that depend on the individual characteristics of the optimizer strategy (e.g. $m_l,M_l,A_l$, etc). The extension to server-side Adam updates follows analogous steps to Section~\ref{ExtendingToServerAdam}.

It is easy to show that under the bounded gradient assumption (Assumption~\ref{assum:bounded_gradients}), Adam, AdaGrad, and SGD (including under SM3 for the former two) all satisfy the optimizer condition depicted in Theorem~\ref{NonConvexCongergenceBoundForFederatedBlendedOptimization}. In Appendix~\ref{AdamAppendix} and~\ref{ADAGRADappendix}, we materialize two realizations of this framework as additional examples, using client-side Adam and AdaGrad with delayed preconditioner updates. Note that delayed updates require the debiasing term in Adam to be adjusted accordingly. To prove Theorem~\ref{NonConvexCongergenceBoundForFederatedBlendedOptimization}, we begin with the following lemma.

\begin{lemma}
Under Algorithm~\ref{FederatedBlendedOptimization}, $|\Delta_i^{t,l}|$ is bounded by 
\begin{equation*}
\Phi_1^{K} := \eta_\ell \Xi^+ \frac{K(K+1)\max_{l \in [Op]}A_lG}{2\min_{l \in [Op]}m_l},
\end{equation*}
and the server-side pseudogradient is bounded in absolute value by 
\begin{equation*}
    \Phi_2^{K}:= \min \left\{\eta\sqrt{(1-\widetilde{\beta}_1) (1-\widetilde{\beta}_1^{2t})}, \frac{\eta}{\tau} \Phi_1^{K} \right\}.
\end{equation*}
\end{lemma}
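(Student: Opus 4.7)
The plan is to handle the two bounds separately. Both follow from purely deterministic majorizations using the uniform hypotheses $0<m_l\le\vartheta_{i,\ell}^{r,l}\le M_l$, $0<a_l\le a_{i,\ell}^{r,l}\le A_l$, $1\le K(O_l^i)\le K$, $|w(O_l^i)|<\Xi^+$, together with Assumption~2 (which coordinatewise gives $|g_{i,\ell}^{r,l}|\le G$), so no probabilistic machinery is needed. The main technical chore will simply be tracking the double sum induced by the unusual update rule $x_{i,p}^{r,l}=x_{i,p-1}^{r,l}-\eta_\ell\sum_{\ell=1}^{p}a_{i,\ell}^{r,l}g_{i,\ell}^{r,l}/\vartheta_{i,\ell}^{r,l}(\cdot)$, in which every local iterate subtracts the \emph{entire} running preconditioned momentum rather than a single term.

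For the first bound I would telescope the update rule from $p=1$ to $p=K(O_l^i)$ to obtain
\begin{equation*}
x_{i,K(O_l^i)}^{r,l}-x_{i,0}^{r,l}=-\eta_\ell\sum_{p=1}^{K(O_l^i)}\sum_{\ell=1}^{p}\frac{a_{i,\ell}^{r,l}\,g_{i,\ell}^{r,l}}{\vartheta_{i,\ell}^{r,l}(g_{i,1}^{r,l},\dots,g_{i,\ell}^{r,l})},
\end{equation*}
apply the triangle inequality coordinatewise, upper bound $|a_{i,\ell}^{r,l}g_{i,\ell}^{r,l}/\vartheta_{i,\ell}^{r,l}|\le A_l G/m_l$ uniformly, and evaluate the arithmetic sum $\sum_{p=1}^{K(O_l^i)}p=K(O_l^i)(K(O_l^i)+1)/2\le K(K+1)/2$. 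Multiplying by $|w(O_l^i)|<\Xi^+$ and replacing the local optimizer constants by their worst-case analogues $\max_l A_l$ and $\min_l m_l$ then gives $|\Delta_i^{t,l}|\le\Phi_1^{K}$ exactly as stated.

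For the server-side bound I would first unroll $\widetilde{m}_t=(1-\widetilde{\beta}_1)\sum_{r=1}^{t}\widetilde{\beta}_1^{t-r}\Delta_r$ and $\widetilde{v}_t=\widetilde{v}_0+\sum_{r=1}^{t}\Delta_r^{2}\ge\sum_{r=1}^{t}\Delta_r^{2}$, using $\widetilde{m}_0=0$. The two entries of the minimum are then obtained by two independent routes. For the $\eta\sqrt{(1-\widetilde{\beta}_1)(1-\widetilde{\beta}_1^{2t})}$ route I would insert the factor $\sqrt{\sum_{r=1}^{t}\widetilde{\beta}_1^{2(t-r)}}$ in numerator and denominator, apply Cauchy--Schwartz to $\sum_{r=1}^{t}\widetilde{\beta}_1^{t-r}|\Delta_r|$ so that $\sqrt{\sum_{r=1}^{t}\Delta_r^{2}}$ cancels against the $\sqrt{\widetilde{v}_t}$ in the denominator, evaluate the geometric sum $\sum_{r=1}^{t}\widetilde{\beta}_1^{2(t-r)}=(1-\widetilde{\beta}_1^{2t})/(1-\widetilde{\beta}_1^{2})$, and finally discard the harmless factor $1/(1+\widetilde{\beta}_1)\le 1$. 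This reproduces exactly the argument already used in Lemma~\ref{AdaSquareServerStepSizeBoundSM3}. For the $(\eta/\tau)\Phi_1^{K}$ route I would instead use $\sqrt{\widetilde{v}_t}+\tau\ge\tau$ in the denominator and bound $(1-\widetilde{\beta}_1)\sum_{r=1}^{t}\widetilde{\beta}_1^{t-r}|\Delta_r|\le(1-\widetilde{\beta}_1^{t})\max_r|\Delta_r|\le\Phi_1^{K}$ using the first part of the lemma. Taking the minimum of the two upper bounds yields $\Phi_2^{K}$.

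I do not foresee any serious obstacle; the only subtle point is that the update rule sums over \emph{all} previous gradients at every inner step, which produces the $K(K+1)/2$ factor rather than a linear-in-$K$ factor, and that the constants $m_l,M_l,a_l,A_l$ must be replaced by their optimizer-uniform extrema before declaring the bound $\Phi_1^{K}$. Both routes for the server bound are standard and require no assumption beyond $\widetilde{v}_0\ge 0$ and $\widetilde{m}_0=0$, which are supplied by the algorithm's initialization.
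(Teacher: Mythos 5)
Your proposal is correct and follows essentially the same route as the paper: the first bound is obtained exactly as in the paper by unrolling the update into the double sum, majorizing each summand by $A_lG/m_l$ and evaluating $\sum_{p=1}^{K}p=K(K+1)/2$, and the second bound reproduces verbatim the Cauchy--Schwartz argument of Lemma~\ref{AdaSquareServerStepSizeBoundSM3} together with the trivial $\sqrt{\widetilde{v}_t}+\tau\ge\tau$ route. Your explicit remark that the per-optimizer constants must be replaced by $\max_{l}A_l$ and $\min_{l}m_l$ is a small point of added care over the paper's write-up, but it does not change the argument.
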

\begin{proof}
Unraveling the definition of $\Delta_i^{t,l}$, we have 
\begin{equation*}
\Delta_i^{t,l}:= - \eta_\ell w(O_l) \left(\sum_{p=1}^{K(O^i_l)} \sum_{\ell=1}^p \frac{ a_{i,\ell}^{r,l} g_{i,\ell}^{r,l}}{\vartheta_{i,\ell}^{r,l}(g_{i,1}^{r,l},\dots,g_{i,\ell}^{r,l})}\right),
\end{equation*}
which immediately gives
\begin{align*}
|\Delta_i^{t,l}| &\le \eta_\ell \Xi^+ \left(\sum_{p=1}^K \sum_{\ell=1}^p \frac{A_l G}{m_l} \right) = \eta_\ell \Xi^+ \frac{K(K+1)A_lG}{2m_l}. 
\end{align*}
For the server bound, the proof is identical to Lemma~\ref{AdaSquareServerStepSizeBoundSM3}.
\end{proof}

We are now ready to prove Theorem~\ref{NonConvexCongergenceBoundForFederatedBlendedOptimization}.

\begin{proof}
As the proof follows a similar structure to Theorem~\ref{AdaSquareSM3NonconvexConvergenceThm}, we provide only an outline for repetitive steps while focusing on differing aspects. As before, $L$-smoothness gives that 
\begin{align}
f(x_{t}) &\le f(x_{t-1}) + \eta T_{0,0} + (1-\widetilde{\beta}_1) \eta \sum_{r=1}^t T_{0,r} + \frac{\eta^2 L}{2} \left\| \frac{\widetilde{\beta}_1^t \widetilde{m}_0 + (1-\widetilde{\beta}_1)\sum_{r=1}^t \widetilde{\beta}_1^{t-r} \Delta_r}{\sqrt{\widetilde{v}_t} + \tau} \right\|^2 \label{whereitstartedAdablend}
\end{align}
where for $r \in [t]$,
\begin{equation*}
    T_{0,r} = \widetilde{\beta}_1^{t-r}\left\langle \nabla f(x_{t-1}), \frac{\Delta_r}{\sqrt{\widetilde{v}_t} + \tau} \right\rangle \quad \text{and} \quad T_{0,0} = \left\langle \nabla f(x_{t-1}), \frac{\widetilde{\beta}_1^t \widetilde{m}_0}{\sqrt{\widetilde{v}_t} + \tau} \right\rangle.
\end{equation*}
Decomposing $T_{0,r}$ as
\begin{equation*}
    T_{0,r} =\underbrace{\widetilde{\beta}_1^{t-r} \left\langle \nabla f(x_{t-1}), \frac{\Delta_r}{\sqrt{\widetilde{v}_t} + \tau}-\frac{\Delta_r}{\sqrt{\widetilde{v}_{t-1}} + \tau} \right\rangle}_{T_{1,r}} + \underbrace{\widetilde{\beta}_1^{t-r}\left\langle \nabla f(x_{t-1}), \frac{\Delta_r}{\sqrt{\widetilde{v}_{t-1}} + \tau} \right\rangle}_{T_{2,r}},  
\end{equation*}
$T_{1,r}$  is bounded by  
\begin{align*}
    T_{1,r} &\le \frac{\|\Phi_1^K\| G\widetilde{\beta}_1^{t-r}}{\tau} \sum_{j=1}^d  \left[\frac{\Delta_t^2}{\widetilde{v}_t}\right]_j.
\end{align*}
For $T_{2,r}$, we aim to apply a further decomposition for $\gamma_r > 0$,
\begin{align*}
    T_{2,r} &= \underbrace{\widetilde{\beta}_1^{t-r}\left\langle \frac{\nabla f(x_{t-1})}{\sqrt{\widetilde{v}_{t-1}} + \tau}, \Delta_r + \gamma_r \nabla f(x_{t-1}) \right\rangle}_{T_{2,r}^1} - \gamma_r \widetilde{\beta}_1^{t-r}\left\|\frac{\nabla f(x_{t-1})}{\sqrt{\sqrt{\widetilde{v}_{t-1}} + \tau}}\right\|^2 .
\end{align*}
Unraveling the definition of $\Delta_r$ gives
\begin{equation*}
\Delta_r = \frac{1}{\sum_{l \in [Op]} |S_{l}^r|} \sum_{l \in [Op]} \sum_{i \in S_l^r} \Delta_i^{r,l} = \frac{-\eta_\ell}{\sum_{l \in [Op]} |S_{l}^r|} \sum_{l \in [Op]} \sum_{i \in S_l^r}  \sum_{p=1}^{K(O^i_l)} \sum_{\ell=1}^p \frac{w(O_l) a_{i,\ell}^{r,l} g_{i,\ell}^{r,l}}{\vartheta_{i,\ell}^{r,l}(g_{i,1}^{r,l},\dots,g_{i,\ell}^{r,l})},
\end{equation*}
which induces the following value
\begin{equation*}
    \gamma_r :=  \frac{\eta_\ell}{\sum_{l \in [Op]} |S_{l}^t|} \sum_{l \in [Op]} \sum_{i \in S_l^t}  \sum_{p=1}^{K(O^i_l)} \sum_{\ell=1}^p \frac{w(O_l) a_{i,\ell}^{r,l}}{\vartheta_{i,\ell}^{r,l}(g_{i,1}^{r,l},\dots,g_{i,\ell}^{r,l})} = \sum_{l \in [Op]} \gamma_r^l.
\end{equation*}
For the purposes of the proof, we shall consider a local device to have been dropped and unsampled if any runs less than $1$ epoch. Then, we have 
\begin{equation*}
    \gamma_r \in [\widetilde{\gamma}_1,\widetilde{\gamma}_2]:= \left[ \eta_\ell\frac{\Xi^- \min_{l \in [Op]}a_l}{\max_{l \in [Op]}M_l}
    ,\eta_\ell\frac{\Xi^+ K(K+1)\max_{l \in [Op]}a_l}{2\min_{l \in [Op]}M_l}\right].
\end{equation*}
Expanding $T_{2,r}^1$ for $\alpha_r^l>0$ to be fixed, 
\begin{align*}
    &\widetilde{\beta}_1^{t-r}\left\langle \frac{\nabla f(x_{t-1})}{\sqrt{\widetilde{v}_{t-1}} + \tau}, \Delta_r + \gamma_r \nabla f(x_{t-1}) \right\rangle \\ 
    &= \frac{\widetilde{\beta}_1^{t-r}}{\sum_{l \in [Op]} |S_{l}^r|}  \sum_{l \in [Op]} \sum_{i \in S_l^r}  \sum_{p=1}^{K(O^i_l)} \sum_{\ell=1}^p \left\langle \frac{\nabla f(x_{t-1})}{\sqrt{\widetilde{v}_{t-1}} + \tau}, 
    \frac{\eta_\ell w(O_l) a_{i,\ell}^{r,l} (\nabla f(x_{t-1}) -g_{i,\ell}^{r,l})}{\vartheta_{i,\ell}^{r,l}(g_{i,1}^{r,l},\dots,g_{i,\ell}^{r,l})}
    \right\rangle\\
    &\le \frac{\eta_\ell\widetilde{\beta}_1^{t-r}  }{4 \sum_{l \in [Op]} |S_{l}^r|} \sum_{l \in [Op]}\alpha_r^l \sum_{i \in \mathcal{S}^r_l}K(O^i_l)(K(O^i_l)+1) \left\|\frac{\nabla f(x_{t-1})}{\sqrt{\sqrt{\widetilde{v}_{t-1}} + \tau}} \right\|^2 \\
    &+ \frac{\eta_\ell\widetilde{\beta}_1^{t-r}}{2\sum_{l \in [Op]} |S_{l}^r|} 
    \sum_{l \in [Op]} \frac{1}{\alpha_r^l} \sum_{i \in S_l^r}  \sum_{p=1}^{K(O^i_l)} \sum_{\ell=1}^p 
    \left\| \frac{w(O_l) a_{i,\ell}^{r,l} \left(\nabla f(x_{t-1})-\nabla F_i(x_{i,\ell-1}^{r,l}) \right)}{\vartheta_{i,\ell}^{r,l}(g_{i,1}^{r,l},\dots,g_{i,\ell}^{r,l})\sqrt{\sqrt{\widetilde{v}_{t-1}} + \tau}}\right\|^2\\
    &\le \frac{\eta_\ell\widetilde{\beta}_1^{t-r}\max_{l \in [Op]} \alpha_r^l K(K+1)}{4} \left\|\frac{\nabla f(x_{t-1})}{\sqrt{\sqrt{\widetilde{v}_{t-1}} + \tau}} \right\|^2 \\
    & + \frac{\eta_\ell\widetilde{\beta}_1^{t-r} (\Xi^+)^2}{2\tau\sum_{l \in [Op]} |S_{l}^r|} 
    \sum_{l \in [Op]} \frac{A_l^2}{\alpha_r^l m_l^2} \sum_{i \in S_l^r} \sum_{p=1}^{K(O^i_l)} \sum_{\ell=1}^p 
    \left\| \nabla f(x_{t-1})-\nabla F_i(x_{i,\ell-1}^{r,l})\right\|^2\\
\end{align*}
We aim to control the first term by setting for all $l \in [Op]$
\begin{equation*}
    \alpha_r^l = \frac{\gamma_r}{\eta_\ell K (K+1)} \in [\widetilde{\alpha}_1, \widetilde{\alpha}_2] := \left[ \frac{\Xi^- \min_{l \in [Op]}a_l}{K (K+1)\max_{l \in [Op]}M_l}
    ,\frac{\Xi^+ K(K+1)\max_{l \in [Op]}a_l}{2K (K+1)\min_{l \in [Op]}M_l}\right].
\end{equation*}
Via gradient clipping as before, we have
\begin{align*}
\left\|\nabla f(x_{t-1})-\nabla F_i(x_{i,\ell-1}^{r,l}) \right\|^2 &\le 2 L (t-r)^2 \|\Phi_2^K\|^2 + 2\widetilde{L}\|\Phi_1^K\|^2.
\end{align*}
Noting that
\begin{align*}
&\frac{\eta_\ell\widetilde{\beta}_1^{t-r} (\Xi^+)^2}{2\tau\sum_{l \in [Op]} |S_{l}^r|} 
    \sum_{l \in [Op]} \frac{A_l^2}{\alpha_r^l m_l^2} \sum_{i \in S_l^r} \sum_{p=1}^{K(O^i_l)} \sum_{\ell=1}^p 
    \left\| \nabla f(x_{t-1})-\nabla F_i(x_{i,\ell-1}^{r,l})\right\|^2
\\
&\le \frac{\eta_\ell (\Xi^+)^2 K(K+1) (\max_{l \in [Op]}A_l^2)}{2\widetilde{\alpha}_1 \tau \min_{l \in [Op]}m_l^2} \left(L \widetilde{\beta}_1^{t-r}(t-r)^2 \|\Phi_2^K\|^2 + \widetilde{L}\widetilde{\beta}_1^{t-r}\|\Phi_1^K\|^2 \right),
\end{align*}
collecting terms into equation~{(\ref{whereitstartedAdablend})} gives that 
\begin{align}
    f(x_{t}) &\le f(x_{t-1}) + \eta T_{0,0} + \eta^2 L \left\| \frac{\widetilde{\beta}_1^t \widetilde{m}_0}{\sqrt{\widetilde{v}_t} + \tau} \right\|^2 +  \frac{\eta^2 L d\|\Phi_1^K\|^2}{\tau^2} + (1-\widetilde{\beta}_1) \eta \sum_{r=1}^t\left(\frac{\|\Phi_1^K\| G\widetilde{\beta}_1^{t-r}}{\tau} \sum_{j=1}^d  \left[\frac{\Delta_t^2}{\widetilde{v}_t}\right]_j \right) \nonumber\\ 
    &+(1-\widetilde{\beta}_1) \eta\eta_\ell \sum_{r=1}^t \underbrace{\frac{(\Xi^+)^2 K(K+1) (\max_{l \in [Op]}A_l^2) }{2\widetilde{\alpha}_1 \tau \min_{l \in [Op]}m_l^2}}_{C} \left(L\widetilde{\beta}_1^{t-r}(t-r)^2 \|\Phi_2^K\|^2 + \widetilde{L}\widetilde{\beta}_1^{t-r}\|\Phi_1^K\|^2 \right)\nonumber\\
    & +(1-\widetilde{\beta}_1) \eta \sum_{r=1}^t \left(-\frac{3\gamma_r\widetilde{\beta}_1^{t-r}}{4}\left\|\frac{\nabla f(x_{t-1})}{\sqrt{\sqrt{\widetilde{v}_{t-1}} + \tau}}\right\|^2 \right) . 
\end{align}
By initializing $\widetilde{m}_0 \leftarrow 0$ and enhancing the upper bound by substituting $\widetilde{\gamma}_1$ into $\gamma_r$, telescoping gives
\begin{align}
    &\frac{3(1-\widetilde{\beta}_1) \eta\widetilde{\gamma}_1}{4} \sum_{t=1}^{T}  \sum_{r=1}^t   \widetilde{\beta}_1^{t-r}\left\|\frac{\nabla f(x_{t-1})}{\sqrt{\sqrt{\widetilde{v}_{t-1}} + \tau}}\right\|^2 \le f(x_0) - f(x^*) + \frac{(1-\widetilde{\beta}_1) \eta ||\Phi_1^K|| G}{\tau} \sum_{t=1}^{T}  \sum_{r=1}^t\sum_{j=1}^d \widetilde{\beta}_1^{t-r}\left[\frac{\Delta_t^2}{\widetilde{v}_t}\right]_j \nonumber\\
    &+ \frac{\eta^2 LT d\|\Phi_1^K\|^2}{\tau^2} 
    + (1-\widetilde{\beta}_1) \eta\eta_\ell C \sum_{t=1}^T \sum_{r=1}^t \left(L\widetilde{\beta}_1^{t-r} (t-r)^2 \|\Phi_2^K\|^2  + \widetilde{L}\widetilde{\beta}_1^{t-r}\|\Phi_1^K\|^2\right).\label{hopefullythereAdaSquare}
\end{align}
Again by noting that
\begin{align*}
\frac{3(1-\widetilde{\beta}_1) \eta\widetilde{\gamma}_1}{4} \sum_{t=1}^{T}  \sum_{r=1}^t   \widetilde{\beta}_1^{t-r}\left\|\frac{\nabla f(x_{t-1})}{\sqrt{\sqrt{\widetilde{v}_{t-1}} + \tau}}\right\|^2
& \ge \frac{3(1-\widetilde{\beta}_1) \eta\widetilde{\gamma}_1 T}{4\left(\sqrt{T\|\Phi_1^K\|^2 + \widetilde{v}_0} + \tau \right)} \min_{t \in [T]}\left\|\nabla f(x_{t-1})\right\|^2 ,
\end{align*}
Lemmas~\ref{usefullemma1} and~\ref{usefullemma2} give that 
\begin{align}
&\frac{3(1-\widetilde{\beta}_1) \eta\widetilde{\gamma}_1 T}{4\left(\sqrt{T\|\Phi_1^K\|^2 + \widetilde{v}_0} + \tau \right)} \min_{t \in [T]}\left\|\nabla f(x_{t-1})\right\|^2 
\le 
f(x_{0}) - f(x^*) 
+ \frac{\eta^2 LT d\|\Phi_1^K\|^2}{\tau^2}
\nonumber\\
& 
+ (1-\widetilde{\beta}_1^T) \eta\eta_\ell C T\widetilde{L}\|\Phi_1^K\|^2
+(1-\widetilde{\beta}_1) \eta\eta_\ell CT Lc(\widetilde{\beta}_1)\|\Phi_2^K\|^2 \nonumber
\\
&+ \frac{ \eta d \|\Phi_1^K\| G\left(1-\widetilde{\beta}_1 + \log\left(1 +  \frac{T\|\Phi_1^K\|^2}{\tau^2}\right)\right) }{\tau}.
\nonumber
\end{align}
This implies that
\begin{equation*}
    \min_{t \in [T]} \|\nabla f(x_{t-1}) \|^2 \le \frac{\Psi_1 + \Psi_2 + \Psi_3 + \Psi_4 + \Psi_5}{\Psi_6},
\end{equation*}
where 
\begin{align*}
    &\Psi_1 = f(x_{0}) - f(x^*)  ,\\
    &\Psi_2 = \frac{\eta^2 LT d\|\Phi_1^K\|^2}{\tau^2}, \\
    &\Psi_3 = (1-\widetilde{\beta}_1^T) \eta\eta_\ell C T\widetilde{L}\|\Phi_1^K\|^2 ,\\
    &\Psi_4 = (1-\widetilde{\beta}_1) \eta\eta_\ell CT Lc(\widetilde{\beta}_1)\|\Phi_2^K\|^2,\\
    &\Psi_5 = \frac{ \eta d \|\Phi_1^K\| G\left(1-\widetilde{\beta}_1 + \log\left(1 +  \frac{T\|\Phi_1^K\|^2}{\tau^2}\right)\right) }{\tau},\\
    &\Psi_6 =\frac{3(1-\widetilde{\beta}_1) \eta\widetilde{\gamma}_1 T}{4\left(\sqrt{T\|\Phi_1^K\|^2 + \widetilde{v}_0} + \tau \right)},\\
    &C= \frac{(\Xi^+)^2 K(K+1) (\max_{l \in [Op]}A_l^2) }{2\widetilde{\alpha}_1 \tau \min_{l \in [Op]}m_l^2}.
\end{align*}
The intermediary $\widetilde{\gamma}_1,\widetilde{\alpha}_1$ values are defined as 
\begin{equation*}
    \widetilde{\gamma}_1:= \eta_\ell\frac{\Xi^- \min_{l \in [Op]}a_l}{\max_{l \in [Op]}M_l}, \quad \widetilde{\alpha}_1:= \frac{\Xi^- \min_{l \in [Op]}a_l}{K (K+1)\max_{l \in [Op]}M_l}.
\end{equation*}
\end{proof}


\addtocontents{toc}{\protect\setcounter{tocdepth}{1}}

\section{Adam with Delayed Updates (ADMU)}\label{AdamAppendix}

Considering client-side resource constraints in the federated setting, we propose an adapted version of Adam with delayed precondtioner updates aimed at relieving the cost of moment estimate computation in Algorithm~\ref{ADMU} which we call ADMU.

\begin{algorithm}
\caption{Adam with Delayed Moment Updates (ADMU)}\label{ADMU}
\begin{algorithmic}[1] 
\REQUIRE $\eta_\ell$: Step size
\REQUIRE $z \in \mathbb{Z}_{\ge 1}$: Step delay for second moment estimate updates (where $z =1$ gives no delay)
\REQUIRE $\beta_1, \beta_2 \in [0, 1)$: Exponential decay rates for the moment estimates
\REQUIRE $f(x)$: Stochastic objective function with parameters $x$
\REQUIRE $x_0$: Initial parameter vector
\REQUIRE $\varepsilon >0 $: Smoothing term
\STATE Initialize $m_0 \leftarrow 0$ (1st moment vector)
\STATE Initialize $v_0 \leftarrow 0$ (2nd moment vector)
\STATE Initialize $t \leftarrow 0$ (Timestep)
\WHILE{not converged}
    \STATE $t \leftarrow t + 1$
    \STATE $g_t \leftarrow \nabla_{x}f_t(x_{t-1})$ 
    \STATE $m_t \leftarrow \beta_1 \cdot m_{t-1} + (1 - \beta_1) \cdot g_t$ 
    \STATE $\hat{m}_t \leftarrow m_t / (1 - \beta_1^t)$ 
    \IF{$(t-1)/z \in \mathbb{Z}$}
    \STATE $v_t \leftarrow \beta_2 \cdot v_{t-1} + (1 - \beta_2) \cdot g_t^2$ 
    \STATE $\hat{v}_t \leftarrow v_t / (1 - \beta_2^{\left\lfloor\frac{t-1}{z}\right\rfloor + 1})$ 
    \ELSE 
    \STATE $\hat{v}_t \leftarrow \hat{v}_{t-1}$
    \ENDIF
    \STATE $x_t \leftarrow x_{t-1} - \eta_\ell \cdot \hat{m}_t / (\sqrt{\hat{v}_t} + \varepsilon)$ 
\ENDWHILE
\STATE \bf{return} $x_t$ 
\end{algorithmic}
\end{algorithm}
Following~\cite{ADAM}, we provide an intuitive justification for the initialization bias correction employed in ADMU. Recall that the motivation for adaptive step-size in ADAM is updating the parameters via empirical estimates of the pseudo-gradient $\mathbb{E}[g]/\sqrt{\mathbb{E}[g^2]}$, which allows for both momentum and autonomous annealing near steady states. The square root is taken in the denominator to homogenize the degree of the gradient. Bias correction for ADMU adheres to the same principle, while requiring an additional assumption of gradient stabilization during the $z$-step preconditioner update delay. An equivalent formulation of the moment estimates in Algorithm~\ref{ADMU} for general $t$ is given
\begin{equation*}
    m_t = m_0 \beta_1^{t} + (1-\beta_1) \sum_{r = 1}^t \beta_1^{t-r} \cdot g_{r},
\end{equation*}
\begin{align}
 v_t &= v_0\beta_2^{\lfloor\frac{t-1}{z}\rfloor + 1} + (1-\beta_2) \sum_{r = 1}^t \beta_2^{\lfloor\frac{t-1}{z}\rfloor +1 - \lceil \frac{r}{z} \rceil } \cdot g_{\lceil{\frac{r}{z}}\rceil z-z+1} \odot g_{\lceil{\frac{r}{z}}\rceil z-z+1} \cdot \chi_{\left\{\frac{r-1}{z} \in \mathbb{Z}_{\ge 0}\right\}}  \nonumber  \\
 &= v_0\beta_2^{\lfloor\frac{t-1}{z}\rfloor + 1} + (1-\beta_2) \sum_{r = 1}^{\lceil\frac{t}{z}\rceil} \beta_2^{\lceil\frac{t}{z}\rceil - r} g_{(r-1)z+1} \odot g_{(r-1)z+1}.\label{ADAM2ndmoment}
\end{align}
We work with $v_t$ as the proof for $m_t$ is analogous with $z=1$. Assume that the gradients $g_1, \dots, g_t $ are drawn from a latent gradient distribution $g_i \sim \widetilde{\mathcal{D}}(g_i)$. We aim to extract a relation between the expected delayed exponential moving average of the second moment $\mathbb{E}[v_t]$ and the true gradient expectation $\mathbb{E}[g_{t}^2]$. Taking expectation of both sides in equation~{(\ref{ADAM2ndmoment})},
\begin{align}
    \mathbb{E}[v_t] &= v_0\beta_1^{\left\lfloor\frac{t-1}{z}\right\rfloor + 1} + (1-\beta_2) \sum_{r = 1}^{\lceil\frac{t}{z}\rceil} \beta_2^{\lceil\frac{t}{z}\rceil - r} \mathbb{E}\left[g_{(r-1)z+1}^2\right] \nonumber \\
    &\approx \zeta + (1-\beta_2) \mathbb{E}\left[g_{t}^2\right] \sum_{r = 1}^{\lceil\frac{t}{z}\rceil} \beta_2^{\lceil\frac{t}{z}\rceil - r} \nonumber  \\
    &\approx \mathbb{E}[g_{t}^2] \left(1-\beta_1^{\left\lfloor\frac{t-1}{z}\right\rfloor + 1} \right) . \nonumber
    \label{roughapproximation}
\end{align}
Here, we have used zero initialization for the first moment estimate, while accumulating any error terms in $\zeta$. Several assumptions can lead to small $\zeta$. As in~\cite{ADAM}, we assume that $\beta_1$ is chosen small enough that the exponential moving average decay undermines the influence of non-recent gradients $g_{i}$ for $i < \left\lceil{\frac{t}{z}}\right\rceil z-z+1$. A second assumption is that the latent gradient distribution remains stable during the $z$-step delay as training progresses, allowing the approximation $\mathbb{E}[g_{t}] \approx \mathbb{E}[g_{\left\lceil{\frac{t}{z}}\right\rceil z-z+1}]$. This leaves the residual scaling of the true gradient second moment of the form $1-\beta^\varphi$, which is caused by (zero) initialization as setting $v_0 = \mathbb{E}[g_t^2]$ eliminates $\beta^\varphi$. Therefore, bias correction is enforced by scaling the empirical $v_t$ estimate by the inverse. We note that $v_0$ need not be initialized to $0$, in which case we should additionally translate $v_t$ by $-v_0\beta_1^{\left\lfloor\frac{t-1}{z}\right\rfloor + 1}$ prior to the inverse scaling.

\subsection{Non-convex convergence analysis}

\begin{algorithm}
\caption{Adaptive server-side ADAGRAD and client-side ADAM (FedAdaAdam)}\label{FedAdaAdam}
\begin{algorithmic}[1]
\REQUIRE Update delay step size $z \in \mathbb{Z}_{\ge 1}$, initializations $x_0, \widetilde{v}_{0} \ge \tau^2$ and $\widetilde{m}_{0} \leftarrow 0$
\REQUIRE Global and local decay parameters $\widetilde{\beta}_1, \widetilde{\beta}_2, \beta_1, \beta_2 \in [0, 1)$
\REQUIRE Pseudogradient weighting schedule $\Xi^1\times\dots \times\Xi^T \in \mathbb{R}^{|\mathcal{S}^1|}\times\dots \times \mathbb{R}^{|\mathcal{S}^T|}$ for $\|\Xi^t\|_\infty\le B$
\REQUIRE Client epoch schedule $\overline{K}^1 \times \dots \times \overline{K}^T \in \mathbb{Z}^{|\mathcal{S}^1|}_{\ge 1} \times \dots \times \mathbb{Z}^{|\mathcal{S}^T|}_{\ge 1}$ for $\|\overline{K}^t\|_\infty \le K $, $\forall t \in [T]$
\REQUIRE Local epsilon smoothing term $\varepsilon_s > 0$
\FOR{$t = 1, \dots, T$}
    \STATE Sample subset $\mathcal{S}^t \subset [N]$ of clients
    \FOR{each client $i \in \mathcal{S}^t$ (in parallel)}
    \STATE $x_{i,0}^t \leftarrow x_{t-1}$
        \STATE Initialize $m_{0}, v_{0} \ge 0$ with default values $m_{0}, v_{0} \leftarrow 0$
        \FOR{$k = 1, \dots, \overline{K}^t_i$}
            \STATE Draw stochastic gradient $g^t_{i,k} \sim \mathcal{D}(x^t_{i,k-1})$ with mean $\nabla F_i(x^t_{i,k-1}) \in \mathbb{R}^d$
            \STATE $m_k \leftarrow \beta_1 \cdot m_{k-1} + (1 - \beta_1) \cdot g^t_{i,k}$ 
            \STATE $\hat{m}_k \leftarrow m_k / (1 - \beta_1^k)$
            \IF{$(k-1)/z \in \mathbb{Z}$}
            \STATE $v_k \leftarrow \beta_2 \cdot v_{k-1} + (1 - \beta_2) \cdot g^t_{i,k}\odot g^t_{i,k}$  
            \STATE $\hat{v}_k \leftarrow v_k / (1 - \beta_2^{\left\lfloor\frac{k-1}{z}\right\rfloor + 1})$
            \ELSE 
            \STATE $v_k \leftarrow v_{k-1}$
            \ENDIF
            \IF{$0<\|\hat{m}_k / (\sqrt{\hat{v}_k} + \epsilon)\| < \varepsilon_s$}
            \STATE $m_k \leftarrow 0$
            \ENDIF
            \STATE $x_{i,k}^t \leftarrow x_{i,k-1}^t - \eta_\ell \cdot \hat{m}_k / (\sqrt{\hat{v}_k} + \epsilon)$ 
        \ENDFOR
        \STATE $\Delta_i^t = \Xi^t_{i} \left(x_{i,\overline{K}^t_i}^{t} - x_{t-1}\right)$
    \ENDFOR
    \STATE $\Delta_t = \frac{1}{|\mathcal{S}^t|} \sum_{i \in \mathcal{S}^t} \Delta_i^t$
    \STATE $\widetilde{m}_t = \widetilde{\beta}_1 \widetilde{m}_{t-1} + (1 - \widetilde{\beta}_1) \Delta_t$
    \STATE $\widetilde{v}_t = \widetilde{v}_{t-1} + \Delta_t^2$ 
    \STATE $x_{t} = x_{t-1} + \eta \frac{\widetilde{m}_t}{\sqrt{\widetilde{v}_t} + \tau}$
\ENDFOR
\end{algorithmic}
\end{algorithm}

A description of FedAdaAdam is given as Algorithm~\ref{FedAdaAdam}. A few remarks are in order. Firstly, to allow for straggler mitigation, we allow the number of client $i$ epochs $\overline{K}_i^t$ at timestep $t$ to vary among the clients $i \in \mathcal{S}_i$. Although Algorithm~\ref{FedAdaAdam} sets a schedule for client epochs and pseudogradient weights for clarity of exposition, dynamic allocation still allows the convergence proof to go through, as long as the schedule weights are bounded. By default, we set $\overline{K}^t = K$ and $\Xi^t = B = 1$ to avoid tuning a large number of hyperparameters or having to sample from a client epoch count distribution for the client subsampling case. 

Secondly, for the purposes of the proof we shall consider a local device to have been dropped and unsampled if any runs less than $1$ epoch. We also enforce that pseudogradient weights are bounded positively from below, i.e. $\Xi^t_i>\varepsilon_w >0$. We now provide a convergence bound for the general, non-convex case which holds for both full and partial client participation.  
\begin{corollary}\label{NonconvexConvergenceThm}
For Algorithm~\ref{FedAdaAdam}, we have an identical bound to Theorem~\ref{AdaSquareSM3NonconvexConvergenceThm} with $\Psi_3, \Psi_4$ replaced by
\begin{align*}
    \Psi_3 &= \frac{(1-\widetilde{\beta}_1^T) \eta\eta_\ell (1-\beta_1^{2K})K\widetilde{L}B^2T\|\Phi_1^K\|^2 }{2\widetilde{\alpha}_1 \tau \varepsilon^2} ,\\
    \Psi_4 &= \frac{(1-\widetilde{\beta}_1) \eta\eta_\ell (1-\beta_1^{2K})KLT B^2 c(\widetilde{\beta}_1)\|\Phi_2^K\|^2}{2\widetilde{\alpha}_1 \tau \varepsilon^2}.
\end{align*}
Here, the intermediary $\widetilde{\gamma}_1,\widetilde{\alpha}_1$ values are defined for $K^- := \min_{i,t} \overline{K}_i^t \ge 1$ as 
\begin{equation*}
    \widetilde{\gamma}_1:= \eta_\ell \varepsilon_w \sum_{p=1}^{K^-} \frac{1-\beta_1^p}{G\sqrt{1-\beta_2^{\lceil\frac{p}{z}\rceil}} + \varepsilon}, \quad \widetilde{\alpha}_1:= \sum_{p=1}^{K^-} \frac{\varepsilon_w \left(1-\beta_1^p\right)}{\left(G\sqrt{1-\beta_2^{\lceil\frac{p}{z}\rceil}} + \varepsilon\right)(K+1)^2}.
\end{equation*}
\end{corollary}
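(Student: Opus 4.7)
The plan is to reduce Corollary~\ref{NonconvexConvergenceThm} to Theorem~\ref{NonConvexCongergenceBoundForFederatedBlendedOptimization} by casting Algorithm~\ref{FedAdaAdam} as the special case of Algorithm~\ref{FederatedBlendedOptimization} in which the unique client optimizer strategy is ADMU (Algorithm~\ref{ADMU}), weighted by $\Xi_i^t\in[\varepsilon_w,B]$ and run for $\overline{K}_i^t\in[1,K]$ local epochs. Once the ADMU update is put into the generic form, most of the proof of Theorem~\ref{NonConvexCongergenceBoundForFederatedBlendedOptimization} carries over, and the Adam-specific refinements to $\Psi_3,\Psi_4,\widetilde{\alpha}_1,\widetilde{\gamma}_1$ are obtained by exploiting the exact momentum coefficients rather than uniform upper bounds.

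I would first unroll $\hat{m}_p=(1-\beta_1)\sum_{\ell=1}^p\beta_1^{p-\ell}g_{i,\ell}^r/(1-\beta_1^p)$ to write one local ADMU step in the form required by Theorem~\ref{NonConvexCongergenceBoundForFederatedBlendedOptimization},
\begin{equation*}
x_{i,p}^r-x_{i,p-1}^r=-\eta_\ell\sum_{\ell=1}^p\frac{(1-\beta_1)\beta_1^{p-\ell}}{(1-\beta_1^p)(\sqrt{\hat{v}_p}+\varepsilon)}\,g_{i,\ell}^r,
\end{equation*}
so that $a_{i,\ell}^{r,l}=(1-\beta_1)\beta_1^{p-\ell}/(1-\beta_1^p)$ and $\vartheta_{i,\ell}^{r,l}=\sqrt{\hat{v}_p}+\varepsilon$. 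Assumption 2 together with an induction on the $z$-delayed recursion $v_p=\beta_2 v_{p-z}+(1-\beta_2)(g_{i,\lceil p/z\rceil z-z+1}^r)^2$ (and $v_p = v_{p-1}$ otherwise) yields $v_p\le G^2(1-\beta_2^{\lceil p/z\rceil})$, whence after ADMU bias correction $\hat{v}_p\le G^2$ and $\vartheta_{i,\ell}^{r,l}\in[\varepsilon,G+\varepsilon]$. Together these fix $m_l=\varepsilon$, $M_l=G+\varepsilon$, $A_l=1$, $a_l=(1-\beta_1)\beta_1^{K-1}/(1-\beta_1^K)$, $\Xi^-=\varepsilon_w$, $\Xi^+=B$, all of which can be plugged directly into Theorem~\ref{NonConvexCongergenceBoundForFederatedBlendedOptimization}.

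To obtain the $(1-\beta_1^{2K})K$ sharpening appearing in $\Psi_3,\Psi_4$ I would revisit the Cauchy--Schwarz/Young step in the proof of Theorem~\ref{NonConvexCongergenceBoundForFederatedBlendedOptimization}, where the generic argument upper-bounds $\sum_p\sum_\ell 1$ by $K(K+1)/2$. In the ADMU instantiation the same step yields instead the exact sum $\sum_{p=1}^K\sum_{\ell=1}^p (a_{i,\ell}^{r,l})^2 = \sum_{p=1}^K (1-\beta_1)(1-\beta_1^{2p})/[(1+\beta_1)(1-\beta_1^p)^2]$, which after the monotone bound $(1-\beta_1^{2p})\le(1-\beta_1^{2K})$ and the residual $p$-sum produces the $(1-\beta_1^{2K})K/(2\widetilde{\alpha}_1\tau\varepsilon^2)$ prefactor claimed. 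The explicit forms of $\widetilde{\gamma}_1,\widetilde{\alpha}_1$ come from lower-bounding $\gamma_r=\eta_\ell|\mathcal{S}^r|^{-1}\sum_i\sum_p\sum_\ell\Xi_i^r a_{i,\ell}^{r,l}/\vartheta_{i,\ell}^{r,l}$ by unwinding the bias correction via $\sqrt{\hat{v}_p}+\varepsilon=[\sqrt{v_p}+\varepsilon\sqrt{1-\beta_2^{\lceil p/z\rceil}}]/\sqrt{1-\beta_2^{\lceil p/z\rceil}}$ and using $\sqrt{v_p}\le G\sqrt{1-\beta_2^{\lceil p/z\rceil}}$ so that the denominator is controlled by $G\sqrt{1-\beta_2^{\lceil p/z\rceil}}+\varepsilon$, with the outer summation truncated to $p\le K^-=\min_{i,t}\overline{K}_i^t$; the corresponding $\widetilde{\alpha}_1$ is recovered by the canonical choice $\alpha_r^l=\gamma_r/[\eta_\ell K(K+1)]$ taken in the proof of Theorem~\ref{NonConvexCongergenceBoundForFederatedBlendedOptimization}.

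The main technical obstacle is to thread the two bias-correction factors $(1-\beta_1^p)$ and $\sqrt{1-\beta_2^{\lceil p/z\rceil}}$ through the $T_{2,r}^1$ decomposition without propagating looseness that would wash out the momentum-aware improvement, and to handle the asymmetric delay $z$ in the ceiling $\lceil p/z\rceil$ when collecting terms across distinct local steps. Once these two refinements are in place, Lemma~\ref{usefullemma1} (for the double sum $(1-\widetilde{\beta}_1)\sum_{t,r}\widetilde{\beta}_1^{t-r}(t-r)^2$) and Lemma~\ref{usefullemma2} (for the log-telescoping that produces $\Psi_5$) apply verbatim, and the telescoping across global rounds that produces $\Psi_1,\Psi_2,\Psi_5,\Psi_6$ is identical to that in the proof of Theorem~\ref{NonConvexCongergenceBoundForFederatedBlendedOptimization}, completing the reduction.
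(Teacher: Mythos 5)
Your high-level plan — view Algorithm~\ref{FedAdaAdam} as the single-strategy instantiation of Algorithm~\ref{FederatedBlendedOptimization} with ADMU as the client optimizer, verify the generic optimizer conditions, and then sharpen the Adam-specific constants — is exactly the reduction the paper intends ("the proof is subsumed by or analogous to" the earlier theorems). However, there is one genuine gap: the only new technical content the paper actually supplies for this corollary is Lemma~\ref{deltabound}, the ADMU-specific bound on $|\Delta_i^t|$, and your proposal never derives it. The quantity $\Phi_1^K$ appears in \emph{every} term $\Psi_2,\dots,\Psi_6$ of the claimed bound, and in the corollary it denotes the delayed-update Adam bound
$|\Xi_i^t|\bigl(\eta_\ell \overline{K}_i^t\sqrt{\sum_{r}\beta_1^{2\lceil \overline{K}_i^t/z\rceil-2r}/\beta_2^{\lceil \overline{K}_i^t/z\rceil-r}}+\overline{K}_i^tG\eta_\ell(1-\beta_1^{\overline{K}_i^t})/\varepsilon\bigr)$,
obtained by splitting local steps into those where the preconditioner is refreshed versus stale, arguing the worst case in which the refreshed-step gradients vanish (so the stale steps only see the $\varepsilon$ floor), and applying Cauchy--Schwarz to pair the $\beta_1$-weights in $m_p$ against the $\beta_2$-weights in $v_p$. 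Your reduction would instead inherit the generic blended-optimization bound $\Phi_1^K=\eta_\ell\Xi^+K(K+1)A_lG/(2m_l)=\eta_\ell BK(K+1)G/(2\varepsilon)$, which is a different (and much looser) quantity; with that substitution the statement "identical bound with $\Psi_3,\Psi_4$ replaced" is no longer the corollary as written. You need to prove the analogue of Lemma~\ref{deltabound} explicitly.

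A second, constants-level issue: your route to the $(1-\beta_1^{2K})K/2$ prefactor does not go through as stated. With the bias-corrected weights $a_{i,\ell}=(1-\beta_1)\beta_1^{p-\ell}/(1-\beta_1^p)$ one gets $\sum_{\ell=1}^p a_{i,\ell}^2=(1-\beta_1)(1-\beta_1^{2p})/[(1+\beta_1)(1-\beta_1^p)^2]$, and the residual factor $(1-\beta_1)/[(1+\beta_1)(1-\beta_1^p)^2]$ equals $1/(1-\beta_1^2)\ge 1$ at $p=1$ (about $5.3$ for $\beta_1=0.9$), so it is certainly not bounded by $1/2$; summing over $p$ therefore does not yield $(1-\beta_1^{2K})K/2$. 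The factor of $1/2$ in $\Psi_3,\Psi_4$ comes from the Young-inequality split of $T_{2,r}^1$, and the factor $(1-\beta_1^{2K})$ from bounding $\sum_{\ell}[(1-\beta_1)\beta_1^{p-\ell}]^2\le(1-\beta_1)(1-\beta_1^{2K})/(1+\beta_1)\le 1-\beta_1^{2K}$ \emph{without} the $1/(1-\beta_1^p)^2$ normalization — consistent with the corollary's $\widetilde{\gamma}_1$, whose numerator is $1-\beta_1^p=\sum_\ell(1-\beta_1)\beta_1^{p-\ell}$ rather than $1$, i.e., the bias corrections are absorbed into the denominator $\vartheta$ rather than kept in the weights. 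Your bookkeeping of $v_p\le G^2(1-\beta_2^{\lceil p/z\rceil})$, the identity $\lfloor(p-1)/z\rfloor+1=\lceil p/z\rceil$, and the resulting form of $\widetilde{\gamma}_1,\widetilde{\alpha}_1$ is otherwise consistent with the corollary.
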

The proof is subsumed by or analogous to Theorems~\ref{AdaSquareSM3NonconvexConvergenceThm} and~\ref{NonConvexCongergenceBoundForFederatedBlendedOptimization}, with changes summarized in the following lemma. 

\begin{lemma}\label{deltabound}
Under Algorithm~\ref{FedAdaAdam}, $|\Delta_i^t|$ is bounded by 
\begin{equation*}
    |\Delta_i^t| \le  \Phi_1^{\overline{K}^t_i} := |\Xi^t_i| \cdot \left(\eta_\ell \overline{K}^t_i \sqrt{\left(\sum_{r = 1}^{\lceil\frac{\overline{K}^t_i}{z}\rceil}\frac{\beta_1^{2\lceil\frac{\overline{K}^t_i}{z}\rceil-2r}}{ \beta_2^{\lceil\frac{\overline{K}^t_i}{z}\rceil - r}}\right)} + \Phi_0^{\overline{K}^t_i} \right)
\end{equation*}
where 
\begin{equation*}
    \Phi_0^{\overline{K}^t_i} :=  \frac{\overline{K}^t_i G\eta_\ell(1-\beta_1^{\overline{K}^t_i})}{\varepsilon}.
\end{equation*}
\end{lemma}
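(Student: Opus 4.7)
The plan is to emulate the strategy of Lemma~\ref{AdaSquaredeltaboundSM3} but adapted to Adam's exponential moving averages together with the delayed update structure. Since $\Delta_i^t = \Xi_i^t(x_{i,\overline{K}_i^t}^t - x_{t-1})$ and each inner step is $x_{i,k}^t = x_{i,k-1}^t - \eta_\ell \hat{m}_k/(\sqrt{\hat{v}_k}+\varepsilon)$, the first step is to telescope and apply the triangle inequality to obtain $|\Delta_i^t| \le |\Xi_i^t|\,\eta_\ell \sum_{p=1}^{\overline{K}_i^t} |\hat{m}_p|/(\sqrt{\hat{v}_p}+\varepsilon)$. Because of gradient clipping I may without loss of generality assume none of the $m_k$ are zeroed out (otherwise a term is simply dropped, preserving the bound).

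Next, I expand $m_p = (1-\beta_1)\sum_{\ell=1}^p \beta_1^{p-\ell} g_{i,\ell}^t$ and split the sum according to whether the index $\ell$ lies in the \emph{delayed update set} $\mathcal{F}_p := \{(r-1)z+1 : 1 \le r \le s_p\}$ with $s_p := \lceil p/z\rceil$, which is exactly the set of indices whose squared gradients contribute to $v_p = (1-\beta_2)\sum_{r=1}^{s_p}\beta_2^{s_p-r} g_{i,(r-1)z+1}^2$. Write $m_p = m_p^{\mathrm{in}} + m_p^{\mathrm{out}}$ accordingly. For $m_p^{\mathrm{in}}$ I apply Cauchy–Schwarz with weights chosen so that the $g^2$-factor reconstructs $v_p$: this yields $|m_p^{\mathrm{in}}| \le (1-\beta_1)\sqrt{\sum_{r=1}^{s_p}\beta_1^{2(p-(r-1)z-1)}/\beta_2^{s_p-r}}\cdot\sqrt{v_p/(1-\beta_2)}$. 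Using $\sqrt{v_p} = \sqrt{\hat{v}_p}\sqrt{1-\beta_2^{s_p}}$ and $\sqrt{\hat{v}_p}/(\sqrt{\hat{v}_p}+\varepsilon) \le 1$, the denominator cancels up to the bias-correction factor $\sqrt{(1-\beta_2^{s_p})/(1-\beta_2)}$. For $m_p^{\mathrm{out}}$ I bound $|g_{i,\ell}^t|\le G$ termwise and use $\sqrt{\hat{v}_p}+\varepsilon \ge \varepsilon$, producing a $G/\varepsilon$ factor times a partial geometric sum in $\beta_1$.

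Summing over $p = 1,\dots,\overline{K}_i^t$, the $\mathrm{out}$ contribution is controlled by $(1-\beta_1)\sum_{\ell=1}^{p}\beta_1^{p-\ell}/(1-\beta_1^p) = 1$ per step and telescopes cleanly into $\Phi_0^{\overline{K}_i^t} = \overline{K}_i^t G\eta_\ell(1-\beta_1^{\overline{K}_i^t})/\varepsilon$ after combining with the bias correction $1-\beta_1^p$. For the $\mathrm{in}$ contribution, I bound the per-$p$ Cauchy–Schwarz expression uniformly by its value at $p = \overline{K}_i^t$ (using $s_p \le s_{\overline{K}_i^t}$ and monotonicity of the summand in $p$), so the sum over $p$ contributes a factor of $\overline{K}_i^t$ in front of $\sqrt{\sum_{r=1}^{\lceil \overline{K}_i^t/z\rceil}\beta_1^{2\lceil \overline{K}_i^t/z\rceil - 2r}/\beta_2^{\lceil \overline{K}_i^t/z\rceil - r}}$. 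Assembling both pieces and factoring out $|\Xi_i^t|$ yields exactly $\Phi_1^{\overline{K}_i^t}$.

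The main obstacle is the Cauchy–Schwarz step for $m_p^{\mathrm{in}}$: the weights must be chosen precisely so that (i) the resulting $g^2$-sum reassembles $v_p$ with its $\beta_2^{s_p-r}$ weights, and (ii) the two bias-correction factors $1-\beta_1^p$ (from $\hat{m}_p$) and $1-\beta_2^{s_p}$ (from $\hat{v}_p$) cancel or are uniformly absorbed into the final constants. Everything else — triangle inequality, telescoping, bounding $m_p^{\mathrm{out}}$ via $|g|\le G$, and passing from per-$p$ expressions to their maxima at $p=\overline{K}_i^t$ — is mechanical and follows the template already set by Lemma~\ref{AdaSquaredeltaboundSM3}.
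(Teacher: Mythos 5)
Your strategy is essentially the paper's: telescope the local updates, split the first-moment numerator into the part supported on the delayed-update indices $(r-1)z+1$ (bounded via Cauchy--Schwarz with weights chosen so the $\beta_2$-weighted squared-gradient sum reassembles the preconditioner in the denominator) and the remaining indices (bounded by $|g|\le G$ over the $\varepsilon$ floor), then coarsen each per-step quantity to its value at $p=\overline{K}^t_i$ and sum. Your partition $m_p=m_p^{\mathrm{in}}+m_p^{\mathrm{out}}$ is in fact cleaner than the paper's regrouping, which replaces the numerator at update steps by a re-indexed sum with exponents $\beta_1^{\lceil p/z\rceil-r}$ and is not an exact identity.

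The one step that does not close is precisely the obstacle you name: the bias-correction factors. The paper resolves it by silently dropping them --- after writing $\hat m_p/(\sqrt{\hat v_p}+\varepsilon)$ it unrolls the update as $m_p/(\sqrt{v_p}+\varepsilon)$, with no $1/(1-\beta_1^p)$ or $1/(1-\beta_2^{\lfloor (p-1)/z\rfloor+1})$, and it also discards the residual $(1-\beta_1)/\sqrt{1-\beta_2}$ left over from Cauchy--Schwarz. If you keep the debiasing as the algorithm literally prescribes, the two corrections do not cancel: the out-part gives $(1-\beta_1)\sum_{\ell\le p}\beta_1^{p-\ell}/(1-\beta_1^p)=1$ per step, hence a total of $\overline{K}^t_i G\eta_\ell/\varepsilon$, which exceeds $\Phi_0^{\overline{K}^t_i}=\overline{K}^t_i G\eta_\ell(1-\beta_1^{\overline{K}^t_i})/\varepsilon$; and the in-part carries the factor $\sqrt{(1-\beta_2^{\lceil p/z\rceil})/(1-\beta_2)}\cdot(1-\beta_1)/(1-\beta_1^p)$, which is not uniformly bounded by $1$ (take $\beta_1$ small, $\beta_2$ near $1$, and $\lceil p/z\rceil$ large). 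So with debiasing retained you obtain a bound of the same form but with strictly larger constants; the stated $\Phi_1^{\overline{K}^t_i}$ is recovered only under the paper's convention of omitting the bias corrections. You should state explicitly which convention you adopt and, if you keep the corrections, adjust the constants accordingly.
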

\begin{proof}
    Recall that $\Delta_t = 1/|\mathcal{S}^t| \sum_{i \in \mathcal{S}^t} \Delta_i^t$ and $\Delta_i^t = \Xi_i^t \left(x_{i,\overline{K}_i^t}^t - x_{i,0}^t\right)$. By telescoping for $\overline{K}_i^t$ local steps and the definition of gradient updates in ADMU, we obtain
    \begin{equation*}
        \Delta_i^t = \sum_{p=1}^{\overline{K}_i^t} -\eta_\ell\Xi_i^t \frac{ \hat{m}_p}{\sqrt{\hat{v}_p} + \varepsilon} = -\eta_\ell \Xi_i^t\sum_{p=1}^{\overline{K}_i^t} \frac{ m_0 \beta_1^{p} + (1-\beta_1) \sum_{r = 1}^p \beta_1^{p-r} \cdot g_{i,r}^t}{\sqrt{v_0\beta_2^{\lfloor\frac{p-1}{z}\rfloor + 1} + (1-\beta_2) \sum_{r = 1}^{\lceil\frac{p}{z}\rceil} \beta_2^{\lceil\frac{p}{z}\rceil - r} (g_{i,(r-1)z+1}^t)^2} + \varepsilon}
    \end{equation*}
    We assume $m_0, v_0 \leftarrow 0$ for expository purposes, although $v_0 > 0$ also suffices for the analysis (ending in a slightly different $\Phi_1^{\overline{K}_i^t}$). This gives that  
    \begin{align*}
     \Delta_i^t &=  -\eta_\ell\Xi_i^t \sum_{p=1}^{\overline{K}_i^t} \frac{(1-\beta_1) \sum_{r = 1}^p \beta_1^{p-r} \cdot g_{i,r}^t}{\sqrt{(1-\beta_2) \sum_{r = 1}^{\lceil\frac{p}{z}\rceil} \beta_2^{\lceil\frac{p}{z}\rceil - r} (g_{i,(r-1)z+1}^t)^2}+ \varepsilon} \\
     &= -\eta_\ell\Xi_i^t \sum_{p=1}^{\overline{K}_i^t} \frac{(1-\beta_1) \sum_{r = 1}^{\lceil\frac{p}{z}\rceil} \beta_1^{\lceil\frac{p}{z}\rceil-r} \cdot g_{i,(r-1)z+1}^t}{\sqrt{(1-\beta_2) \sum_{r = 1}^{\lceil\frac{p}{z}\rceil} \beta_2^{\lceil\frac{p}{z}\rceil - r} (g_{i,(r-1)z+1}^t)^2}+ \varepsilon} \\
     &-\eta_\ell\Xi_i^t \sum_{p=1}^{\overline{K}_i^t} \frac{(1-\beta_1) \sum_{r = 1}^p \beta_1^{p-r} \cdot g_{i,r}^t \cdot \chi_{\left\{\frac{p-1}{z} \notin \mathbb{Z}\right\}} }{\sqrt{(1-\beta_2) \sum_{r = 1}^{\lceil\frac{p}{z}\rceil} \beta_2^{\lceil\frac{p}{z}\rceil - r} (g_{i,(r-1)z+1}^t)^2}+ \varepsilon}.
    \end{align*}
To obtain a deterministic bound, we cannot ignore the worst-case stochastic realization that $g_{i,(r-1)z+1}^t = 0$ for $\forall r \in [\lceil\frac{p}{z}\rceil]$. Therefore, we form the intermediary upper bound
\begin{align}
   \left|\Delta_i^t\right| &\le \eta_\ell |\Xi_i^t| \sum_{p=1}^{\overline{K}_i^t} \frac{(1-\beta_1) \sum_{r = 1}^{\lceil\frac{p}{z}\rceil} \beta_1^{\lceil\frac{p}{z}\rceil-r} \cdot \left|g_{i,(r-1)z+1}^t\right|}{\sqrt{(1-\beta_2) \sum_{r = 1}^{\lceil\frac{p}{z}\rceil} \beta_2^{\lceil\frac{p}{z}\rceil - r} (g_{i,(r-1)z+1}^t)^2}+ \varepsilon} \nonumber \\
   &+ \frac{\eta_\ell|\Xi_i^t|(1-\beta_1)}{\varepsilon} \left(\sum_{p=1}^{\overline{K}_i^t}\sum_{r = 1}^p \beta_1^{p-r} \cdot \left|g_{i,r}^t\right| \cdot \chi_{\left\{\frac{p-1}{z} \notin \mathbb{Z}\right\}} \right).\label{intermediary}
\end{align}
Note that the first term is $0$ in the worst-case scenario above, which implies that any non-negative upper bound is trivially satisfied. Therefore, we may assume without loss of generality that at least one sampled gradient $g_{i,(r-1)z+1}^t$ is nontrivial and remove $\varepsilon$ from the denominator to obtain an upper bound. By Cauchy-Schwartz, we have  
\begin{equation*}
\left(\sum_{r = 1}^{\lceil\frac{p}{z}\rceil} \beta_2^{\lceil\frac{p}{z}\rceil - r} (g_{i,(r-1)z+1}^t)^2\right) \left(\sum_{r = 1}^{\lceil\frac{p}{z}\rceil}\frac{\beta_1^{2\lceil\frac{p}{z}\rceil-2r}}{ \beta_2^{\lceil\frac{p}{z}\rceil - r}}\right) \ge \left(\sum_{r = 1}^{\lceil\frac{p}{z}\rceil} \beta_1^{\lceil\frac{p}{z}\rceil-r} \cdot \left|g_{i,(r-1)z+1}^t\right|\right)^2
\end{equation*}
which implies
\begin{align*}
   \left|\Delta_i^t\right| &\le \eta_\ell|\Xi_i^t| \sum_{p=1}^{\overline{K}_i^t} \sqrt{\left(\sum_{r = 1}^{\lceil\frac{p}{z}\rceil}\frac{\beta_1^{2\lceil\frac{p}{z}\rceil-2r}}{ \beta_2^{\lceil\frac{p}{z}\rceil - r}}\right)} + \frac{\eta_\ell|\Xi_i^t|(1-\beta_1)}{\varepsilon} \left(\sum_{p=1}^{\overline{K}_i^t}\sum_{r = 1}^p \beta_1^{p-r} \cdot \left|g_{i,r}^t\right| \cdot \chi_{\left\{\frac{p-1}{z} \notin \mathbb{Z}\right\}} \right) \\
   &\le \eta_\ell|\Xi_i^t| \sum_{p=1}^{\overline{K}_i^t} \sqrt{\left(\sum_{r = 1}^{\lceil\frac{p}{z}\rceil}\frac{\beta_1^{2\lceil\frac{p}{z}\rceil-2r}}{ \beta_2^{\lceil\frac{p}{z}\rceil - r}}\right)} + \frac{{\overline{K}_i^t}G\eta_\ell|\Xi_i^t|(1-\beta_1)}{\varepsilon} \cdot \frac{(1-\beta_1^{\overline{K}_i^t})}{(1-\beta_1)} \\
   &\le \eta_\ell |\Xi_i^t|{\overline{K}_i^t} \sqrt{\left(\sum_{r = 1}^{\lceil\frac{\overline{K}_i^t}{z}\rceil}\frac{\beta_1^{2\lceil\frac{\overline{K}_i^t}{z}\rceil-2r}}{ \beta_2^{\lceil\frac{\overline{K}_i^t}{z}\rceil - r}}\right)} + \frac{{\overline{K}_i^t}G\eta_\ell|\Xi_i^t|(1-\beta_1^{\overline{K}_i^t})}{\varepsilon}. 
   \end{align*}
\end{proof}
It can be shown that case of no update delay $z=1$ allows for $\Phi_0^{\overline{K}_i^t} = 0$, following a similar proof to the one given above. Note that $\Phi_0^{\overline{K}_i^t}$ handles the superfluous gradient terms cemented by delaying preconditioner updates for the second moment, while moving averaging is performed for the first moment estimate. It also follows that $\Delta_t$ is also upper bounded by the identical bound scaled by $\max_{t} \|\Xi^t\|_\infty \le B$, as the average of the $\Delta_i^t$. 

\section{AdaGrad with Delayed Updates (AGDU)}\label{ADAGRADappendix}
We present AdaGrad with delayed preconditioner as Algorithm~\ref{AGDU} for completeness.
\begin{algorithm}
\caption{AdaGrad with Delayed Updates (AGDU)}\label{AGDU}
\begin{algorithmic}[1] 
\REQUIRE $\eta_\ell$: Step size
\REQUIRE $z \in \mathbb{Z}_{\ge 1}$: Step delay for second moment estimate updates (where $z =1$ gives no delay)
\REQUIRE $f(x)$: Stochastic objective function with parameters $x$
\REQUIRE $x_0$: Initial parameter vector
\REQUIRE $\varepsilon >0 $: Smoothing term
\STATE Initialize $v_0 \leftarrow 0$ (2nd moment vector)
\STATE Initialize $t \leftarrow 0$ (Timestep)
\WHILE{not converged}
    \STATE $t \leftarrow t + 1$
    \STATE $g_t \leftarrow \nabla_{x}f_t(x_{t-1})$ 
    \IF{$(t-1)/z \in \mathbb{Z}$}
    \STATE $v_t \leftarrow v_{t-1} + g_t^2$ 
    \ELSE 
    \STATE $v_t \leftarrow v_{t-1}$
    \ENDIF
    \STATE $x_t \leftarrow x_{t-1} - \eta_\ell \cdot g_t / (\sqrt{v_t} + \varepsilon)$ 
\ENDWHILE
\STATE \bf{return} $x_t$ 
\end{algorithmic}
\end{algorithm}

Note that due to delayed updates, local gradient updates are not necessarily elementwise bounded in absolute value by $\eta_\ell$. We may expand the delayed updates for $v_t$ as
\begin{align}
 v_t &= v_0 +  \sum_{r = 1}^{\lceil\frac{t}{z}\rceil} g_{(r-1)z+1} \odot g_{(r-1)z+1}.\nonumber
\end{align}

\begin{algorithm}
\caption{Adaptive server and client-side ADAGRAD (FedAdaAdagrad)}\label{AdaSquare}
\begin{algorithmic}[1]
\REQUIRE Update delay step size $z \in \mathbb{Z}_{\ge 1}$, initializations $x_0, \widetilde{v}_{0} \ge \tau^2$ and $\widetilde{m}_{0} \leftarrow 0$
\REQUIRE Global decay parameter $\widetilde{\beta}_1 \in [0, 1)$
\REQUIRE Pseudogradient weighting schedule $\Xi^1\times\dots \times\Xi^T \in \mathbb{R}^{|\mathcal{S}^1|}\times\dots \times \mathbb{R}^{|\mathcal{S}^T|}$ for $\|\Xi^t\|_\infty\le B$
\REQUIRE Client epoch schedule $\overline{K}^1 \times \dots \times \overline{K}^T \in \mathbb{Z}^{|\mathcal{S}^1|}_{\ge 1} \times \dots \times \mathbb{Z}^{|\mathcal{S}^T|}_{\ge 1}$ for $\|\overline{K}^t\|_\infty \le K $, $\forall t \in [T]$
\REQUIRE Local epsilon smoothing term $\varepsilon_s > 0$, global smoothing term $\tau > 0$
\FOR{$t = 1, \dots, T$}
    \STATE Sample subset $\mathcal{S}^t \subset [N]$ of clients
    \FOR{each client $i \in \mathcal{S}^t$ (in parallel)}
        \STATE $x_{i,0}^t \leftarrow x_{t-1}$
        \STATE Initialize $v_{0} \ge 0$ with default value $v_{0} \leftarrow 0$
        \FOR{$k = 1, \dots, \overline{K}^t_i$}
            \STATE Draw stochastic gradient $g^t_{i,k} \sim \mathcal{D}(x^t_{i,k-1})$ with mean $\nabla F_i(x^t_{i,k-1}) \in \mathbb{R}^d$
            \STATE $m_k \leftarrow g^t_{i,k}$ 
            \IF{$(k-1)/z \in \mathbb{Z}$}
            \STATE $v_k \leftarrow v_{k-1} + g^t_{i,k}\odot g^t_{i,k}$  
            \ELSE 
            \STATE $v_k \leftarrow v_{k-1}$
            \ENDIF
            \IF{$0<\|m_k / (\sqrt{v_k} + \epsilon)\| < \varepsilon_s$}
            \STATE $m_k\leftarrow 0$
            \ENDIF
            \STATE $x_{i,k}^t \leftarrow x_{i,k-1}^t - \eta_\ell \cdot m_k / (\sqrt{v_k} + \epsilon)$ 
        \ENDFOR
        \STATE $\Delta_i^t = \Xi^t_{i} \left(x_{i,\overline{K}^t_i}^{t} - x_{t-1}\right)$
    \ENDFOR
    \STATE $\Delta_t = \frac{1}{|\mathcal{S}^t|} \sum_{i \in \mathcal{S}^t} \Delta_i^t$
    \STATE $\widetilde{m}_t = \widetilde{\beta}_1 \widetilde{m}_{t-1} + (1 - \widetilde{\beta}_1) \Delta_t$
    \STATE $\widetilde{v}_t = \widetilde{v}_{t-1} + \Delta_t^2$ 
    \STATE $x_{t} = x_{t-1} + \eta \frac{\widetilde{m}_t}{\sqrt{\widetilde{v}_t} + \tau}$
\ENDFOR
\end{algorithmic}
\end{algorithm}
We have the following convergence bound. 
\begin{corollary}
Let $K^- := \min_{i,t} \overline{K}_i^t \ge 1$ and
\begin{equation*}
    \widetilde{\gamma}_1:= \eta_\ell \varepsilon_w \sum_{p=1}^{K^-} \frac{1}{\sqrt{v_0+\lceil\frac{K}{z}\rceil G^2} + \varepsilon}, \quad \widetilde{\alpha}_1:= \frac{ \varepsilon_w K^-}{2K\left(\sqrt{v_0+\lceil\frac{K}{z}\rceil G^2} + \varepsilon\right)}. 
\end{equation*}
Then Algorithm~\ref{AdaSquare} has an identical convergence bound to Theorem~\ref{AdaSquareSM3NonconvexConvergenceThm}.
\end{corollary}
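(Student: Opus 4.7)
The plan is to piggyback on the proof of Theorem~\ref{SM3FedAda2NonConvexConvergeThm}, since Algorithm~\ref{AdaSquare} is structurally identical to Algorithm~\ref{AdaSquareSM3} modulo (i) removing the SM3 coordinate-cover compression of the second moment and (ii) allowing per-client pseudogradient weights $\Xi^t_i$ and variable epoch counts $\overline{K}^t_i$. Since SM3's $v_k(j)$ is constructed to upper-bound the true AdaGrad second moment, both the local pseudogradient bound and the downstream $L$-smoothness expansion can be imported essentially verbatim, once the matching $\widetilde{\gamma}_1$ and $\widetilde{\alpha}_1$ constants are identified.

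First, I would establish the analogue of Lemma~\ref{AdaSquaredeltaboundSM3}: with delayed updates, $v_k(j) = v_0(j) + \sum_{r=1}^{\lceil k/z \rceil}(g_{i,(r-1)z+1}^t(j))^2$ exactly, so the $\log^{1/2}$ bound from the SM3 derivation goes through unchanged, yielding the same $\Phi_1^K$ up to the weight factor $|\Xi_i^t|\le B$. Lemma~\ref{AdaSquareServerStepSizeBoundSM3} transfers without modification for the server-side bound $\Phi_2^K$. Next, I would replicate the decomposition $f(x_t)\le f(x_{t-1})+\eta T_{0,0}+(1-\widetilde{\beta}_1)\eta\sum_r T_{0,r}+\text{second-order}$ and split $T_{0,r}=T_{1,r}+T_{2,r}$ exactly as in the SM3 proof; $T_{1,r}$ is bounded identically because the server-side AdaGrad update rule is the same.

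The only place that changes is the definition of $\gamma_r$ inside the $T_{2,r}^1$ expansion. For AdaGrad without momentum, unraveling $\Delta_r$ gives
\begin{equation*}
    \gamma_r \;=\; \frac{\eta_\ell}{|\mathcal{S}^r|}\sum_{i\in\mathcal{S}^r}\Xi^r_i\sum_{p=1}^{\overline{K}^r_i}\frac{1}{\sqrt{v_p}+\varepsilon},
\end{equation*}
and since $v_p\le v_0+\lceil K/z\rceil G^2$ by Assumption~2 and the delayed-update formula, and $\Xi^r_i\ge\varepsilon_w$, $\overline{K}^r_i\ge K^-$, one obtains $\gamma_r\ge\widetilde{\gamma}_1$ with precisely the stated value. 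Setting $\alpha_r=\gamma_r/(2\eta_\ell K)$ then yields $\alpha_r\ge\widetilde{\alpha}_1$ as stated, and the cross-term telescoping arguments (Lemmas~\ref{usefullemma1},~\ref{usefullemma2}) are inherited verbatim. Collecting terms and invoking the same $\min_{t\in[T]}$ extraction as in the SM3 proof reproduces the bound of Theorem~\ref{AdaSquareSM3NonconvexConvergenceThm} with the claimed $\widetilde{\gamma}_1,\widetilde{\alpha}_1$.

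The main subtlety, and the one place where care is needed, is handling the $\Xi^t_i$ weights and the nonuniform $\overline{K}^t_i$ consistently on both sides of the $\gamma_r$ inequality: the upper bound $\|\Xi^t\|_\infty\le B$ feeds into $\Phi_1^K$ (enlarging $\Psi_3,\Psi_4,\Psi_5$), while the lower bound $\Xi^t_i\ge\varepsilon_w$ and $\overline{K}^t_i\ge K^-$ feeds into $\widetilde{\gamma}_1,\widetilde{\alpha}_1$ (enlarging the denominator $\Psi_6$ correctly). Once these asymmetric bounds are tracked, the recursive $\log$-style inequality from the SM3 proof closes the argument without further work, and the Corollary follows.
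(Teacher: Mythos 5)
Your proposal is correct and follows essentially the same route as the paper: it reuses the proof of Theorem~\ref{SM3FedAda2NonConvexConvergeThm} wholesale, changing only the deterministic bound on $\Delta_i^t$ and the ranges of $\gamma_r$ and $\alpha_r$ (with $\alpha_r=\gamma_r/(2\eta_\ell K)$ recovering exactly the stated $\widetilde{\alpha}_1$, and the asymmetric use of $B$ versus $\varepsilon_w$, $K^-$ handled as you describe). The one immaterial deviation is that the paper's Lemma~\ref{AdaSquaredeltabound} bounds each summand over $\mathcal{F}$ trivially by $1$, giving $\Phi_1^{K}=\eta_\ell B\left(\lfloor(K-1)/z\rfloor+1+KG/(\sqrt{v_0}+\varepsilon)\right)$, whereas you carry over the Cauchy--Schwarz/logarithm bound from the SM3 lemma; both are valid upper bounds of order $\Theta(\eta_\ell)$, so the resulting convergence statement is the same.
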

Similar to delayed Adam, the proof is analogous to Theorem~\ref{AdaSquareSM3NonconvexConvergenceThm} with changes summarized in the following lemma. 

\begin{lemma}\label{AdaSquaredeltabound}
Under Algorithm~\ref{AdaSquare}, $|\Delta_i^t|$ is bounded by 
\begin{equation*}
    |\Delta_i^t| \le  \Phi_1^{K} := \eta_\ell B \left( \left\lfloor\frac{K-1}{z}\right\rfloor + 1 + \frac{KG}{\sqrt{v_0}+\varepsilon} \right).
\end{equation*}
\end{lemma}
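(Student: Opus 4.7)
The bound is essentially a direct calculation once the pseudogradient is telescoped. By definition of the update rule in Algorithm~\ref{AdaSquare}, and since $m_k = g_{i,k}^t$ (or $0$ after clipping, which only shrinks the step), we have
\begin{equation*}
\Delta_i^t \;=\; \Xi_i^t\bigl(x_{i,\overline{K}_i^t}^t - x_{i,0}^t\bigr) \;=\; -\,\eta_\ell\,\Xi_i^t \sum_{k=1}^{\overline{K}_i^t} \frac{g_{i,k}^t}{\sqrt{v_k}+\varepsilon}.
\end{equation*}
So after the triangle inequality and using $|\Xi_i^t|\le B$ and $\overline{K}_i^t \le K$, the task reduces to bounding, coordinatewise, $\sum_k |g_{i,k}^t|/(\sqrt{v_k}+\varepsilon)$. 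The key structural observation, which drives the entire argument, is that $v_k$ only updates at indices in the set $\mathcal{F}:=\{1,z+1,2z+1,\dots\}$, so the two cases $k\in\mathcal{F}$ and $k\notin\mathcal{F}$ must be handled separately.

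For $k\in\mathcal{F}$, unrolling the recursion gives coordinatewise $v_k \ge v_0 + (g_{i,k}^t)^2$ since $g_{i,k}^t$ itself is incorporated into $v_k$ at that step. Hence each term satisfies the scale-invariant bound
\begin{equation*}
\frac{|g_{i,k}^t|}{\sqrt{v_k}+\varepsilon} \;\le\; \frac{|g_{i,k}^t|}{\sqrt{(g_{i,k}^t)^2}+\varepsilon} \;\le\; 1,
\end{equation*}
and the number of such indices in $[1,\overline{K}_i^t]$ is exactly $\lfloor(\overline{K}_i^t-1)/z\rfloor + 1 \le \lfloor(K-1)/z\rfloor + 1$. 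For $k\notin\mathcal{F}$, the preconditioner is stale: $v_k=v_{k-1}$, and this $v_{k-1}$ only accumulates gradients at earlier $\mathcal{F}$-indices, each of which could a priori be zero in the worst stochastic realization. So we can only use the deterministic lower bound $v_k \ge v_0$, which combined with Assumption~2 yields
\begin{equation*}
\frac{|g_{i,k}^t|}{\sqrt{v_k}+\varepsilon} \;\le\; \frac{G}{\sqrt{v_0}+\varepsilon}.
\end{equation*}
There are at most $K$ such indices (a slight over-count, since strictly we could use $K - \lfloor(K-1)/z\rfloor - 1$, but the stated bound relaxes this to $K$).

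Summing the two contributions and multiplying by $\eta_\ell B$ gives exactly the claimed inequality. The main obstacle, and the only place genuine care is needed, is the second case: one might be tempted to use the earlier gradients in $\mathcal{F}$ to sharpen the denominator, but since the bound is required to hold deterministically for every realization (including the adversarial all-zero one at prior $\mathcal{F}$-steps), only $v_0$ is safely available. Finally, the gradient-clipping line $m_k\leftarrow 0$ whenever $\|m_k/(\sqrt{v_k}+\varepsilon)\|<\varepsilon_s$ can only decrease $|\Delta_i^t|$, so it is automatically consistent with the upper bound and needs no separate treatment.
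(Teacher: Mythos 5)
Your proof is correct and follows essentially the same route as the paper's: the identical telescoping of $\Delta_i^t$, the same split of indices into the preconditioner-update set $\mathcal{F}$ (where $v_k \ge v_0 + (g_{i,k}^t)^2$ yields a per-term bound of $1$, with exactly $\lfloor(\overline{K}_i^t-1)/z\rfloor+1$ such terms) versus the stale steps (where the adversarial all-zero realization at prior $\mathcal{F}$-steps forces the fallback $v_k \ge v_0$ and the bound $G/(\sqrt{v_0}+\varepsilon)$), and the same relaxed count of $K$ for the stale terms. Your remark that the clipping step $m_k \leftarrow 0$ only shrinks the update, so it requires no separate treatment, is also consistent with how the paper handles it.
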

\begin{proof}
    Recall that $\Delta_t = 1/|\mathcal{S}^t| \sum_{i \in \mathcal{S}^t} \Delta_i^t$ and $\Delta_i^t = \Xi_i^t \left(x_{i,\overline{K}_i^t}^t - x_{i,0}^t\right)$. By telescoping for $\overline{K}_i^t$ local steps and the definition of gradient updates in FedAdaAdagrad, we obtain
    \begin{equation*}
        \Delta_i^t = \sum_{p=1}^{\overline{K}_i^t} -\eta_\ell\Xi_i^t \frac{m_p}{\sqrt{v_p} + \varepsilon} = -\eta_\ell \Xi_i^t\sum_{p=1}^{\overline{K}_i^t} \frac{ g_{i,p}^t}{\sqrt{v_0 + \sum_{r = 1}^{\lceil\frac{p}{z}\rceil} (g_{i,(r-1)z+1}^t)^2} + \varepsilon}
    \end{equation*}
For $\mathcal{F} = \{0, 1, \dots, \lfloor (\overline{K}_i^t-1)/z\rfloor \}z + 1$, we thus have that  
    \begin{align*}
     \Delta_i^t 
     &= -\eta_\ell\Xi_i^t \sum_{p\in \mathcal{F}} \frac{g_{i,p}^t}{\sqrt{v_0 + \sum_{r = 1}^{\lceil\frac{p}{z}\rceil} (g_{i,(r-1)z+1}^t)^2}+ \varepsilon} \\
     &-\eta_\ell\Xi_i^t \sum_{p\in [\overline{K}_i^t]\setminus \mathcal{F}} \frac{g_{i,p}^t}{\sqrt{v_0 + \sum_{r = 1}^{\lceil\frac{p}{z}\rceil}  (g_{i,(r-1)z+1}^t)^2}+ \varepsilon}.
    \end{align*}
To obtain a deterministic bound, we cannot ignore the worst-case stochastic realization that $g_{i,(r-1)z+1}^t = 0$ for $\forall r \in [\lceil\frac{p}{z}\rceil]$. Therefore, we form the upper bound 
\begin{align}
   \left|\Delta_i^t\right| &\le \eta_\ell |\Xi_i^t| \sum_{p\in \mathcal{F}} \frac{|g_{i,p}^t|}{\sqrt{v_0 + |g_{i,p}^t|^2 + \sum_{r = 1}^{\lceil\frac{p}{z}\rceil-1} (g_{i,(r-1)z+1}^t)^2}+ \varepsilon} \nonumber \\
   &+ 
   \frac{\eta_\ell|\Xi_i^t|}{\sqrt{v_0}+\varepsilon} \left( \sum_{p\in [\overline{K}_i^t]\setminus \mathcal{F}} \left|g_{i,p}^t\right|\right)\\
   &\le \eta_\ell |\Xi_i^t|\left( \left\lfloor\frac{K-1}{z}\right\rfloor + 1\right)  +  \frac{\eta_\ell|\Xi_i^t|KG}{\sqrt{v_0}+\varepsilon} \nonumber
\end{align}
where the last line uses that the local epoch schedules are upper bounded by $K$. Noting that $\|\Xi_i^t\|_\infty \le B,$ we are done. 
\end{proof}

\section{Datasets, Models, and Baselines}\label{datset}

Below, we summarize the dataset statistics and provide a more in-depth description. 
\begin{table}[h]
\centering
\caption{Summary of datasets and models.}\label{summarytable}
\label{tab:dataset_summary}
\resizebox{\textwidth}{!}{%
\begin{tabular}{@{}lccccc@{}}
\toprule
Datasets & \# Devices & Non-IID Partition & Model & Tasks \\ \midrule
StackOverflow~\citep{stackoverflow2021}
& 400 & Natural & Logistic Regression & 500-Class Tag Classification  \\
CIFAR-100~\citep{Krizhevsky2009LearningML} & 1000 & LDA & ViT-S & 100-Class Image Classification \\
GLD-23K~\citep{49052} & 233 & Natural & ViT-S & 203-Class Image Classification \\
FEMNIST~\citep{LEAF} & 500 & Natural & ViT-S & 62-Class Image Classification \\
\bottomrule
\end{tabular}
}
\end{table}

\subsection{StackOverflow Dataset}
The StackOverflow dataset~\citep{stackoverflow2021} is a language dataset composed of questions and answers extracted from the StackOverflow online community. Each data entry includes associated metadata such as tags (e.g., ``python"), the time the post was created, the title of the question, the score assigned to the question, and the type of post (question or answer). The dataset is partitioned by users, with each client representing an individual user and their collection of posts. This dataset exhibits significant imbalance, with some users contributing only a few posts while others have a much larger number of entries. In this paper, we work with a randomly selected 400-client subset of the full StackOverflow Dataset, with a client participation fraction of $0.1$.

\begin{figure}[H]
    \begin{subfigure}[b]{0.246\textwidth}
        \centering
        \includegraphics[width=\textwidth]{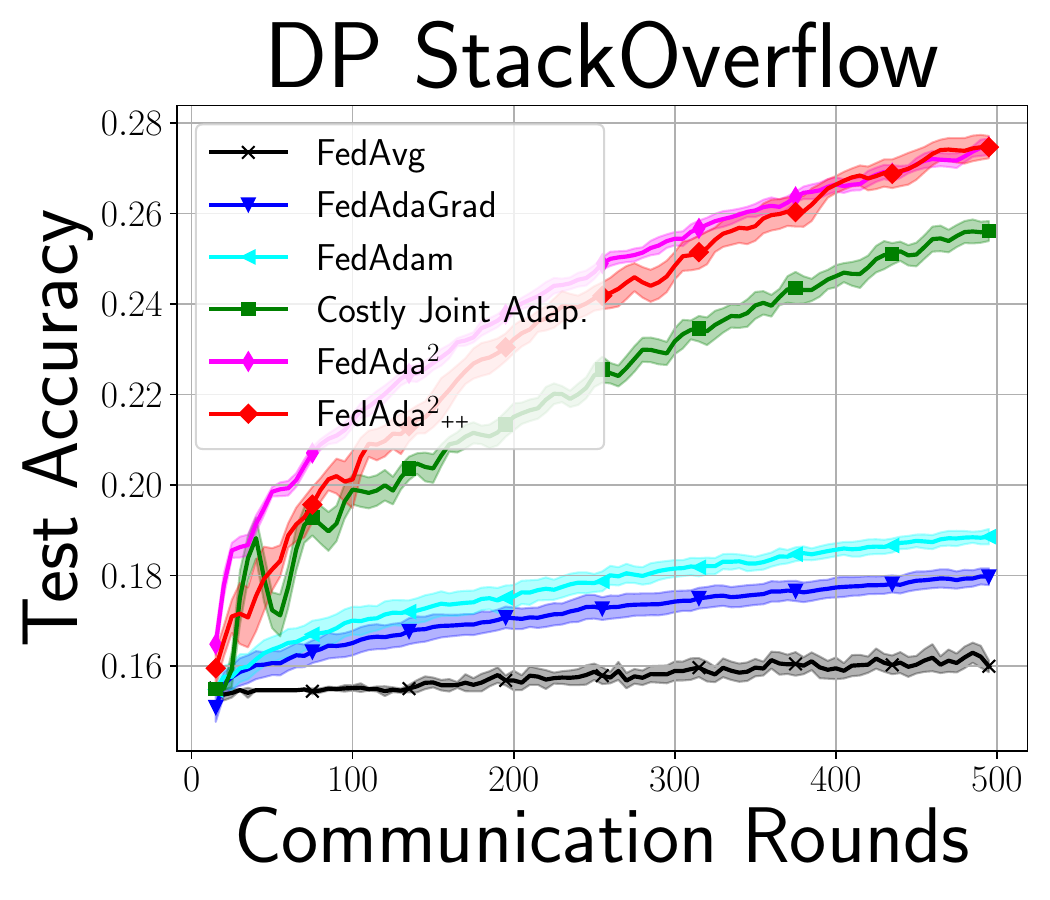}
    \end{subfigure}
    \begin{subfigure}[b]{0.246\textwidth}
        \centering
        \includegraphics[width=\textwidth]{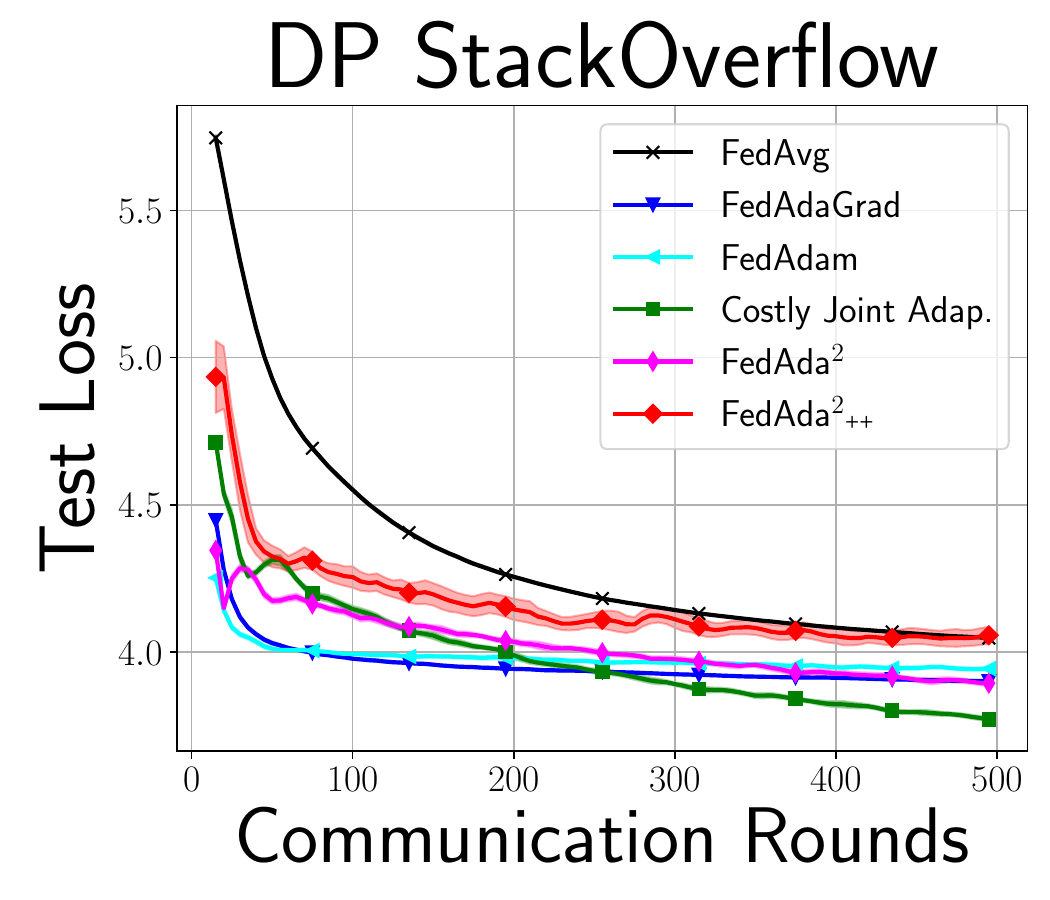}
    \end{subfigure}
    \begin{subfigure}[b]{0.246\textwidth}
        \centering
        \includegraphics[width=\textwidth]{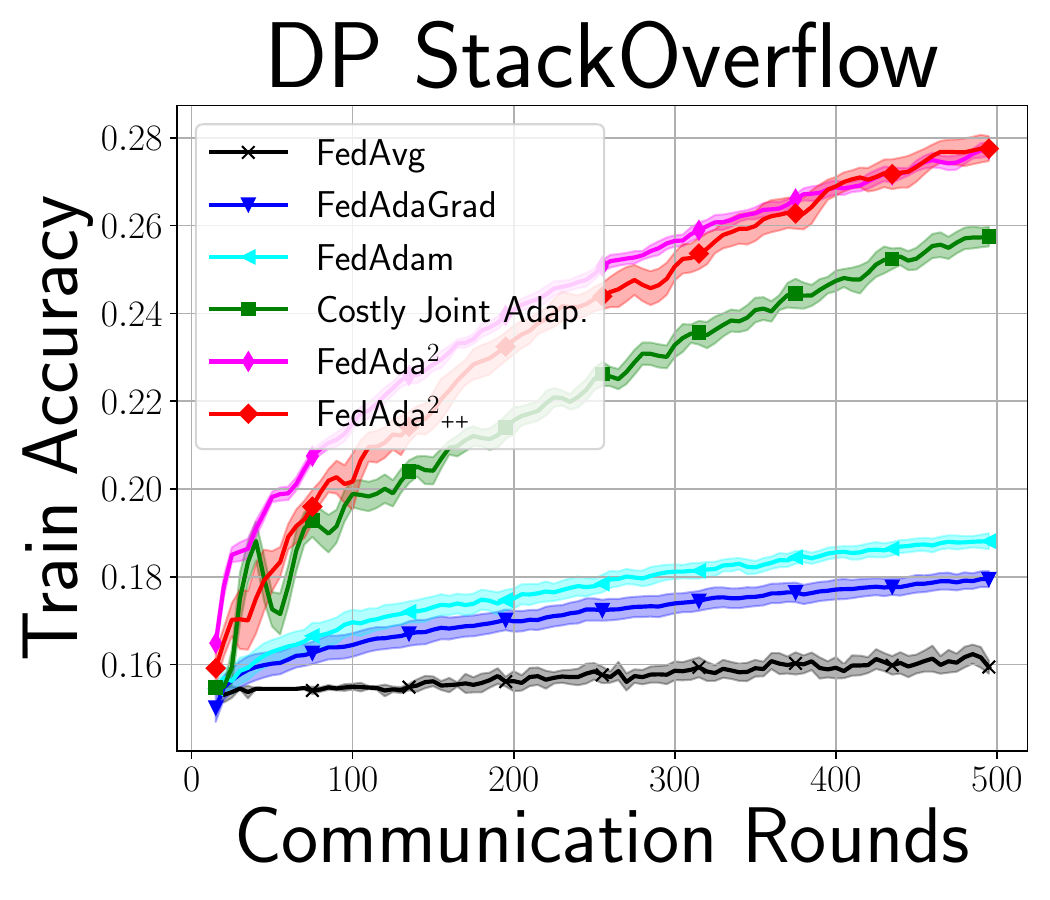}
    \end{subfigure}
    \begin{subfigure}[b]{0.246\textwidth}
        \centering
        \includegraphics[width=\textwidth]{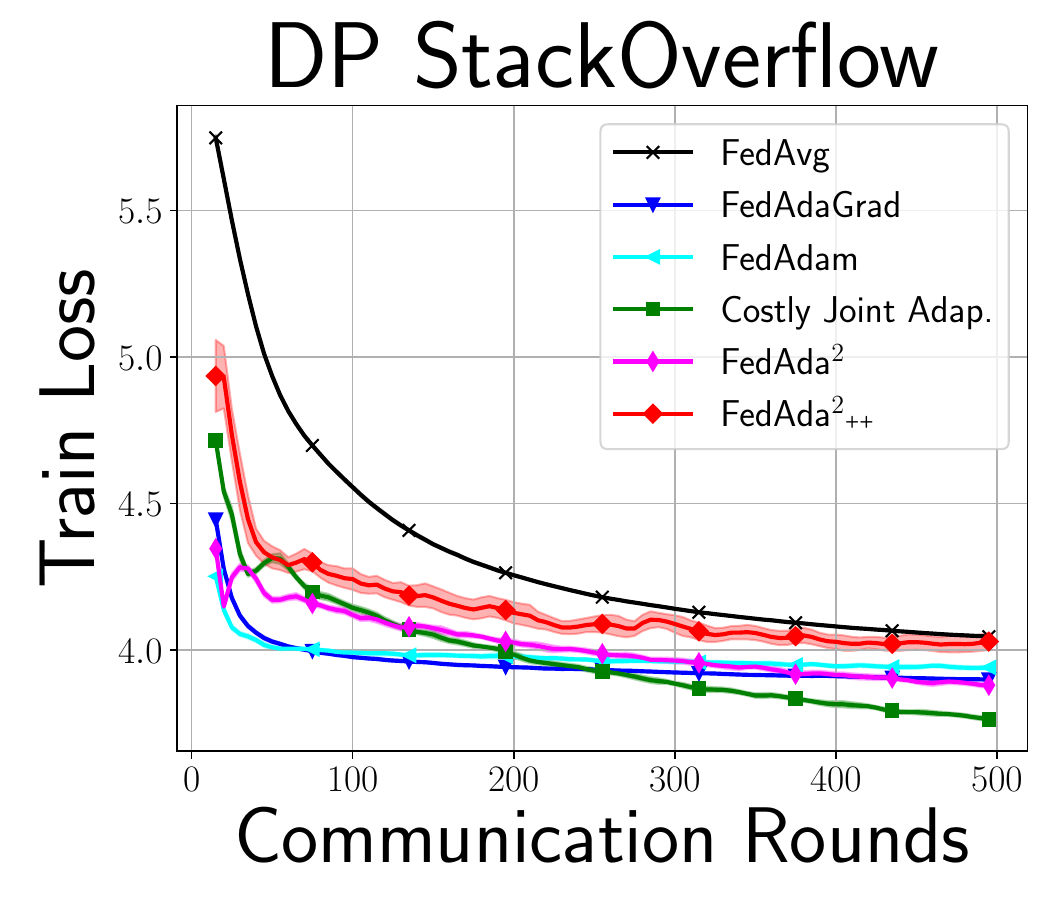}
    \end{subfigure}
    \caption{Identical setting as Figure~\ref{AlgorithmPerformanceFigure}, except that 1 local epoch is taken instead of 5. Additionally, the adaptive baseline algorithms employ the Adam optimizer instead of AdaGrad. We observe that jointly adaptive methods significantly outperform server-only adaptive methods, which in turn outperform non-adaptive FedAvg. The setting studies the noise multiplier $\sigma = 1$, with a privacy budget of $(\varepsilon,\delta) = (13.1,0.0025)$ with optimal R\'{e}nyi-Differential Privacy (RDP)~\citep{RDP} order $2.0$. }
\end{figure}

\subsection{GLD-23K Dataset}
The GLD-23K dataset is a subset of the
GLD-160k dataset introduced in~\cite{49052}. It contains 23,080 training images, 203 landmark labels, and 233 clients. Compared to CIFAR-10/100, the landmarks dataset consists of images of far higher quality and resolution, and therefore represents a more challenging learning task. The client participation fraction for all GLD-23K experiments are set to $0.01$. 

\begin{figure}[H]
    \begin{subfigure}[b]{0.246\textwidth}
        \centering
        \includegraphics[width=\textwidth]{5convex.pdf}
    \end{subfigure}
    \begin{subfigure}[b]{0.246\textwidth}
        \centering
        \includegraphics[width=\textwidth]{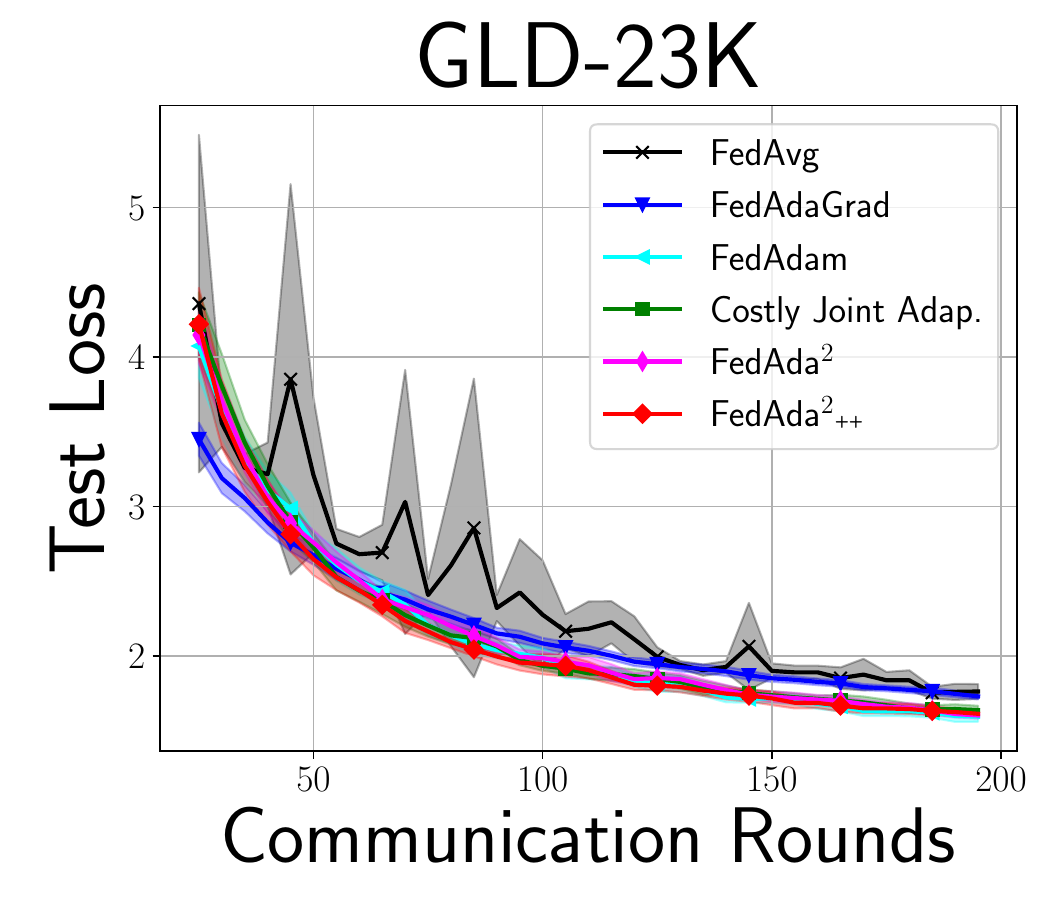}
    \end{subfigure}
    \begin{subfigure}[b]{0.246\textwidth}
        \centering
        \includegraphics[width=\textwidth]{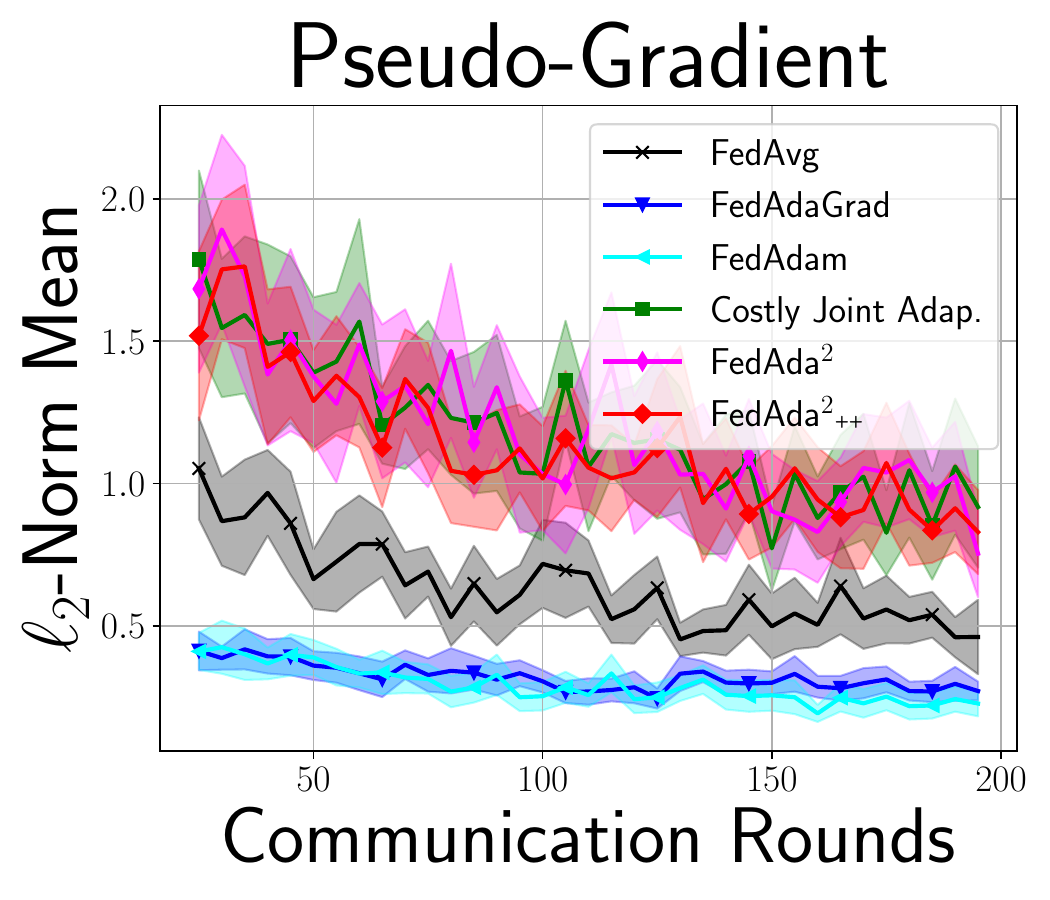}
    \end{subfigure}
    \begin{subfigure}[b]{0.246\textwidth}
        \centering
        \includegraphics[width=\textwidth]{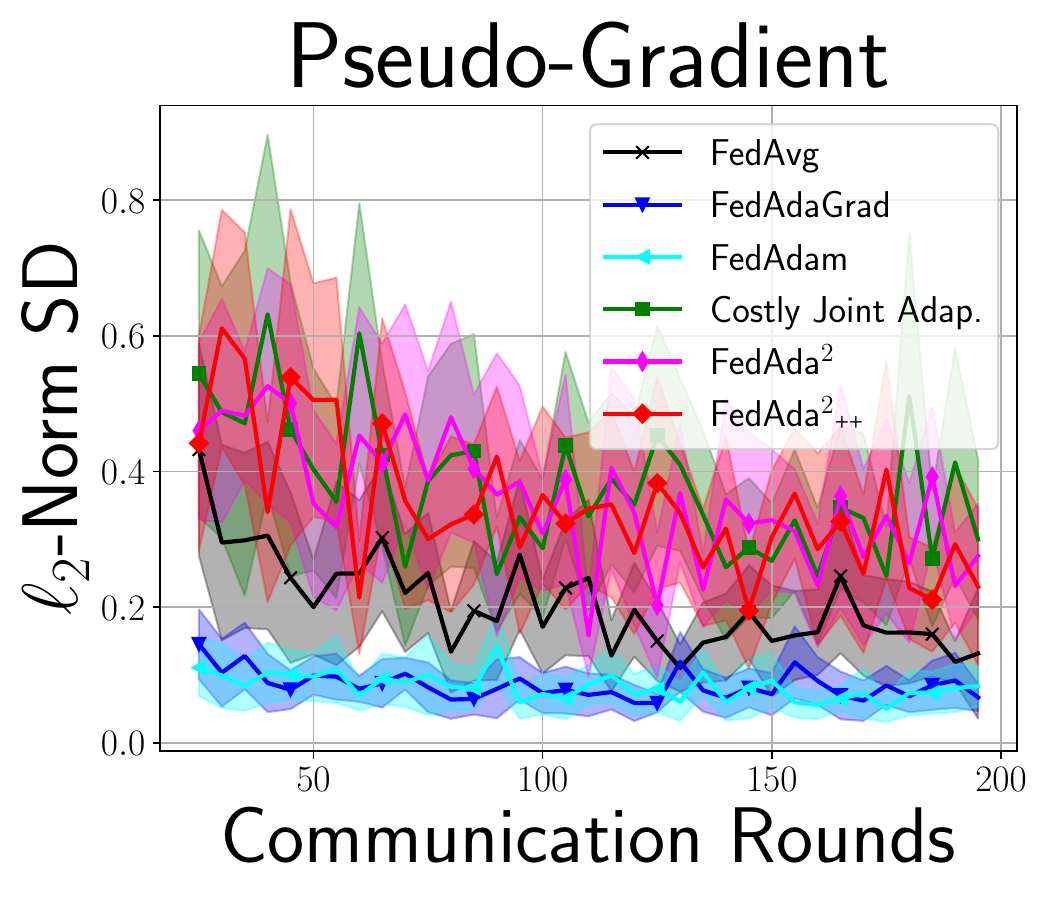}
    \end{subfigure}
        \begin{subfigure}[b]{0.246\textwidth}
        \centering
        \includegraphics[width=\textwidth]{gld5testacc.pdf}
    \end{subfigure}
    \begin{subfigure}[b]{0.246\textwidth}
        \centering
        \includegraphics[width=\textwidth]{gld5testloss.pdf}
    \end{subfigure}
    \begin{subfigure}[b]{0.246\textwidth}
        \centering
        \includegraphics[width=\textwidth]{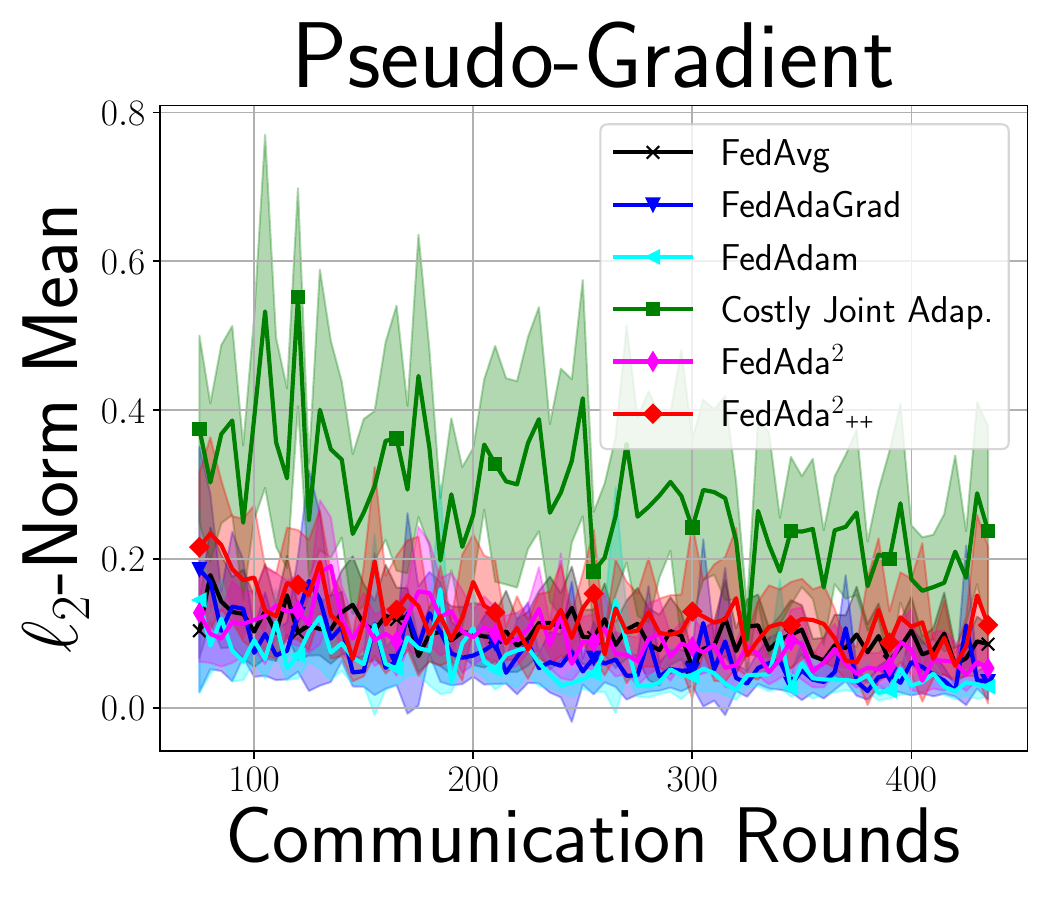}
    \end{subfigure}
    \begin{subfigure}[b]{0.246\textwidth}
        \centering
        \includegraphics[width=\textwidth]{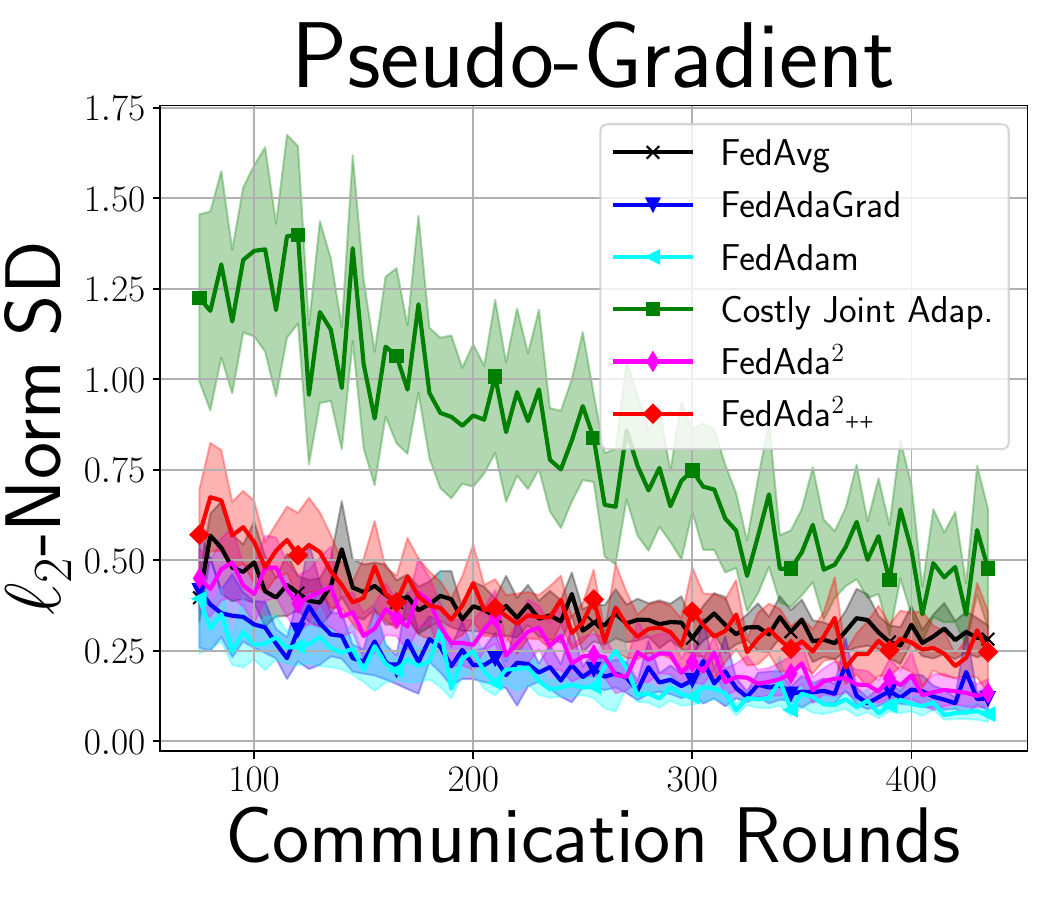}
    \end{subfigure}
    \caption{(Top) Additional results for the experiments in Figure~\ref{HyperparamEffectFigure} (b), where clients train over 5 epochs. (Bottom) Analogous experiments for full fine-tuning, where the entire net is unfrozen after replacing the classification layer. All adaptive optimizers are instantiated with Adam, with the exception of FedAdaGrad where the server-side adaptive optimizer is AdaGrad.}
    \label{GLD23K_Appendix_Figure}
\end{figure}

\subsection{CIFAR-100 Dataset}
The CIFAR-10/100 datasets~\citep{Krizhevsky2009LearningML} consist of 32 × 32 × 3 images. In the smaller variant CIFAR-10, there are 10 labels, with 50,000 training images and 10,000 test images. The 10 classes represent common objects: airplanes, automobiles, birds, cats, deer, dogs, frogs, horses, ships, and trucks. CIFAR-100 is meant to be an extension of CIFAR-10, consisting of 60,000 color images, but with 100 classes instead of 10. Each class in CIFAR-100 contains 600 images, and the dataset is similarly split into 50,000 training images and 10,000 test images. Unlike CIFAR-10, every class in CIFAR-100 is subsumed by one of 20 superclasses, and each image is provided a fine label and a coarse label that represents the former and latter (super-)class. In this paper, we train and evaluate all algorithms against the fine label. In Figure~\ref{CIFAR_Appendix_Figure}, we show the convergence of \name as compared to all other adaptive or non-adaptive benchmarks using CIFAR-100. 

\begin{figure}[H]
    \begin{subfigure}[b]{0.246\textwidth}
        \centering
        \includegraphics[width=\textwidth]{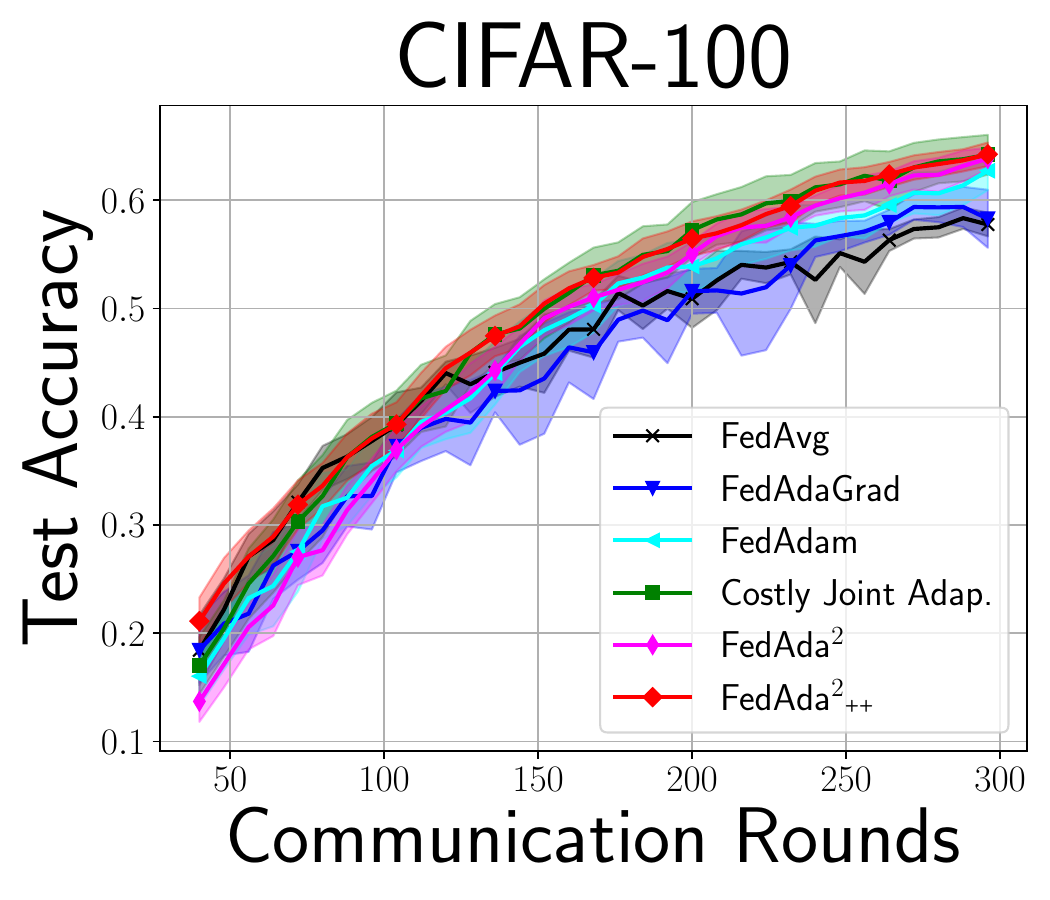}
    \end{subfigure}
    \begin{subfigure}[b]{0.246\textwidth}
        \centering
        \includegraphics[width=\textwidth]{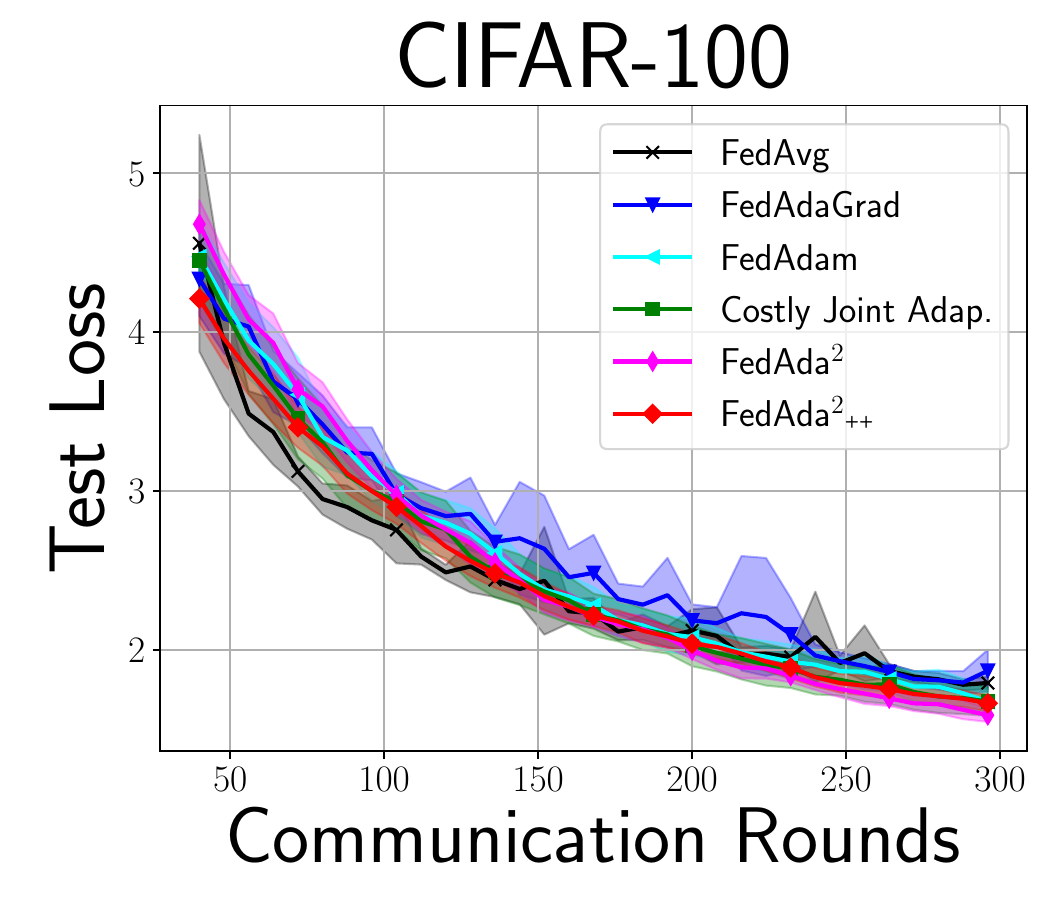}
    \end{subfigure}
    \begin{subfigure}[b]{0.246\textwidth}
        \centering
        \includegraphics[width=\textwidth]{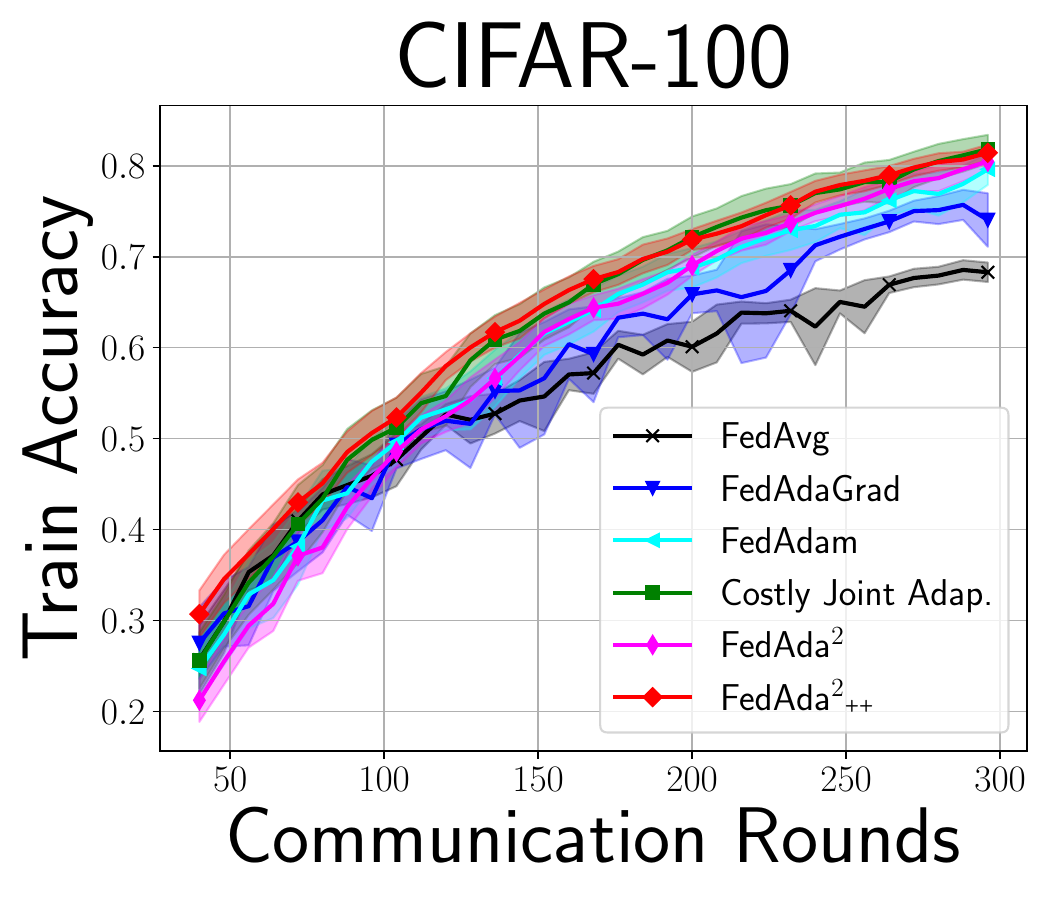}
    \end{subfigure}
    \begin{subfigure}[b]{0.246\textwidth}
        \centering
        \includegraphics[width=\textwidth]{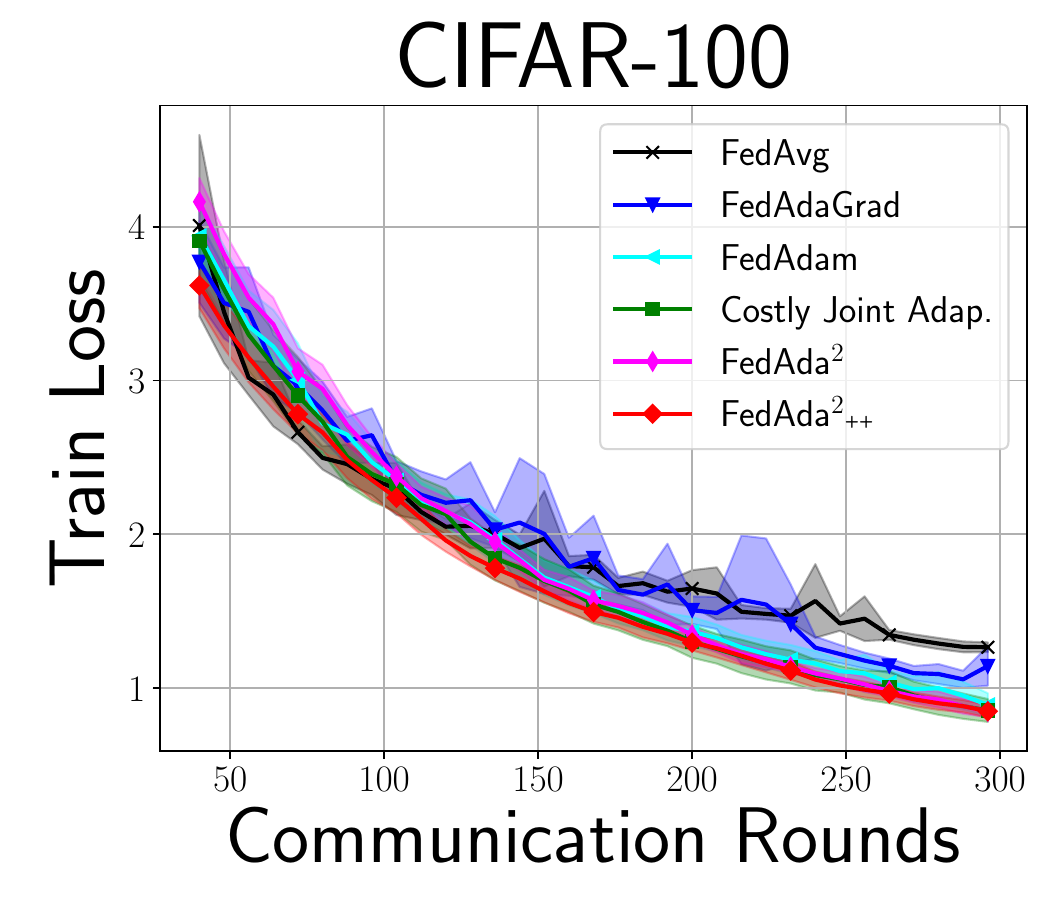}
    \end{subfigure}
    \caption{Training and testing accuracies of optimal hyperparameters for CIFAR-100. At each logging step, train/test accuracy and loss evaluation is done over \textit{all} of training and testing data, disjointly, resulting in robust and similar-looking curves. Averaged over $20$ random seeds for better convergence. Adaptive optimizer instantiation conventions are identical with Figure~\ref{GLD23K_Appendix_Figure}. Though minimal, jointly adaptive baselines (Costly Joint Adaptivity, \name, \nameplus) outperform server-only adaptive baselines (FedAdam, FedAdaGrad) and non-adaptive FedAvg.}
    \label{CIFAR_Appendix_Figure}
\end{figure}

\subsection{FEMNIST Dataset}

The FEMNIST dataset~\citep{LEAF} extends the MNIST dataset~\cite{lecun1998mnist} to include both digits and letters, comprising 62 unbalanced classes and a total of 805,263 data points. It is specifically designed for federated learning research, featuring a natural, non-IID partitioning of data. Each user in the dataset corresponds to a distinct writer who contributed to the original EMNIST dataset, capturing the individuality of handwriting styles. This user-level segmentation provides a realistic federated learning setting, simulating scenarios where data is distributed heterogeneously across clients. FEMNIST serves as a benchmark for evaluating the performance of federated learning algorithms under non-IID conditions, emphasizing challenges such as personalization and robustness to client heterogeneity.

\begin{figure}[H]
    \begin{subfigure}[b]{0.246\textwidth}
        \centering
        \includegraphics[width=\textwidth]{facctest.pdf}
    \end{subfigure}
    \begin{subfigure}[b]{0.246\textwidth}
        \centering
        \includegraphics[width=\textwidth]{flosstest.pdf}
    \end{subfigure}
    \begin{subfigure}[b]{0.246\textwidth}
        \centering
        \includegraphics[width=\textwidth]{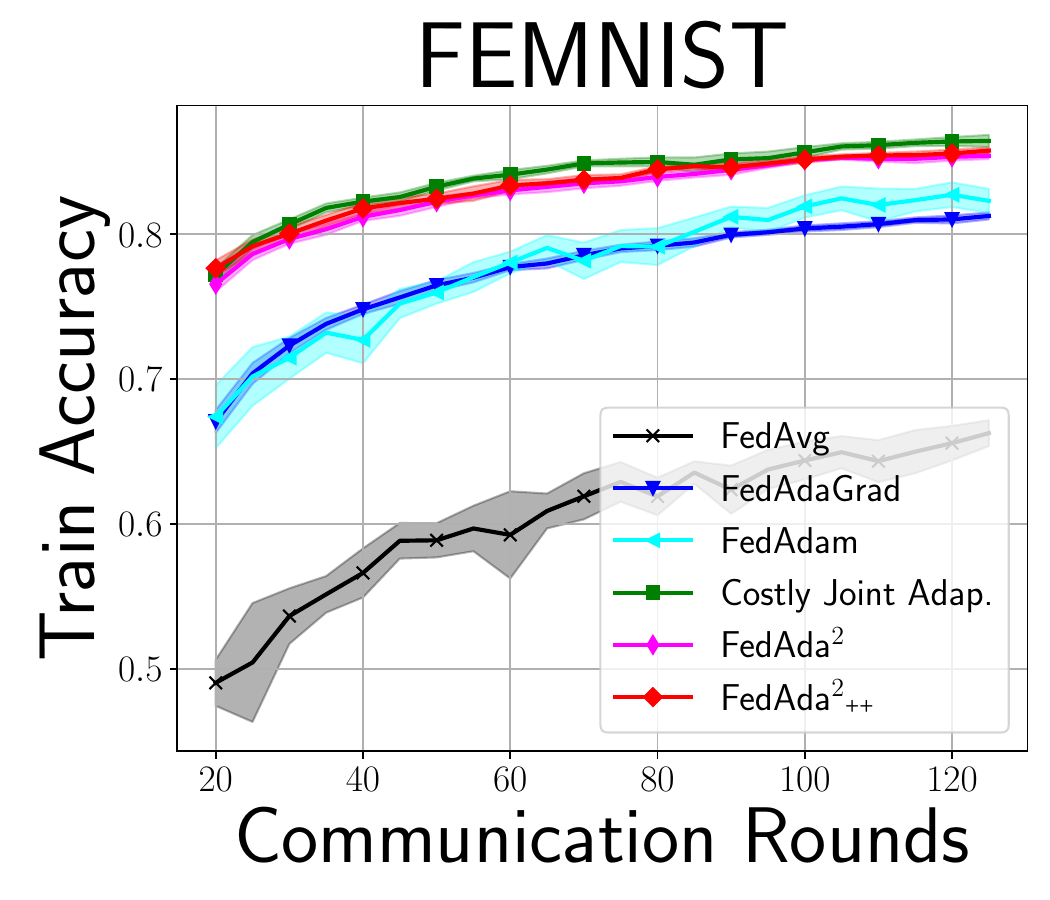}
    \end{subfigure}
    \begin{subfigure}[b]{0.246\textwidth}
        \centering
        \includegraphics[width=\textwidth]{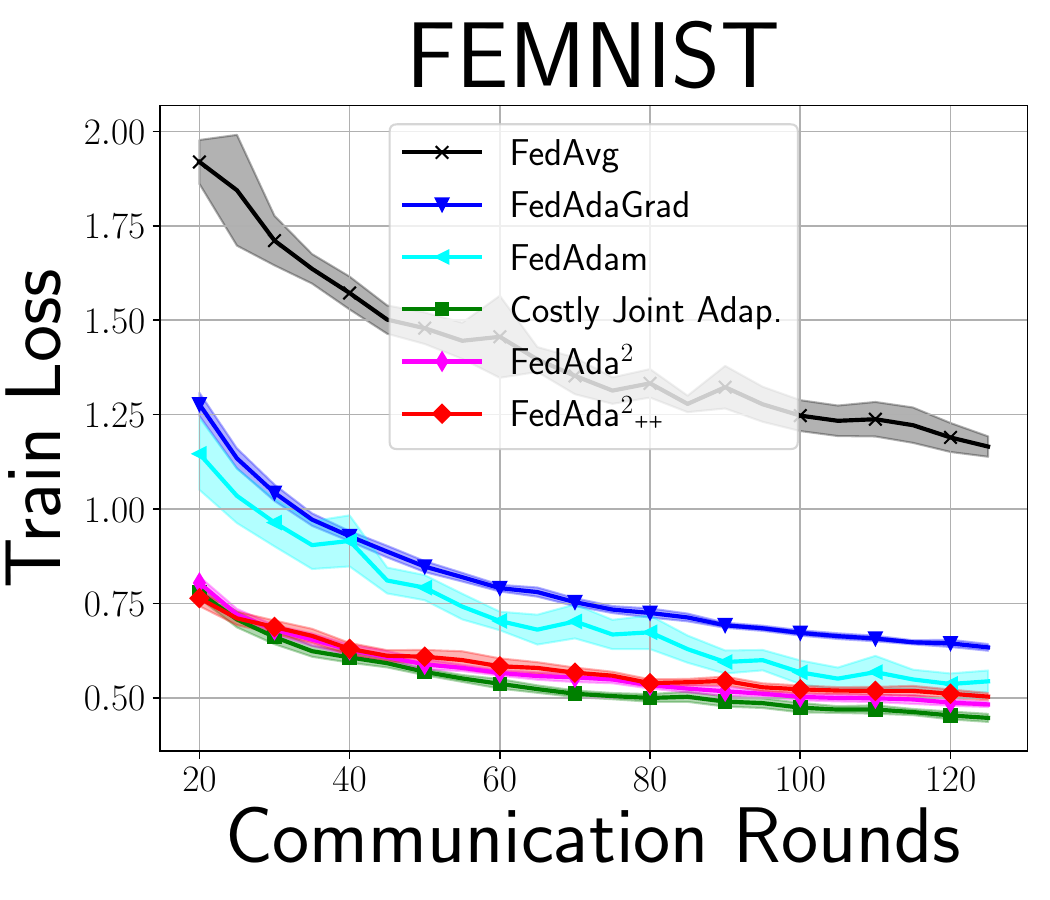}
    \end{subfigure}
        \begin{subfigure}[b]{0.246\textwidth}
        \centering
        \includegraphics[width=\textwidth]{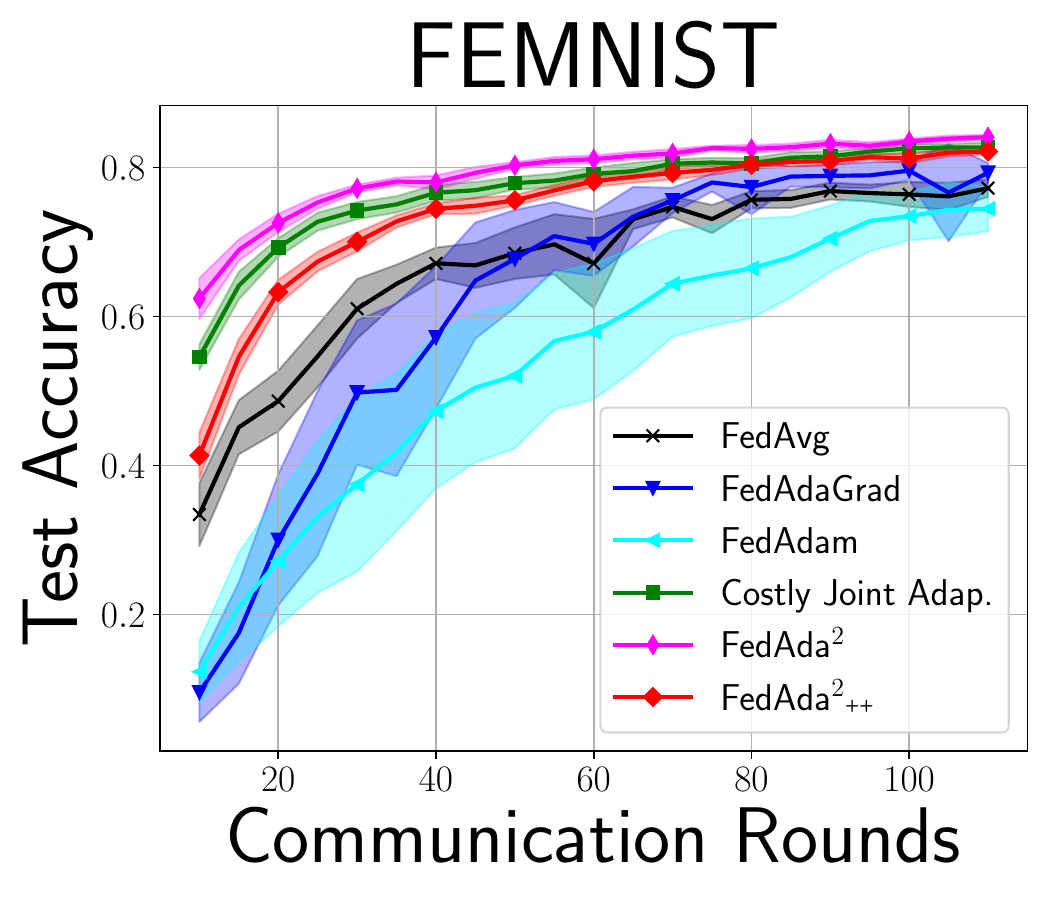}
    \end{subfigure}
    \begin{subfigure}[b]{0.246\textwidth}
        \centering
        \includegraphics[width=\textwidth]{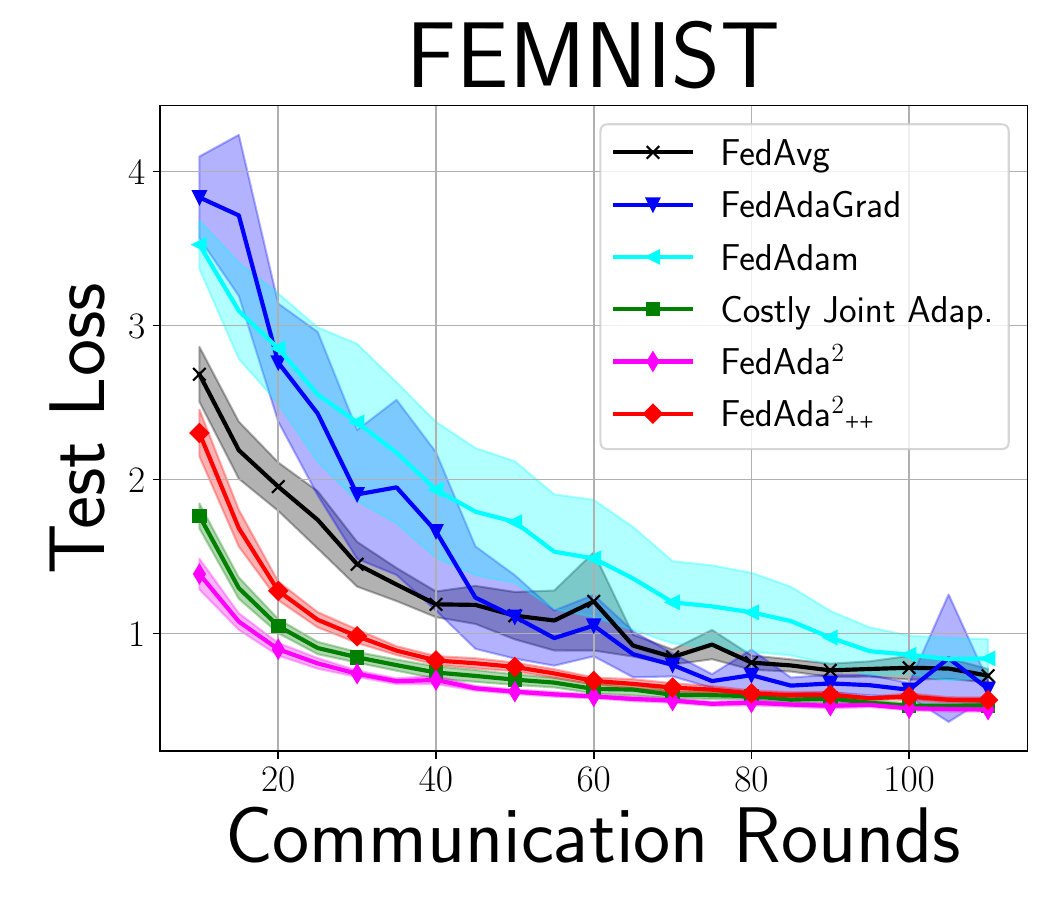}
    \end{subfigure}
    \begin{subfigure}[b]{0.246\textwidth}
        \centering
        \includegraphics[width=\textwidth]{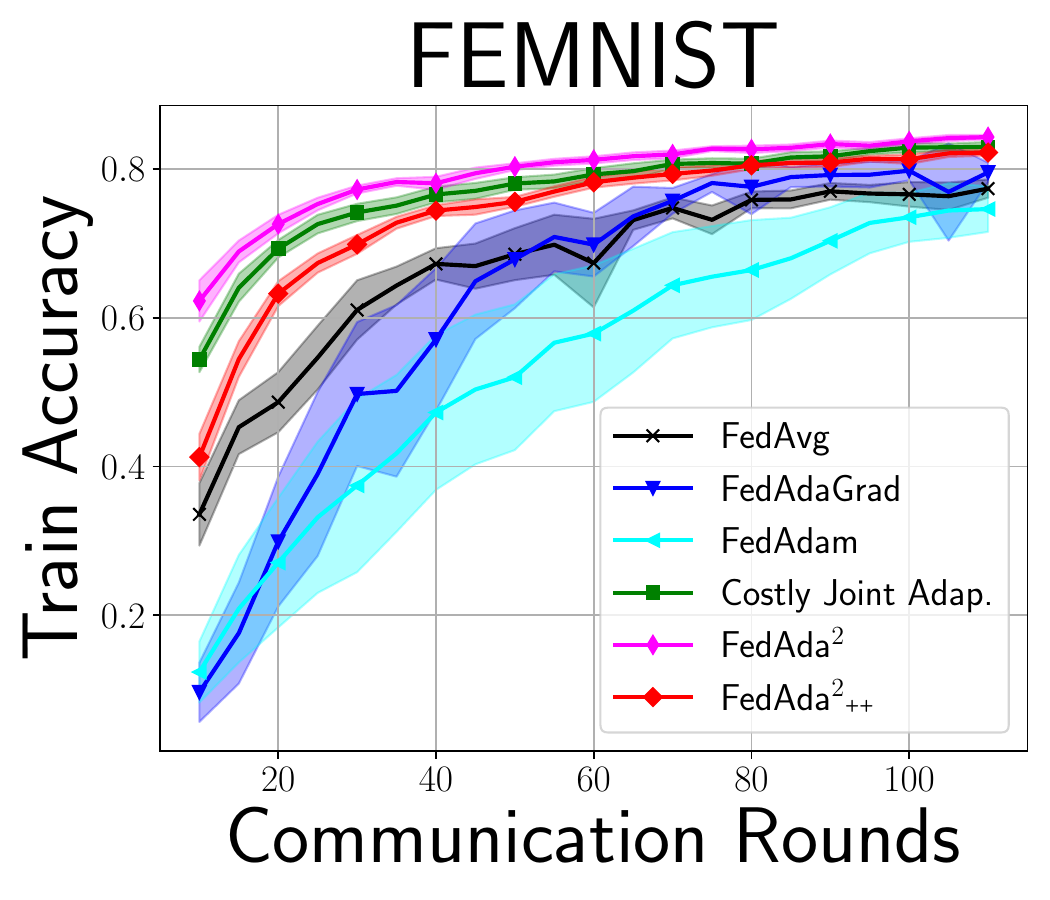}
    \end{subfigure}
    \begin{subfigure}[b]{0.246\textwidth}
        \centering
        \includegraphics[width=\textwidth]{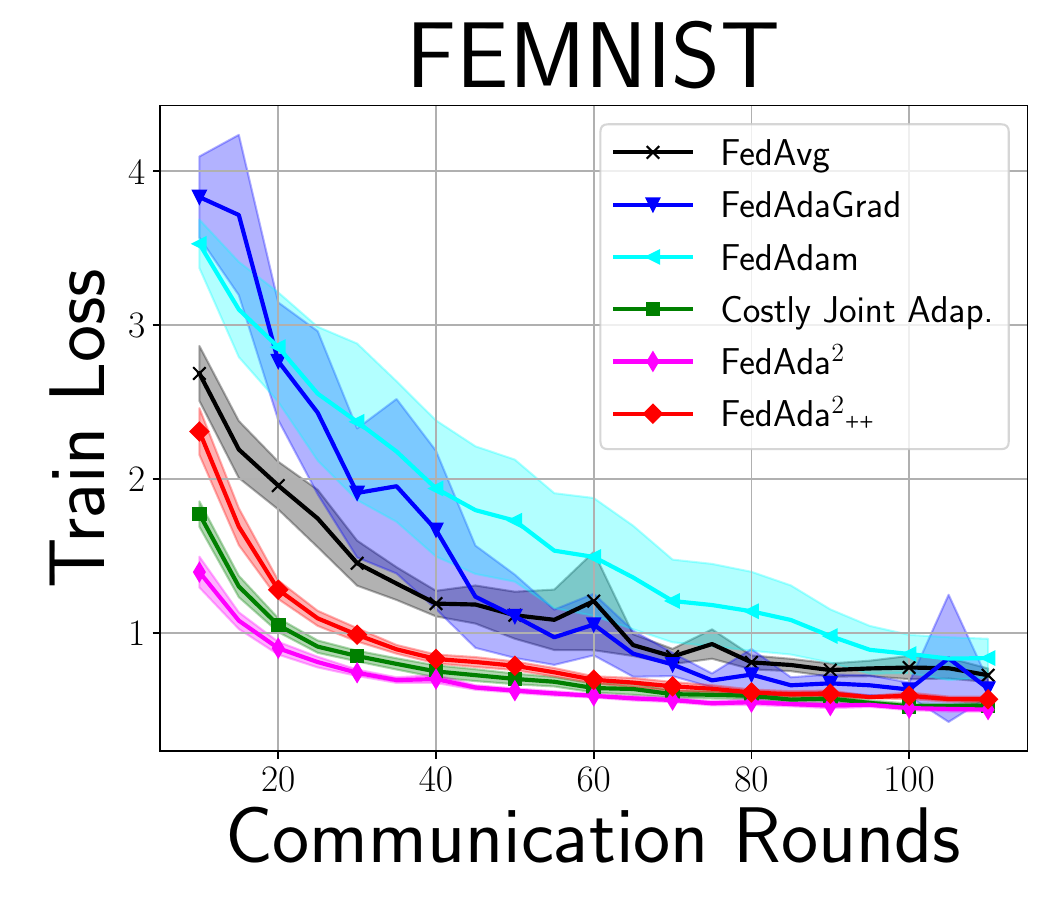}
    \end{subfigure}
    \caption{Training and testing accuracies of optimal hyperparameters for FEMNIST, with $0.5\%$ participation (2 clients per round). Averaged over $20$ random seeds for clearer convergence. Adaptive optimizer instantiation conventions are identical with Figure~\ref{GLD23K_Appendix_Figure}, where jointly adaptive optimizing paradigms use Adam due to better performance. We see that FedAvg is the least robust, both in terms of stability (i.e., confidence interval region) and final performance. By contrast, adding server-side adaptivity greatly strengthens the performance, and introducing client-side adaptive optimization further enhances the speed of convergence as well as test-time accuracy. We see that removing preconditioner transmission, and compressing client-side gradient statistics to save on-device memory as in \name, does not detract from the performance of joint adaptivity. The top row displays the results when each client takes 5 local epochs prior to server synchronization, where the gap between the jointly adaptive and non-adaptive baselines is more pronounced. The bottom row gives the results for 1 local epoch. }
    \label{FEMNIST_Appendix_Figure}
\end{figure}

\subsection{Descriptions of Baselines}\label{DescriptionsofBaselines}

In the original FedAvg algorithm introduced by \cite{mcmahan2017communication}, the server-side aggregation is performed without any additional momentum, relying solely on simple averaging. On the other hand, algorithms like FedAdaGrad and FedAdam represent examples of server-only adaptive approaches \citep{AdaptiveFederatedOptimization}, where the server employs adaptive optimizers such as AdaGrad or Adam instead of vanilla averaging. We note that server-only adaptive frameworks such as FedAdam and FedAdaGrad are optimizer-specific instantiations of FedOpt~\citep{AdaptiveFederatedOptimization}, a competitive framework that has been utilized in recent works to develop leading applications (e.g., by Google Deepmind to develop DiLoCo~\citep{DiLoCo,DiLoCoAsynchronous,OpenDiLoCo}). The concept of `Costly Joint Adaptivity' (Costly Joint Adap.) refers to a training paradigm where the server's adaptive preconditioners are shared with clients during each communication round. An example of this is the AdaGrad-AdaGrad setup used as a differential privacy baseline in the StackOverflow task, where the server-side AdaGrad preconditioners are applied to client-side AdaGrad optimizers, guiding client model updates.

Alternatively, by eliminating the transmission of server-side preconditioners and initializing client-side preconditioners to zero, we derive the 'Joint Adaptivity without Preconditioner Communication' (Joint Adap. w/o Precond. Commu.) baseline, which is more communication-efficient. Further, compressing local preconditioners to align with client memory constraints leads to the development of \name. Thus, \name and the various baselines can be viewed as logically motivated extensions, incorporating adaptive updates and memory-efficient strategies. We provide comprehensive evaluations of all 15 algorithms (including 12 jointly adaptive methods tailored to each adaptive optimizer, 2 server-only adaptive methods, and 1 non-adaptive method) in Section \ref{ExperimentsAndDisucussion} and in the Appendix \ref{datset}, \ref{AdditionalExperimentResults}.

For the ViT model for instance, we require just 0.48\% memory to store the second moment EMA compared to the full gradient statistic during preconditioning when using SM3.

\section{Hyperparameter Selection}\label{hp}

\subsection{Hyperparameters for DP StackOverflow}\label{AppendixDPStackOverflow}
We use a subsampling rate of $0.1$, for a total of $400$ clients and $500$ communication rounds. We investigate the setting of noise multiplier $\sigma = 1$, which provides a privacy budget of $(\varepsilon,\delta) = (13.1,0.0025)$ with optimal R\'{e}nyi-Differential Privacy (RDP) order $2.0$. We sweep over the following hyperparameters:
\begin{align*}
c &\in \left\{0.1, 0.5, 1\right\}, \\
\eta_l &\in \left\{0.001, 0.01, 0.1, 0.5, 1\right\}, \\
\eta_s &\in \left\{0.001, 0.01, 0.1, 0.5, 1\right\}, \\
\tau_l &\in \left\{10^{-7}, 10^{-5}, 10^{-3}\right\}, \\
\tau_s &\in \left\{10^{-7}, 10^{-5}, 10^{-3}\right\},
\end{align*}
where $c$ is the gradient clip value. Here, $\eta_l, \eta_s$ indicates the client and server learning rates, while $\tau_l, \tau_s$ represents their respective adaptivity parameters. In the case of singular adaptivity, we ignore the irrelevant terms (i.e. client adaptivity parameter for FedAdaGrad). For FedAvg only, we select best hyperparameters using the expanded local learning rate grid
\begin{equation*}
\eta_l \in \left\{0.001, 0.01, 0.1, 0.5, 1, 5, 20, 40, 80, 160\right\}.
\end{equation*}
The optimal hyperparameters are summarized in Table~\ref{stackhyperparams}, which were chosen based on optimal test accuracy over a running average of the last 10 logged datapoints. In Figure~\ref{HyperparamEffectFigure} (bottom), we see that adaptive optimization on either the client or server induces varying model training dynamics. Notably, we see in our experiments that for this privacy budget, removing preconditioners from jointly adaptive systems supercedes the performance of costly joint adaptivity. Compressing client adaptive preconditioning (\name) reduces the performance slightly, but still performs the best among all other baselines.  
\begin{table}[H]
    \centering
     \caption{Best performing hyperparameters for DP StackOverflow with $\sigma = 1$}\label{stackhyperparams}
    \resizebox{\textwidth}{!}{%
    \begin{tabular}{lcccccc}
        \toprule
        & \textbf{FedAvg} & \textbf{FedAdaGrad} & \textbf{Costly Joint Adap.} & \textbf{Joint Adap. w/o Precond. Commu.} & \textbf{\name} \\
        \midrule
        $c$ & 1.0 & 0.1 & 0.5 & 0.5 & 0.1 \\
        $\eta_s$ & N/A & 1.0 & 1.0 & 1.0 & 1.0 \\
        $\eta_l$ & 20.0 & 1.0 & 1.0 & 0.1 & 0.1 \\
        $\tau_s$ & N/A & 1e-3 & 1e-3 & 1e-5 & 1e-5 \\
        $\tau_l$ & N/A & N/A & 1e-3 & 1e-3 & 1e-3 \\
        \bottomrule
    \end{tabular}
    }
\end{table}

\subsection{Hyperparameters for Image Datasets}\label{AppendixSweepForImageDatasets}
For all ViT experiments, images were resized to 224 × 224 pixels, and the client optimizer employed a linear learning rate warm-up, increasing from $0$ to the final value over the first 10 local backpropagation steps. The local batch size was consistently set to 32 across all datasets used in this paper. Due to better empirical performance, Adam was selected as the main optimizer strategy for ViT fine-tuning against the image datasets. We utilized prior work~\citep{AdaptiveFederatedOptimization} as well as small-scale experiments regarding server-only adaptivity to guide the selection of the momentum parameters $\beta_1 = 0.9$, $\beta_2 = 0.999$ for server Adam. The identical parameters were selected for client Adam, and better choices may exist for either the server or client. In order to determine suitable learning rates and adaptivity parameters, we conduct extensive hyperparameter sweeps using a two-step procedure. 
\paragraph{(Step 1)}
The first step involved a symmetric sweep over the values 
\begin{align*}
\eta_l &\in \left\{0.001, 0.01, 0.1, 0.5, 1, 5, 20\right\}, \\
\eta_s &\in \left\{0.001, 0.01, 0.1, 0.5, 1, 5, 20\right\}, \\
\tau_l &\in \left\{10^{-9}, 10^{-7}, 10^{-5}, 10^{-3}\right\}, \\
\tau_s &\in \left\{10^{-9}, 10^{-7}, 10^{-5}, 10^{-3}\right\}. 
\end{align*}
Similar to the StackOverflow case, $\eta_l, \eta_s$ indicates the client and server learning rates, while $\tau_l, \tau_s$ represents their respective adaptivity parameters. For FedAvg only, we probe over the expanded grid
\begin{equation*}
    \eta_l \in \left\{0.001, 0.01, 0.1, 0.5, 1, 5, 20, 40, 80, 160, 320\right\}.
\end{equation*}
\paragraph{(Step 2)}
Based on the sweep results over all 10 algorithm and dataset combinations, a second asymmetric search was launched over the most promising hyperparameter regions, which probed over the following:
\begin{align*}
\eta_l &\in \left\{10^{-6}, 10^{-5}, 10^{-4}, 10^{-3}, 10^{-2}, 10^{-1}\right\}, \\
\eta_s &\in \left\{10^{-7}, 10^{-6}, 10^{-5}, 10^{-4}, 10^{-3}, 10^{-2}\right\}, \\
\tau_l &\in \left\{10^{-7}, 10^{-5}, 10^{-3}, 10^{-1}, 1\right\}, \\
\tau_s &\in \left\{10^{-12}, 10^{-11}, 10^{-10}, 10^{-9}, 10^{-5}\right\}. 
\end{align*}
Afterwards, the best performing hyperparameters were selected. For FedAvg only, the final grid increased additively by $10^{-3}$ from $10^{-3}$ to $10^{-2}$, then by $10^{-2}$ onward until the largest value $10^{-1}$. That is, we sweep over the following:
\begin{align*}
\eta_l &\in \left\{0.001, 0.002, 0.003, \dots, 0.009, 0.01, 0.02, \dots, 0.09, 0.1 \right\}.
\end{align*}
For server-only adaptivity or FedAvg, any irrelevant hyperparameters were ignored during the sweep. In Tables~\ref{learnrates} and~\ref{adaptivityparameters}, we summarize the best performing learning rates and adaptivity parameters. In this subsection, any notion of adaptivity in jointly adaptive systems refers to the Adam optimizer, and $5$ local epochs were taken prior to server synchronization. Full fine-tuning indicates that the entire net was unfrozen after replacement of the linear classification layer. For FedAdaGrad, full fine-tuning, Step 2 utilized an expanded hyperparameter grid search due to poor performance.

\begin{table}[H]
    \centering
     \caption{Server/Client Learning Rates $\eta_s/\eta_l$}\label{learnrates}
    \resizebox{\textwidth}{!}{%
    \begin{tabular}{lcccccc}
        \toprule
        & \textbf{FedAvg} & \textbf{FedAdaGrad} & \textbf{FedAdam} & \textbf{Costly Joint Adap.} & \textbf{Joint Adap. w/o Precond. Commu.} & \textbf{\name} \\
        \midrule
        FEMNIST & N/A / 8e-3 & 1e-4 / 1e-3 & 1e-4 / 1e-3 & 1e-3 / 1e-3 & 1e-3 / 1e-3 & 1e-3 / 1e-3 \\
        CIFAR-100 & N/A / 1e-1 & 1e-2 / 1e-5 & 1e-3 / 1e-3 & 1e-3 / 1e-2 & 1e-3 / 1e-2 & 1e-3 / 1e-2 \\
        GLD-23K & N/A / 0.04 & 1e-2 / 1e-2 & 1e-3 / 1e-2 & 1e-3 / 1e-2 & 1e-3 / 1e-2 & 1e-3 / 1e-2 \\
        GLD-23K (Full) & N/A / 0.02 & 1e-4 / 1e-2 & 1e-4 / 1e-2 & 1e-4 / 1e-4 & 1e-4 / 1e-2 & 1e-4 / 1e-4 \\
        \bottomrule
    \end{tabular}
    }
\end{table}

\begin{table}[H]
    \centering
     \caption{Server/Client Adaptivity Parameters $\tau_s/\tau_l$}\label{adaptivityparameters}
    \resizebox{\textwidth}{!}{%
    \begin{tabular}{lcccccc}
        \toprule
        & \textbf{FedAvg} & \textbf{FedAdaGrad} & \textbf{FedAdam} & \textbf{Costly Joint Adap.} & \textbf{Joint Adap. w/o Precond. Commu.} & \textbf{\name} \\
        \midrule
        FEMNIST & N/A / N/A & 1e-7 / N/A & 1e-7 / N/A & 1e-5 / 1e-7 & 1e-5 / 1e-7 & 1e-5 / 1e-7 \\
        CIFAR-100 & N/A / N/A & 1e-10 / N/A & 1e-5 / N/A & 1e-5 / 1.0 & 1e-5 / 1.0 & 1e-5 / 1.0 \\
        GLD-23K & N/A / N/A & 1e-5 / N/A & 1e-5 / N/A & 1e-5 / 0.1 & 1e-5 / 0.1 & 1e-5 / 0.1 \\
        GLD-23K (Full) & N/A / N/A & 1e-2 / N/A & 1e-5 / N/A & 1e-5 / 1e-3 & 1e-5 / 1 & 1e-5 / 1e-3 \\
        \bottomrule
    \end{tabular}
    }
\end{table}

\paragraph{Hyperparameter Sweep for FEMNIST.} The setup was almost analogous to above. The only difference is that due to limited resources in \textbf{(Steps 1-2)}, we swept over the grid
\begin{align*}
\eta_l &\in \left\{10^{-4}, 10^{-3}, 10^{-2}, 10^{-1}\right\}, \\
\eta_s &\in \left\{10^{-4}, 10^{-3}, 10^{-2}, 10^{-1}\right\}, \\
\tau_l &\in \left\{10^{-7}, 10^{-5}, 10^{-3}\right\}, \\
\tau_s &\in \left\{10^{-7}, 10^{-5}, 10^{-3}\right\}.
\end{align*}
For FedAvg only, we utilized the expanded learning rate grid
\begin{align*}
\eta_l &\in \left\{0.001, 0.002, 0.003, \dots, 0.009, 0.01, 0.02, \dots, 0.09, 0.1 \right\}.
\end{align*}

\paragraph{Hyperparameters for varying client resources, GLD-23K.} Analogous sweeps as in \textbf{(Step 1)} above for the limited and sufficient client resource settings (locally training over $1$, $20$ local epochs prior to server synchronization) were taken. For the constrained setting, there were no changes to the \textbf{(Step 2)} grid. In the abundant setting, the  modified final search space for adaptive methods was
\begin{align*}
\eta_l &\in \left\{10^{-6}, 10^{-5}, 10^{-4}, 10^{-3}, 10^{-2}, 10^{-1}\right\}, \\
\eta_s &\in \left\{10^{-3}, 10^{-2}, 10^{-1}, 1, 4, 16, 32\right\}, \\
\tau_l &\in \left\{10^{-7}, 10^{-5}, 10^{-3}, 10^{-1}, 1\right\}, \\
\tau_s &\in \left\{10^{-12}, 10^{-11}, 10^{-10}, 10^{-9}, 10^{-5}\right\},
\end{align*}
and the optimal hyperparameters are summarized in Table~\ref{combinedparameters1}.
\begin{table}[H]
    \centering
    \caption{Hyperparameters for GLD-23K under restricted/sufficient client resource settings}\label{combinedparameters1}
    \resizebox{\textwidth}{!}{%
    \begin{tabular}{lcccccc}
        \toprule
        & \textbf{FedAvg} & \textbf{FedAdaGrad} & \textbf{FedAdam} & \textbf{Costly Joint Adap.} & \textbf{Joint Adap. w/o Precond. Commu.} & \textbf{\name} \\
        \midrule
        $\eta_s$ & N/A / N/A & 1e-2 / 1e-2 & 1e-3 / 1e-3 & 1e-3 / 1e-3 & 1e-3 / 1e-3 & 1e-3 / 1e-3 \\
        $\eta_l$ & 7e-2 / 1e-2 & 1e-2 / 1e-2 & 1e-1 / 1e-2 & 1e-2 / 1e-3 & 1e-2 / 1e-3 & 1e-1 / 1e-3 \\
        $\tau_s$ & N/A / N/A & 1e-9 / 1e-7 & 1e-5 / 1e-7 & 1e-5 / 1e-7 & 1e-5 / 1e-7 & 1e-5 / 1e-7 \\
        $\tau_l$ & N/A / N/A & N/A / N/A & N/A / N/A & 1e-3 / 1e-1 & 1e-3 / 1e-1 & 1e-1 / 1e-1 \\
        \bottomrule
    \end{tabular}
    }
\end{table}

\subsection{Compute Resources}\label{compute}
Experiments were performed on a computing cluster managed by Slurm, consisting of nodes with various configurations. The cluster includes nodes with multiple GPU types, including NVIDIA RTX 2080 Ti, A40, and H100 GPUs.

\addtocontents{toc}{\protect\setcounter{tocdepth}{2}}
\section{Additional Experiments}\label{AdditionalExperimentResults}

\subsection{Sensitivity to Hyperparameters}

\begin{figure}[h!]
    \centering
        \begin{subfigure}[b]{0.32\textwidth}
        \centering
        \includegraphics[width=\textwidth]{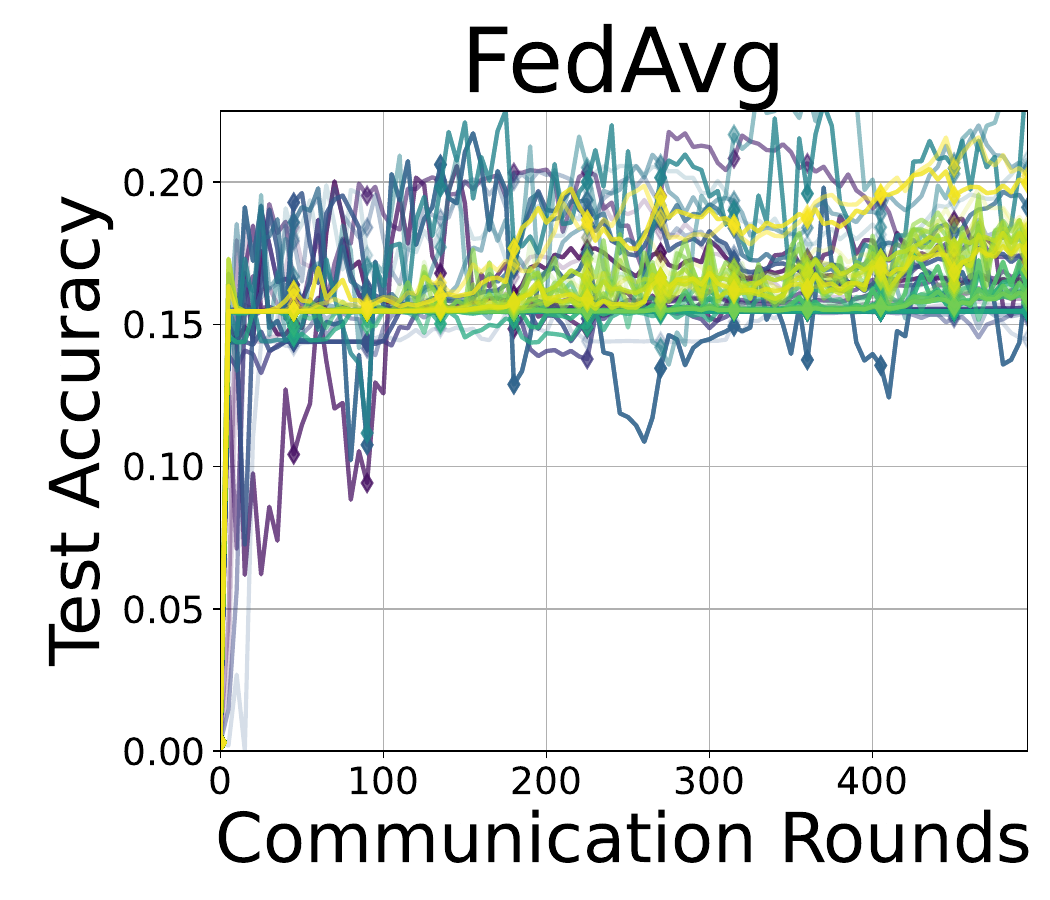}
    \end{subfigure}
    \begin{subfigure}[b]{0.32\textwidth}
        \centering
        \includegraphics[width=\textwidth]{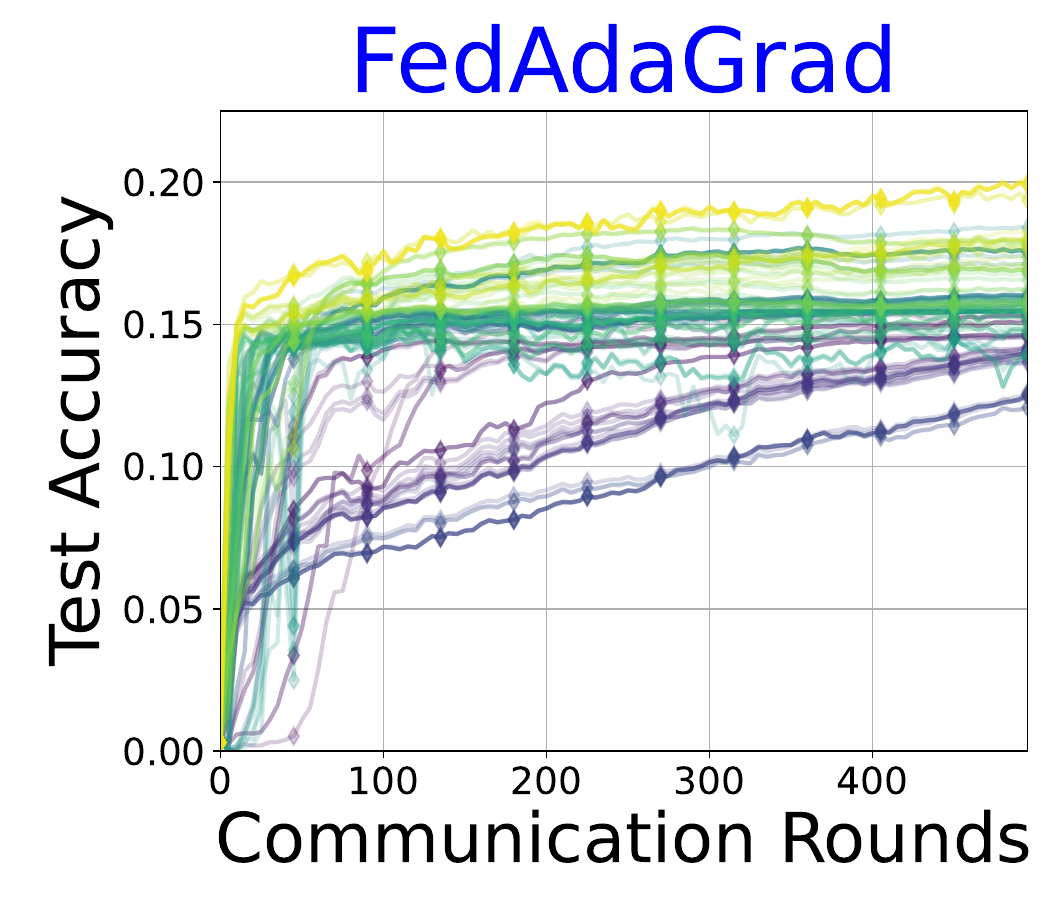}
    \end{subfigure}
    \begin{subfigure}[b]{0.32\textwidth}
        \centering
        \includegraphics[width=\textwidth]{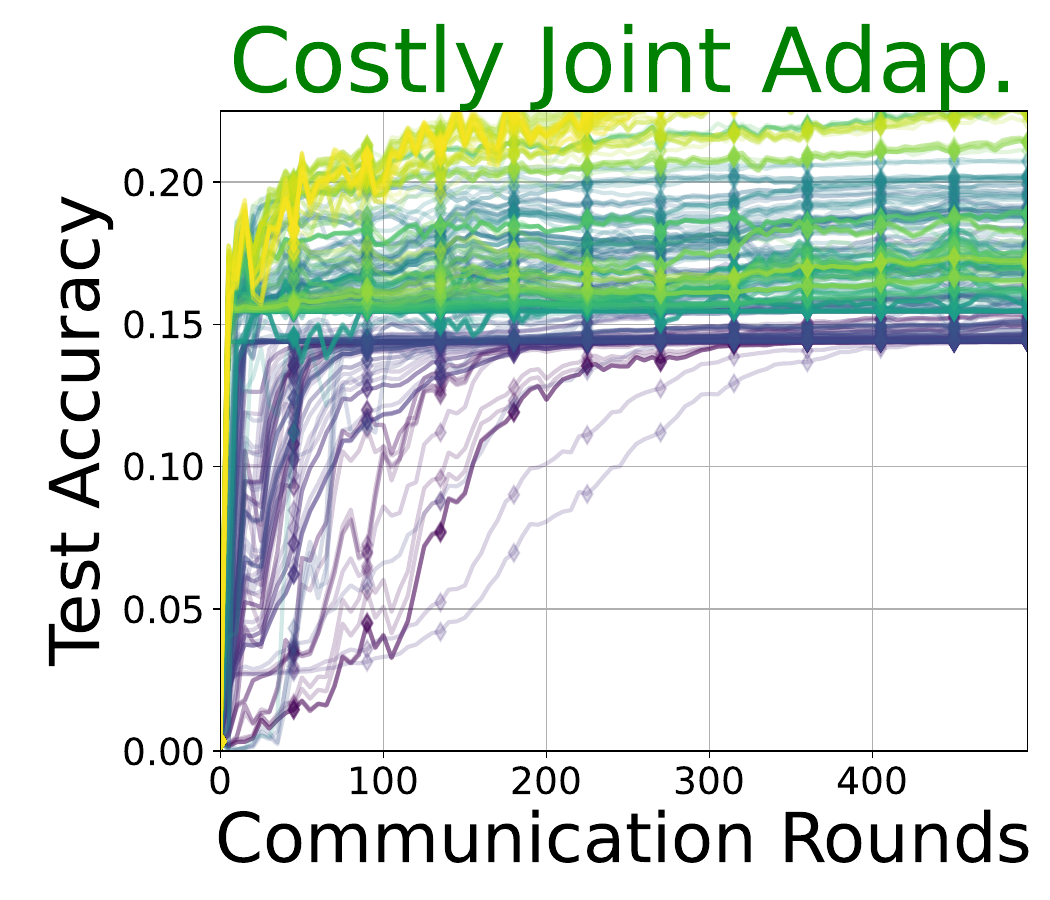}
    \end{subfigure}
        \begin{subfigure}[b]{0.32\textwidth}
        \centering
        \includegraphics[width=\textwidth]{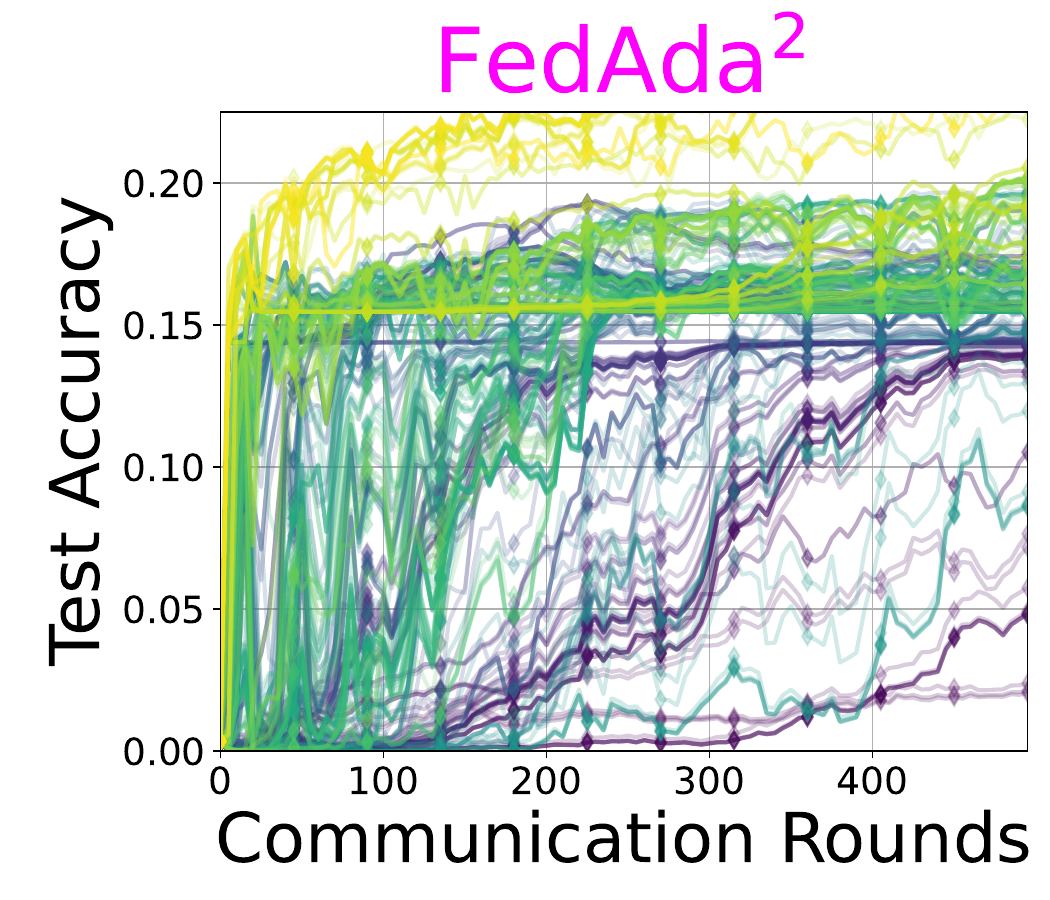}
    \end{subfigure}
    \begin{subfigure}[b]{0.32\textwidth}
        \centering
        \includegraphics[width=\textwidth]{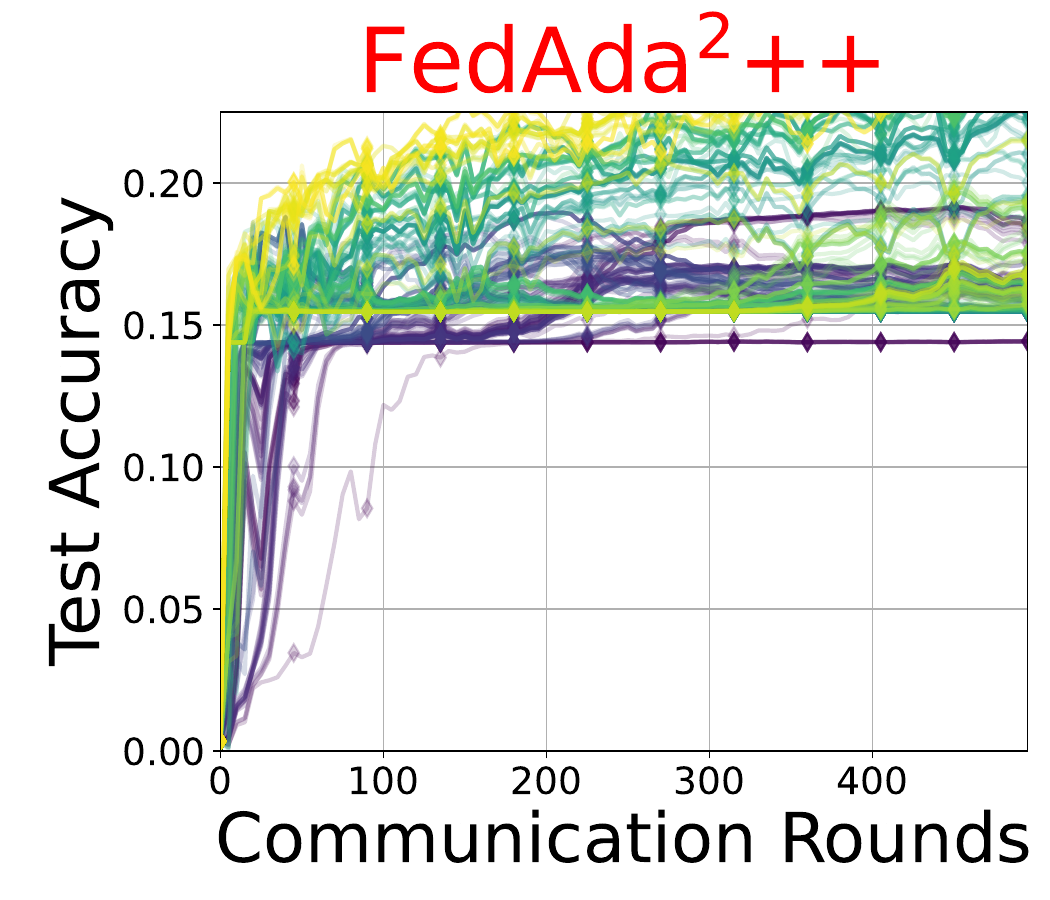}
    \end{subfigure}
    \vspace{-2mm}
    \caption{We plot all test accuracies obtained during the hyperparameter sweeps detailed in Appendix~\ref{AppendixDPStackOverflow}, with fixed client subsampling random seed. The runs are ranked hierarchically from the lowest to the highest final test loss, with the colors transitioning from lighter to darker shades accordingly. }
    \label{fig:hyperparameter}
\end{figure}

\subsection{Dynamics of Heterogeneous Client-Server Adaptivity}\label{HeterogeneousClientServerAdaptivityAppendix}

In Figure~\ref{HeterogeneousFigure}, we display the effects of heterogeneous client-server adaptivity in the setting of ViT fine-tuning over GLD-23K. All hyperparameter sweeps were done over the following grid: 
\begin{equation}\label{hyperparametergrid}
\begin{aligned}
\eta_l &\in \left\{10^{-4}, 10^{-3}, 10^{-2}, 10^{-1}\right\}, \\
\eta_s &\in \left\{10^{-4}, 10^{-3}, 10^{-2}, 10^{-1}\right\}, \\
\tau_l &\in \left\{10^{-7}, 10^{-5}, 10^{-3}, 10^{-1}, 1\right\}, \\
\tau_s &\in \left\{10^{-7}, 10^{-5}, 10^{-3}, 10^{-1}, 1\right\}.
\end{aligned}
\end{equation}

\subsection{Effect of Delayed Updates}\label{DelayedUpdatesAppendix}

Similar to Figure~\ref{HeterogeneousFigure}, we demonstrate the effects of delayed updates in Figure~\ref{delayedupdatesfigure}. Hyperparameter configuration for delayed updates is identical to Figure~\ref{HyperparamEffectFigure} (b), except that client-side preconditioner updates are delayed. Hyperparameter sweeps were done over the following grid: 
\begin{align*}
\eta_l &\in \left\{10^{-4}, 10^{-3}, 10^{-2}, 10^{-1}\right\}, \\
\eta_s &\in \left\{10^{-4}, 10^{-3}, 10^{-2}, 10^{-1}\right\}, \\
\tau_l &\in \left\{10^{-3}, 10^{-1}, 1\right\}, \\
\tau_s &\in \left\{10^{-5}, 10^{-3}, 10^{-1}\right\}.
\end{align*}
We see that delaying the computation of the preconditioners does not significantly degrade the performance. 
\clearpage

\begin{figure}[H]
    \begin{subfigure}[b]{0.246\textwidth}
        \centering
        \includegraphics[width=\textwidth]{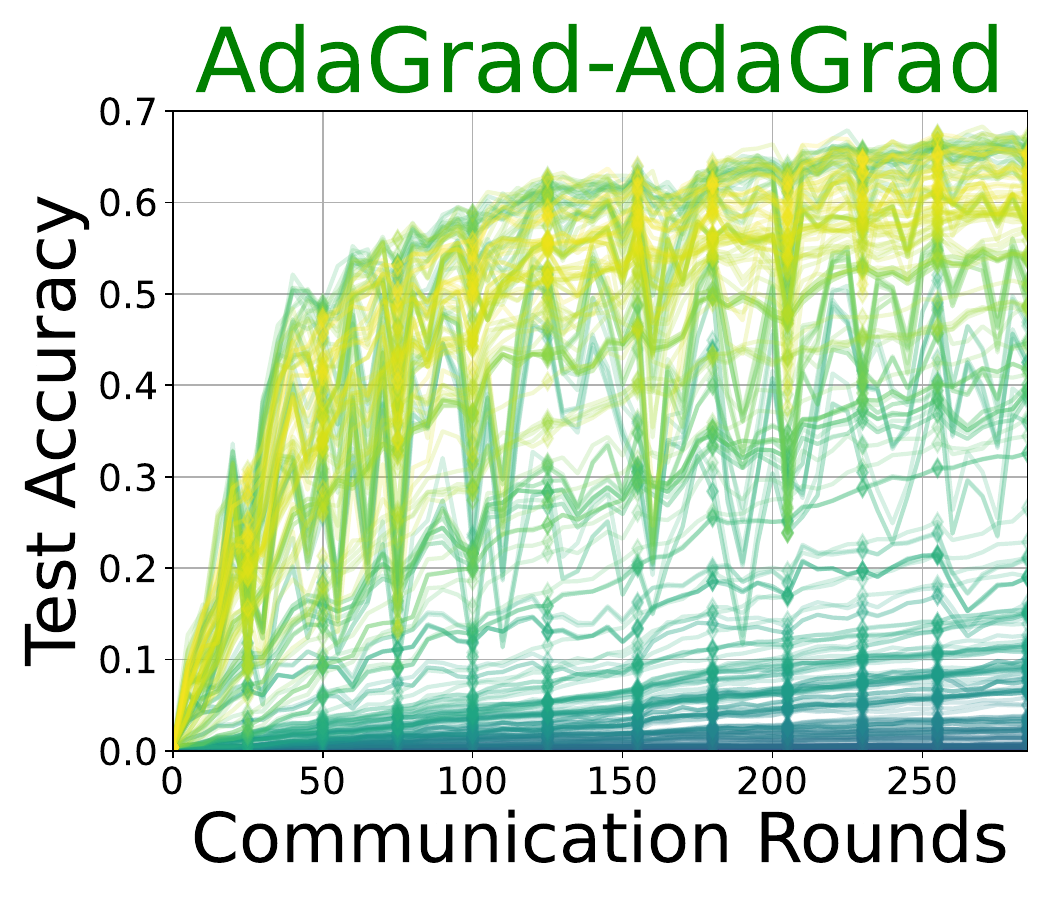}
    \end{subfigure}
    \begin{subfigure}[b]{0.246\textwidth}
        \centering
        \includegraphics[width=\textwidth]{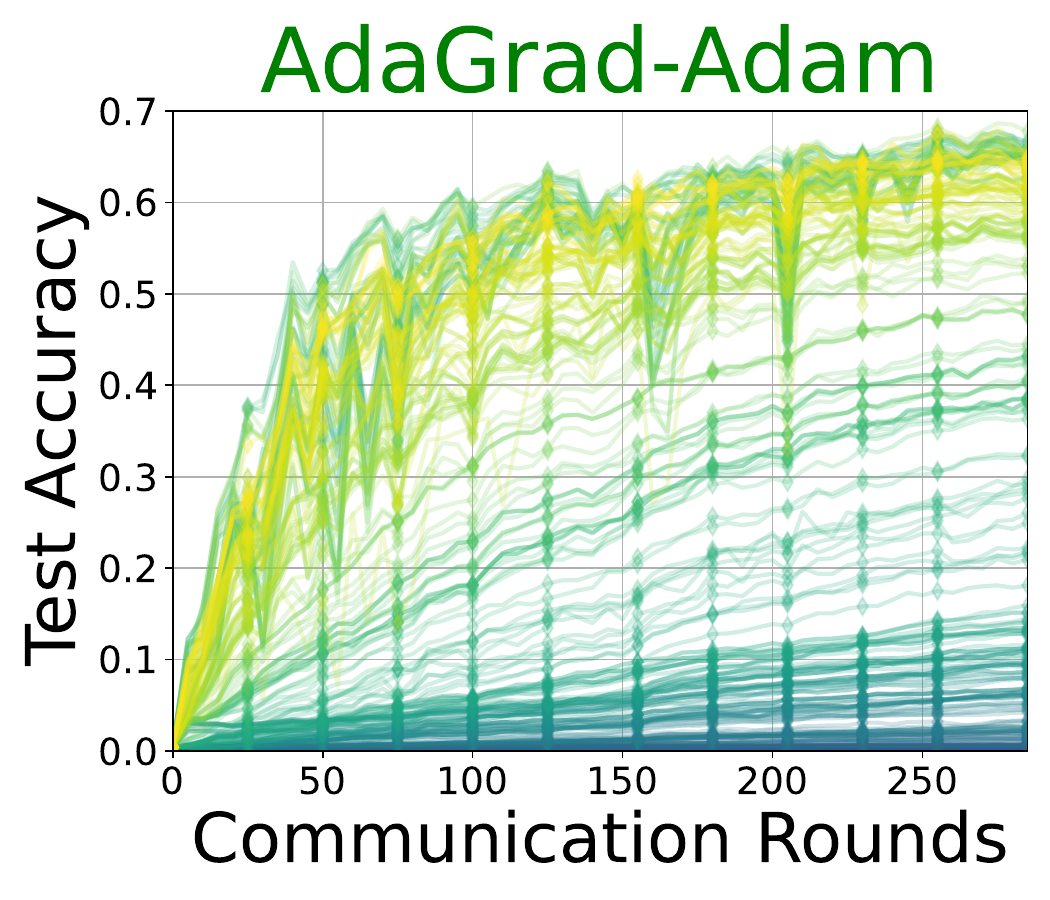}
    \end{subfigure}
    \begin{subfigure}[b]{0.246\textwidth}
        \centering
        \includegraphics[width=\textwidth]{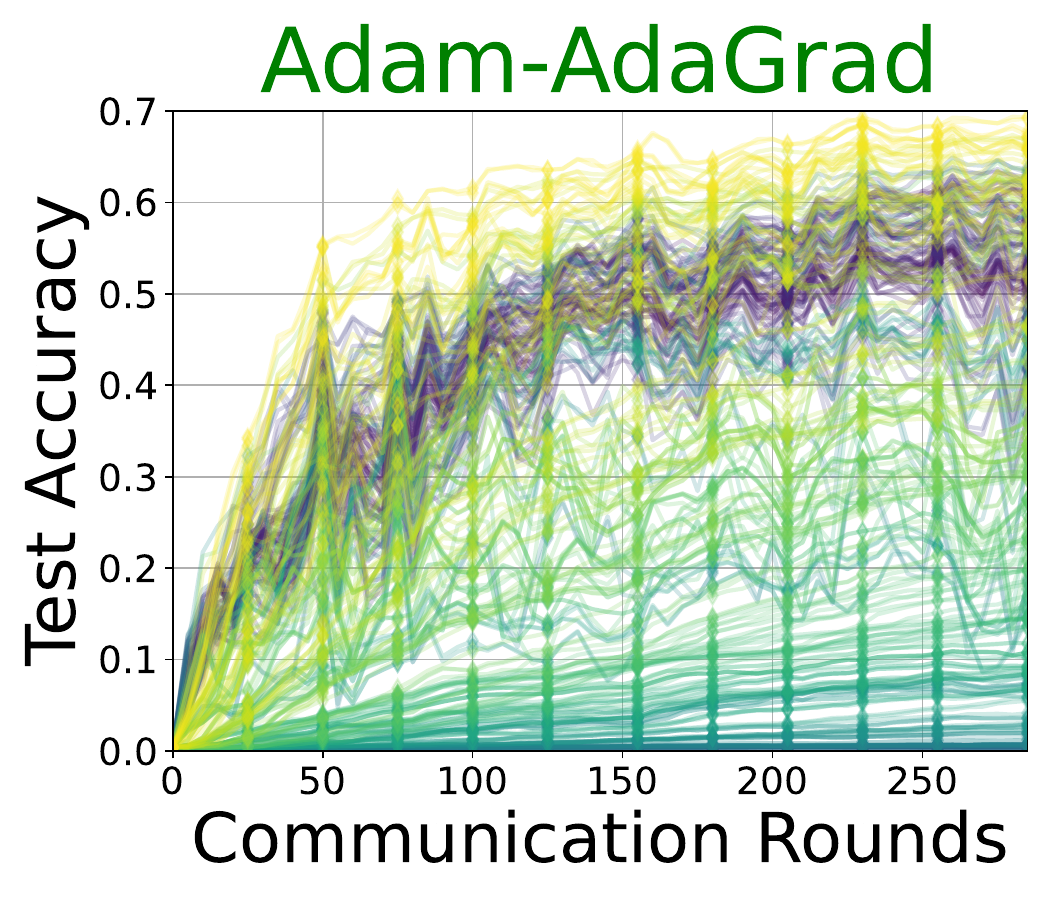}
    \end{subfigure}
    \begin{subfigure}[b]{0.246\textwidth}
        \centering
        \includegraphics[width=\textwidth]{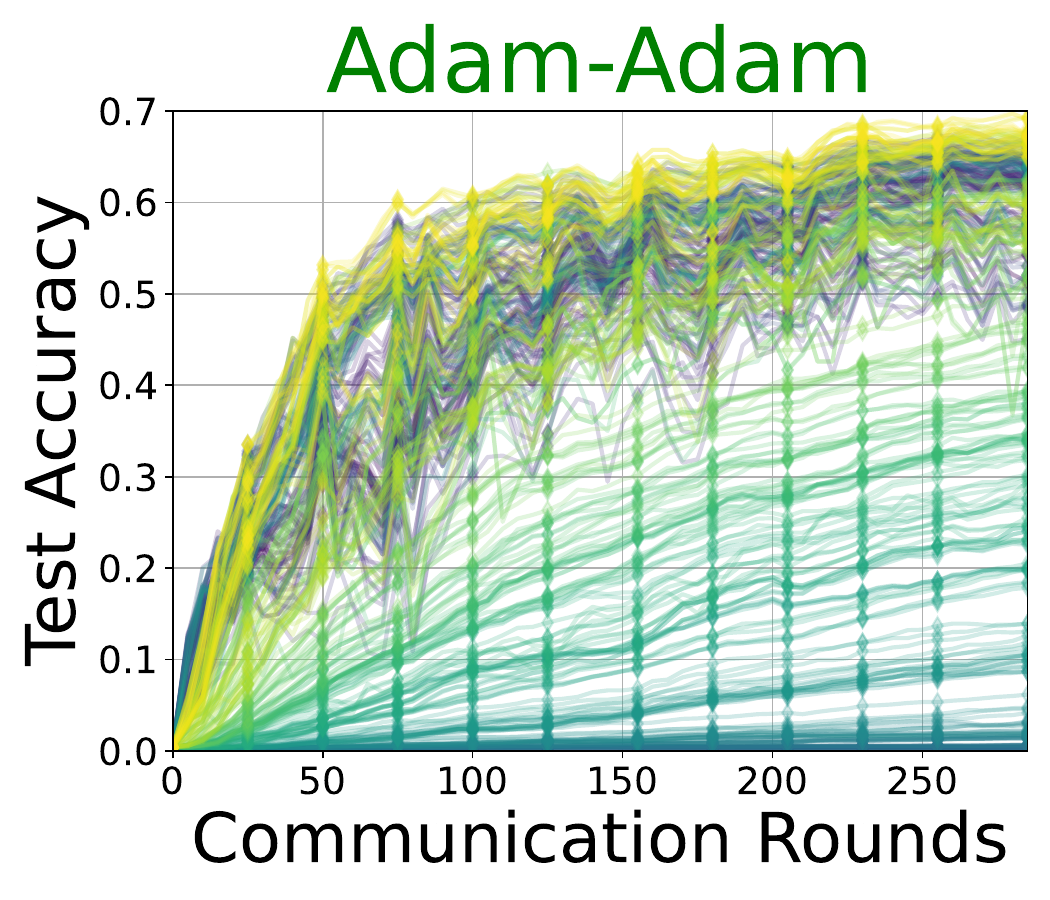}
    \end{subfigure}
        \begin{subfigure}[b]{0.246\textwidth}
        \centering
        \includegraphics[width=\textwidth]{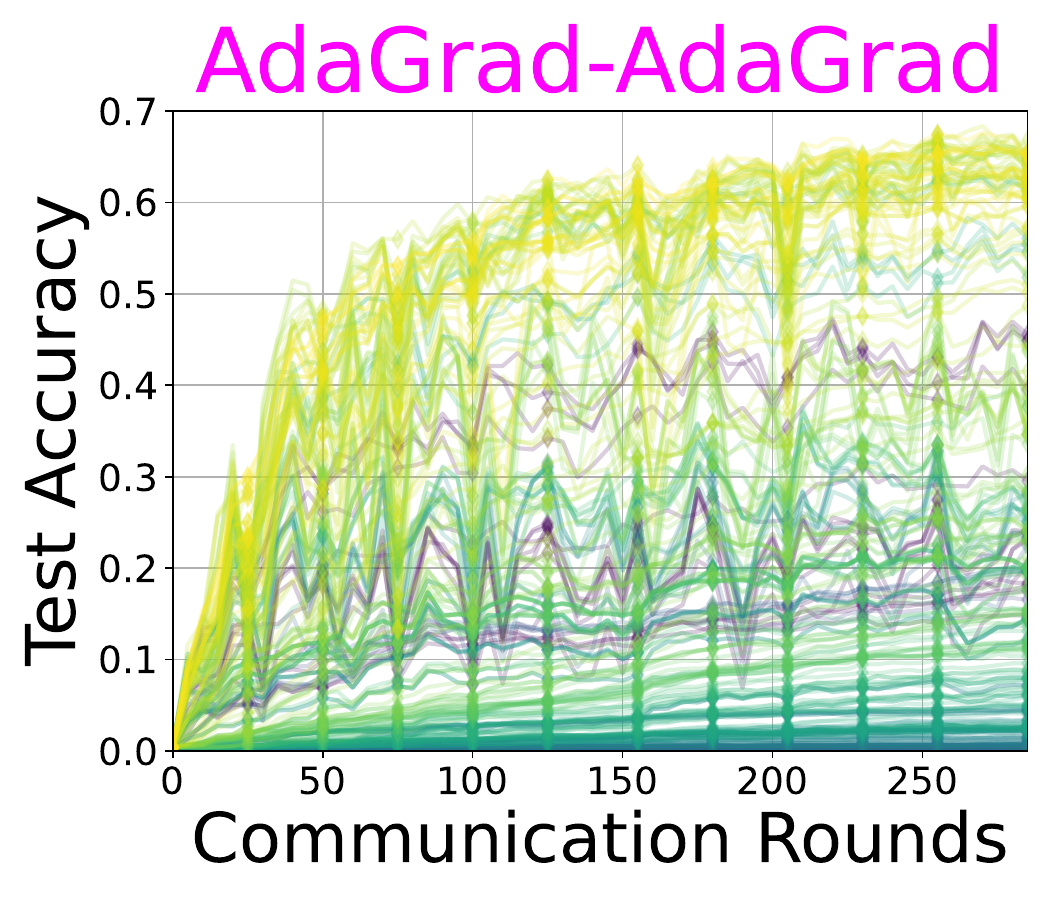}
    \end{subfigure}
    \begin{subfigure}[b]{0.246\textwidth}
        \centering
        \includegraphics[width=\textwidth]{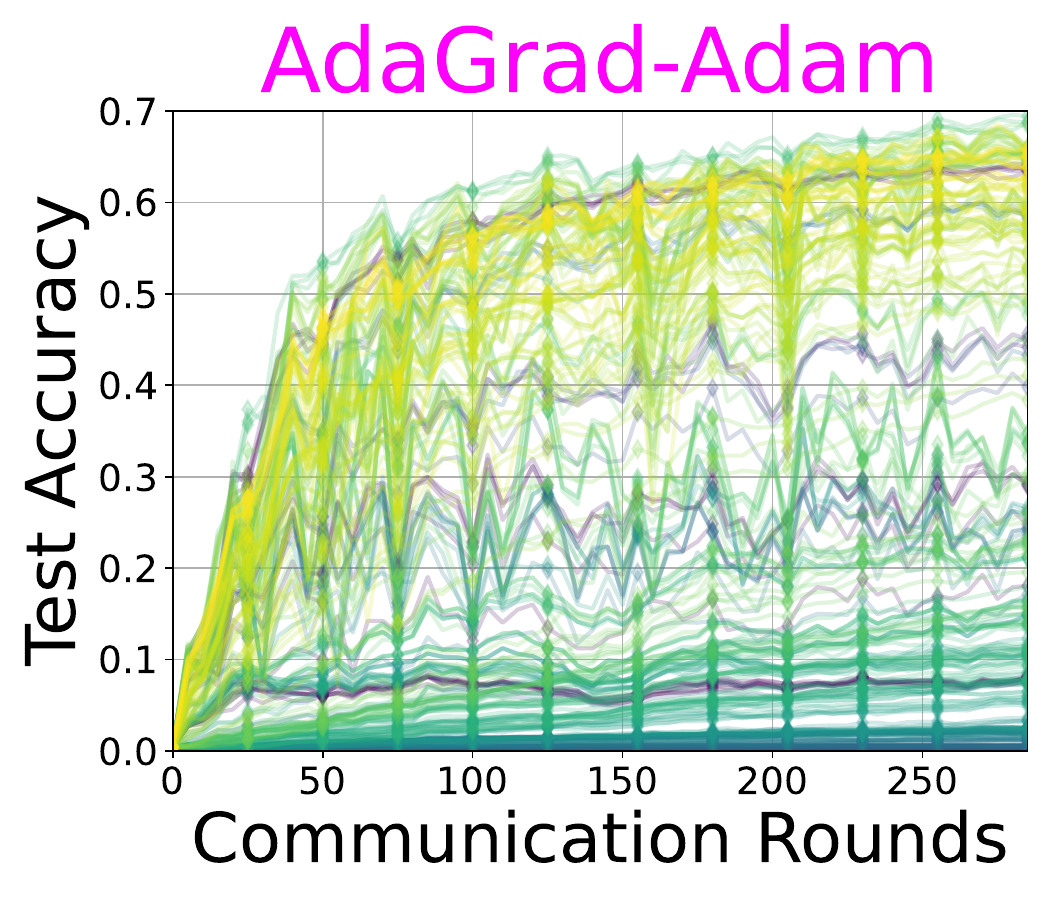}
    \end{subfigure}
    \begin{subfigure}[b]{0.246\textwidth}
        \centering
        \includegraphics[width=\textwidth]{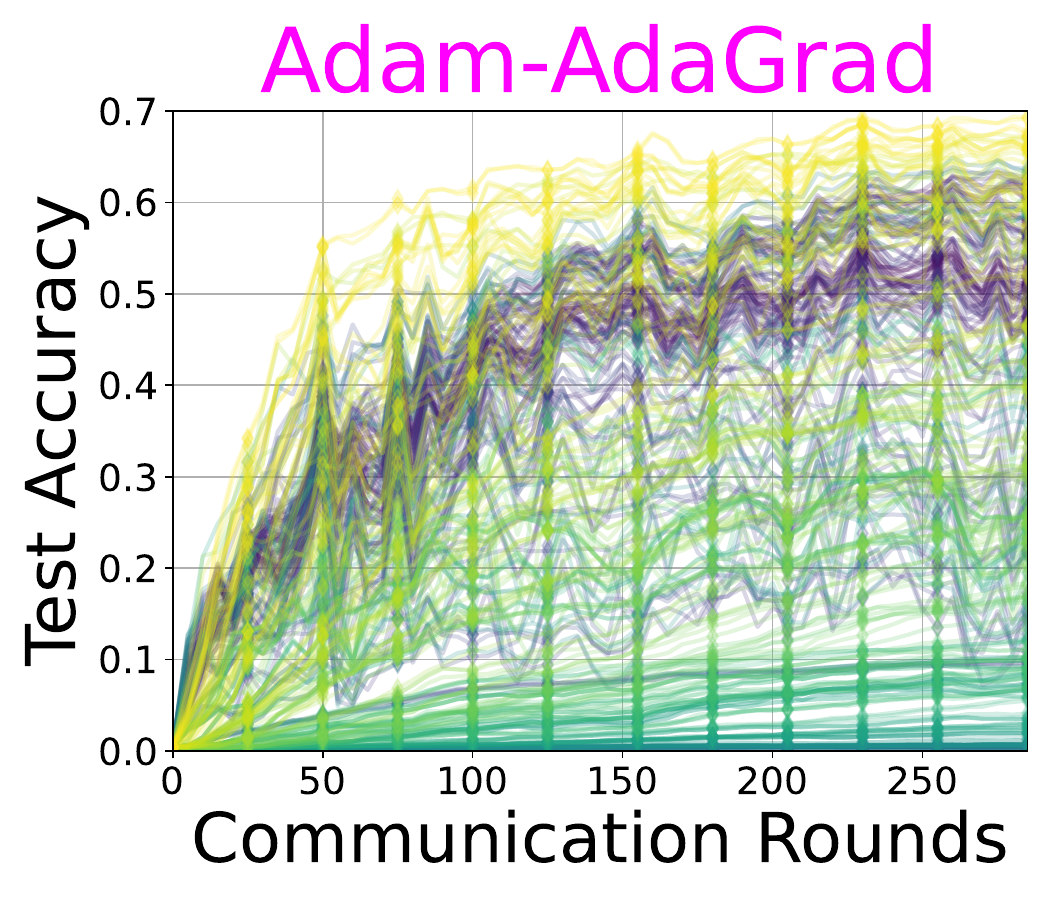}
    \end{subfigure}
    \begin{subfigure}[b]{0.246\textwidth}
        \centering
        \includegraphics[width=\textwidth]{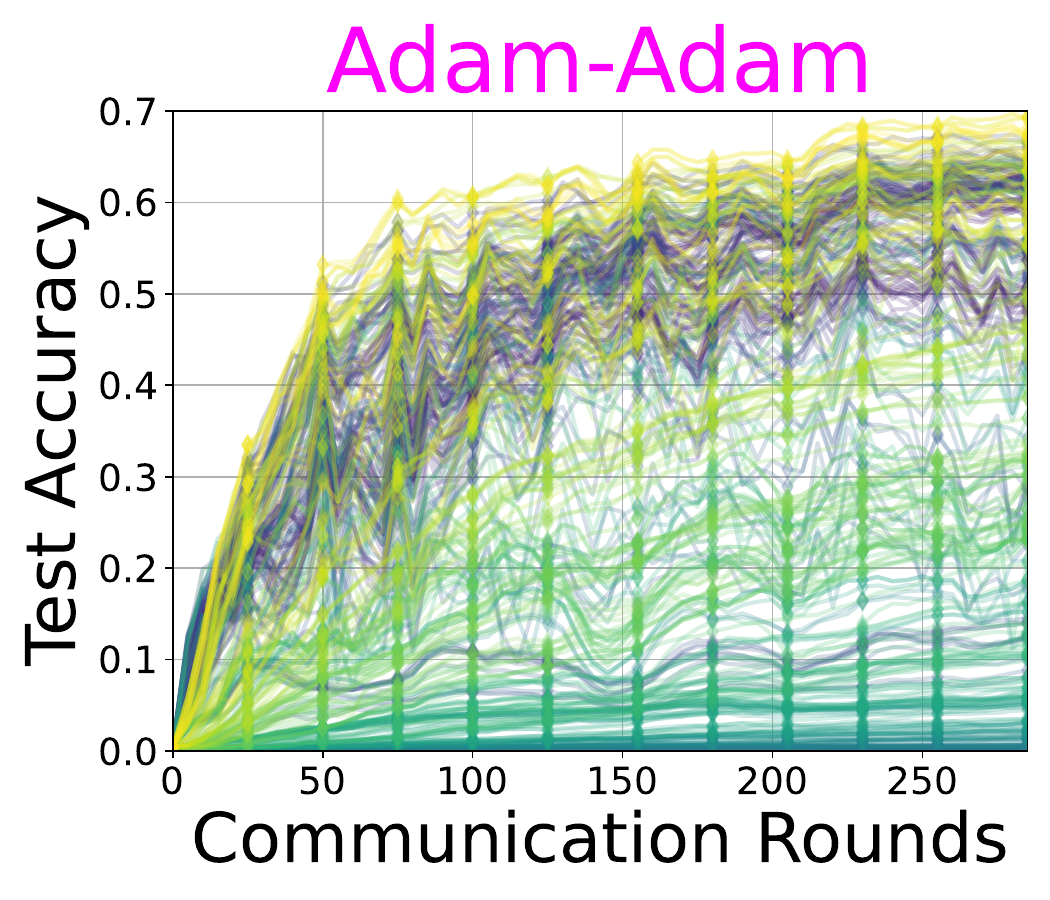}
    \end{subfigure}
        \begin{subfigure}[b]{0.246\textwidth}
        \centering
        \includegraphics[width=\textwidth]{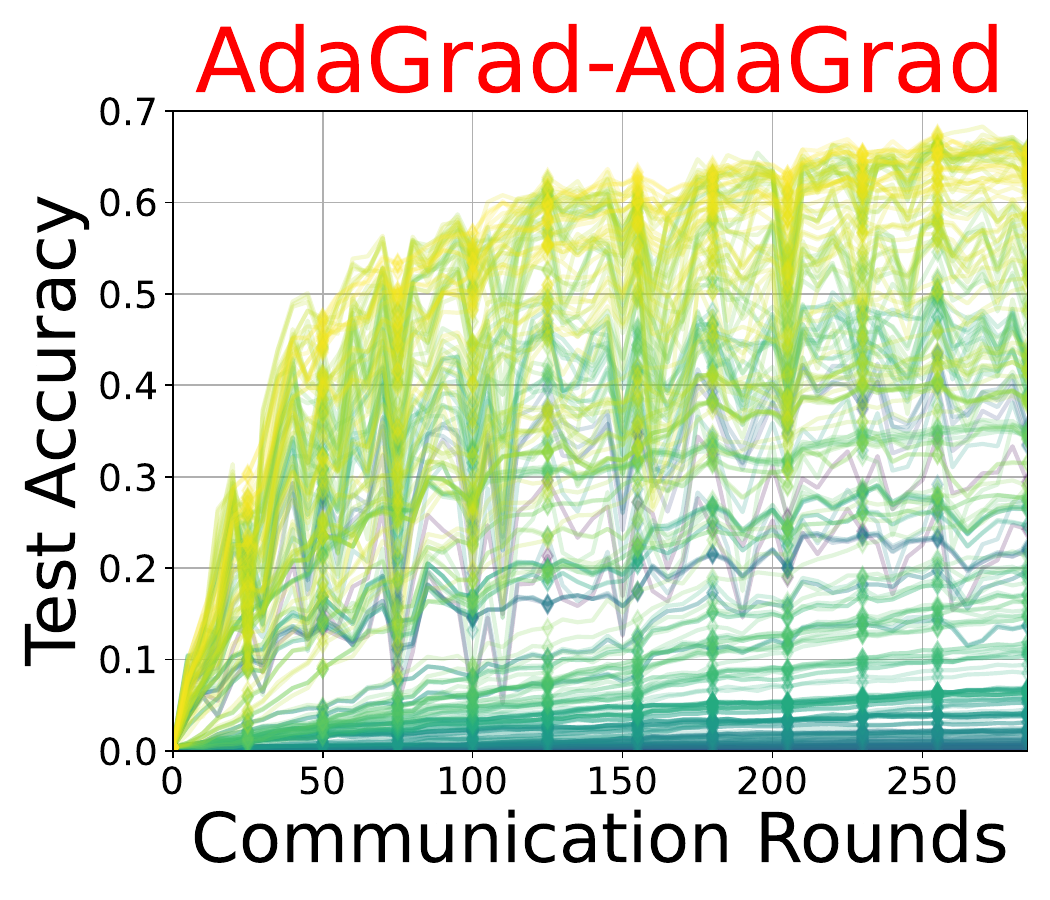}
    \end{subfigure}
    \begin{subfigure}[b]{0.246\textwidth}
        \centering
        \includegraphics[width=\textwidth]{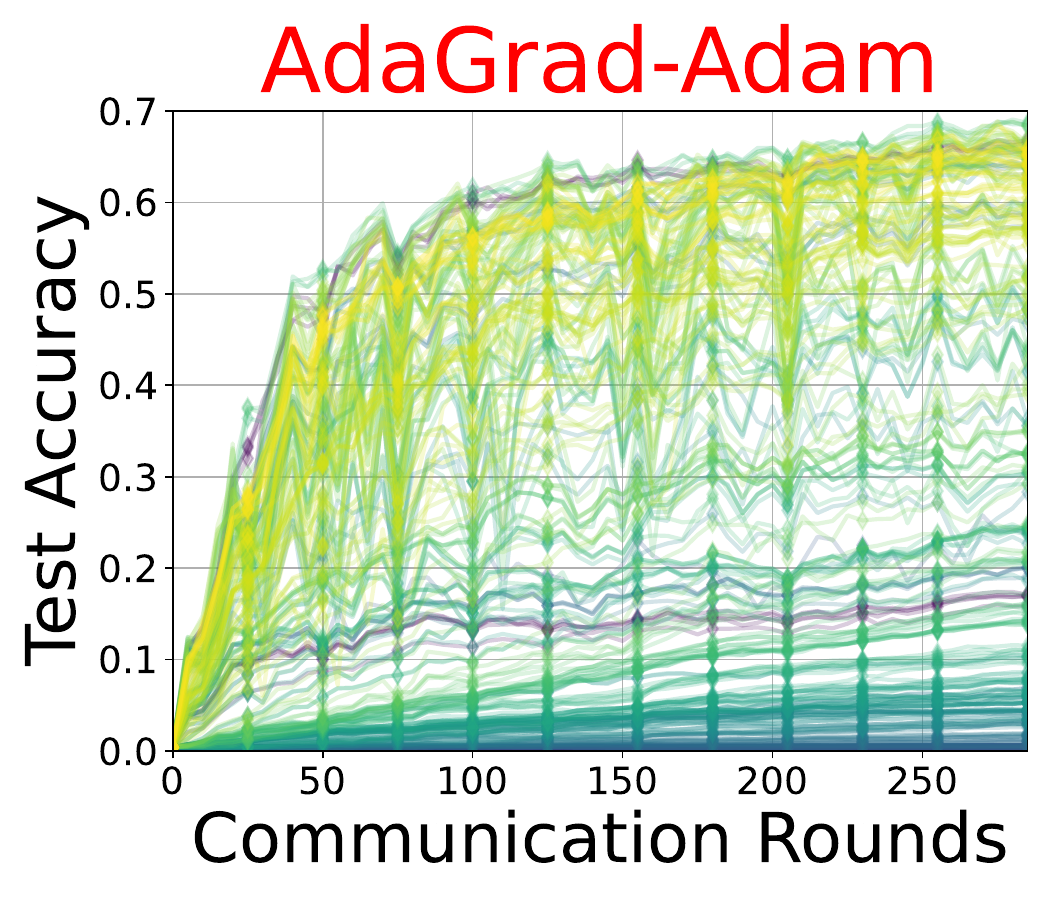}
    \end{subfigure}
    \begin{subfigure}[b]{0.246\textwidth}
        \centering
        \includegraphics[width=\textwidth]{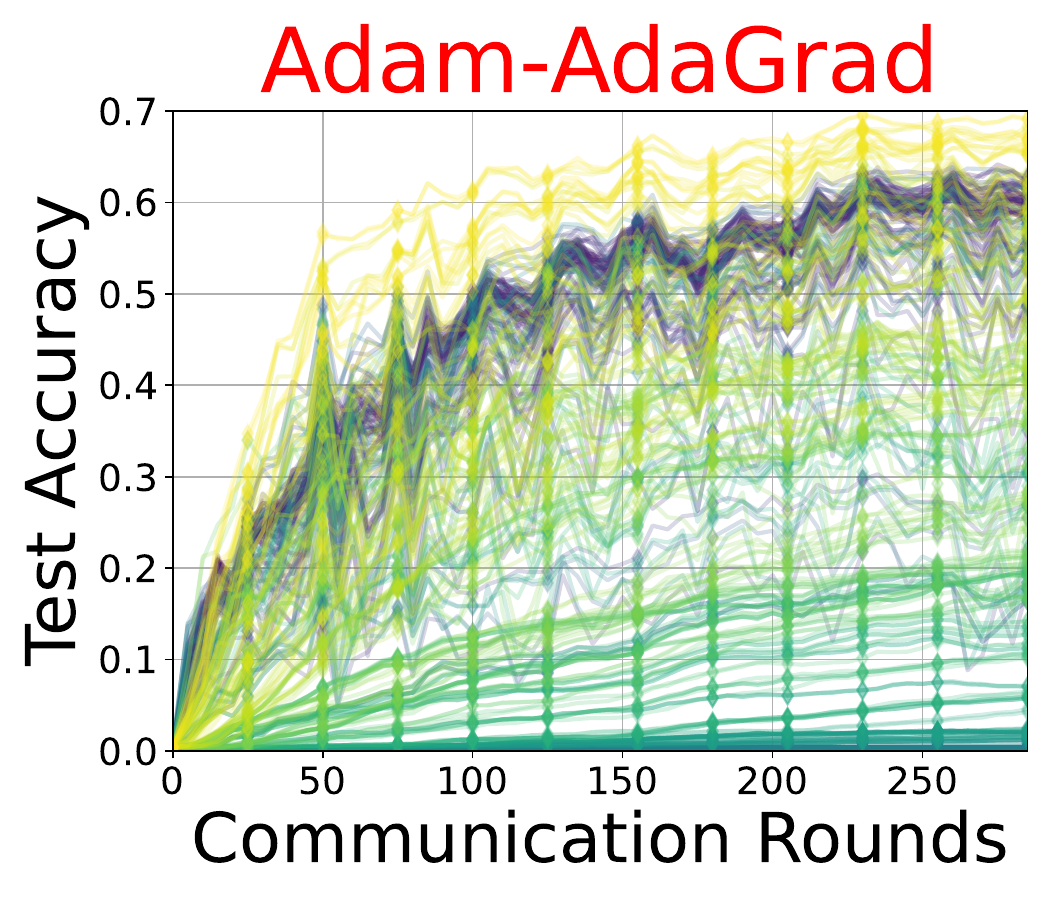}
    \end{subfigure}
    \begin{subfigure}[b]{0.246\textwidth}
        \centering
        \includegraphics[width=\textwidth]{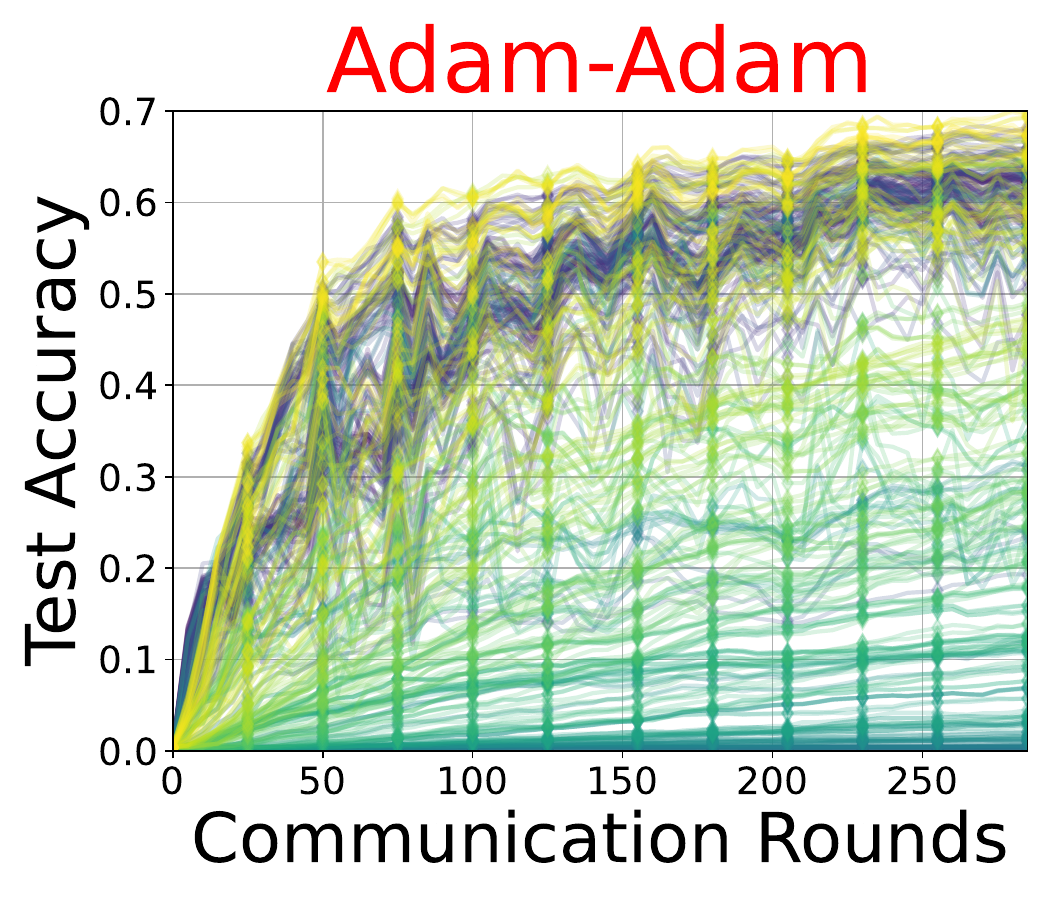}
    \end{subfigure}
    \caption{Each test accuracy is color-coded and ranked based on the final test loss, and lighter colors indicate lower loss. Algorithm title colors are also consistent with labels; green for Costly Joint Adaptivity (top), magenta for Joint Adaptivity without Preconditioner Transmission (middle), and red for \name (bottom). Title ordering indicates server- and client-side optimizers, respectively; i.e. AdaGrad-Adam uses server AdaGrad and client Adam. In the case of Costly Joint Adaptivity with heterogeneous client-server optimizers, we transmit the \textit{mismatched} server-side preconditioner to the client, which to our surprise demonstrates considerable performance. For \name, we add SM3 compression to the client-side optimizer after zero initialization of the local preconditioner. }
    \label{HeterogeneousFigure}
\end{figure}

\begin{figure}[H]
        \begin{subfigure}[b]{0.246\textwidth}
        \centering
        \includegraphics[width=\textwidth]{AdaGrad_AdaGrad_FedAda2.pdf}
    \end{subfigure}
    \begin{subfigure}[b]{0.246\textwidth}
        \centering
        \includegraphics[width=\textwidth]{AdaGrad_Adam_FedAda2.pdf}
    \end{subfigure}
    \begin{subfigure}[b]{0.246\textwidth}
        \centering
        \includegraphics[width=\textwidth]{Adam_AdaGrad_FedAda2.pdf}
    \end{subfigure}
    \begin{subfigure}[b]{0.246\textwidth}
        \centering
        \includegraphics[width=\textwidth]{Adam_Adam_FedAda2.pdf}
    \end{subfigure}
    \begin{subfigure}[b]{0.246\textwidth}
        \centering
        \includegraphics[width=\textwidth]{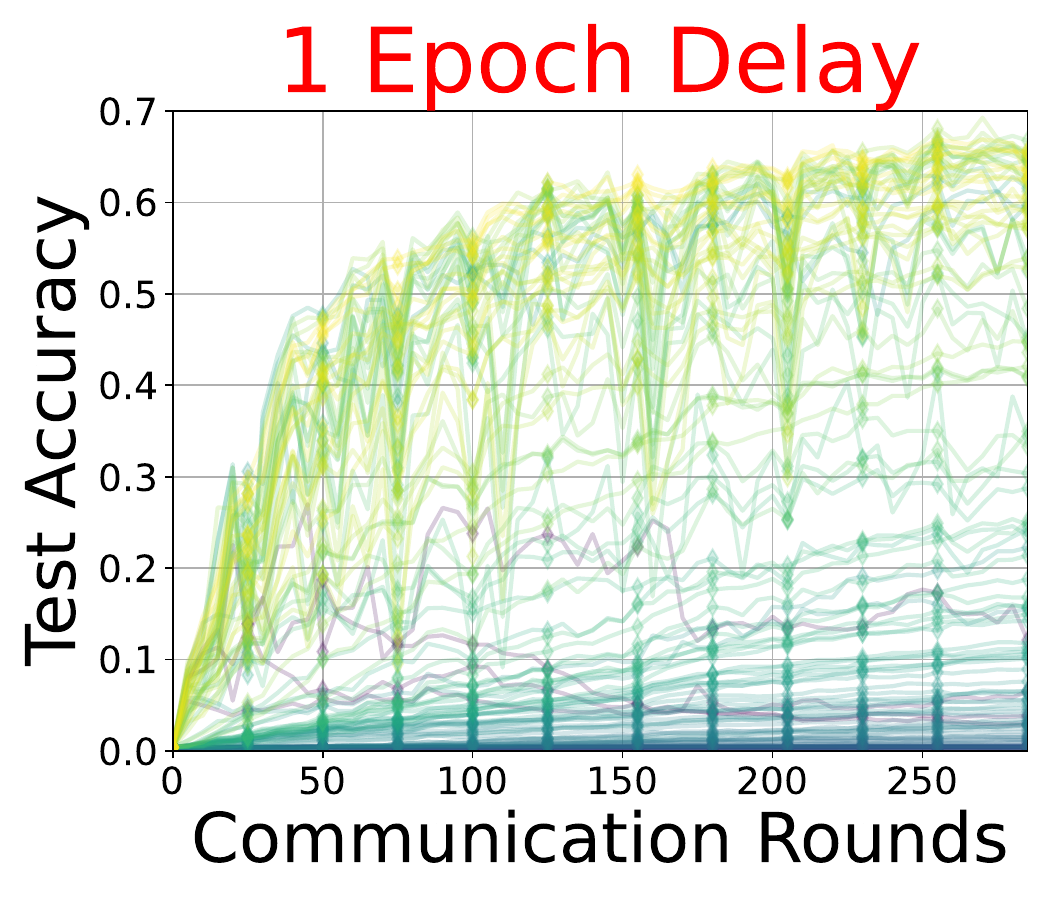}
    \end{subfigure}
    \begin{subfigure}[b]{0.246\textwidth}
        \centering
        \includegraphics[width=\textwidth]{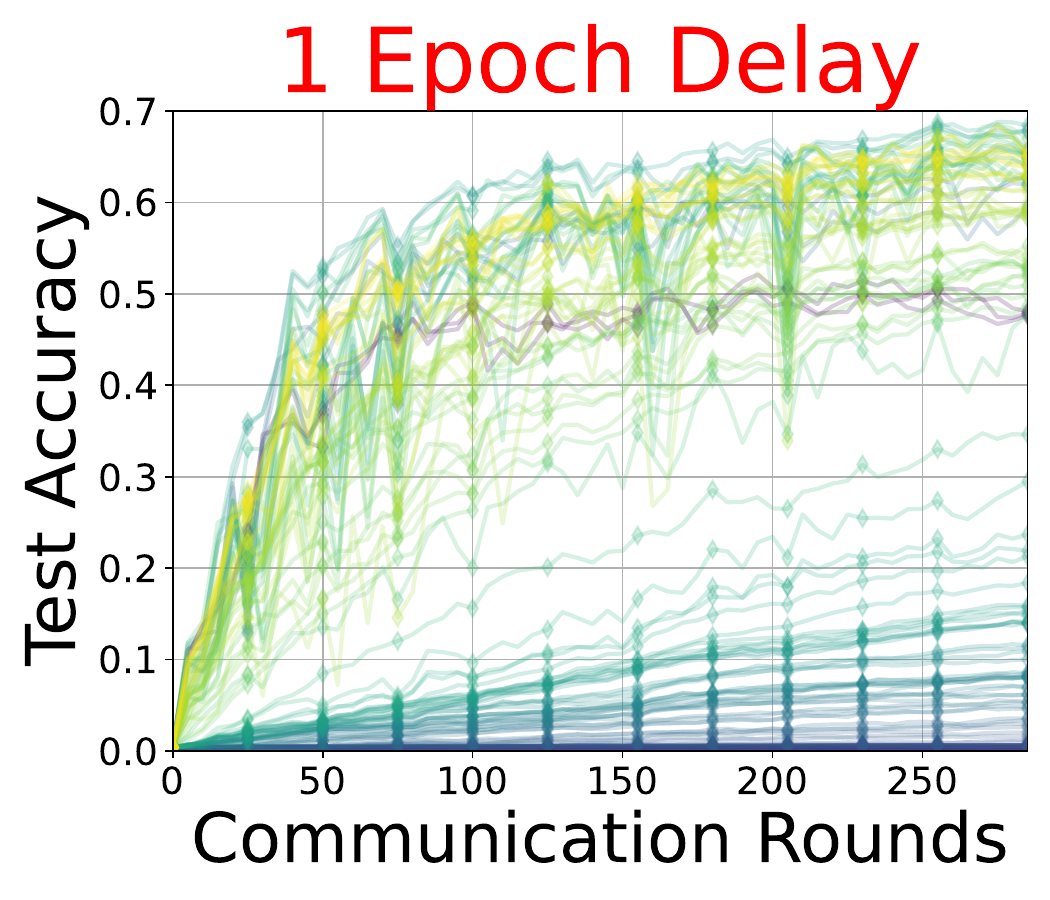}
    \end{subfigure}
    \begin{subfigure}[b]{0.246\textwidth}
        \centering
        \includegraphics[width=\textwidth]{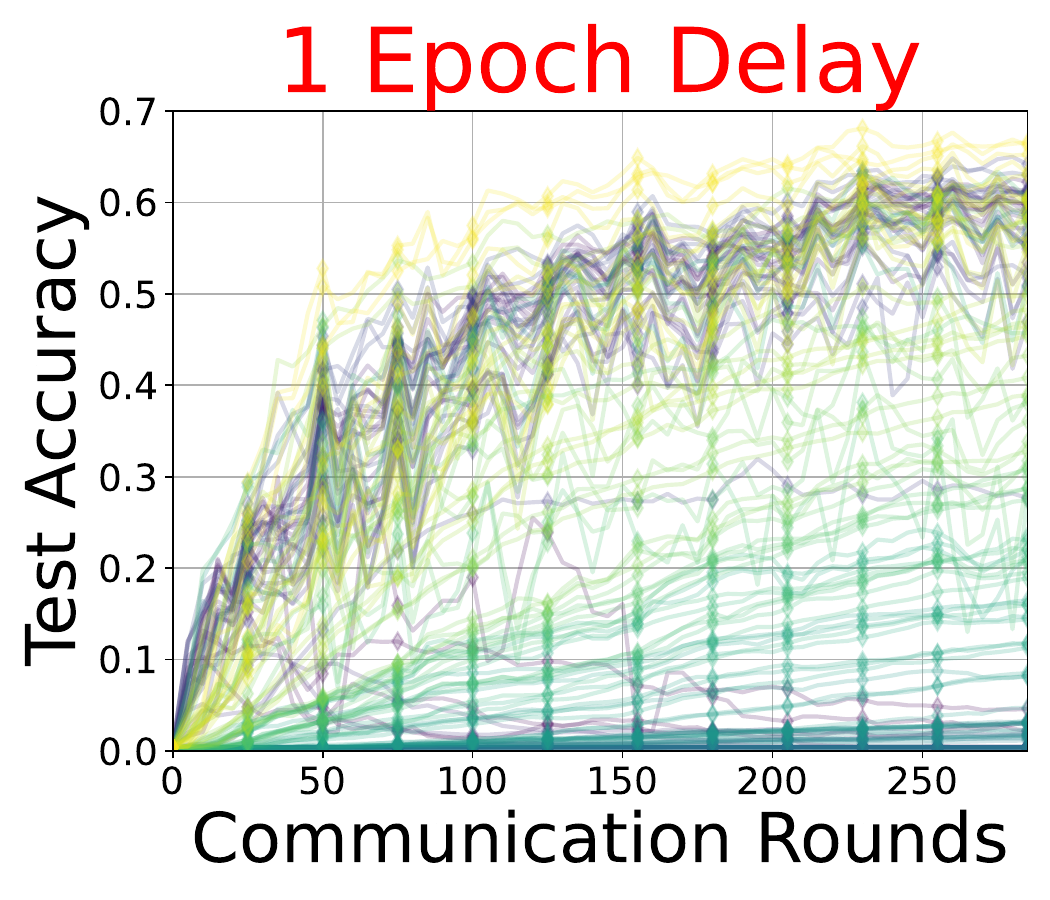}
    \end{subfigure}
    \begin{subfigure}[b]{0.246\textwidth}
        \centering
        \includegraphics[width=\textwidth]{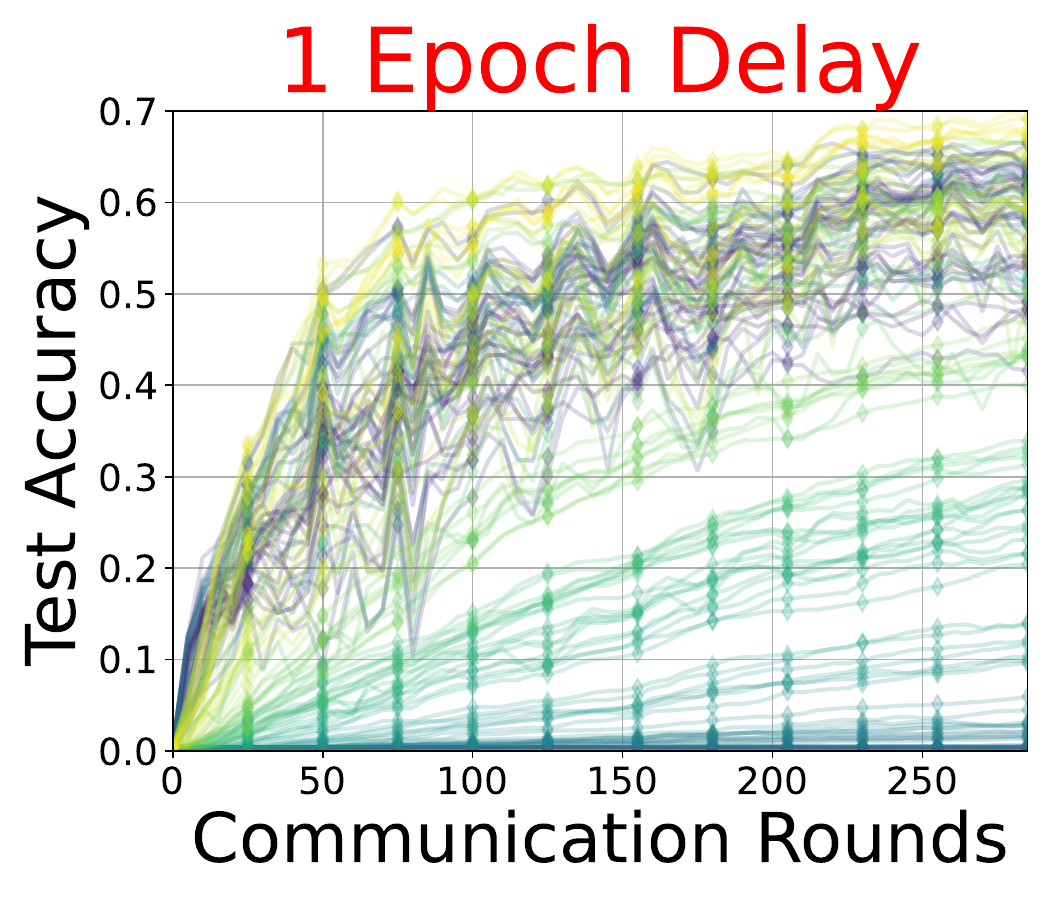}
    \end{subfigure}
        \begin{subfigure}[b]{0.246\textwidth}
        \centering
        \includegraphics[width=\textwidth]{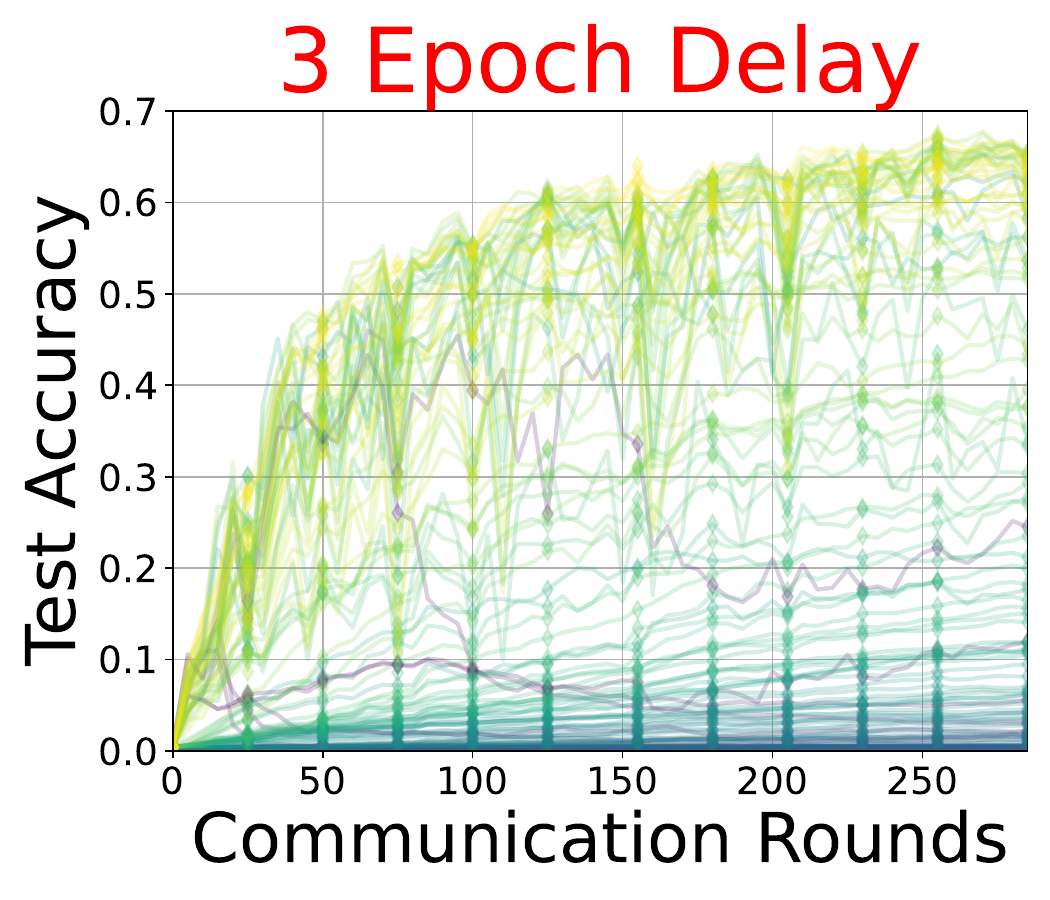}
    \end{subfigure}
    \begin{subfigure}[b]{0.246\textwidth}
        \centering
        \includegraphics[width=\textwidth]{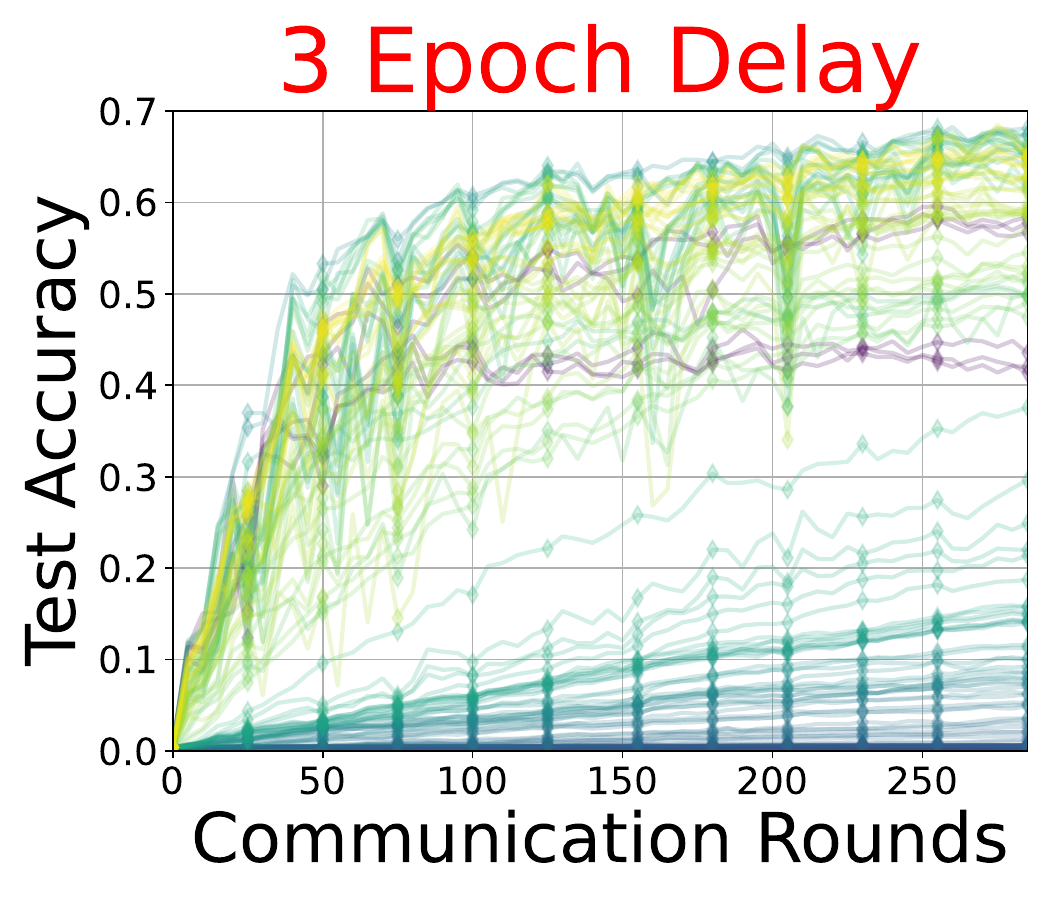}
    \end{subfigure}
    \begin{subfigure}[b]{0.246\textwidth}
        \centering
        \includegraphics[width=\textwidth]{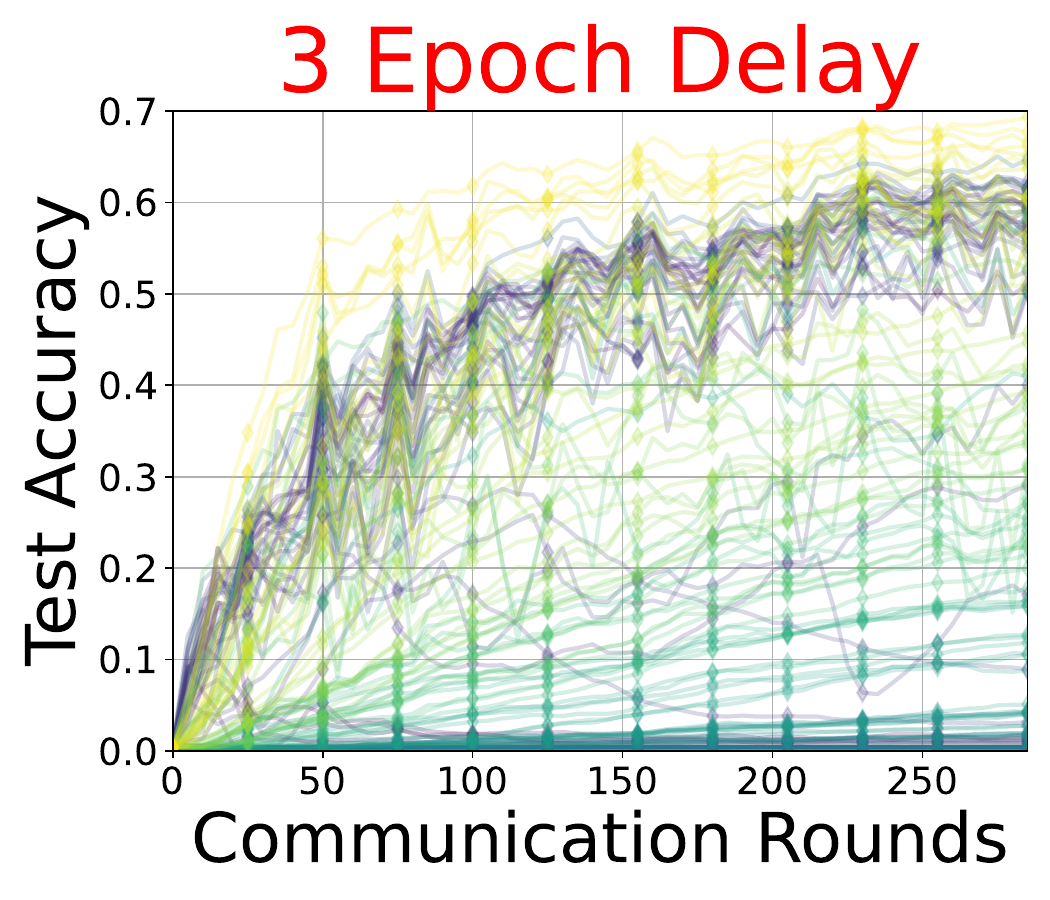}
    \end{subfigure}
    \begin{subfigure}[b]{0.246\textwidth}
        \centering
        \includegraphics[width=\textwidth]{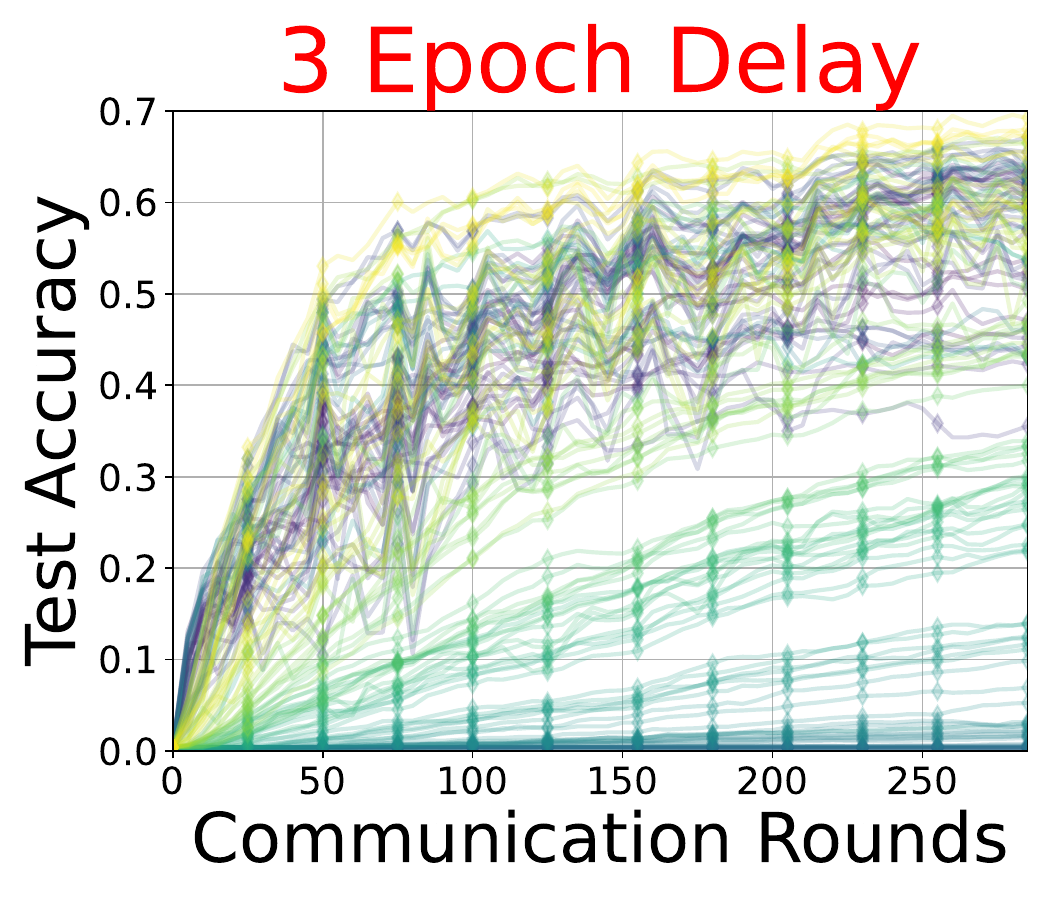}
    \end{subfigure}
    \caption{After updating preconditioners per every local backpropagation step for the first client epoch, preconditioners are periodically frozen for the next 1 (middle), 3 (bottom) epochs, respectively, for each communication round. Algorithms are consistent across columns, and the top row is identical to the \name results in Figure~\ref{HeterogeneousFigure} with hyperparameter sweep~{(\ref{hyperparametergrid})}.}
    \label{delayedupdatesfigure}
\end{figure}

\end{document}